\definecolor{citecolor}{RGB}{0,180,0}
\definecolor{linkcolor}{RGB}{180,0,0}
\definecolor{urlcolor}{RGB}{0,0,180}
\let\citet\textcite
\let\citep\parencite
\newcommand{\httpurl}[1]{\href{http://#1}{\nolinkurl{#1}}}
\newcommand{\httpsurl}[1]{\href{https://#1}{\nolinkurl{#1}}}
\newcommand{\norm}[1]{\left\lVert #1 \right\rVert}
\DeclareMathOperator{\diag}{diag}
\DeclareMathOperator{\Tr}{Tr}
\newcommand{\OP}{\mathit{op}}
\newcommand{\N}{\mathbb{N}}
\newcommand{\cK}{\mathcal{K}}
\newcommand{\cH}{\mathcal{H}}
\newcommand{\cX}{\mathcal{X}}
\newcommand{\cN}{\mathcal{N}}
\newcommand{\cB}{\mathcal{B}}
\newcommand{\cD}{\mathcal{D}}
\newcommand{\cS}{\mathcal{S}}
\newcommand{\bE}{\mathbb{E}}
\newcommand{\R}{\mathbb{R}}
\let\pr\Pr
\DeclareMathOperator*{\Var}{\mathrm{Var}}
\DeclareMathOperator*{\E}{\bE}      %
\newcommand{\bone}{\mathbbm{1}}
\DeclareMathOperator*{\argmax}{arg\,max}
\newtheorem{theorem}{Theorem}
\newtheorem{corollary}{Corollary}
\newtheorem{lemma}{Lemma}
\theoremstyle{definition}
\newtheorem{defn}{Definition}
\newtheorem{remark}{Remark}
\title{A Non-Asymptotic Moreau Envelope Theory for \\High-Dimensional Generalized Linear Models}
\author{%
Lijia Zhou\thanks{These authors contributed equally.}\\
University of Chicago\\
\texttt{zlj@uchicago.edu}
\And
Frederic Koehler$^*$\\
Stanford University\\
\texttt{fkoehler@stanford.edu}
\And
Pragya Sur\\
Harvard University\\
\texttt{pragya@fas.harvard.edu}
\And
Danica J.\ Sutherland\\
University of British Columbia \& Amii\\
\texttt{dsuth@cs.ubc.ca}
\And
Nathan Srebro\\
Toyota Technological Institute at Chicago\\
\texttt{nati@ttic.edu}
}
\begin{document}

\maketitle
\setcounter{footnote}{0}
\begin{center}
\vspace{-11mm}
{{Collaboration on the Theoretical Foundations of Deep Learning} (\httpsurl{deepfoundations.ai})}
\vspace{3mm}
\end{center}

\begin{abstract}
We prove a new generalization bound that shows for any class of linear predictors in Gaussian space,
the Rademacher complexity of the class and the training error under any continuous loss $\ell$ can control the test error under all $\emph{Moreau envelopes}$ of the loss $\ell$.
We use our finite-sample bound to directly recover the ``optimistic rate'' of \citet{optimistic-rates} for linear regression with the square loss, which is known to be tight for minimal $\ell_2$-norm interpolation, but we also handle more general settings where the label is generated by a potentially misspecified multi-index model. The same argument can analyze noisy interpolation of max-margin classifiers through the squared hinge loss, and establishes consistency results in spiked-covariance settings.
More generally, when the loss is only assumed to be Lipschitz, our bound effectively improves Talagrand’s well-known contraction lemma by a factor of two, and we prove uniform convergence of interpolators \citep{uc-interpolators} for all smooth, non-negative losses. Finally, we show that application of our generalization bound using localized Gaussian width will generally be sharp for empirical risk minimizers, establishing a non-asymptotic Moreau envelope theory for generalization that applies outside of proportional scaling regimes, handles model misspecification, and complements existing asymptotic Moreau envelope theories for M-estimation.

\end{abstract}

\section{Introduction}

Modern machine learning models often contain more parameters than the number of training samples. Despite the capacity to overfit, training these models without explicit regularization has been empirically shown to achieve good generalization performance \citep{NTS:real-inductive-bias, ZBHRV:rethinking, reconcile:interpolation}. On the theoretical side, the study of minimal-norm interpolation has revealed fascinating phenomena that challenge traditional understandings of machine learning.

We now have a better understanding of how properties of the data distribution and algorithmic bias can affect generalization in high-dimensional linear regression. For example, data with a spiked covariance structure can ensure that the test error of ridge regression will be approximately constant once the regularization strength is small enough for the model to fit the signal \citep{optimistic-rates, tsigler2020benign}, contradicting the classical U-shaped curve expected from arguments about the bias-variance tradeoff. Surprisingly, even when the signal is sparse, the risk of the minimal-$\ell_1$ norm interpolator can be shown to converge much slower to the Bayes risk than the minimal-$\ell_2$ norm interpolator in the junk feature setting \citep{chatterji2021foolish, uc-interpolators}. In contrast, the minimal-$\ell_2$ norm interpolator fails to achieve consistency in the isotropic setting, while the minimal-$\ell_1$ norm interpolator is consistent with sparsity but suffers from an exponentially slow rate in the number of parameters $d$ \citep{wang2021tight,muthukumar:interpolation}. However, 
we can still achieve the minimax rate with minimal-$\ell_p$ norm interpolators with $p$ extremely close to 1 \citep{donhauser2022fastrate}. %

In fact, many of the intriguing phenomena from the work above may be understood using the norm of a predictor; localized notions of uniform convergence have emerged as essential tools for doing so. Compared to other techniques, uniform convergence analyses can have the benefit of requiring neither particular proportional scaling regimes nor closed-form expressions for the learned model, since only an approximate estimate of its complexity is needed. Despite uniform convergence's potential for wider applicability, though, work in this area has mostly focused on linear regression settings with strong assumptions: that the conditional expectation of the label is linear with respect to the features, and that the residual has constant variance. In contrast, classical agnostic learning guarantees established by uniform convergence usually need only much weaker assumptions on the data distribution, and apply to a broader range of losses and function classes. For example, \citet{srebro2010optimistic} show that bounds with an ``optimistic rate'' hold generally for any smooth, nonnegative loss, though the hidden logarithmic factor in their result is too loose for explaining noisy interpolation; this was recently addressed by \citet{optimistic-rates} in the special case of well-specified linear regression.

In this work, we take a step further towards agnostic interpolation learning and consider a high-dimensional generalized linear model (GLM) setting where the label is generated by a potentially misspecified model. We show a new generalization bound that allows us to use the Moreau envelopes of any continuous loss function as an intermediate tool. By optimizing over the smoothing parameter to balance the approximation and generalization errors, our general Moreau envelope theory yields sharp non-asymptotic generalization bounds in a wide variety of settings. Applying to linear regression with the square loss, we recover the optimistic rate of \citet{optimistic-rates} and show that it can more generally be extended to handle model misspecification, such as nonlinear trends and heteroskedasticity. The generality of our result comes from the fact that taking the Moreau envelope of the square loss only scales by a constant; this property alone suffices to obtain a generalization guarantee in terms of the original square loss. The squared hinge loss enjoys the same property, and hence a completely analogous argument shows an optimistic rate in that setting. Combined with an analysis of the margin, we show a novel consistency result for max-margin classifiers.

More generally, we apply the Moreau envelope theory to obtain a generic bound for any Lipschitz loss and smooth, nonnegative loss with sharp constants. Looking specifically at the test error of  an Empirical Risk Minimizer (ERM), we show our generalization bound with localized Gaussian width will be asymptotically sharp even when overfitting is not necessarily benign, yielding a version of the asymptotic Moreau envelope framework for analyzing M-estimators \citep{el2013robust, bean2013optimal, donoho2016high, thrampoulidis2018precise, sur2019modern} but for the problem of generalization. Numerical simulations on a variety of feature distributions and label generating processes confirm the wide applicability of our theory.

\section{Related Work} \label{sec:related}

The Moreau envelope has been useful in characterizing asymptotic properties of M-estimators in linear models \citep{bean2013optimal,el2013robust,donoho2016high,el2018impact,thrampoulidis2018precise} and logistic regression \citep{sur2019modern,sur2019likelihood,candes2020phase,salehi2019impact,zhao2022asymptotic}. This theory focuses on estimation and inference under proportional asymptotics, rather than generalization, and does not provide any non-asymptotic results.

For linear regression, \citet{bartlett2020benign} identify nearly-matching necessary and sufficient conditions for the consistency of minimal-$\ell_2$ norm interpolation; their subsequent work \citep{tsigler2020benign} shows generalization bounds for overparametrized ridge regression. Following their work, \citet{negrea:in-defense} and \citet{junk-feats} explore the role of uniform convergence, including showing that uniformly bounding the difference between training error and test error fails to explain interpolation learning. \citet{junk-feats} argue, however, that uniform convergence \textit{of interpolators} is sufficient to establish consistency in a toy example. \citet{uc-interpolators} extend their result to arbitrary data covariance and norm, recovering the benign overfitting conditions of \citet{bartlett2020benign} as well as proving novel consistency results for the minimal-$\ell_1$ norm interpolator. Based on this uniform convergence framework, \citet{wang2021tight} establish tight bounds for the minimal-$\ell_1$ norm interpolator under a sparse signal with isotropic data. Earlier work \citep{JLL:basis-pursuit, chinot2020robustness, li2021minimum} also studied the minimal-$\ell_1$ norm interpolator, without showing consistency. Though the minimal-$\ell_1$ norm interpolator suffers from an exponentially slow rate, \citet{donhauser2022fastrate} show the minimal-$\ell_p$ norm interpolator can achieve faster rates with $p$ close to 1. \citet{optimistic-rates} show a risk-dependent (``localized'') bound that extends the uniform convergence of interpolators guarantee to predictors with arbitrary training loss, and used it to establish generalization for regularized estimators such as Ridge and LASSO. Our Moreau envelope theory builds on the techniques developed in this line of work to apply uniform convergence in the interpolation regime.

In terms of requirements on the data distribution, \citet{bartlett2020benign,tsigler2020benign} only require the feature vector to be sub-Gaussian, but assume a well-specified linear model for the conditional distribution of the label. The uniform convergence-based works also assume a well-specified linear model, but the assumptions are more restrictive in the sense that the marginal distribution of the feature needs to be \emph{exactly} Gaussian because their proof techniques rely on the Gaussian Minimax Theorem (GMT). Our Moreau envelope theory's application to linear regression significantly relaxes the assumption on the label generating process, though it is still constrained by the Gaussian data assumption. \citet{shamir2022implicit} also studies model misspecification in linear regression, but allows non-Gaussian features, and shows that benign overfitting does not necessarily occur in the most general setting, even with a spiked-covariance structure (see his Example 1).

For linear classification, \citet{muthukumar:classification} analyze $\ell_2$ max-margin classifier by connecting to minimal-norm interpolation in regression. Similarly, our analysis in the classification case depends on the fact the squared hinge loss goes through the same transformation as the square loss under smoothing by Moreau envelope.  \citet{donhauser2022fastrate} prove generalization bounds for $\ell_p$ max-margin classifiers in the isotropic setting and do not consider the spiked-covariance case. \citet{deng2019model,montanari2019generalization,liang2020precise} derive exact expressions for the asymptotic prediction risk of the $\ell_2$ and $\ell_p$ (with $p \in [1,2)$) max-margin classifiers. Though their proof techniques also rely on the GMT, our approaches are drastically different. We use GMT in order to show uniform convergence for a class of predictors and establish a non-asymptotic bound, whereas their results are asymptotic and assume a proportional scaling limit. This is a key distinction, because overfitting usually cannot be benign with proportional scaling \citep[e.g.][Proposition 1]{donhauser2022fastrate}. Similar lower bounds have also been shown in the context of linear regression \citep{muthukumar:interpolation, wang2021tight, junk-feats}. 

Some concurrent works have obtained consistency results for max-margin classification in the spiked covariance setting. In particular, the May 2022 version of the work by \citet{shamir2022implicit} also studies convergence to the minimum of the squared hinge loss, and obtains consistency under conditions similar to the benign covariance condition of \citet{bartlett2020benign}. During preparation of this manuscript we  learned of concurrent work by Montanari et al., not yet publicly available, which also studies consistency results for classification. Comparing our \cref{corr:benign-overfitting} to \citet{shamir2022implicit}, their result applies to some non-Gaussian settings, but in the Gaussian setting their result is not as general as ours. (Combining Assumptions 1 and 2 of Theorem 7 there, they require the norm of the data to be bounded, whereas our \cref{corr:benign-overfitting} applies even if $o(n)$ eigenvalues of $\Sigma$ grow arbitrarily quickly with $n$.) More conceptually, our result follows from a norm-based generalization bound that applies to all predictors and outside of the ``benign overfitting'' conditions, generalizing the result of \citet{uc-interpolators} and unlike the analysis of prior work.

\section{Problem Formulation} \label{sec:model}

\paragraph{GLM setting.}
Given a continuous loss function $f: \R \times \R \to \R$ and i.i.d. sample pairs $(x_i, y_i)$ from some data distribution $\cD$, we can learn a linear model $(\hat{w}, \hat{b})$ by minimizing the empirical loss $\hat{L}_f$ with the goal of achieving small population loss $L_f$:
\begin{equation}
	\hat{L}_f(w, b) = \frac{1}{n} \sum_{i=1}^n f(\langle w, x_i\rangle + b, y_i), \quad L_f(w, b) = \E_{(x,y) \sim \cD} f(\langle w, x \rangle + b, y).
\end{equation}

\paragraph{Multi-index model.}
We assume that the data distribution $\cD$ over $(x,y)$ is such that
\begin{enumerate}
	\item $x \sim \cN (0, \Sigma)$ is a centered 
    Gaussian with unknown covariance matrix $\Sigma$.
	\item There are unknown weight vectors $w_1^*, ..., w_k^* \in \R^d$ such that the $\Sigma^{1/2} w_i^*$ are orthonormal, a function $g: \R^{k+1} \to \R$, and a random variable $\xi \sim \cD_\xi$ independent of $x$ (not necessarily Gaussian) such that 
	\begin{equation} \label{eqn:model}
	    \eta_i = \langle w^*_i, x \rangle, \quad y = g(\eta_1, ..., \eta_k, \xi).
	\end{equation}
\end{enumerate}

We can assume that the distribution of $x$ is centered without loss of generality since presence of a mean term simply  corresponds to changing the bias term $b$:
$\langle w, x \rangle + b = \langle w, x-\mu \rangle + (b - \langle w, \mu \rangle).$ We can also assume that $\Sigma^{1/2}w_1^*, ..., \Sigma^{1/2} w_k^*$ are orthonormal without loss of generality since we have not imposed any assumption on the link function $g$. The multi-index model includes well-specified linear regression, by setting $k = 1$ and $g(\eta, \xi) = \eta + \xi$. It also allows nonlinear trends and heteroskedasticity (such as the model in \cref{fig:flatness}) by changing the definition of $g$. Since $g$ need not be continuous, the label $y$ can be binary, as in linear classification.

\section{Moreau Envelope Generalization Theory}

Our theory vitally depends on the Moreau envelope, defined as follows.

\begin{defn} \label{def:moreau}
The \emph{Moreau envelope} of $f: \R \times \R \to \R$ with parameter $\lambda \in \R^+$ is defined as the function $f_{\lambda}: \R \times \R \to \R$ given by
\begin{equation}
	f_{\lambda}(\hat{y}, y) = \inf_{u} \, f(u, y) + \lambda ( u - \hat{y})^2.
\end{equation}
The Moreau envelope can be viewed as a smooth approximation to the original function $f$: in our parameterization, smaller $\lambda$ corresponds to more smoothing. The map that outputs the minimizer $u$, known as the \emph{proximal operator}, plays an important role in convex analysis \citep{parikh2014proximal,bauschke2011convex}. %

\end{defn}

Our general theory, as stated in \cref{thm:main-gen} below, essentially upper bounds the generalization gap between the population Moreau envelope $L_{f_{\lambda}}$ and the original training loss $\hat{L}_{f}$ by the sum of two parts: a parametric component that can be controlled by the dimension $k$ of the ``meaningful'' part of $x$, and a non-parametric component that can be controlled by a dimension-free complexity measure such as the Euclidean norm of the predictor. Typically, the first term is negligible since $k$ is small, and the complexity of fitting all the noise is absorbed into the second term. More precisely, we introduce the following definitions to formalize separating out a low dimensional component:

\begin{defn}
Under the model assumptions \eqref{eqn:model}, define a (possibly oblique) projection matrix $Q$ onto the space orthogonal to $w_1^*, ..., w_k^*$ and a mapping $\phi$ from $\R^d$ to $\R^{k+1}$ by
\begin{equation} \label{eqn:Q-defn}
    Q = I_d - \sum_{i=1}^k  w_i^* ( w_i^*)^T \Sigma, \quad \phi(w) = (\langle w, \Sigma w_1^* \rangle, ..., \langle w, \Sigma w_k^* \rangle, \|\Sigma^{1/2} Qw \|_2 )^T.
\end{equation}
We let $\Sigma^{\perp} = Q^T \Sigma Q$ denote the covariance matrix of $Q^T x$. We also define a low-dimensional \emph{surrogate distribution} $\tilde{\cD}$ over $\R^{k+1} \times \R$ by
\begin{equation} \label{eqn:surrogate-model}
\tilde{x} \sim \cN (0, I_{k+1}), \quad
\tilde\xi \sim \cD_\xi, \quad
\text{ and } \quad
\tilde{y} = g(\tilde x_1, ..., \tilde x_k, \tilde \xi).
\end{equation}
\end{defn}
This surrogate distribution compresses the ``meaningful part'' of $x$ while maintaining the test loss,
as shown by our main result \cref{thm:main-gen} (proved in \cref{apdx:proof-main-gen}).
Note that as a non-asymptotic statement,
the functions $\epsilon_{\lambda,\delta}$ and $C_\delta$ only need hold for a specific choice of $n$ and $\cD$.

\begin{restatable}{theorem}{MainGen} \label{thm:main-gen}
Suppose $\lambda \in \R^+$ satisfies that for any $\delta \in (0, 1)$, there exists a continuous function $\epsilon_{\lambda, \delta}: \R^{k+1} \to \R$ such that with probability at least $1-\delta/4$ over independent draws $(\tilde{x}_i, \tilde{y}_i)$ from the surrogate distribution $\tilde{\cD}$ defined in \eqref{eqn:surrogate-model}, we have uniformly over all $(\tilde{w}, \tilde{b}) \in \R^{k+2}$ that
\begin{equation} \label{eqn:low-dimension-concentration}
    \frac{1}{n} \sum_{i=1}^n f_{\lambda} (\langle \tilde{w}, \tilde{x}_i\rangle +\tilde{b}, \tilde{y}_i) \geq \E_{(\tilde{x}, \tilde{y}) \sim \tilde{D}} \, [f_{\lambda} (\langle \tilde{w}, \tilde{x}\rangle + \tilde{b}, \tilde{y}) ] - \epsilon_{\lambda, \delta}(\tilde{w}, \tilde{b}).
\end{equation}
Further, assume that for any $\delta \in (0, 1)$, there exists a continuous function $C_{\delta}: \R^d \to [0, \infty]$ such that with probability at least $1-\delta/4$ over $x \sim \cN \left(0, \Sigma \right)$, uniformly over all $w \in \R^d$,
	\begin{equation} \label{eqn:complexity-defn}
		\left\langle Qw, x \right\rangle \leq  C_{\delta}(w).
	\end{equation}
Then it holds with probability at least $1-\delta$ that uniformly over all $(w, b) \in \R^{d+1}$, we have
	\begin{equation} \label{eqn:main-bound}
		L_{f_{\lambda}} (w, b) \leq \hat{L}_{f}(w, b) + \epsilon_{\lambda, \delta}( \phi(w), b)+ \frac{\lambda C_{\delta}(w)^2}{n}.
	\end{equation}
If we additionally assume that \eqref{eqn:low-dimension-concentration} holds uniformly for all $\lambda \in \R^+$, then \eqref{eqn:main-bound} does as well.
\end{restatable}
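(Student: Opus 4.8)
The plan is to push the $d$-dimensional generalization statement down to the $(k+1)$-dimensional hypothesis \eqref{eqn:low-dimension-concentration} by an exact change of variables, and then to use the defining inequality of the Moreau envelope to swap $f_\lambda$ for $f$ on the training sample at a quadratic cost that \eqref{eqn:complexity-defn} absorbs.

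\textbf{Change of variables.} First I would record the identity $\langle w,x\rangle = \sum_{i=1}^k \langle w,\Sigma w_i^*\rangle\,\eta_i + \langle Qw,x\rangle$, which follows directly from the definition of $Q$, and note that since $x$ is Gaussian with the $\Sigma^{1/2}w_i^*$ orthonormal, the residual $\langle Qw,x\rangle$ is mean-zero with variance $\|\Sigma^{1/2}Qw\|_2^2$ and is uncorrelated with, hence independent of, $(\eta_1,\dots,\eta_k)$; together with $\xi \indep x$ this makes $\langle Qw,x\rangle$ independent of $y$. Consequently $(\langle w,x\rangle + b,\,y)$ has exactly the same law as $(\langle \phi(w),\tilde x\rangle + b,\,\tilde y)$ under $\tilde{\cD}$, so $L_{f_\lambda}(w,b) = \E_{(\tilde x,\tilde y)\sim\tilde{\cD}}[f_\lambda(\langle \phi(w),\tilde x\rangle + b,\tilde y)]$: the population envelope risk of $(w,b)$ equals that of the low-dimensional surrogate predictor $(\phi(w),b)$. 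Applying \eqref{eqn:low-dimension-concentration} at $(\tilde w,\tilde b)=(\phi(w),b)$ then gives, on an event of probability $\ge 1-\delta/4$, that $L_{f_\lambda}(w,b) \le \frac1n\sum_i f_\lambda(\langle\phi(w),\tilde x_i\rangle + b,\tilde y_i) + \epsilon_{\lambda,\delta}(\phi(w),b)$ simultaneously over all $(w,b)$.

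\textbf{Coupling and the Moreau trade.} It remains to bound the surrogate empirical envelope risk by $\hat L_f(w,b) + \lambda C_\delta(w)^2/n$. To relate the surrogate sample to the real sample I would realize the i.i.d.\ draw $(\tilde x_i,\tilde y_i)$ from the data: take $\tilde x_{j,i} = \langle w_j^*, x_i\rangle$ for $j\le k$ and $\tilde\xi_i = \xi_i$, so that $\tilde y_i = y_i$, and let the one extra coordinate $\tilde x_{k+1,i}$ be a standard Gaussian independent of the meaningful block and of $\xi_i$; this is a legitimate draw of $n$ i.i.d.\ samples from $\tilde{\cD}$. By the definition of the Moreau envelope, $f_\lambda(\hat y,y) \le f(u,y)+\lambda(u-\hat y)^2$ for every $u$; choosing $u = \langle w,x_i\rangle + b$ and summing gives
\[
L_{f_\lambda}(w,b) \le \hat L_f(w,b) + \epsilon_{\lambda,\delta}(\phi(w),b) + \frac{\lambda}{n}\sum_{i=1}^n\bigl(\langle Qw,x_i\rangle - \|\Sigma^{1/2}Qw\|_2\,\tilde x_{k+1,i}\bigr)^2 .
\]
The last term is where all the difficulty lies, and I expect it to be the main obstacle. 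The one-dimensional compression $\|\Sigma^{1/2}Qw\|_2$ of the $(d-k)$-dimensional junk is lossless for the population (the step above) but not for a finite sample, so a naive $w$-free choice of $\tilde x_{k+1,i}$ leaves a residual of order $n\|\Sigma^{1/2}Qw\|_2^2$, far too large. The goal is to choose the extra coordinate so that $\sum_i(\cdots)^2$ collapses to the square of a single Gaussian linear functional of $Qw$ — a quantity distributed like $\langle Qw,X\rangle^2$ for one $X\sim\cN(0,\Sigma)$ — which \eqref{eqn:complexity-defn} then bounds by $C_\delta(w)^2$ on a second $(1-\delta/4)$-event, producing exactly $\lambda C_\delta(w)^2/n$. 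Doing this \emph{uniformly over $w$} is the technical heart: the obvious aligning choice $\tilde x_{k+1,i}\propto\langle Qw,x_i\rangle$ kills the residual but depends on $w$ and so cannot be inserted into the uniform-in-$(\tilde w,\tilde b)$ statement \eqref{eqn:low-dimension-concentration}, so the coupling must be built more carefully (or one localizes over the direction $\Sigma^{1/2}Qw/\|\Sigma^{1/2}Qw\|_2$, paying the covering cost into the $C_\delta$ term).

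\textbf{Assembling, and uniformity in $\lambda$.} A union bound over the two $(1-\delta/4)$-events (and any auxiliary event used to construct the coupling) yields \eqref{eqn:main-bound} with probability $\ge 1-\delta$. The "uniformly in $\lambda$" addendum is then immediate: nothing in the argument uses the randomness of the surrogate sample except through \eqref{eqn:low-dimension-concentration}, and nothing uses the randomness of $x$ except through the $\lambda$-free statement \eqref{eqn:complexity-defn}; hence if \eqref{eqn:low-dimension-concentration} holds simultaneously for all $\lambda\in\R^+$ on one $(1-\delta/4)$-event, \eqref{eqn:main-bound} holds simultaneously for all $\lambda$ on the intersection of that event with the $C_\delta$-event.
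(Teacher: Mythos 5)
Your setup through the change-of-variables step is sound and matches the paper's preliminary observations (that $(\langle w,x\rangle + b, y)$ has the same law as $(\langle\phi(w),\tilde x\rangle+\tilde b,\tilde y)$ under $\tilde{\cD}$, using the conditional independence structure from the Gaussian projection). But the obstacle you flag in the second paragraph is a genuine gap, and the repair you float — a net over the direction $\Sigma^{1/2}Qw/\|\Sigma^{1/2}Qw\|_2$ — cannot close it: that direction lives in a $(d-k)$-dimensional sphere, so a $\rho$-covering has $\log|\mathcal N|\asymp d\log(1/\rho)$, and absorbing that into $\lambda C_\delta(w)^2/n$ would blow the bound up by a factor roughly $d$, destroying the benign-overfitting conclusions the theorem is designed to produce. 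There is no pointwise coupling of the extra surrogate coordinate $\tilde x_{k+1,\cdot}$ to the real sample that is simultaneously $w$-free (so \eqref{eqn:low-dimension-concentration} applies) and that collapses the empirical residual $\sum_i(\langle Qw,x_i\rangle - \|\Sigma^{1/2}Qw\|_2\tilde x_{k+1,i})^2$ to a single $\langle Qw, x\rangle^2$-sized term.

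The paper sidesteps the coupling entirely via the Gaussian Minimax Theorem (Theorem~\ref{thm:gmt}). Conditioning on $\eta_1,\ldots,\eta_k,\xi$ and introducing $u = Xw$ with a Lagrange multiplier, the event $\{\sup_{w,b} F(w,b)-\hat L_f(w,b) > t\}$ is rewritten as a min-max Primary Optimization whose only stochastic coupling between $w$ and the data is through the bilinear form $\langle\lambda, ZP\Sigma^{1/2}w\rangle$. GMT then compares this (in distribution of the optimal value, not sample-by-sample) with an Auxiliary Optimization where that bilinear form is replaced by $\|\lambda\|_2\langle H,P\Sigma^{1/2}w\rangle + \|P\Sigma^{1/2}w\|_2\langle G,\lambda\rangle$. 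It is here that the two ingredients you want appear, but already decoupled from $w$ in exactly the right way: $G\sim\cN(0,I_n)$ supplies the extra surrogate coordinate — entering only as $G_i\|\Sigma^{1/2}Qw\|_2$, i.e.\ through $\phi(w)$, so \eqref{eqn:low-dimension-concentration} applies uniformly — while $\langle H,P\Sigma^{1/2}w\rangle = \langle Qw,\Sigma^{1/2}H\rangle$ is a single Gaussian linear functional of $Qw$, which is precisely what $C_\delta$ in \eqref{eqn:complexity-defn} controls. Solving the AO's inner minimization over $u$ under the norm constraint $\|u\|_2\le\langle\Sigma^{1/2}PH,w\rangle$ is where the Moreau envelope emerges (the dual variable becomes the $\lambda$ of $f_\lambda$). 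So the "$w$-dependent coupling" you correctly identified as the heart of the matter is not constructed explicitly; it is supplied by GMT at the level of distributions of min-max values. Your bookkeeping for the uniformity-in-$\lambda$ addendum is fine, but without GMT (or some equally strong comparison principle) the central step of your argument does not go through.
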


As we will see, we can generally bound the difference between $L_{f_{\lambda}}$ and $L_f$ when the loss is assumed to be Lipschitz. If $f$ is not Lipschitz but smooth (i.e.\ $\nabla f$ is Lipschitz, as for the squared loss), we can always write it as the Moreau envelope of another function $\tilde{f}$. In the special case of square loss or squared hinge loss, the Moreau envelope $f_{\lambda}$ is proportional to $f$, meaning that \eqref{eqn:main-bound} becomes a generalization guarantee in terms of $L_f$. Optimizing over $\lambda$ will establish optimal bounds that recover the result of \citet{uc-interpolators,optimistic-rates}, and lead to other novel results.

\begin{remark}\label{rmk:rademacher}
The complexity functional $C_{\delta}(w)$ should be thought of as a localized, high-probability version of Rademacher complexity. This is because the Gaussian width of a convex set $\cK$, $\E \sup_{w \in \cK} \langle w, x \rangle$, is the same as the Rademacher complexity of the class of linear functions $\{x \mapsto \langle w, x \rangle : w \in \cK \}$  \citep[Proposition 1]{optimistic-rates}. A somewhat similar complexity functional appears in  \citet{panchenko2003symmetrization}. Also, note \eqref{eqn:low-dimension-concentration} requires only \emph{one-sided concentration} --- see Remark~\ref{rmk:one-sided}.
\end{remark}

\subsection{VC Theory for Low-dimensional Concentration}
To apply our generalization result (\cref{thm:main-gen}), we should check the low-dimensional concentration assumption \eqref{eqn:low-dimension-concentration}. The quantitative bounds in the low-dimensional concentration (i.e.\ the precise form of error term $\epsilon_{\lambda,\delta}$) will inevitably depend on the exact setting we consider (see e.g.\ \cite{vapnik2006estimation,koltchinskii2015bounding,lugosi2019mean} for discussion). 

First, we recall the following result from VC theory.

\begin{restatable}[Special case of Assertion 4 of \citet{vapnik2006estimation}, Chapter 7.8; see also Theorem 7.6]{theorem}{VCtheory} \label{thm:vc-hypercontractive}
Let $\cK \subset \R^d$ and $\cB \subset \R$.
Suppose that a distribution $\cD$ over $(x,y) \in \R^d \times \R$ satisfies that for some $\tau > 0$, it holds uniformly over all $(w, b) \in \cK \times \cB$ that
\begin{equation}\label{eqn:hypercontractive-assumption}
\frac{\left(\E f(\langle w, x \rangle + b,y)^4]\right)^{1/4}}{\E f(\langle w,x \rangle + b,y)} \le \tau
.\end{equation}
Also suppose the class of functions 
$\{(x,y) \mapsto \bone\{f(\langle w, x \rangle + b, y) > t\} : w \in \cK, b \in \cB, t \in \mathbb{R}\}$ has VC-dimension at most $h$.
Then for any $n > h$, with probability at least $1 - \delta$ over the choice of $((x_1,y_1),\ldots,(x_n,y_n)) \sim \cD^n$, it holds uniformly over all $w \in \cK, b \in \cB$ that
\[ \frac{1}{n} \sum_{i = 1}^n f(\langle w,x_i \rangle + b,y_i) \ge \left(1 - 8 \tau \sqrt{\frac{h(\log(2n/h) + 1) + \log(12/\delta)}{n}}\right) \E f(\langle w, x \rangle + b, y). \]
\end{restatable}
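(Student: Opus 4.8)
The statement is quoted from \citet{vapnik2006estimation}, so one legitimate route is simply to verify that its hypotheses specialize his Assertion~4; below I instead sketch a direct derivation, since the structure explains why a \emph{fourth} moment appears in \eqref{eqn:hypercontractive-assumption}. Abbreviate $Z_{w,b}(x,y) = f(\langle w,x\rangle + b, y)\ge 0$, let $\widehat P_n(A) = \frac1n\sum_{i=1}^n \bone\{(x_i,y_i)\in A\}$ be the empirical measure, and set $\mathcal{A} = \{\{(x,y): f(\langle w,x\rangle + b,y) > t\}: w\in\cK,\ b\in\cB,\ t\in\R\}$, which by hypothesis has VC dimension at most $h$. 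Everything rests on the layer-cake identity
\begin{equation*}
\E Z_{w,b} - \frac1n\sum_{i=1}^n Z_{w,b}(x_i,y_i) = \int_0^\infty \Big(\Pr(Z_{w,b}>t) - \widehat P_n(\{Z_{w,b}>t\})\Big)\,dt ,
\end{equation*}
which reduces a claim about an unbounded loss to a uniform claim over the bounded indicator class $\mathcal{A}$.

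\textbf{Step 1 (one-sided relative VC inequality).} I would first show that for $n>h$, with probability at least $1-\delta$, simultaneously over all $A\in\mathcal{A}$,
\begin{equation*}
\Pr(A) - \widehat P_n(A) \;\le\; \alpha_n\sqrt{\Pr(A)}, \qquad \alpha_n := c\sqrt{\frac{h\big(\log(2n/h)+1\big)+\log(c'/\delta)}{n}},
\end{equation*}
for universal constants $c,c'$. This is the classical relative-deviation bound, proved by symmetrizing against a ghost sample but applied to the deviation \emph{normalized} by a function of $\Pr(A)$ and the ghost frequency rather than to the raw gap; introducing Rademacher signs over the $2n$ points; conditioning on the two samples to replace $\sup_{A\in\mathcal{A}}$ by a union bound over the at most $(2en/h)^h$ distinct restrictions of $\mathcal{A}$ (Sauer--Shelah, valid since $h\le 2n$); and a Bernstein/sub-Gaussian tail for each fixed event, where the $\sqrt{\Pr(A)}$ scaling comes from the Bernoulli variance bound $\Var(\bone_A)=\Pr(A)(1-\Pr(A))\le \Pr(A)$. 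Only the one-sided direction is needed here (cf.\ Remark~\ref{rmk:one-sided}).

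\textbf{Step 2 (integrate the level sets via the fourth moment).} Since the right-hand side of the Step~1 inequality is nonnegative, it also dominates the integrand of the layer-cake identity where $\widehat P_n(\{Z_{w,b}>t\}) \ge \Pr(Z_{w,b}>t)$; hence, on the good event and for every $(w,b)$,
\begin{equation*}
\E Z_{w,b} - \frac1n\sum_{i=1}^n Z_{w,b}(x_i,y_i) \;\le\; \alpha_n\int_0^\infty \sqrt{\Pr(Z_{w,b}>t)}\,dt .
\end{equation*}
Combining $\Pr(Z_{w,b}>t)\le 1$ with Markov's inequality $\Pr(Z_{w,b}>t)\le \E[Z_{w,b}^4]/t^4$ and splitting at $t_0:=(\E[Z_{w,b}^4])^{1/4}$,
\begin{equation*}
\int_0^\infty \sqrt{\Pr(Z_{w,b}>t)}\,dt \;\le\; t_0 + \int_{t_0}^\infty \frac{(\E[Z_{w,b}^4])^{1/2}}{t^2}\,dt \;=\; 2\,(\E[Z_{w,b}^4])^{1/4} \;\le\; 2\tau\,\E Z_{w,b} ,
\end{equation*}
the last step being exactly \eqref{eqn:hypercontractive-assumption}. (A second moment would not do: $\Pr(Z>t)\le \E[Z^2]/t^2$ leaves $\int t^{-1}\,dt$, which diverges and forces a further truncation producing a spurious $\log n$ --- this is why control of the $L^4/L^1$ ratio is assumed.) Thus $\E Z_{w,b} - \frac1n\sum_i Z_{w,b}(x_i,y_i) \le 2\tau\alpha_n\,\E Z_{w,b}$, and rearranging gives $\frac1n\sum_i Z_{w,b}(x_i,y_i) \ge (1-2\tau\alpha_n)\,\E Z_{w,b}$; matching constants to Vapnik's symmetrization (the source of the precise $8$ and $12/\delta$) finishes the proof.

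\textbf{Main obstacle.} The delicate part is Step~1: the \emph{relative} uniform deviation, normalized by $\sqrt{\Pr(A)}$, rather than the easy absolute one. Plain symmetrization controls $\Pr(A)-\widehat P_n(A)$ but says nothing about its size relative to $\sqrt{\Pr(A)}$, and a set with tiny $\Pr(A)$ can have $\widehat P_n(A)=0$ exactly. The classical resolution is to symmetrize the ratio against the ghost sample, normalizing by $\sqrt{\tfrac12(\Pr(A)+\widehat P_n'(A))}$, and then argue that a large relative deviation on the real sample, intersected with a mild lower bound on the ghost frequency, forces a large \emph{between-samples} normalized discrepancy, which the Rademacher randomization and the $(2en/h)^h$ union bound can control. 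Everything downstream --- Sauer--Shelah, the elementary truncation estimate, and the constant-chasing --- is routine.
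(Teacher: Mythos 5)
This statement is imported in the paper verbatim from \citet{vapnik2006estimation} (Assertion~4 of Chapter~7.8), and the paper does not supply a proof of its own, so there is no internal argument to compare against. Your sketch is a correct reconstruction of the standard Vapnik-style derivation: a one-sided relative uniform deviation bound for the VC class of superlevel sets, combined with the layer-cake identity and a truncation of $\int_0^\infty \sqrt{\Pr(Z>t)}\,dt$ at $t_0 = (\E Z^4)^{1/4}$ to produce $2(\E Z^4)^{1/4} \le 2\tau\,\E Z$. The calculus in Step~2 checks out, your observation about why a second-moment Markov bound would introduce a divergent $\int t^{-1}\,dt$ (and hence why the $L^4/L^1$ ratio is the natural hypothesis) is exactly the right point, and your account of the ``main obstacle'' correctly identifies that one must normalize the ghost-sample symmetrization by the empirical/ghost frequency rather than by the unobservable $\Pr(A)$. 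The only thing you do not do is chase the exact constants $8$ and $12/\delta$, but you flag this explicitly and defer to Vapnik's bookkeeping; for a cited result whose precise constants the paper simply transcribes, this is reasonable.
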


The assumption \eqref{eqn:hypercontractive-assumption} is standard (indeed, this is the setting primarily focused on in \cite{vapnik2006estimation}) and is sometimes referred to as \emph{hypercontractivity} or \emph{norm equivalence} in the literature; a variant of the result holds with $4$ replaced by $1 + \epsilon$. In many settings of interest, this can be directly checked using the fact that $x$ is Gaussian (for instance, see \cref{thm:poly-hypercontractivity,apdx:linear-clf}). Of course, our general result can be applied without this assumption, by using low-dimensional concentration under an alternative assumption: \citet{vapnik2006estimation,panchenkooptimistic,panchenko2003symmetrization,mendelson2017extending} have further discussion and alternative results;
in particular, Assertion 3 of \citet[Chapter 7.8]{vapnik2006estimation} gives a bound based on a fourth-moment assumption, and \citet[Theorem 3]{panchenko2003symmetrization} gives one based on a version of Rademacher complexity.

Combining \cref{thm:main-gen,thm:vc-hypercontractive} yields the following.

\begin{restatable}{corollary}{VCgen}\label{corr:main-gen-vc}
Under the model assumptions \eqref{eqn:model}, suppose that $C_{\delta}$ satisfies condition \eqref{eqn:complexity-defn}.
Also suppose that for some fixed $\lambda \ge 0$, $\cK \subseteq \R^d$, and $\cB \subseteq \R$,
the surrogate distribution $\tilde \cD$ satisfies assumption \eqref{eqn:hypercontractive-assumption} under $f_{\lambda}$ uniformly over $\phi(\cK) \times \cB$,
and that the class 
$\{(x,y) \mapsto \bone \{ f_{\lambda}(\langle \tilde w, \phi(x) \rangle + \tilde b, y) > t \} : \tilde w \in \phi(\cK), \tilde b \in \cB, t \in \mathbb{R}\}$
has VC-dimension at most $h$.
Then with probability at least $1 - \delta$, uniformly over all $(w, b) \in \cK \times \cB$ 
\[ \left(1 - 8\tau\sqrt{\frac{h(\log(2n/h) + 1) + \log(48/\delta)}{n}}\right) L_{f_{\lambda}}(w,b) \le \hat{L}_f(w,b) +  \frac{\lambda C_{\delta}(w)^2}{n}.   \]
Furthermore, if assumption \eqref{eqn:hypercontractive-assumption} holds uniformly for all $\{f_{\lambda}: \lambda \in \mathbb{R}_{\ge 0} \}$ and the class
$\{(x,y) \mapsto \bone \{ f_{\lambda}(\langle \tilde w, \phi(x) \rangle + \tilde b, y) > t \} : (\tilde w, \tilde b) \in \phi(\cK) \times \cB, t \in \mathbb{R}, \lambda \in \mathbb{R}_{\ge 0}\}$
has VC-dimension at most $h$, then the same conclusion holds uniformly over $\lambda$.
\end{restatable}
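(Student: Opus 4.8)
The plan is to use \cref{thm:vc-hypercontractive} to discharge the low-dimensional concentration hypothesis \eqref{eqn:low-dimension-concentration} of \cref{thm:main-gen} with an explicit relative error, and then to simplify the resulting bound \eqref{eqn:main-bound} using the fact that the surrogate distribution preserves the population loss. Concretely, I would first apply \cref{thm:vc-hypercontractive} to the surrogate distribution $\tilde{\cD}$ with loss $f_\lambda$ (which is continuous, and nonnegative since we are in the $f \ge 0$ setting, so $f_\lambda \ge 0$) and index sets $\phi(\cK)$ and $\cB$: the two hypotheses of that theorem are exactly the hypercontractivity \eqref{eqn:hypercontractive-assumption} for $f_\lambda$ over $\phi(\cK)\times\cB$ and the VC-dimension bound assumed in the corollary. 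Running it with failure probability $\delta/4$ — so that $\log(12/(\delta/4)) = \log(48/\delta)$ appears under the root, and the probability budget matches what \cref{thm:main-gen} demands of \eqref{eqn:low-dimension-concentration} — gives, with probability at least $1-\delta/4$ and uniformly over $\tilde w \in \phi(\cK)$, $\tilde b \in \cB$ (using $n > h$),
\[ \frac1n \sum_{i=1}^n f_\lambda(\langle \tilde w, \tilde x_i\rangle + \tilde b, \tilde y_i) \;\ge\; (1-\gamma)\, \E_{(\tilde x,\tilde y)\sim\tilde{\cD}} f_\lambda(\langle \tilde w, \tilde x\rangle + \tilde b, \tilde y), \]
where I write $\gamma := 8\tau\sqrt{(h(\log(2n/h)+1)+\log(48/\delta))/n}$. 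This is precisely \eqref{eqn:low-dimension-concentration} with $\epsilon_{\lambda,\delta}(\tilde w,\tilde b) := \gamma\, \E_{\tilde{\cD}} f_\lambda(\langle \tilde w, \tilde x\rangle + \tilde b, \tilde y)$; the fourth-moment bound in \eqref{eqn:hypercontractive-assumption} makes this expectation finite and a routine dominated-convergence argument makes it continuous in $(\tilde w,\tilde b)$.

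Next I would feed this $\epsilon_{\lambda,\delta}$, together with the hypothesized $C_\delta$, into \cref{thm:main-gen}. Its conclusion \eqref{eqn:main-bound} then reads, with probability at least $1-\delta$ and (since we only need $\phi(w)\in\phi(\cK)$ and $b\in\cB$) uniformly over $\cK\times\cB$,
\[ L_{f_\lambda}(w,b) \;\le\; \hat{L}_f(w,b) + \gamma\, \E_{\tilde{\cD}} f_\lambda(\langle \phi(w),\tilde x\rangle + b, \tilde y) + \frac{\lambda C_\delta(w)^2}{n}. \]
The final step is the identity $\E_{\tilde{\cD}} f_\lambda(\langle \phi(w),\tilde x\rangle + b, \tilde y) = L_{f_\lambda}(w,b)$: by the construction of $\phi$ and $\tilde{\cD}$, the random variable $(\langle \phi(w),\tilde x\rangle + b, \tilde y)$ has the same law as $(\langle w, x\rangle + b, y)$ — this is the ``maintaining the test loss'' property already used inside the proof of \cref{thm:main-gen}, a consequence of the orthonormality of the $\Sigma^{1/2}w_i^*$ and the rotational invariance of the Gaussian. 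Substituting and moving the $\gamma L_{f_\lambda}(w,b)$ term to the left gives exactly the claimed inequality.

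For the ``furthermore'' clause, the only modification is to carry $\lambda$ as an additional index throughout: one applies the same reasoning to the family of functions $(x,y)\mapsto f_\lambda(\langle \tilde w,\phi(x)\rangle + \tilde b, y)$ indexed by $(\tilde w, \tilde b, \lambda)$, whose indicator class has VC-dimension at most $h$ and which satisfies \eqref{eqn:hypercontractive-assumption} uniformly by assumption. Since \cref{thm:vc-hypercontractive} only uses norm-equivalence and the VC-dimension bound (not any product structure of the index set), it delivers \eqref{eqn:low-dimension-concentration} uniformly in $\lambda$, and the uniform-in-$\lambda$ version of \cref{thm:main-gen} then yields the stated uniform-in-$\lambda$ conclusion.

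I expect no deep obstacle here — the work is essentially bookkeeping. The points to get right are: matching failure probabilities, so that both the $1-\delta/4$ budget required by \cref{thm:main-gen} and the $\log(48/\delta)$ factor come out; checking the mild regularity that makes $\epsilon_{\lambda,\delta}$ an admissible (continuous, real-valued) error function; and reconciling the fact that \cref{thm:vc-hypercontractive} supplies concentration only over $\phi(\cK)\times\cB$ while \cref{thm:main-gen} nominally wants it over all of $\R^{k+2}$ — handled by extending $\epsilon_{\lambda,\delta}$ to a continuous majorant outside $\phi(\cK)\times\cB$, which only weakens \eqref{eqn:main-bound} off of $\cK\times\cB$, where the corollary asserts nothing.
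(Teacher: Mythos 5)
Your proposal is correct and follows essentially the same route as the paper's proof: apply \cref{thm:vc-hypercontractive} to $\tilde\cD$ with budget $\delta/4$ (yielding the $\log(48/\delta)$), take $\epsilon_{\lambda,\delta}$ to be a $\gamma$-relative error proportional to the surrogate population loss, feed into \cref{thm:main-gen}, and use the identity $\E_{\tilde\cD} f_\lambda(\langle\phi(w),\tilde x\rangle+b,\tilde y)=L_{f_\lambda}(w,b)$. Your handling of the extension of $\epsilon_{\lambda,\delta}$ off $\phi(\cK)\times\cB$ via a continuous majorant is if anything a bit more careful than the paper's (which simply sets it to $\infty$ there), but the substance is the same.
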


The last conclusion (uniformity over $\lambda$)
follows by going through the proof of Theorem~\ref{thm:vc-hypercontractive}, since it is based on reduction to uniform control of indicators.
In every situation we will consider, it is easy to check that the VC dimension $h$ in the theorem statement is $O(k)$, generally by reducing to the fact that halfspaces in $\mathbb{R}^k$ have VC dimension $k + 1$.

\section{Applications} \label{sec:applications}

\subsection{Linear Regression with Square Loss}

In this section, we show how to recover optimistic rates \citep{optimistic-rates} for linear regression without assuming the model is well-specified. We will consider the square loss,
$f(\hat{y}, y) = (\hat{y} - y)^2$.
A key property of the square loss is that the Moreau envelope is proportional to itself:
\begin{equation}\label{eqn:square-loss-moreau}
f_{\lambda}(\hat{y}, y) = \inf_{u} \, (u-y)^2 + \lambda ( u - \hat{y})^2 = \frac{\lambda}{1+\lambda} \, f(\hat{y}, y).
\end{equation}
Thus we can multiply by $(1 + \lambda)/\lambda$ in our generalization bound and solve for the optimal choice of $\lambda$.

\begin{restatable}{corollary}{GenSquareLoss} \label{corr:gen-square-loss}
Suppose $f$ is the square loss and the surrogate distribution $\tilde \cD$ satisfies assumption \eqref{eqn:hypercontractive-assumption} uniformly over $(w, b) \in \R^{k+1}$, then with probability at least $1 - \delta$, uniformly over all $w,b$ we have
\[ \left(1 - 8\tau\sqrt{\frac{k(\log(2n/k) + 1) + \log(48/\delta)}{n}}\right) L_f(w,b) \le \left(\sqrt{\hat{L}_f(w,b)} + C_{\delta}(w)/\sqrt{n}\right)^2. \]
\end{restatable}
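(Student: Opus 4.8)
The plan is to invoke the uniform-over-$\lambda$ form of \cref{corr:main-gen-vc} with $\cK = \R^d$, $\cB = \R$, and $f$ the square loss, then convert the resulting bound on the Moreau-envelope risk $L_{f_\lambda}$ into one on $L_f$ using the proportionality $f_\lambda = \tfrac{\lambda}{1+\lambda} f$ from \eqref{eqn:square-loss-moreau}, and finally optimize over the free parameter $\lambda > 0$.

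First I would verify the hypotheses of the ``furthermore'' clause of \cref{corr:main-gen-vc}. For the norm-equivalence assumption \eqref{eqn:hypercontractive-assumption}: since $f_\lambda = c_\lambda f$ with $c_\lambda = \lambda/(1+\lambda) > 0$, the ratio $(\E f_\lambda^4)^{1/4} / \E f_\lambda = (\E f^4)^{1/4}/\E f$ does not depend on $\lambda$, so the assumed bound by $\tau$ over $(w,b)\in\R^{k+1}$ (which contains $\phi(\R^d)\times\R$) holds for every $f_\lambda$ with the same $\tau$, uniformly in $\lambda$. For the VC condition: $\{f_\lambda(\hat y,y) > t\} = \{f(\hat y,y) > t(1+\lambda)/\lambda\}$, and since $t$ already ranges over all of $\R$, enlarging the family to also range over $\lambda \in \R_{>0}$ produces exactly the same collection of sets as for a single $\lambda$. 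For the square loss that collection is $\{(\tilde x,y)\mapsto \bone\{(\langle \tilde w,\tilde x\rangle + \tilde b - y)^2 > s\}\}$, i.e.\ complements of slabs, which are unions of two parallel halfspaces in dimension $k + O(1)$; by the reduction to halfspaces noted after \cref{corr:main-gen-vc} this has VC dimension $O(k)$, and plugging $h = k$ into the corollary (absorbing universal constants) produces the prefactor displayed in the statement.

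Granting this, \cref{corr:main-gen-vc} gives a single event of probability at least $1-\delta$ on which, writing $\gamma = 1 - 8\tau\sqrt{(k(\log(2n/k)+1) + \log(48/\delta))/n}$ and using $L_{f_\lambda} = \tfrac{\lambda}{1+\lambda}L_f$, we have for every $\lambda > 0$ and every $(w,b)$
\[
  \gamma\,\frac{\lambda}{1+\lambda}\,L_f(w,b) \;\le\; \hat L_f(w,b) + \frac{\lambda\,C_\delta(w)^2}{n}.
\]
Multiplying through by $(1+\lambda)/\lambda = 1 + 1/\lambda$ gives
\[
  \gamma\,L_f(w,b) \;\le\; \hat L_f(w,b) + \frac{C_\delta(w)^2}{n} + \Big(\frac{\hat L_f(w,b)}{\lambda} + \lambda\,\frac{C_\delta(w)^2}{n}\Big).
\]
Taking the infimum over $\lambda > 0$, the parenthesized term attains $2\sqrt{\hat L_f(w,b)}\,C_\delta(w)/\sqrt n$ by AM--GM (the degenerate cases $\hat L_f(w,b)=0$ or $C_\delta(w)=0$ being handled by letting $\lambda$ tend to the appropriate endpoint), so the right-hand side collapses to $(\sqrt{\hat L_f(w,b)} + C_\delta(w)/\sqrt n)^2$; since the inequality holds for all $\lambda$ on one event, this infimum may be taken pointwise there, which is the claim.

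The only point needing care — and the reason we use the uniform-over-$\lambda$ strengthening of \cref{corr:main-gen-vc} rather than a union bound over a grid of $\lambda$'s — is that the optimal $\lambda = \sqrt{n\,\hat L_f(w,b)}/C_\delta(w)$ is a random, predictor-dependent quantity, so the bound must already hold simultaneously over all $\lambda$ before we optimize. As the checks above show, for the square loss this uniformity is essentially free: neither the norm-equivalence constant nor the sublevel-set class changes with $\lambda$. Everything else is routine algebra.
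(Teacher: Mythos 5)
Your proposal is correct and follows essentially the same route as the paper: invoke the uniform-over-$\lambda$ form of \cref{corr:main-gen-vc} (checking hypercontractivity is $\lambda$-invariant since $f_\lambda = \frac{\lambda}{1+\lambda}f$ and the sublevel-set class reduces to unions of two parallel halfspaces, hence $O(k)$ VC dimension — this is exactly the paper's \cref{lem:vc-square-loss}), multiply by $(1+\lambda)/\lambda$, and optimize via AM--GM. Your remark that the optimal $\lambda$ is data-dependent, which is why uniformity in $\lambda$ (rather than a fixed or gridded $\lambda$) is needed before taking the infimum, is a helpful gloss on why the ``furthermore'' clause of \cref{corr:main-gen-vc} matters and is implicit in the paper's argument.
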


As mentioned earlier, assumption \eqref{eqn:hypercontractive-assumption} usually holds under mild conditions on $y$. For example, $\tau$ can be chosen to be a constant when $y$ is a bounded-degree polynomial of a Gaussian due to Gaussian hypercontractivity \citep[Section 11.1]{o2014analysis}. Specializing \cref{corr:gen-square-loss} to interpolators ($\hat{L}_f = 0$) recovers the uniform convergence of interpolators guarantee from \citet{uc-interpolators}. Combined with a more general norm analysis in \cref{sec:benign-overfitting}, we establish $\ell_2$ benign overfitting with misspecification. In the well-specified case, see \citet{optimistic-rates} for detailed examples on ordinary least squares, ridge regression, and LASSO.

\subsection{Classification with Squared Hinge Loss}
In this section, we show a novel optimistic rate bound for max-margin linear classification with the squared hinge loss,
$f(\hat{y}, y) =\max(0, 1 - y \hat{y})^2$.
Its Moreau envelope is given by
\begin{equation*}
	\begin{split}
		f_{\lambda}(\hat{y}, y) 
		&= \inf_{u} \,\max(0, 1 - y u)^2 + \lambda ( u - \hat{y})^2 
		=
		\begin{cases}
			0 &\mbox{if } 1 - y\hat{y} \leq 0 \\
			\frac{\lambda}{1+\lambda} (1-y\hat{y})^2 &\mbox{if } 1 - y\hat{y} > 0 \\
		\end{cases}
		= \frac{\lambda}{1+\lambda} \, f(\hat{y}, y)
	\end{split}
.\end{equation*}

We consider the case of a general binary response $y$ valued in $\{\pm 1\}$ satisfying the model assumptions in equation (\ref{eqn:model}). In this case as well, $f$ is proportional to its Moreau envelope; thus, the same proof as for the squared loss shows that \cref{corr:gen-square-loss}  continues to hold  when square loss is replaced by  squared hinge loss!
In \cref{apdx:classification}, we discuss certain settings (including noisy settings) where minimizing the squared hinge loss also minimizes the zero-one loss, i.e.\ the misclassification rate. 

\begin{figure} 
  \centering
  \includegraphics[width = 13cm]{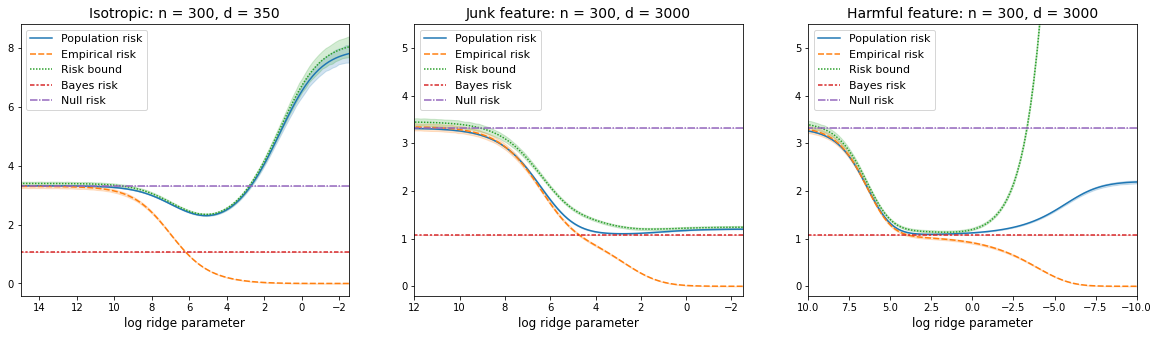}
  \includegraphics[width = 13cm]{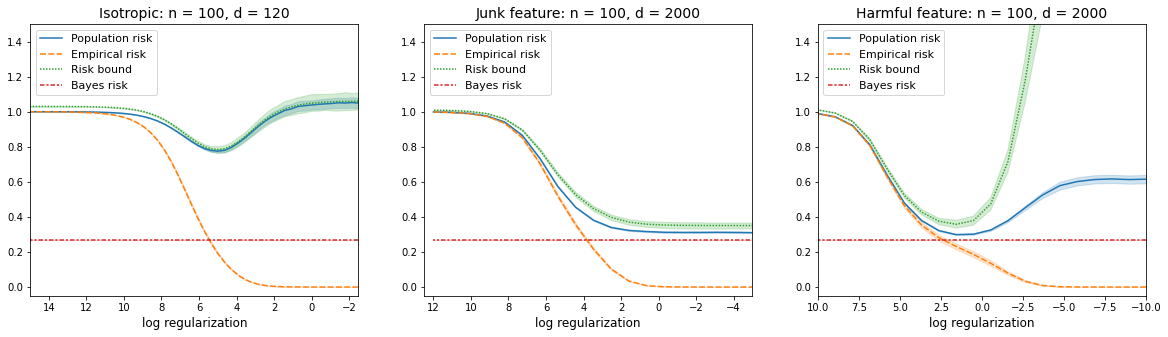}
  \caption{Top: ridge regression with model misspecification; bottom: $\ell_2$ margin classification with a logistic model.
  Here the features are Gaussian; see \cref{sec:experiment} for additional experiments on $\ell_1$ regularization and non-Gaussian features.
  The covariance in the first column (isotropic) is $\Sigma = I_d$,
  in the second column (junk features) is $\Sigma = \text{diag}(1, ..., 1, 0.05^2, ..., 0.05^2)$,
  and in the third (harmful features) is $\Sigma = \text{diag}(1, ..., 1, \frac{1}{(k+1)^2}, ..., \frac{1}{d^2}) $.
  For regression, the number of leading eigenvalues is $k = 3$, and the label is generated according to $y = 1.5 \, x_1 + |x_1| \cos x_2 + x_3 \cdot \cN(0, 0.5)$.
  For classification, the number of leading eigenvalues is $k = 1$ and $\Pr(y = 1 \mid x) = \operatorname{sigmoid}(5 x_1 + 3)$.
  The risk bound is calculated using the expression $\big( \sqrt{\hat{L}} + \sqrt{\|w\|_2^2 \Tr(\Sigma^{\perp})/n} \big)^2$, which corresponds to the choice of $C(w)$ from \cref{lem:gen-ball1}, and is expected to be tight in the junk features setting.
  In the isotropic case,  
  we use an easy improvement of the bound where $w$ is projected to the orthogonal complement of the Bayes predictor $w^*$ \citep{optimistic-rates}.
  In the other cases, we use covariance splitting without projection of $w$.
  Each point on the curve is the average from repeated experiments, and shaded areas correspond to 95\% bootstrap confidence intervals.
  }  
  \label{fig:flatness}
\end{figure}
\subsection{Further applications}
In this section, we discuss further interesting examples where our general theory applies. As before, we can obtain a generalization bound by appealing to the general \cref{corr:main-gen-vc}: we omit stating the formal corollary for each case, and simply show the Moreau envelope and its consequences.

\paragraph{$L_1$ loss (LAD) and Hinge.} 
For the $L_1$ loss
$f(\hat{y}, y) = | \hat{y} - y |$
 its Moreau envelope is given by
\[
f_{\lambda}(\hat{y}, y) = \inf_{u} \, | u-y | + \lambda ( u - \hat{y})^2 = 
\begin{cases}
	\lambda (\hat{y} - y)^2 &\mbox{if } |\hat{y} - y| \leq \frac{1}{2\lambda} \\
	|\hat{y} - y| - \frac{1}{4\lambda} &\mbox{if } |\hat{y} - y| > \frac{1}{2\lambda} \\
\end{cases}
,\]
which is $2\lambda$ times the Huber loss with parameter $\delta=\frac{1}{2 \lambda}$. Therefore, the population Huber loss is controlled by the empirical $L_1$ loss.
Clearly, interpolators have zero training error under  both $L_1$ and $L_2$ training  losses. 
We already see that Corollary~\ref{corr:gen-square-loss} implies
$(1 - o(1)) \E(\langle w, x \rangle + b - y)^2 \le C_{\delta}(w)^2/n$. 
Now, considering Corollary~\ref{corr:main-gen-vc} with $f$ the $L_1$ loss and using the above Moreau envelope calculation, we can check that when $\lambda \to 0$ we reproduce the exact same bound, %
since the Huber loss becomes the squared loss in the limit. 
Further insight into this phenomenon appears later in %
\cref{thm:training-error}: the Huber loss naturally shows up when computing the training error of the LAD estimator. An entirely analogous situation occurs when $f$ is the hinge loss $f(\hat y, y) := \max(0, 1 - \hat y y)$: its Moreau envelope $f_{\lambda}$ will be a rescaling of the Huber hinge loss (c.f.~\cite{zhang2004solving}).

\paragraph{Lipschitz loss: improved contraction.}
If $f$ is $M$-Lipschitz in $\hat y$, Proposition 3.4 of \citet{planiden2019proximal} gives that
$0 \leq f - f_{\lambda} \leq \frac{M^2}{4\lambda}$.
Thus, assuming $k/n = o(1)$, \cref{corr:main-gen-vc} implies
$(1 - o(1))(L_f(w) - \frac{M^2}{4\lambda}) \leq \hat{L}_{f}(w) + \frac{\lambda C_{\delta}(w)^2}{n}$;
\iffalse
\begin{equation*}
	\begin{split}
		\implies L_f(w) 
		&\leq \hat{L}_{f}(w) + \min_{\lambda \geq 0} \frac{M^2}{4\lambda} + \frac{\lambda C_{\delta}(w)^2}{n} 
	\end{split}
\end{equation*}
\fi
optimizing over $\lambda$ yields
\begin{equation}\label{eqn:lipschitz-bd}
	(1 - o(1))L_f(w) \leq \hat{L}_f(w) + M \sqrt{\frac{ C_{\delta}(w)^2}{n}}.
\end{equation}
For $w \in \cK$, a high-probability upper bound on the Rademacher complexity of the function class $x \mapsto \langle w, x \rangle$ upper bounds the second term (Remark~\ref{rmk:rademacher}). 
In comparison, the standard symmetrization and contraction argument \citep{bartlett2002rademacher} loses a factor of two.
Note that if $f$ is the $L_1$ loss, this  applies with $M = 1$, and  it can further be shown that the constant factor cannot be improved further (see \cref{apdx:l1-sharpness}),
but this bound is also not as tight as the version with Huber test loss.

\paragraph{Uniform Convergence of Interpolators for Smooth Losses.} Suppose that $f$ is an $H$-smooth (in the sense that $\lvert \frac{\partial^2 f}{\partial \hat y^2} \rvert \le H$) and convex function of $\hat y$. In addition, assume that the minimum of $f(\hat y, y)$ is zero for any fixed $y$.
Since $f$ is $H$-smooth, there exists a function $\tilde{f}$ such that $f = \tilde{f}_{H/2}$ 
\citep[Corollary 18.18]{bauschke2011convex}. If $w$ satisfies $\hat{L}_f(w) = 0$, then $\hat{L}_{\tilde f}(w) = 0$ as well.\footnote{%
Since $f \le \tilde f$ is nonnegative,
$\tilde f$ is nonnegative as well.
When $f(\hat y, y) = 0$,
there must be $u_\varepsilon$ such that $\tilde f(u_\varepsilon, y) + \lambda(u_\varepsilon - \hat y)^2 < \varepsilon$;
this is only possible if we have $u_\varepsilon \to \hat y$,
implying since $f$ is smooth that $\tilde f(\hat y, y) = 0$.
If $\hat L_f(w) = 0$,
we must have for every $i$ that $f(\hat y_i, y_i) = 0$;
thus $\tilde f(\hat y_i, y_i) = 0$
and $L_{\tilde f}(w) = 0$.
}
Finally, by \cref{corr:gen-square-loss}, if $k / n = o(1)$ we have uniformly over all $w$ such that $\hat{L}_f(w) = 0$ that
\begin{equation}
	(1 - o(1)) L_f(w) \leq  \frac{H}{2}\cdot \frac{ C_{\delta}(w)^2}{n}
,\end{equation}
which generalizes the main result of \citet{uc-interpolators} to arbitrary smooth losses.

\section{\texorpdfstring{$\ell_2$}{l2} Benign Overfitting} \label{sec:benign-overfitting}
We can obtain conditions for consistency, like those of \citet{bartlett2020benign}, by simply combining a norm-based uniform generalization bound with an upper bound on the norm of interpolators.
\citet{uc-interpolators} used the same strategy,
but our more powerful and general tools give better results:
\begin{enumerate}
    \item For squared loss regression, we show that the assumption that the ground truth is generated by a linear model (i.e. well-specified), made in previous work, is not required. The same result holds under the much more general model\footnote{Theorem 3 of concurrent work by \citet{shamir2022implicit} also proves benign overfitting for some misspecified models, but requires the very strong assumption $n^2 \|\Sigma^{\perp}\|_{\OP} \to 0$ that fails to hold in several examples from \citet{bartlett2020benign}.} assumption of \cref{sec:model}.
    \item We show an analogous result in the \emph{classification} setting, replacing the squared loss with the squared hinge loss. 
    In fact, the argument is exactly the same in the two cases: all we need to use is that $f_{\lambda} = \frac{\lambda}{1 + \lambda} f$ and that $f$ is the square of a Lipschitz function. 
\end{enumerate}

First, the following lemma (essentially just the standard Rademacher complexity bound for the $\ell_2$ ball) combines with \cref{corr:gen-square-loss} and its squared hinge loss version to give generalization bounds.
These bounds are demonstrated in \cref{fig:flatness} and \cref{sec:experiment}.

\begin{restatable}{lemma}{GenBall} \label{lem:gen-ball1}
In the setting of Theorem~\ref{thm:main-gen}, letting $\Sigma^{\perp} = Q^T \Sigma Q$, the following $C_{\delta}(w)$ will satisfy \eqref{eqn:complexity-defn}:
$C_{\delta}(w) = \|w\|_2\left[\sqrt{\Tr(\Sigma^{\perp})} + 2\sqrt{\|\Sigma^{\perp}\|_{\OP} \log(8/\delta)}\right]$.
\end{restatable}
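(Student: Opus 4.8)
The plan is to reduce the uniform-in-$w$ statement \eqref{eqn:complexity-defn} to a single scalar concentration inequality via Cauchy--Schwarz, and then invoke standard Gaussian concentration. For every $w \in \R^d$ we have $\langle Qw, x \rangle = \langle w, Q^T x \rangle \le \norm{w}_2 \, \norm{Q^T x}_2$, and — this is the point — the inequality holds for \emph{all} $w$ simultaneously with no covering/chaining argument, because the right-hand side factors as $\norm{w}_2$ times a quantity that does not depend on $w$. So it suffices to show that, with probability at least $1 - \delta/4$ over $x \sim \cN(0,\Sigma)$,
\[ \norm{Q^T x}_2 \le \sqrt{\Tr(\Sigma^{\perp})} + 2\sqrt{\norm{\Sigma^{\perp}}_{\OP}\log(8/\delta)} , \]
after which setting $C_{\delta}(w) = \norm{w}_2$ times this right-hand side gives the claim.

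To prove that scalar bound, I would first note that $Q^T x \sim \cN(0, Q^T\Sigma Q) = \cN(0, \Sigma^{\perp})$ by the definition of $\Sigma^{\perp}$, so we may write $Q^T x = (\Sigma^{\perp})^{1/2} g$ with $g \sim \cN(0, I_d)$. The map $g \mapsto \norm{(\Sigma^{\perp})^{1/2} g}_2$ is Lipschitz with constant $\norm{(\Sigma^{\perp})^{1/2}}_{\OP} = \sqrt{\norm{\Sigma^{\perp}}_{\OP}}$, and its mean is bounded by Jensen, $\E\norm{Q^T x}_2 \le (\E\norm{Q^T x}_2^2)^{1/2} = (\Tr\Sigma^{\perp})^{1/2}$. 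The Gaussian concentration inequality for Lipschitz functions then gives $\norm{Q^T x}_2 \le \sqrt{\Tr\Sigma^{\perp}} + t$ with probability at least $1 - \exp(-t^2/(2\norm{\Sigma^{\perp}}_{\OP}))$; taking $t = 2\sqrt{\norm{\Sigma^{\perp}}_{\OP}\log(8/\delta)}$ makes the failure probability at most $(\delta/8)^2 \le \delta/4$, which is exactly the displayed inequality. Combining with the Cauchy--Schwarz step yields $\langle Qw, x\rangle \le C_{\delta}(w)$ uniformly over $w \in \R^d$ on that event.

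There is essentially no hard step: this is just the standard high-probability bound on the Gaussian width (equivalently, Rademacher complexity, cf.\ Remark~\ref{rmk:rademacher}) of the Euclidean ball $\{w : \norm{w}_2 \le 1\}$ under the covariance $\Sigma^{\perp}$, restated by positive homogeneity to cover all $w$. The only thing worth emphasizing is conceptual rather than technical — the localization to a ball and the uniform convergence over all predictors are decoupled in \cref{thm:main-gen}, so here the work collapses to bounding the norm of one Gaussian vector, with the nontrivial content living in \cref{thm:main-gen} itself. If sharper constants were needed, one could replace the crude Jensen bound on $\E\norm{Q^T x}_2$ and the Lipschitz-concentration tail by a $\chi^2$-type (Laurent--Massart / Hanson--Wright) bound, but the stated form already suffices for all applications in the paper.
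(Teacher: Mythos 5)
Your proof is correct and follows the same route as the paper's: Cauchy--Schwarz to collapse the uniform-in-$w$ statement to a bound on a single Gaussian norm, Jensen for the mean, and Gaussian Lipschitz concentration (the paper's \cref{thm:gaussian-concentration}) for the tail, with the same choice of deviation $t = 2\sqrt{\|\Sigma^{\perp}\|_{\OP}\log(8/\delta)}$. One small point in your favor: you correctly track that it is $Q^T x$ (not $Qx$) that matters and that $Q^T x \sim \cN(0, Q^T\Sigma Q) = \cN(0,\Sigma^{\perp})$ — a distinction that is real because $Q$ is an oblique (non-symmetric) projection — whereas the paper's displayed proof writes $\|Qx\|_2$, which is a harmless but genuine notational slip since $\Tr(Q\Sigma Q^T) \ne \Tr(Q^T\Sigma Q)$ in general. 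No gap; your argument is complete.
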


Next, we provide a sufficient condition for a zero-training error predictor $w$ to exist in an $\ell_2$ ball. In the case of classification, this allows us to lower-bound the margin of the max-margin halfspace. 

\begin{restatable}[\cite{bartlett2020benign}]{defn}{euclideffrank} \label{def:ranks-l2}
The \emph{effective ranks} of a covariance matrix $\Sigma$ are
\[
r(\Sigma) = \frac{\Tr(\Sigma)}{\norm{\Sigma}_\OP} \quad \text{and} \quad R(\Sigma) = \frac{\Tr(\Sigma)^2}{\Tr(\Sigma^2)}
.\]
\end{restatable}

\begin{restatable}{lemma}{NormBound} \label{lem:norm-bound-interpolator}
Suppose that $f(\hat y, y)$ is either squared loss or squared hinge loss. 
Let $(w^{\sharp}, b^{\sharp}) \in \mathbb{R}^{d+1}$ be an arbitrary vector 
satisfying $Qw^{\sharp} = 0$ and  with probability at least $1 - \delta/4$, 
\begin{equation}\label{eqn:low-dimensional-upper}
\hat{L}_f(w^{\sharp},b^{\sharp}) \le L_f(w^{\sharp},b^{\sharp}) + \rho_1(w^{\sharp},b^{\sharp}) \end{equation}
for some $\rho_1(w^{\sharp},b^{\sharp}) > 0$. Then for any $\rho_2 \in (0,1)$, provided $\Sigma^{\perp} = Q^T \Sigma Q$ satisfies
\begin{equation}\label{eqn:rho2}
R(\Sigma^{\perp}) = \Omega\left(\frac{n\log^2(4/\delta)}{\rho_2} \right),
\end{equation}
we have that with probability at least $1 - \delta$ that $\min_{\|w\| \le B} L_f(w,b^{\sharp}) = 0$
for $B > 0$ defined by $B^2 = \|w^{\sharp}\|_2^2 + (1 + \rho_2)\frac{n}{\Tr(\Sigma^{\perp})}(L_f(w^{\sharp},b^{\sharp}) + \rho_1)$.
\end{restatable}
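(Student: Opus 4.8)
The plan is to construct, on a high-probability event, a single vector $w$ with $\|w\|_2 \le B$ and zero training loss, of the form $w = w^\sharp + u$, where $u$ is a small correction supported on the ``noise'' directions orthogonal to $\Span\{w_1^*, \dots, w_k^*\}$; this gives $\min_{\|w\| \le B} \hat{L}_f(w, b^\sharp) = 0$. The starting observation is that for \emph{both} the squared loss and the squared hinge loss, $\hat{L}_f(w, b^\sharp) = 0$ is implied by a \emph{linear} constraint on $w$: exact interpolation $\langle w, x_i \rangle + b^\sharp = y_i$ for the squared loss, and correct margin $y_i (\langle w, x_i \rangle + b^\sharp) \ge 1$ for the squared hinge loss. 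Writing $\hat{y}_i^\sharp := \langle w^\sharp, x_i \rangle + b^\sharp$, in either case there are targets $r_i$ --- namely $r_i = y_i - \hat{y}_i^\sharp$ (squared) or $r_i = y_i \max(0, 1 - y_i \hat{y}_i^\sharp)$ (squared hinge) --- such that it suffices to find $u$ with $\langle u, x_i \rangle = r_i$ for all $i$, and in either case $\|r\|_2^2 = n \, \hat{L}_f(w^\sharp, b^\sharp)$. By hypothesis \eqref{eqn:low-dimensional-upper}, with probability at least $1 - \delta/4$ this is at most $n\big(L_f(w^\sharp, b^\sharp) + \rho_1(w^\sharp, b^\sharp)\big)$.

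Next I would take $u$ to be the minimum-$\ell_2$-norm solution of $\langle u, x_i \rangle = r_i$ ($i = 1, \dots, n$) among vectors orthogonal to $\Span\{w_1^*, \dots, w_k^*\}$ (which contains $w^\sharp$, since $Q w^\sharp = 0$). Such a $u$ is a linear combination $u = \sum_i \alpha_i z_i$ of the orthogonal projections $z_i$ of the $x_i$ onto that subspace, with $\alpha = G^{-1} r$ for $G \in \R^{n\times n}$, $G_{ij} = \langle z_i, z_j \rangle$; hence $\|u\|_2^2 = r^T G^{-1} r \le \|r\|_2^2 / \lambda_{\min}(G)$. Since $x \sim \cN(0, \Sigma)$, $\E[G_{ij}] = 0$ for $i \ne j$, while $\E[G_{ii}]$ equals the trace of the covariance of $z_i$, which is $\ge \Tr(\Sigma^\perp)$ (a short computation using $(w_i^*)^T \Sigma w_j^* = \delta_{ij}$ and the Kantorovich inequality shows the orthogonal projection onto $\Span\{w_j^*\}^\perp$ discards no more trace than the oblique projection defining $\Sigma^\perp = Q^T \Sigma Q$). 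The heart of the proof is then the \emph{minimum-eigenvalue estimate}
\[
\lambda_{\min}(G) \ \ge\ \tfrac{1}{1+\rho_2}\,\Tr(\Sigma^\perp)
\qquad \text{with probability at least } 1 - 3\delta/4,
\]
a ``benign covariance'' Gram-matrix bound in the spirit of \citet{bartlett2020benign}: one bounds the deviation of the diagonal of $G$ and the operator norm of its off-diagonal part by Bernstein-/Hanson--Wright-type concentration, and the \emph{second} effective rank $R(\Sigma^\perp)$ is precisely the quantity controlling the off-diagonal fluctuations --- which is why the hypothesis \eqref{eqn:rho2} reads $R(\Sigma^\perp) = \Omega\big(n \log^2(4/\delta)/\rho_2\big)$, and in particular forces $R(\Sigma^\perp) > n$ so that $G$ is invertible and the interpolating correction exists.

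Since $u$ is orthogonal to $\Span\{w_j^*\} \ni w^\sharp$, the norm splits cleanly as $\|w\|_2^2 = \|w^\sharp\|_2^2 + \|u\|_2^2$, so on the intersection of the two events above --- of probability at least $1 - \delta$ --- we get
\[
\|w\|_2^2 \ \le\ \|w^\sharp\|_2^2 + (1+\rho_2)\,\frac{n}{\Tr(\Sigma^\perp)}\big(L_f(w^\sharp, b^\sharp) + \rho_1\big) \ =\ B^2,
\]
while $\langle w, x_i\rangle + b^\sharp = \hat{y}_i^\sharp + r_i$ makes $w$ interpolate (resp.\ achieve margin $\ge 1$ on) every training point, so $\hat{L}_f(w, b^\sharp) = 0$; hence $\min_{\|w\|\le B}\hat{L}_f(w,b^\sharp) = 0$, as claimed. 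I expect the main obstacle to be the minimum-eigenvalue bound on $G$: it is the only place where the effective-rank hypothesis \eqref{eqn:rho2} is used, and matching the stated dependence on $\rho_2$ and $\delta$ requires the right concentration inequality. A secondary, more bookkeeping-level difficulty is handling the \emph{oblique} projection $Q$ --- i.e.\ making the trace come out as $\Tr(\Sigma^\perp)$ and the norm decomposition exactly Pythagorean; I would deal with this exactly as above, by working instead with the orthogonal projection onto $\Span\{w_1^*, \dots, w_k^*\}^\perp$, checking that the discrepancies with $\Tr(\Sigma^\perp)$ and $R(\Sigma^\perp)$ are harmless in the regime that \eqref{eqn:rho2} governs.
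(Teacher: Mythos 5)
Your construction is genuinely different from the paper's. The paper proves this lemma by invoking its CGMT-based \cref{thm:training-error}, which passes to an auxiliary optimization where the ``orthogonal'' component of $w_0$ is taken proportional to a Gaussian whose covariance is \emph{exactly} $\Sigma^{\perp}$, and then controls the auxiliary objective using the sqrt-Lipschitz smoothness of the loss (\cref{lem:training-smooth}) together with the concentration estimates of \cref{lem:sigmah-concentration}. You instead build an explicit interpolating correction $u$ solving $\langle u, x_i\rangle = r_i$ and bound $\|u\|_2^2$ by a least-eigenvalue estimate on the Gram matrix, in the style of \citet{bartlett2020benign}. The observation that both squared loss and squared hinge loss reduce to a linear feasibility constraint with $\|r\|_2^2 = n\,\hat L_f(w^\sharp, b^\sharp)$ is a clean and correct insight, and the trace comparison $\Tr(\Pi^{\perp}\Sigma\Pi^{\perp}) \ge \Tr(\Sigma^{\perp})$ you invoke is indeed true (writing $W = [w_1^*,\ldots,w_k^*]$ with $W^T\Sigma W = I_k$, it reduces to $\Tr(W^T\Sigma^2 W) \ge \Tr((W^TW)^{-1})$, which follows from the Schur complement of the PSD block matrix built from $\Sigma W$ and $W$).

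However, there is a genuine gap in the step you flag as ``secondary, bookkeeping-level.'' The Gram matrix $G_{ij} = \langle z_i, z_j\rangle$ you must lower-bound is formed from $n$ i.i.d.\ Gaussians with covariance $\tilde\Sigma := \Pi^{\perp}\Sigma\Pi^{\perp}$, not $\Sigma^{\perp} = Q^T\Sigma Q$, and any Bartlett--Tsigler-type bound on $\lambda_{\min}(G)$ is intrinsically controlled by $\Tr(\tilde\Sigma)$, $R(\tilde\Sigma)$, and $\|\tilde\Sigma\|_{\OP}$. The trace passes over, but the effective rank need not: take $\Sigma = \diag(M,1,\ldots,1)$ on $\R^d$ with $d\approx M$, $k = 1$, and $w_1^*$ supported on the first two coordinates so that $\Sigma^{1/2}w_1^*$ nearly aligns with the spike direction $e_1$ while $w_1^*$ itself nearly aligns with $e_2$ (e.g.\ first coordinate of $w_1^*$ of order $M^{-1/2}$, second of order $M^{-1/4}$). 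Then $\|\Sigma^{\perp}\|_{\OP} \approx \sqrt M$ but $\|\tilde\Sigma\|_{\OP} \approx M$, so $R(\Sigma^{\perp}) \approx M/2$ while $R(\tilde\Sigma) \approx 4$. Even after peeling the single spike of $\tilde\Sigma$ off $G$ (a rank-one PSD perturbation can only raise $\lambda_{\min}$), the leftover trace $\Tr(\tilde\Sigma) - \|\tilde\Sigma\|_{\OP} \approx M$ undershoots $\Tr(\Sigma^{\perp}) \approx M + \sqrt M$ by a factor $1 - \Theta(1/\sqrt M)$, so your argument can only deliver the lemma's conclusion when $\rho_2 \gtrsim 1/\sqrt M$; but hypothesis~\eqref{eqn:rho2} is satisfiable with $\rho_2 \asymp n\log^2(4/\delta)/M$, which is far smaller once $n \ll \sqrt M$. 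So the discrepancy between $\Pi^\perp$ and $Q$ is not harmless, and closing it would need a further covariance-splitting argument whose byproduct constants you would still have to reconcile with $R(\Sigma^\perp)$. The CGMT route sidesteps this structurally: in the auxiliary problem the Gaussian $Q^T x \sim \cN(0,\Sigma^{\perp})$ appears with precisely the right covariance, so $R(\Sigma^{\perp})$ and $\Tr(\Sigma^{\perp})$ enter directly with no comparison to $\tilde\Sigma$.
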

We note that for any vector $w^{\sharp}$, we have $L_f((I - Q) w^{\sharp},b^{\sharp}) < L_f(w^{\sharp},b^{\sharp})$ by Jensen's inequality over $Q^T x$, so the assumption $Q w^{\sharp} = 0$ in the lemma is always satisfied for the minimizer of $L_f(w,b)$. 
Combining the norm bound \cref{lem:norm-bound-interpolator} and the generalization bound \cref{lem:gen-ball1} yields the following.

\begin{restatable}{theorem}{BenignOverfit} \label{thm:l2-overfitting}
Let $(\hat w, \hat b) = \arg\min_{w \in \mathbb{R}^d, b \in \mathbb{R} \,:\, \hat{L}_f(w,b) = 0} \|\hat w\|_2$ be the minimum-$\ell_2$ norm predictor with zero training error.
In the setting of \cref{lem:norm-bound-interpolator},
we have
\[ L_f(\hat w, \hat b) - \epsilon_{\delta}(\phi(\hat w),\hat b) \le (1 + \rho_3) \inf_{w^{\sharp} \in \mathbb{R}^d, b^{\sharp} \in \cB}\left(L_f(w^{\sharp},b^{\sharp}) + \rho_1(w^{\sharp},b^{\sharp}) +  \frac{\|w^{\sharp}\|_2^2\Tr(\Sigma^{\perp})}{n} \right) ,\]
where $\rho_3 > 0$ is defined by
$1 + \rho_3 = (1 + \rho_2)\left[1 + 2 \sqrt{\frac{\log(2/\delta)}{r(\Sigma^{\perp})}}\right]^2$
and we recall %
$\rho_1(w^{\sharp},b^{\sharp})$ from \eqref{eqn:low-dimensional-upper}.
\end{restatable}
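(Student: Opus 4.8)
The plan is to combine two facts already in hand: the norm control on the minimum-$\ell_2$ interpolator from \cref{lem:norm-bound-interpolator}, and the interpolator specialization of the optimistic-rate generalization bound (\cref{corr:gen-square-loss}, equivalently \cref{thm:main-gen} with $f_\lambda = \tfrac{\lambda}{1+\lambda}f$), using the explicit complexity functional $C_\delta$ of \cref{lem:gen-ball1}. Since $(\hat w,\hat b)$ is by definition an interpolator, $\hat L_f(\hat w,\hat b) = 0$, so that generalization bound (essentially) reads $L_f(\hat w,\hat b) - \epsilon_\delta(\phi(\hat w),\hat b) \le C_\delta(\hat w)^2/n$, where $\epsilon_\delta$ is the low-dimensional ($O(k)$-dimensional VC / concentration) error term; the point is that the training-error contribution vanishes for $(\hat w,\hat b)$. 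It then remains to bound $C_\delta(\hat w)^2/n$ by the claimed right-hand side, which is exactly what \cref{lem:norm-bound-interpolator} enables. Throughout we work in the setting of \cref{lem:norm-bound-interpolator}, so all its hypotheses — in particular \eqref{eqn:rho2}, which forces a zero-training-error predictor to exist and hence makes $(\hat w,\hat b)$ well defined — are in force.

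Fix a comparator $(w^\sharp,b^\sharp)$ with $b^\sharp \in \cB$; by the Jensen remark following \cref{lem:norm-bound-interpolator} we may assume $Qw^\sharp = 0$, since $Q^T x$ is independent of $y$ and mean zero, so projecting a comparator along the signal directions only decreases $L_f(w^\sharp,b^\sharp)$. The hypotheses of \cref{lem:norm-bound-interpolator} then guarantee that a zero-training-error predictor of norm at most $B$ with bias $b^\sharp$ exists, and hence $\|\hat w\|_2 \le B$ because $(\hat w,\hat b)$ has minimal norm among interpolators. Now substitute $C_\delta(\hat w) = \|\hat w\|_2\big[\sqrt{\Tr(\Sigma^\perp)} + 2\sqrt{\|\Sigma^\perp\|_{\OP}\log(8/\delta)}\big]$ from \cref{lem:gen-ball1}, pull $\sqrt{\Tr(\Sigma^\perp)}$ out of the bracket so that the bracketed factor becomes $\sqrt{\Tr(\Sigma^\perp)}\big[1 + 2\sqrt{\log(8/\delta)/r(\Sigma^\perp)}\big]$ using \cref{def:ranks-l2}, and apply $\|\hat w\|_2^2 \le B^2$. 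Since $B^2\Tr(\Sigma^\perp)/n = \tfrac{\|w^\sharp\|_2^2\Tr(\Sigma^\perp)}{n} + (1+\rho_2)\big(L_f(w^\sharp,b^\sharp)+\rho_1\big) \le (1+\rho_2)\big(L_f(w^\sharp,b^\sharp)+\rho_1+\tfrac{\|w^\sharp\|_2^2\Tr(\Sigma^\perp)}{n}\big)$, multiplying by $\big[1 + 2\sqrt{\log(8/\delta)/r(\Sigma^\perp)}\big]^2$ and recalling $1+\rho_3 = (1+\rho_2)\big[1 + 2\sqrt{\log(2/\delta)/r(\Sigma^\perp)}\big]^2$ (the discrepancy between $\log(8/\delta)$ and $\log(2/\delta)$ is absorbed when the total failure probability $\delta$ is split among the invoked events) gives $C_\delta(\hat w)^2/n \le (1+\rho_3)\big(L_f(w^\sharp,b^\sharp)+\rho_1+\tfrac{\|w^\sharp\|_2^2\Tr(\Sigma^\perp)}{n}\big)$, which combined with the generalization bound yields the stated inequality for this fixed comparator.

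Finally, the derived inequality holds with probability at least $1-\delta$ for each fixed $(w^\sharp,b^\sharp)$, after a union bound over the event of \cref{lem:norm-bound-interpolator}, the event defining $\rho_1$ in \eqref{eqn:low-dimensional-upper}, and the (uniform-in-$(w,b)$) generalization event; since its left-hand side does not depend on the comparator and its right-hand side is coercive in $\|w^\sharp\|_2$ — so the infimum over $\R^d \times \cB$ is attained, or at worst approached along a fixed minimizing sequence — we may pass to the infimum. I expect the only delicate part of assembling the argument to be exactly this bookkeeping: tracking the failure probabilities across the two lemmas, justifying the reduction to comparators with $Qw^\sharp = 0$, and justifying the passage to the infimum (rather than any new estimate). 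The genuinely substantive steps — constructing a low-$\ell_2$-norm interpolator (via the Gaussian Minimax Theorem) and the optimistic-rate generalization bound itself — have already been carried out in \cref{lem:norm-bound-interpolator} and \cref{corr:gen-square-loss}, and the same proof works verbatim for the squared hinge loss since there too $f_\lambda = \tfrac{\lambda}{1+\lambda}f$ and $f$ is the square of a Lipschitz function.
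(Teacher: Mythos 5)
Your proposal is correct and takes essentially the same route as the paper: the paper's proof likewise fixes a comparator $(w^\sharp,b^\sharp)$, applies Lemma~\ref{lem:norm-bound-interpolator} to bound $\|\hat w\|_2^2$, applies the combined generalization bound (the paper packages Lemma~\ref{lem:gen-ball1} with Corollaries~\ref{corr:gen-square-loss} and~\ref{corr:gen-square-hinge-loss} into Lemma~\ref{lem:gen-ball2}, which you reconstruct directly) with $\hat L_f(\hat w,\hat b)=0$, multiplies out, and passes to the infimum via right-continuity of the CDF, which serves the same purpose as your minimizing-sequence argument. Your observation about the $\log(8/\delta)$ versus $\log(2/\delta)$ mismatch is a real inconsistency in the paper's constants — Lemma~\ref{lem:gen-ball1} produces $\log(8/\delta)$ but Lemma~\ref{lem:gen-ball2} and the definition of $\rho_3$ use $\log(2/\delta)$ — so your flag is well taken; it is a bookkeeping slip rather than a substantive gap, and your handling of it is reasonable.
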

We now show that this formally implies convergence to the optimal test loss (i.e.\ consistency) under the $\ell_2$ benign overfitting conditions \eqref{eqn:l2-benign-assumptions} from \citet{bartlett2020benign,tsigler2020benign}:

\begin{restatable}{corollary}{Consistency} \label{corr:benign-overfitting}
Suppose that $\mathcal{D}_n$ is a sequence of data distributions following our model assumptions (\ref{eqn:model}), with $k_n$ such that $y = g(\eta_1,\ldots,\eta_{k_n},\xi)$, and
projection operator $Q_n$ defined as in \eqref{eqn:Q-defn}.
Suppose $f$ is either the squared loss or the squared hinge loss,
and define $(w^{\sharp}_n,b^{\sharp}_n) = \arg\min_{w,b} L_{f,n}(w,b)$ where $L_{f,n}(w,b)$ is the population loss over distribution $\mathcal{D}_n$ with loss $f$. Suppose that the hypercontractivity assumption \eqref{eqn:hypercontractive-assumption} holds with some fixed $\tau > 0$ for all $\mathcal{D}_n$. 
Define $\Sigma_n := \E_{\mathcal{D}_n}[xx^T]$ and $\Sigma^{\perp}_n = Q_n^T \Sigma_n Q_n$. 
Suppose that as $n \to \infty$, we have
\begin{equation}\label{eqn:l2-benign-assumptions}
\frac{n}{R(\Sigma^{\perp}_n)} \to 0, \quad \frac{\|w^{\sharp}_n\|_2^2\Tr(\Sigma_n^{\perp})}{n} \to 0, \quad \frac{k_n}{n} \to 0 .
\end{equation}
Then we have the following convergence in probability, as $n \to \infty$:
\begin{equation} \label{eqn:consistency-to-sharp}
\frac{L_{f,n}(\hat w_n, \hat b_n)}{L_{f,n}(w^{\sharp}_n, b^{\sharp}_n)} \to 1 
,\end{equation}
where $(\hat w_n, \hat b_n) = \arg\min_{w \in \mathbb{R}^d, b \in \mathbb{R} : \hat{L}_f(w,b) = 0} \|w\|_2$
is the minimum-norm interpolator,
and $\hat{L}_{f,n}$ is the training error based on $n$ i.i.d. samples from the distribution $\mathcal{D}_n$.
\end{restatable}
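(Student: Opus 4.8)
The plan is to instantiate \cref{thm:l2-overfitting} with the comparison predictor $(w^\sharp,b^\sharp)$ taken to be the population minimizer $(w^\sharp_n,b^\sharp_n)$ (bounding the infimum there by its value at that point), and then show that every error term it produces is lower order relative to the optimal loss $L_{f,n}(w^\sharp_n,b^\sharp_n)$. Three preliminary facts are needed to apply \cref{lem:norm-bound-interpolator,thm:l2-overfitting} along the sequence. (i) $Q_n w^\sharp_n = 0$: this is exactly the remark following \cref{lem:norm-bound-interpolator}, since projecting $w^\sharp_n$ onto the signal subspace $\Span(w^*_1,\dots,w^*_{k_n})$ cannot increase population loss, by Jensen applied to $Q_n^T x$. (ii) The low-dimensional bound \eqref{eqn:low-dimensional-upper} holds with small $\rho_1$: because $Q_n w^\sharp_n = 0$, the predictor $\langle w^\sharp_n,x\rangle+b^\sharp_n$ is a function of $(\eta_1,\dots,\eta_{k_n})\sim\cN(0,I_{k_n})$ alone, so $\hat L_f(w^\sharp_n,b^\sharp_n)$ is an average of $n$ i.i.d.\ terms; the hypercontractivity assumption \eqref{eqn:hypercontractive-assumption} bounds each term's second moment by $\tau^2 L_{f,n}(w^\sharp_n,b^\sharp_n)^2$, so the empirical average has variance at most $\tau^2 L_{f,n}(w^\sharp_n,b^\sharp_n)^2/n$, and Chebyshev's inequality gives $\rho_1(w^\sharp_n,b^\sharp_n) \le (2\tau/\sqrt{\delta n})\,L_{f,n}(w^\sharp_n,b^\sharp_n) = o(1)\cdot L_{f,n}(w^\sharp_n,b^\sharp_n)$ with probability $1-\delta/4$. (iii) The surrogate hypercontractivity and the VC bound $h = O(k_n)$ needed by \cref{corr:main-gen-vc} hold as in \cref{corr:gen-square-loss}, the VC dimension being that of halfspaces in $\R^{k_n}$ composed with the (at most) two thresholds that determine a superlevel set of $f_\lambda$.

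Next I would check the two multiplicative slack factors tend to $1$, for a fixed $\delta>0$. Since $n/R(\Sigma^\perp_n)\to 0$, one can pick $\rho_{2,n}\to 0$ slowly (e.g.\ $\rho_{2,n} = \sqrt{n\log^2(4/\delta)/R(\Sigma^\perp_n)}$) so that \eqref{eqn:rho2} holds for all large $n$, and then $1+\rho_{3,n} = (1+\rho_{2,n})\bigl[1+2\sqrt{\log(2/\delta)/r(\Sigma^\perp_n)}\bigr]^2$. To see $\rho_{3,n}\to 0$ I would invoke the elementary inequality $r(\Sigma)\ge\sqrt{R(\Sigma)}$ (immediate from $\Tr(\Sigma^2)\ge\|\Sigma\|_\OP^2$), which gives $r(\Sigma^\perp_n)\ge\sqrt{R(\Sigma^\perp_n)}\to\infty$. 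Similarly the VC slack $1-8\tau\sqrt{(h(\log(2n/h)+1)+\log(48/\delta))/n}$ with $h=O(k_n)$ tends to $1$, since $k_n/n\to0$ implies $(k_n/n)\log(n/k_n)\to0$; and for the squared (or squared hinge) loss this same slack absorbs the term $\epsilon_\delta(\phi(\hat w_n),\hat b_n)$ in \cref{thm:l2-overfitting}, because there $\epsilon_\delta$ is at most the VC slack times $L_{f,n}(\hat w_n,\hat b_n)$. Thus, on a high-probability event, \cref{thm:l2-overfitting} reads $(1-o(1))L_{f,n}(\hat w_n,\hat b_n)\le(1+\rho_{3,n})\bigl(L_{f,n}(w^\sharp_n,b^\sharp_n)+\rho_1(w^\sharp_n,b^\sharp_n)+\|w^\sharp_n\|_2^2\Tr(\Sigma^\perp_n)/n\bigr)$.

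Finally, intersecting the $O(1)$ high-probability events (union bound: probability at least $1-\delta$) and dividing the last display by $L_{f,n}(w^\sharp_n,b^\sharp_n)$, the right-hand side becomes $(1+\rho_{3,n})\bigl(1+o(1)+\|w^\sharp_n\|_2^2\Tr(\Sigma^\perp_n)/(n\,L_{f,n}(w^\sharp_n,b^\sharp_n))\bigr)$, which converges to $1$ using $\rho_{3,n}\to0$, $\rho_1 = o(L_{f,n}(w^\sharp_n,b^\sharp_n))$, and the middle condition in \eqref{eqn:l2-benign-assumptions} (in the standard noisy regime $L_{f,n}(w^\sharp_n,b^\sharp_n)$ is bounded away from zero, so $\|w^\sharp_n\|_2^2\Tr(\Sigma^\perp_n)/n\to0$ yields the relative statement directly). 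The matching lower bound $L_{f,n}(\hat w_n,\hat b_n)\ge L_{f,n}(w^\sharp_n,b^\sharp_n)$ is immediate since $(w^\sharp_n,b^\sharp_n)$ minimizes the population loss, so the ratio is squeezed to $1$; since $\delta>0$ was arbitrary this is the asserted convergence in probability. The main obstacle is bookkeeping rather than any single hard estimate: one must keep every bound in multiplicative (relative) form so that it is the \emph{ratio} of losses that is controlled — in particular deducing $r(\Sigma^\perp_n)\to\infty$ from the hypothesis stated in terms of $R$, and verifying that $\rho_1$ and $\epsilon_\delta$ are genuinely lower order against $L_{f,n}(w^\sharp_n,b^\sharp_n)$, not merely $o(1)$.
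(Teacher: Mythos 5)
Your proposal is correct and mirrors the paper's own proof almost step for step: the same appeals to \cref{thm:l2-overfitting} with $(w^\sharp_n,b^\sharp_n)$ as the comparison predictor, the same use of $r(\Sigma)^2 \ge R(\Sigma)$ to deduce $\rho_{3,n}\to 0$ from the first condition, the same Chebyshev-plus-hypercontractivity argument to make $\rho_1$ a lower-order multiple of $L_{f,n}(w^\sharp_n,b^\sharp_n)$, and the same use of $k_n/n\to 0$ to absorb $\epsilon_\delta$. You are also right to flag that converting the absolute hypothesis $\|w^\sharp_n\|_2^2\Tr(\Sigma^\perp_n)/n\to 0$ into the claimed ratio statement tacitly requires $L_{f,n}(w^\sharp_n,b^\sharp_n)$ bounded away from zero — the paper's proof relies on the same unstated assumption.
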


Note when applying \cref{corr:benign-overfitting}, we have the flexibility to increase $k_n$ and shrink $\Sigma^{\perp}_n$ by choosing additional weights $w_i^*$ and letting the link function $g$ ignore the extra components. 

\begin{remark}[Flatness of the test loss along regularization path]
Our method can easily show a slightly stronger statement: let 
$(\hat w_n, \hat b_n) \in \arg\min_{\|w\| \le B_n, b \in \mathbb{R}} \hat{L}_{f,n}(w^{\sharp}_n, b^{\sharp}_n)$
such that, if there are multiple minima, we pick the one with smallest $\norm{w}$.
As long as $B_n \ge \|w^{\sharp}_n\|$, we still have \eqref{eqn:consistency-to-sharp}, and this is established uniformly over all sequences $B_n$ satisfying the constraint. Therefore, under the benign overfitting conditions we get consistency as long as we do not over-regularize the predictor. See \cref{fig:flatness} for an experimental demonstration of the flatness.
\end{remark}

\section{Training Error and Local Gaussian Width} \label{sec:localGW}
 \cref{thm:main-gen} shows how to upper-bound the test error of a predictor (under the Moreau envelope loss) by its training error and an upper bound on the class complexity. The following theorem
is the dual result, which upper-bounds the training error of the constrained ERM (Empirical Risk Minimizer) by the Moreau envelope and a complexity term. In particular, this general result is used to derive the norm bound for interpolators in Lemma~\ref{lem:norm-bound-interpolator} above.

\begin{restatable}{theorem}{LocalGW} \label{thm:training-error}
Let $\cK,\cB$ be bounded convex sets, and let $f(\hat y,y)$ be convex in $\hat y$. 
Suppose that $\tau$ is such that with probability at least $1 - \delta$, 
for $(\tilde x, \tilde y)_{i = 1}^n$ sampled i.i.d. from $\tilde{\cD}$ we have
\begin{equation}\label{eqn:training-error-assumption}
\min_{\tilde w \in \phi(\cK), b_0 \in \cB} 
\max_{\lambda \ge 0} \left[\frac{1}{n} \sum_{i = 1}^n f_{\lambda}(\langle \tilde w, \tilde x \rangle + b_0, y_i) - \frac{\lambda}{n} \max_{w_0 \in \phi^{-1}(\tilde w) \cap \cK}  \langle x, Q w_0 \rangle^2\right] \le \tau.
\end{equation}
Then with probability at least $1 - 2 \delta$, $\min_{w \in \cK, b \in \cB} \hat{L}_f(w,b) \le \tau$.
\end{restatable}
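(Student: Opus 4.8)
The plan is to prove this as the ``dual'' of \cref{thm:main-gen}, via the Gaussian Minimax Theorem (Gordon's comparison inequality, in its convex form). Using convexity of $f$ (so $f = (f^*)^*$, with $f^*(\cdot,y)$ the convex conjugate of $f(\cdot,y)$), and truncating the dual variable to a large Euclidean ball --- legitimate since $\beta \mapsto f^*(\beta,y)$ is coercive when $f$ grows superlinearly and has bounded domain when $f$ grows linearly, so a high-probability \emph{a priori} bound suffices --- write
\[
  \min_{w \in \cK,\, b \in \cB}\hat{L}_f(w,b)
  \;=\; \min_{w \in \cK,\, b \in \cB}\;\max_{\beta}\;\frac1n\sum_{i=1}^n\big(\beta_i(\langle w, x_i\rangle + b) - f^*(\beta_i, y_i)\big).
\]
Now split $\langle w, x_i\rangle = \langle \phi(w)_{1:k}, \eta_i\rangle + \langle Qw, x_i\rangle$ as in \eqref{eqn:Q-defn} and condition on the signal randomness $(\eta_i, \xi_i)_{i=1}^n$. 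Under this conditioning the triple $(\eta_i,\xi_i,y_i)$ has the law of a surrogate draw, the first summand is deterministic, and --- because $Q^\top \Sigma w_j^* = 0$, so $\langle Qw, x_i\rangle$ is uncorrelated with, hence (jointly Gaussian) independent of, every $\eta_{j,i}$ --- we may write $\langle Qw, x_i\rangle = z_i^\top \Sigma^{1/2}Qw$ for conditionally fresh i.i.d.\ standard Gaussians $z_i$.

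Next I apply the Gaussian Minimax Theorem to the bilinear Gaussian form $\beta^\top Z a$, where $a := \Sigma^{1/2}Qw$ and $Z$ has rows $z_i^\top$: it is replaced by $\norm{a}_2\,\beta^\top g + \norm{\beta}_2\, h^\top a$ with $g \in \R^n$, $h \in \R^d$ fresh standard Gaussians. The decisive simplification is $\norm{a}_2 = \norm{\Sigma^{1/2}Qw}_2 = \phi(w)_{k+1}$, so the $\beta^\top g$ term combines with the (now deterministic) $\langle \phi(w)_{1:k}, \eta_i\rangle$ to reconstruct exactly $\langle \phi(w), \tilde x_i\rangle$, where $\tilde x_i := (\eta_{1,i},\dots,\eta_{k,i},g_i)$ together with $\tilde y_i := y_i$ is an i.i.d.\ draw from $\tilde\cD$. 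Since the saddle objective is linear in $(w,b)$, concave in $\beta$ (as $f^*(\cdot,y)$ is convex), and $\cK \times \cB$ and the $\beta$-ball are convex and compact, the convex Gaussian Minimax Theorem (applied conditionally, then integrated over the signal) yields $\Pr[\,\min_{w,b}\hat{L}_f(w,b) > \tau\,] \le 2\,\Pr[\,\Phi_{\mathrm{aux}} > \tau\,]$, where $\Phi_{\mathrm{aux}}$ is the auxiliary min--max produced by the substitution.

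It remains to identify $\Phi_{\mathrm{aux}}$ with (an upper bound for) the left-hand side of \eqref{eqn:training-error-assumption}. First minimize out the part of $w$ inside a fibre $\phi^{-1}(\tilde w) \cap \cK$: holding $\tilde w$ and $b$ fixed, the surviving $w$-dependence of the inner $\max_\beta$ is only through the scalar $h^\top \Sigma^{1/2}Qw$, and $\max_\beta$ is monotone in it, so this collapses to a localized Gaussian width, namely the extreme value of $h^\top \Sigma^{1/2}Qw_0$ over $w_0 \in \phi^{-1}(\tilde w) \cap \cK$ --- which is distributionally identical, as a process in $w_0$, to the $\langle x_i, Qw_0\rangle$ appearing in the width term of \eqref{eqn:training-error-assumption}. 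Then use the variational identity $c\,\norm{\beta}_2 = \sup_{\mu > 0}\big(-\tfrac{c^2}{4\mu} - \mu\norm{\beta}_2^2\big)$, exchange the two suprema, and observe that $\max_\beta\big[\sum_i(\beta_i\ell_i - f^*(\beta_i,\tilde y_i)) - \mu\norm{\beta}_2^2\big] = \sum_i f_{1/(4\mu)}(\ell_i, \tilde y_i)$, because the conjugate of $f^*(\cdot,\tilde y_i) + \mu(\cdot)^2$ is exactly a Moreau envelope of $f$. Reparametrizing $\lambda = 1/(4\mu)$ rewrites $\Phi_{\mathrm{aux}}$ as the $\min_{\tilde w, b_0}\max_{\lambda \ge 0}[\,\cdots]$ of \eqref{eqn:training-error-assumption}; since that quantity is $\le \tau$ with probability at least $1 - \delta$ over the surrogate draw, combining with the comparison-inequality bound gives $\min_{w,b}\hat{L}_f(w,b) \le \tau$ with probability at least $1 - 2\delta$.

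The step I expect to be delicate is ensuring the variational identity in the last paragraph holds as an \emph{equality}: it requires the localized-width scalar $h^\top\Sigma^{1/2}Qw_0$, extremized over the fibre, to have the favorable sign, i.e.\ that $\phi^{-1}(\tilde w) \cap \cK$ is rich enough in the $Q$-directions to reach $h^\top\Sigma^{1/2}Qw_0 \le 0$ (this holds once the sphere $\norm{\Sigma^{1/2}Qw_0}_2 = \tilde w_{k+1}$ can be relaxed to a ball without changing the value, and one must check that the complementary case does not bind). The remaining ingredients --- controlling the \emph{a priori} unbounded dual variable $\beta$ while preserving the convex--concave and compactness hypotheses of the Gaussian Minimax Theorem, the conditioning argument that isolates the fresh Gaussian component of the features, and the $\varepsilon$-slack in passing between strict and non-strict inequalities in the comparison --- are routine.
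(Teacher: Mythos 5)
Your route is correct in outline and genuinely different in organization from the paper's. The paper keeps $f$ primal: it introduces the variable $\hat y = Xw + b$ with an $n$-dimensional Lagrange multiplier $\lambda$ to get the bilinear Gaussian form $\langle \lambda, X^\perp w^\perp\rangle$, applies CGMT, then solves the resulting norm-ball constraint on $\hat y$ by a scalar Lagrangian (Lemma~\ref{lem:truncation2}) and recognizes the Moreau envelope when minimizing over $\hat y$. You instead dualize $f$ directly via $f = f^{**}$, making $\beta$ the bilinear partner and recovering the envelope from the identity $(f^*(\cdot, y) + \mu(\cdot)^2)^* = f_{1/(4\mu)}(\cdot, y)$; your $\mu$ and the paper's scalar $\lambda$ are related by $\lambda = 1/(4\mu)$. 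These are conjugate presentations of the same argument — both hinge on CGMT and the infimal-convolution characterization of the envelope — but the bookkeeping is different enough that it counts as an independent derivation. Two things to tighten: (i) the sign issue you flagged is real and is precisely what the paper handles by the sign-flip argument (it converts $\langle (\Sigma^\perp)^{1/2}h, w^\perp\rangle$ to its absolute value by replacing $w^\perp \mapsto -w^\perp$, which leaves $\phi(w)$ and the objective unchanged); note the squaring in the statement's width term $\max_{w_0}\langle x, Qw_0\rangle^2$ is exactly what absorbs this, so you should make sure your reduction lands on that form rather than on $(\min_{w_0} h^\top\Sigma^{1/2}Qw_0)^2$, which can be strictly smaller when the fibre does not straddle zero. (ii) Your compactness/truncation of $\beta$ needs the same two-level limiting argument the paper uses (truncate both the primal block and the dual block, apply GMT, then remove the truncations by monotonicity and continuity of probability, cf.\ Lemmas~\ref{lem:truncation} and~\ref{lem:truncation2}); "a priori bound on $\beta$" alone will not give you the clean $\Pr(\Phi > t) \le 2\Pr(\Psi \ge t)$ inequality without the limit. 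Neither of these is a conceptual gap, but both need to be carried out for a complete proof.
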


Note that the assumption \eqref{eqn:training-error-assumption} implicitly suggests a low-dimensional concentration assumption: we expect $\frac{1}{n} \sum_{i = 1}^n f_{\lambda}(\langle \tilde w, \tilde x \rangle + b_0, y_i)$ to be approximately the test loss of $(\tilde w,b_0)$ under the surrogate distribution $\tilde{\mathcal D}$. 
As we discuss more in \cref{apdx:training-error}, combining this training error bound with the correct choice of $C_{\delta}(w)$ in Theorem~\ref{thm:main-gen}, which is essentially $C_{\delta}(w) = \E \max_{w_0 \in \phi^{-1}(\tilde w) \cap \cK}  \langle x,  Q w_0 \rangle^2$), yields a matching lower bound to \eqref{eqn:main-bound} on the Moreau envelope test loss and so our generalization bound is asymptotically sharp. This establishes a non-asymptotic analogue of the existing asymptotic Moreau envelope theory (see \cref{sec:related}), and recovers the special case of well-specified linear models \citep{optimistic-rates}. 

\section{Discussion}

In this work, we significantly extend the localized uniform convergence technique developed in the study of noisy interpolation to any loss function and label generating process under mild conditions. Though the application of Moreau envelope to study GLMs is not new in the statistical literature, our general theory establishes novel non-asymptotic generalization bounds in a wide variety of overparameterized settings. We believe the generality of our framework may allow further applications in other areas of statistics, such as robust statistics and high-dimensional inference.

As mentioned in \cref{sec:related}, the applicability of our theory is still considerably limited by the Gaussian data assumption,
required by our use of the Gaussian minimax theorem.
It does appear experimentally that it may hold much more broadly (\cref{sec:experiment});
proving that this is the case could allow us to study kernel methods and bring us closer to a theoretical understanding of deep neural networks. 
Some work has been done in related settings to extend Gaussian-based results to broader distributions via universality arguments \citep[e.g.][]{hu:universality,liang2020precise,montanari:universal-erm},
but it is not yet clear how to apply those techniques to our general framework.
The GMT formulation also does not allow for multi-class classification or two-layer networks, because of their vector-valued outputs. Overcoming these two challenges seems to be crucial avenues for future work. 

\begin{ack}
F.K.\ was supported in part by NSF award CCF-1704417, NSF award IIS-1908774, and N. Anari’s Sloan Research Fellowship.
P.S.\ was supported in part by NSF award DMS-2113426.
D.J.S.\ was supported in part by the Canada CIFAR AI Chairs program.
Part of this work was initiated when F.K., P.S., and N.S.\ were visiting the Simons Institute for the Theory of Computing for their program on Computational Complexity of Statistical Inference. 
This work was done as part of the
Collaboration on the Theoretical Foundations of Deep Learning (\httpsurl{deepfoundations.ai}).
\end{ack}

\printbibliography

\clearpage
\appendix
\section{Organization of the Appendices}
In this appendix, we provide additional simulation results and complete proofs of all the results in the main text.
In \cref{sec:experiment}, we provide additional simulation results.
In \cref{apdx:preliminaries}, we introduce standard notation and tools which we use throughout the remainder of the appendices.
In \cref{apdx:proof-main-gen}, we give a proof of our main result \cref{thm:main-gen}.
In \cref{apdx:applications}, we apply VC theory to handle low-dimensional concentration and prove the generalization guarantees for linear regression and classification.
In \cref{apdx:training-error}, we prove \cref{thm:training-error}.
In \cref{apdx:benign-overfitting}, we establish a norm bound for interpolators and apply our generalization bound of \cref{sec:applications} to show consistency.

\section{Additional Numerical Simulations} \label{sec:experiment}

This section presents additional numerical simulations on synthetic data to confirm our theory and test it beyond the case of Gaussian covariates. All code is available from \url{https://github.com/zhoulijia/moreau-envelope}.\footnote{The ridge path is computed using SVD implemented by \texttt{np.linalg.svd}. The LASSO path is computed using coordinate descent implemented by \texttt{sklearn.linear\_model.lasso\_path}, and $\ell_1$ and $\ell_2$ margin classifiers are fitted using \texttt{sklearn.svm.LinearSVC} with the default squared hinge loss option.} 

\subsection{Linear Regression}

We fit linear models to minimize the square loss with $\ell_1$ and $\ell_2$ penalty. For simplicity, we ignore the intercept term in this section, but we will consider models with intercept in the context of linear classification. We can obtain many data distributions by combining the different options below:

\paragraph{Feature Distribution.} The marginal distribution of $x$ is always given by $x = \Sigma^{1/2} z$, where $z$ is a random vector with i.i.d. coordinates that have mean 0 and variance 1. 

\begin{figure}[htbp]
\centering
\hspace{12ex}
\includegraphics[width = 9.5cm]{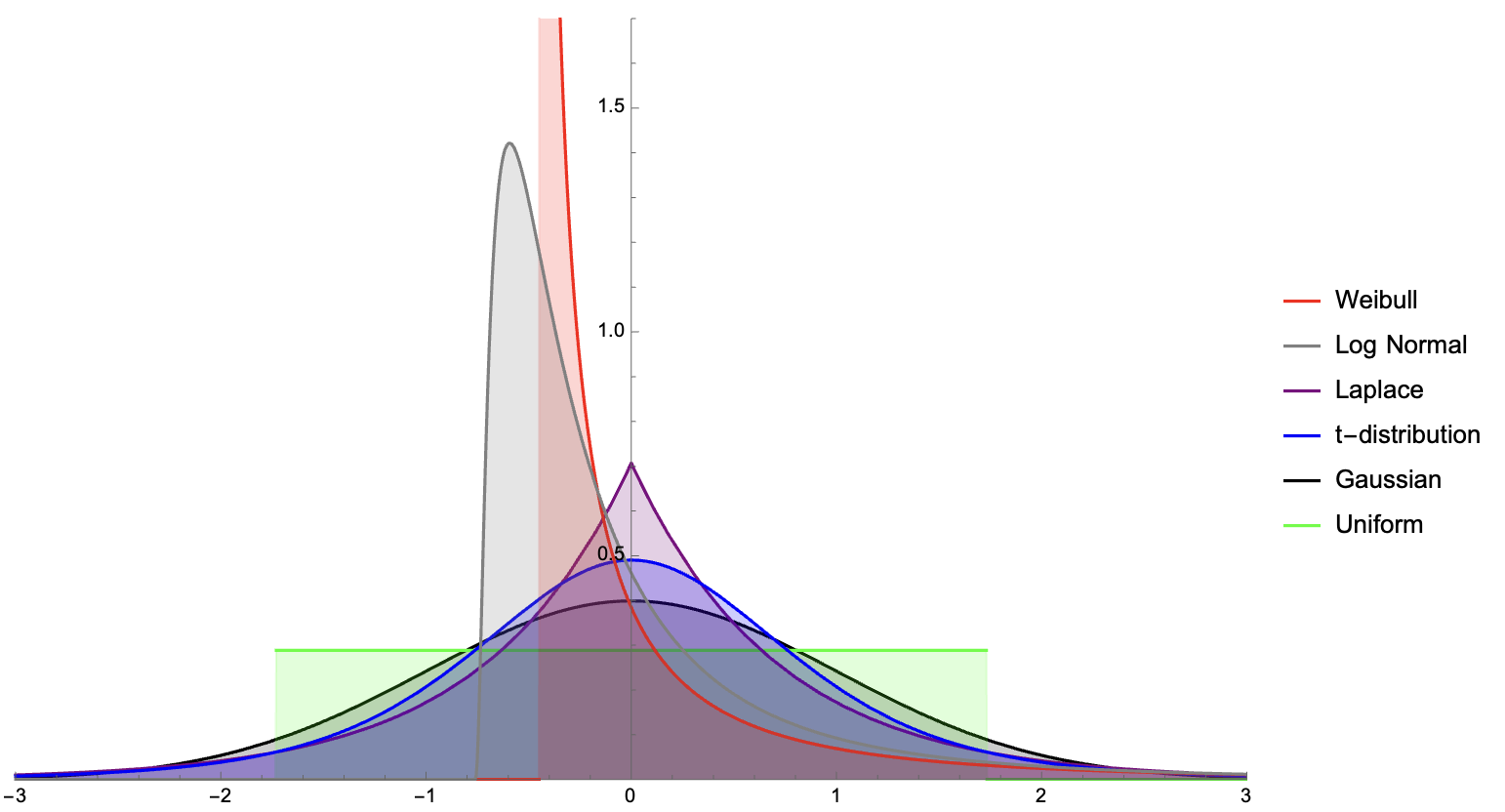}
\caption{Probability density plot for the continuous distributions of $z$ that we consider.}
\end{figure}

The coordinate distributions of $z$ that we consider in the simulations include
\begin{itemize}[wide, labelwidth=0pt, labelindent=0pt]
    \item Gaussian  
    \begin{itemize}
        \item the standard Gaussian distribution has density $p(z) = \frac{1}{\sqrt{2 \pi}} e^{-\frac{1}{2} z^2}$
    \end{itemize}
    
    \item Uniform
    \begin{itemize}
        \item the uniform distribution between 0 and 1 has mean 0 and variance $\frac{1}{12}$. After normalization, it becomes the uniform distribution between $-\sqrt{3}$ and $\sqrt{3}$. It's symmetric, bounded from above and below, and therefore sub-Gaussian
    \end{itemize}

    \item Laplace
    \begin{itemize}
        \item Laplace distribution with scale parameter $b$ has density $p(z) = \frac{1}{2b} e^{-\frac{|z|}{b}}$ and variance $2b^2$, so we should choose $b = \frac{1}{\sqrt{2}}$
        \item it is symmetric, unbounded, and has fatter tails compared to Gaussian (sub-exponential)
        
    \end{itemize}
\end{itemize}

We also consider discrete distributions
\begin{itemize}[wide, labelwidth=0pt, labelindent=0pt]
    \item Rademacher 
    \begin{itemize}
        \item the discrete distribution with equal chance of being $-1$ or $1$. It is easy to see that it has mean 0 and unit variance.
    \end{itemize}
    
    \item Poisson
    \begin{itemize}
        \item Poisson distribution with rate parameter 1 is supported on the non-negative integers (skewed and bounded from below) and has density $\pr(\tilde{z} = k) = \frac{e^{-1}}{k!}$. Its mean and variance are both equal to 1, and so we take $z = \tilde{z} - 1$ to normalize
    \end{itemize}
\end{itemize}

and heavy-tailed distributions
\begin{itemize}[wide, labelwidth=0pt, labelindent=0pt]
    \item Student's t-distribution
    \begin{itemize}
        \item t-distribution with 5 degrees of freedom has density $p(\tilde{z}) = \frac{8}{3\sqrt{5}\pi \left( 1 + \frac{\tilde{z}^2}{5}\right)^3}$
        \item It has variance $\frac{5}{3}$ and so we let $z =\sqrt{\frac{3}{5}} \tilde{z}$. It is symmetric, unbounded and has finite fourth moment. However, moments of order 5 or higher do not exist.
    \end{itemize}
    
    \item Weibull
    \begin{itemize}
        \item Weibull distribution with scale parameter $\lambda = 1$ and shape parameter $k = 0.5$ has density $p(\tilde{z}) = \frac{e^{-\sqrt{\tilde{z}}}}{2\sqrt{\tilde{z}}} \bone_{\{ \tilde{z} \geq 0\}}$. It has mean 2 and variance 20 and so we take $\tilde{z} = \frac{z - 2}{\sqrt{20}}$
    \end{itemize}
    
    \item Log-Normal
    \begin{itemize}
        \item the distribution of $e^Z$, where $Z$ follows the standard Gaussian distribution. It has mean $\sqrt{e}$ and variance $e(e-1)$, and so we can choose $z = \frac{e^Z - \sqrt{e}}{\sqrt{e(e-1)}}$
    \end{itemize}
\end{itemize}

\paragraph{Covariance Matrix and Scaling.} For simplicity, we choose $\Sigma$ to be diagonal and consider
\begin{itemize}[wide, labelwidth=0pt, labelindent=0pt]
    \item Isotropic features $\Sigma = I_d$ in the proportional scaling $(n = 300, d = 350)$
    \item Junk features in the over-parameterized scaling $(n = 300, d = 3000)$
    \[
    \Sigma_{kk} = 
    \begin{cases}
        1 &\mbox{if } k = 1, 2, 3 \\
        0.05^2 &\mbox{otherwise} \\
    \end{cases}
    \]
    \item Non-benign features in the over-parameterized scaling $(n = 300, d = 3000)$
    \[
    \Sigma_{kk} = 
    \begin{cases}
        1 &\mbox{if } k = 1, 2, 3 \\
        \frac{1}{k^2} &\mbox{otherwise} \\
    \end{cases}
    \]
\end{itemize}

The junk features setting is known to satisfy the benign overfitting conditions \citep{junk-feats, bartlett2020benign}, by which the minimal $\ell_2$-norm interpolator is consistent. In contrast, \citet{bartlett2020benign} also shows that overfitting is not benign in the second case, but the theory from \citet{optimistic-rates} shows that the optimally-tuned ridge regression can be consistent.

\paragraph{Conditional Distribution of $y$.} Let 
\begin{equation*}
    \begin{split}
        w^* &= (1.5, 0, ..., 0) \\
        \xi &\sim \cN(0, 0.5) \\
    \end{split}
\end{equation*}

and consider
\begin{itemize}[wide, labelwidth=0pt, labelindent=0pt]
    \item a well-specified linear model:
    \[
    y = \langle w^*, x \rangle + \xi
    \]
    \item a mis-specified model:
    \[
    y = \underbrace{\langle w^*, x \rangle}_{\text{linear signal}} + \underbrace{|x_1| \cdot \cos x_2 }_{\text{non-linear term}} + \underbrace{ x_3 \cdot \xi}_{\text{heteroscedasticity}}
    \]
\end{itemize}
The second model does not satisfy the classical assumptions for linear regression because the Bayes predictor
\[
\E[y | x] = \langle w^*, x \rangle + |x_1| \cdot \cos x_2
\]
is non-linear and the variance of the residual also depends on $x_4$. Even though statistical inference can be challenging for models like this, we can hope to learn a model that competes with the optimal linear predictor (which is not necessarily the same as $w^*$) in terms of prediction error. 

\subsubsection{Speculative Risk Bounds for Non-Gaussian Features}

Though our theory is restricted to Gaussian features, we conjecture that it can be extended to a more general class of distributions using Rademacher complexity and we use numerical simulations to confirm our conjecture.

\paragraph{Ridge Regression}
\begin{enumerate}
    \item Isotropic features: similar to Lemma 10 in \citet{optimistic-rates}, we can choose $C_{\delta}$ in corollary~\ref{corr:gen-square-loss} by the simple Cauchy-Schwarz bound
    \[
    \langle Qw, x \rangle \leq \| Qw\|_2 \cdot \| x\|_2 \approx \sqrt{d}  \| Qw\|_2
    \]
    resulting in the following bound
    \begin{equation} \label{eqn:ridge-bound-isotropic}
        L_f(w) \leq (1 + o(1)) \left(\sqrt{\hat{L}_f(w)} + \sqrt{\frac{d}{n}} \cdot \| Qw\|_2 \right)^2
    \end{equation}
    
    \item Junk and non-benign features: choosing $C_{\delta}$ in corollary~\ref{corr:gen-square-loss} according to Lemma~\ref{lem:gen-ball1} yields 
    \begin{equation} \label{eqn:ridge-bound}
        L_f(w) \leq (1 + o(1)) \left(\sqrt{\hat{L}_f(w)} + \|w\|_2 \sqrt{\frac{\Tr(\Sigma^{\perp})}{n}} \right)^2
    \end{equation}
\end{enumerate}

In all of the experiments, we use a constant close to 1 to replace the $1 + o(1)$ factor in our generalization bounds. Note that \eqref{eqn:ridge-bound} can be interpreted in terms of Rademacher complexity:
\begin{equation*}
    \begin{split}
        \E_{\substack{x_1, ..., x_n \sim \cD \\ s \sim \text{Unif}(\{ \pm 1\}^n)}} \left[ \sup_{\| w\|_2 \leq B} \left| \frac{1}{n} \sum_{i=1}^n s_i \langle w, Q^T x_i \rangle \right| \right]
        &= \frac{B}{n} \cdot \E_{\substack{x_1, ..., x_n \sim \cD \\ s \sim \text{Unif}(\{ \pm 1\}^n)}} \left[ \norm{ \sum_{i=1}^n s_i Q^T x_i }_2 \right] \\
        &\leq B \cdot \sqrt{\frac{\Tr(\Sigma^{\perp})}{n}}
    \end{split}
\end{equation*}

The last inequality holds generally for any distribution with $\E_{x \sim \cD} [xx^T] = \Sigma$ by Cauchy-Schwarz inequality. In our examples, $x = \Sigma^{1/2} z$ and $z$ is scaled to satisfy $\E [zz^T] = I_d$. Therefore, we will use equation \eqref{eqn:ridge-bound-isotropic} and \eqref{eqn:ridge-bound} even for non-Gaussian data. 

Equations \eqref{eqn:ridge-bound-isotropic} and \eqref{eqn:ridge-bound} are qualitatively similar with subtle technical differences. Compared with equation \eqref{eqn:ridge-bound}, the bound \eqref{eqn:ridge-bound-isotropic} uses the smaller norm $\| Qw\|_2$ and figure 2 of \citet{uc-interpolators} demonstrates that this projection is crucial for tight bounds in the isotropic setting. On the other hand, equation \eqref{eqn:ridge-bound} incorporates the covariance splitting technique \citep{bartlett2020benign} because large eigenvalues of $\Sigma$ can be killed off in $\Sigma^{\perp}$ by projection $Q$ while $\Tr(I_d) = d$ in the isotropic case. It is shown in our corollary~\ref{corr:benign-overfitting} that this bound without the projection is already tight enough to establish the consistency of minimal-$\ell_2$ norm interpolator in the junk feature setting. Hence, we expect \eqref{eqn:ridge-bound} to be tight throughout the ridge path. In contrast, the theory in \citet{optimistic-rates} predicts that \eqref{eqn:ridge-bound} is tight for the non-benign setting only up to the point where the ridge estimate has norm as large as the optimal linear predictor. We believe using the local Gaussian width theory introduced in Section~\ref{sec:localGW} (i.e. an optimal choice of $C_{\delta}(w)$) can get tight bound throughout the ridge path in this setting, but we do not have experiments in this appendix to confirm it.

In the theoretical analysis of \citet{optimistic-rates}, they further write $\| Qw\|_2$ as a function of $\|w\|_2, \| w^* \|_2$ and the excess risk $\| w-w^*\|_{\Sigma}^2$ in the isotropic case, then solve the equation in terms of $\| w-w^*\|_{\Sigma}^2$ to get a norm-based generalization bound as a function of $\| w\|_2$ when $\hat{L}_f(w) = 0$ (see their theorem 6). Since the solution for general non-zero $\hat{L}_f(w)$ can have a quite tedious expression, for the purpose of numerically checking the applicability and tightness of this approach, we will use simpler equation \eqref{eqn:ridge-bound-isotropic} in the experiments.

\paragraph{LASSO Regression} 
Similar to the section above, we use the analogy to Rademacher complexity to extend our theory to the $\ell_1$ case. Since we can no longer bound the $\ell_{\infty}$ norm of a sum using the Cauchy-Schwarz inequality, it is easier to directly work with the empirical Rademacher complexity (which also should be similar to the expected Rademacher complexity in the settings that we consider)
\[
\frac{\| w\|_1}{n} \cdot \E_{ s \sim \text{Unif}(\{ \pm 1\}^n)} \left[ \norm{ \sum_{i=1}^n s_i Q^T x_i }_{\infty} \right] 
\]
and we can estimate the expected norm by
\[
\frac{1}{B} \sum_{k = 1}^B \norm{ \sum_{i=1}^n s_{k, i} Q^T x_i }_{\infty}
\]
for a large value of $B$ and $s_1, ..., s_B$ sampled independently from $\text{Unif}(\{ \pm 1\}^n)$. In our implementation, $s_1, ..., s_B$ are fresh samples each time the risk bound is computed. To summarize, we use the following expression for the calculation of risk bound:
\begin{enumerate}
    \item Isotropic features:
    \begin{equation} \label{eqn:lasso-bound-isotropic}
        \left(\sqrt{\hat{L}_f(w)} + \| Qw\|_1 \cdot \frac{1}{nB} \sum_{k = 1}^B \norm{ \sum_{i=1}^n s_{k, i}  x_i }_{\infty} \right)^2
    \end{equation}
    
    \item Junk and non-benign features: 
    \begin{equation} \label{eqn:lasso-bound}
        \left(\sqrt{\hat{L}_f(w)} + \| w\|_1 \cdot \frac{1}{nB} \sum_{k = 1}^B \norm{ \sum_{i=1}^n s_{k, i}  Q^T x_i }_{\infty} \right)^2
    \end{equation}
\end{enumerate}
which are analogous to \eqref{eqn:ridge-bound-isotropic} and \eqref{eqn:ridge-bound}. 

We note that it is important to use the Rademacher complexity to extend to non-Gaussian features in the $\ell_1$ case, rather than a bound similar to $\frac{\| w\|_1 \E \| x\|_{\infty}}{\sqrt{n}}$. Empirically, the latter is too small to provide a valid upper bound on the test loss. This is because $\| x\|_{\infty}$ is deterministic for distributions like the Rademacher distribution, while the random signs in the definition of Rademacher complexity allows a tail behavior more similar to Gaussian and so we can regain a log factor in the norm component. 

\subsubsection{Experimental Results}

For both ridge and LASSO regression, risk curves measured in the square loss are shown in three figures corresponding to the different data covariances. Within each figure, there are 16 subplots corresponding to the different combinations of one of the eight feature distributions and label generating process (well-specified vs mis-specified) as defined at the beginning of the section. Therefore, there are 96 subplots in total. Discussion of the experimental outcome can be found in the caption of each figure.

Similar to the situation in the rest of the experiments, the training error is close to 0 with sufficiently small regularization, and the confidence bands are wider with heavy-tailed distributions. Also, the null risk and the Bayes risk are different across different feature distributions when there is model misspecification (see the calculation in the next subsection for more details).

\paragraph{Ridge Regression.}

The plots for isotropic, junk and non-benign features in the ridge regression setting can be found in figures \ref{fig:isotropic-ridge}, \ref{fig:junk-ridge} and \ref{fig:non-benign-ridge}, respectively. Generally speaking, the experiments confirm the tightness and wide applicability of our generalization guarantees. The specific feature distribution and model misspecification do not seem to affect the shape of test error curve.

\paragraph{LASSO Regression.} The plots for isotropic, junk, and non-benign features in the LASSO regression setting can be found in \cref{fig:isotropic-lasso,fig:junk-lasso,fig:non-benign-lasso}. The risk bounds in the $\ell_1$ case are not as tight as in the $\ell_2$ case because they are only expected to be tight in certain parts of the entire regularization path. As mentioned earlier, we can get sharp bounds for the entire path using local Gaussian width, but it requires a more fine-grained analysis than \eqref{eqn:lasso-bound-isotropic} and \eqref{eqn:lasso-bound}. Similar results and experiments were obtained by \citet{wang2021tight,donhauser2022fastrate}.

\begin{figure} 
  \centering
  \includegraphics[width = \linewidth]{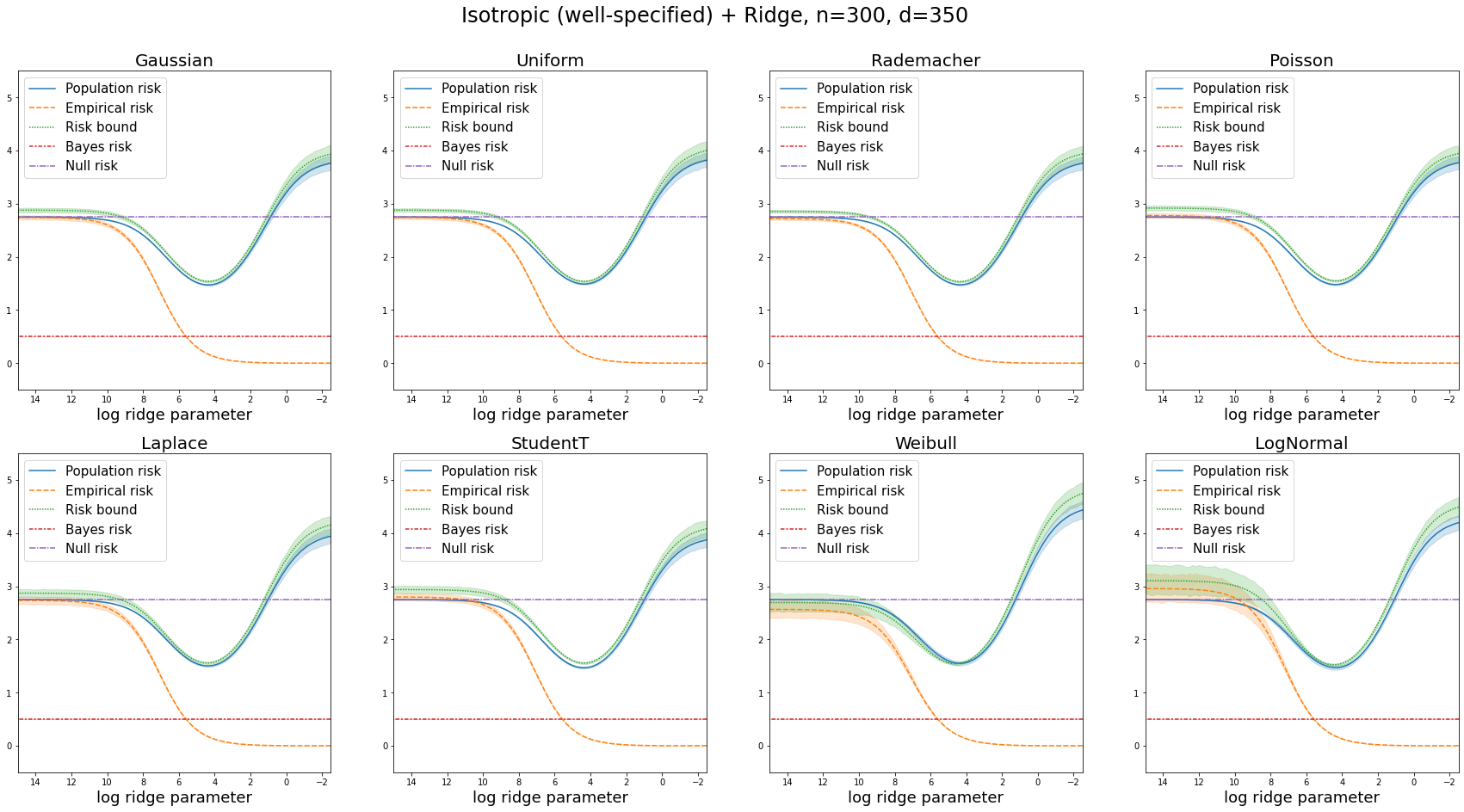}\\
  \vspace{6ex}
  \includegraphics[width = \linewidth]{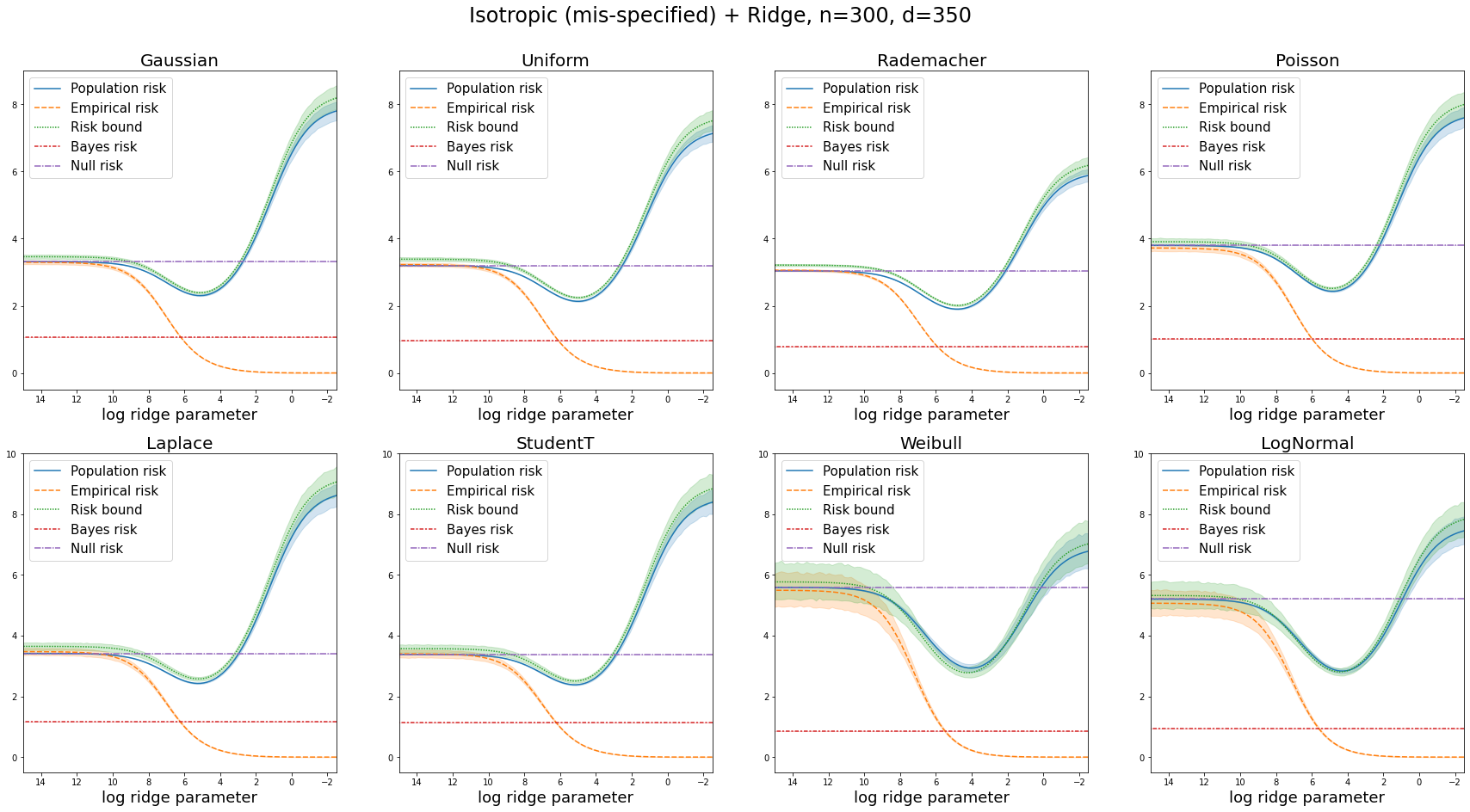}
  \vspace{2ex}
  \caption{Ridge regression with isotropic data $(n = 300, d = 350)$. As proved by theorem 7 in \citet{optimistic-rates}, the risk bound \eqref{eqn:ridge-bound-isotropic} follows the test error curve closely. This is true even in the non-Gaussian and mis-specified settings. Note that we do not have benign-overfitting because we are in the proportional scaling regime with $d$ close to $n$, and the population risk of the minimal-$\ell_2$ norm interpolator is even worse than the null-risk (more significantly so with misspecification). The optimally-tuned ridge regression has risk better than the null risk, but it is still far from the Bayes risk because the consistency result of optimally-tuned ridge regression in \citet{optimistic-rates} assumes $\Tr(\Sigma)/n \to 0$.}
  \label{fig:isotropic-ridge}
\end{figure}

\begin{figure} 
  \centering
  \includegraphics[width = \linewidth]{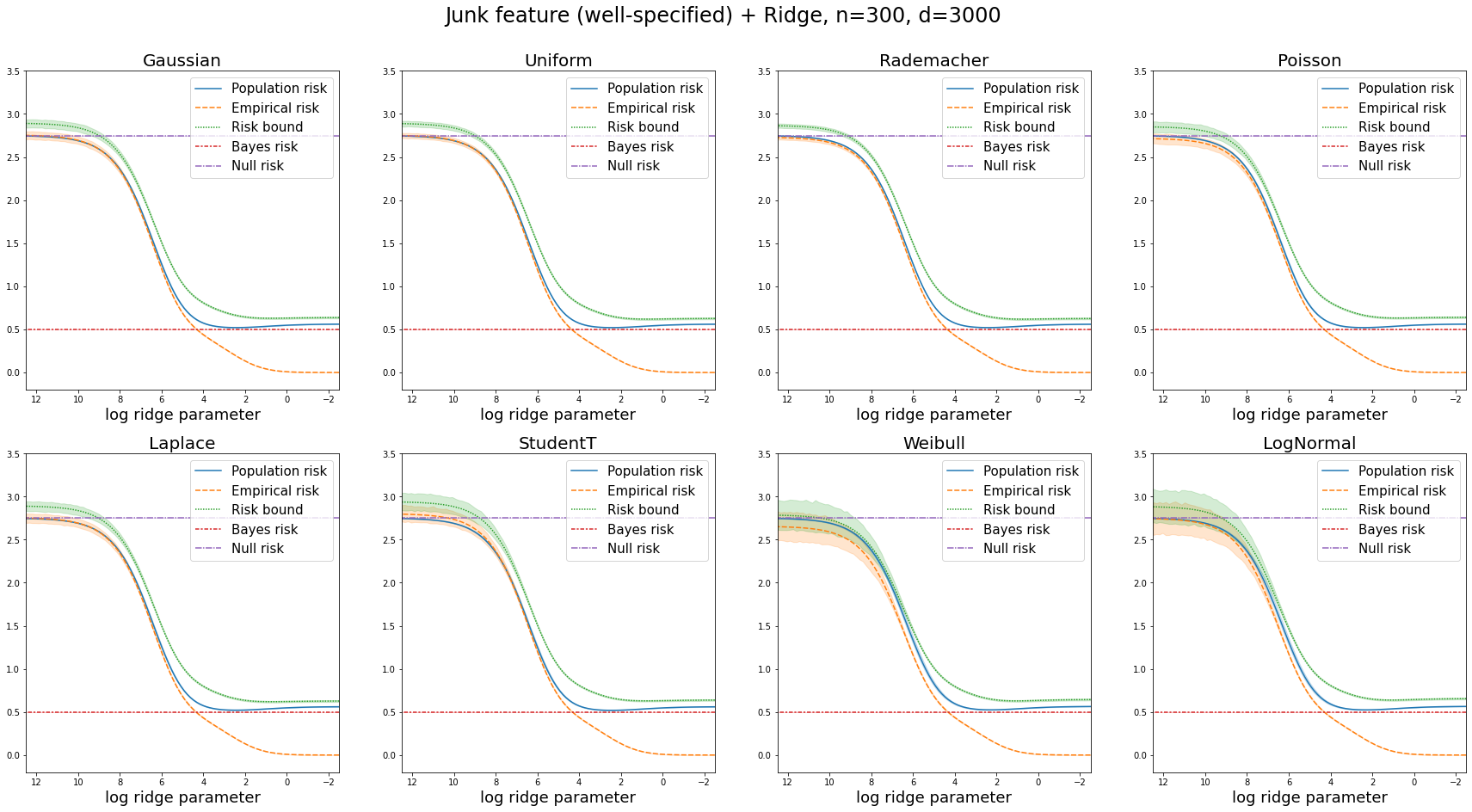}\\
  \vspace{6ex}
  \includegraphics[width = \linewidth]{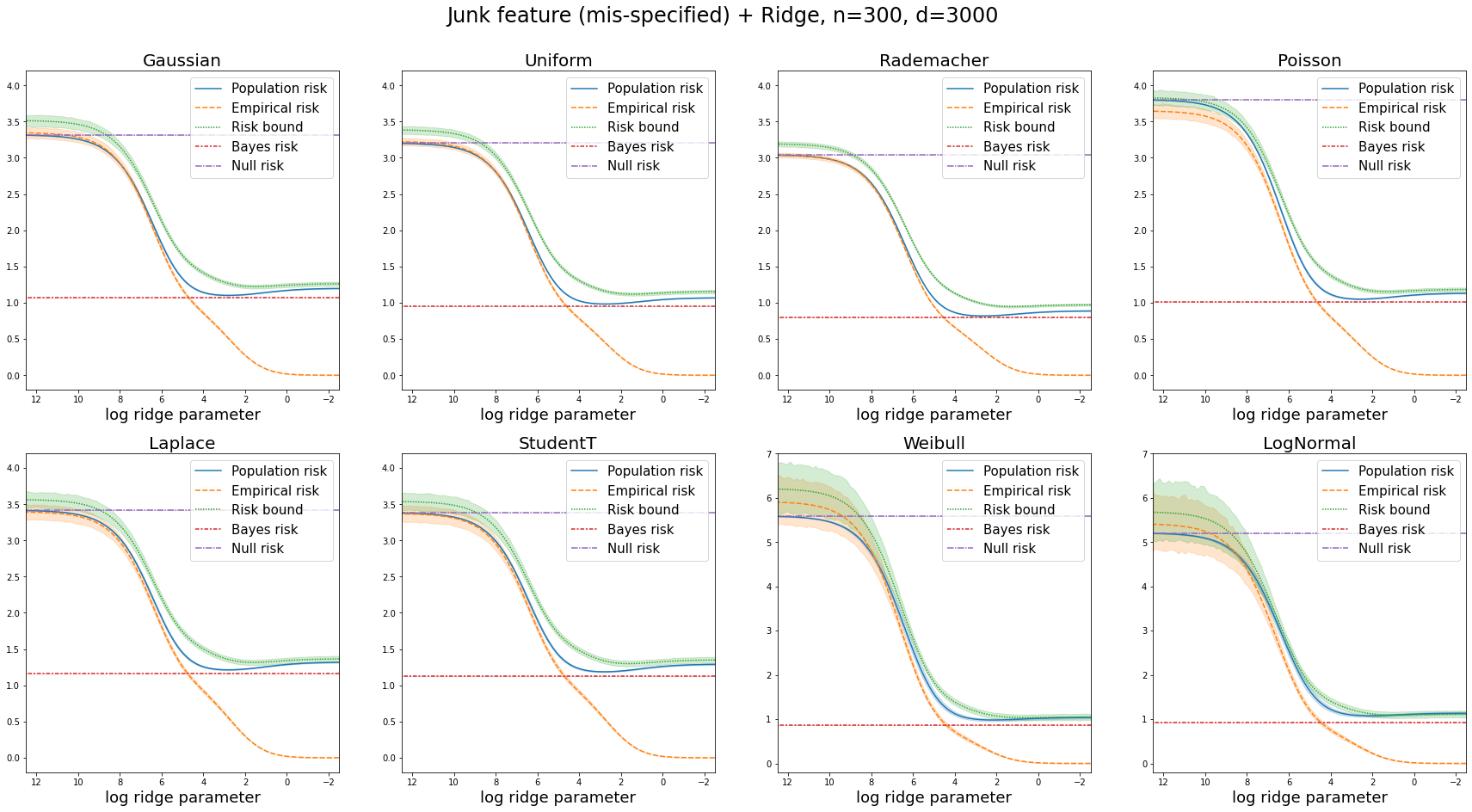}
  \vspace{2ex}
  \caption{Ridge regression with junk features $(n = 300, d = 3000)$. In the junk features setting, as predicted in section~\ref{sec:benign-overfitting}, the test error curve is essentially flat once the regularization is small enough to fit the signal, and we get nearly optimal population risk as long as we do not over-regularize the predictor. The test error curve can be expected to be more flat with increasing $d$. This phenomenon is also consistent across different feature distributions and label generating processes. Our bound \eqref{eqn:ridge-bound} closely tracks the performance of ridge regression along the entire regularization path.}
  \label{fig:junk-ridge}
\end{figure}

\begin{figure} 
  \centering
  \includegraphics[width = \linewidth]{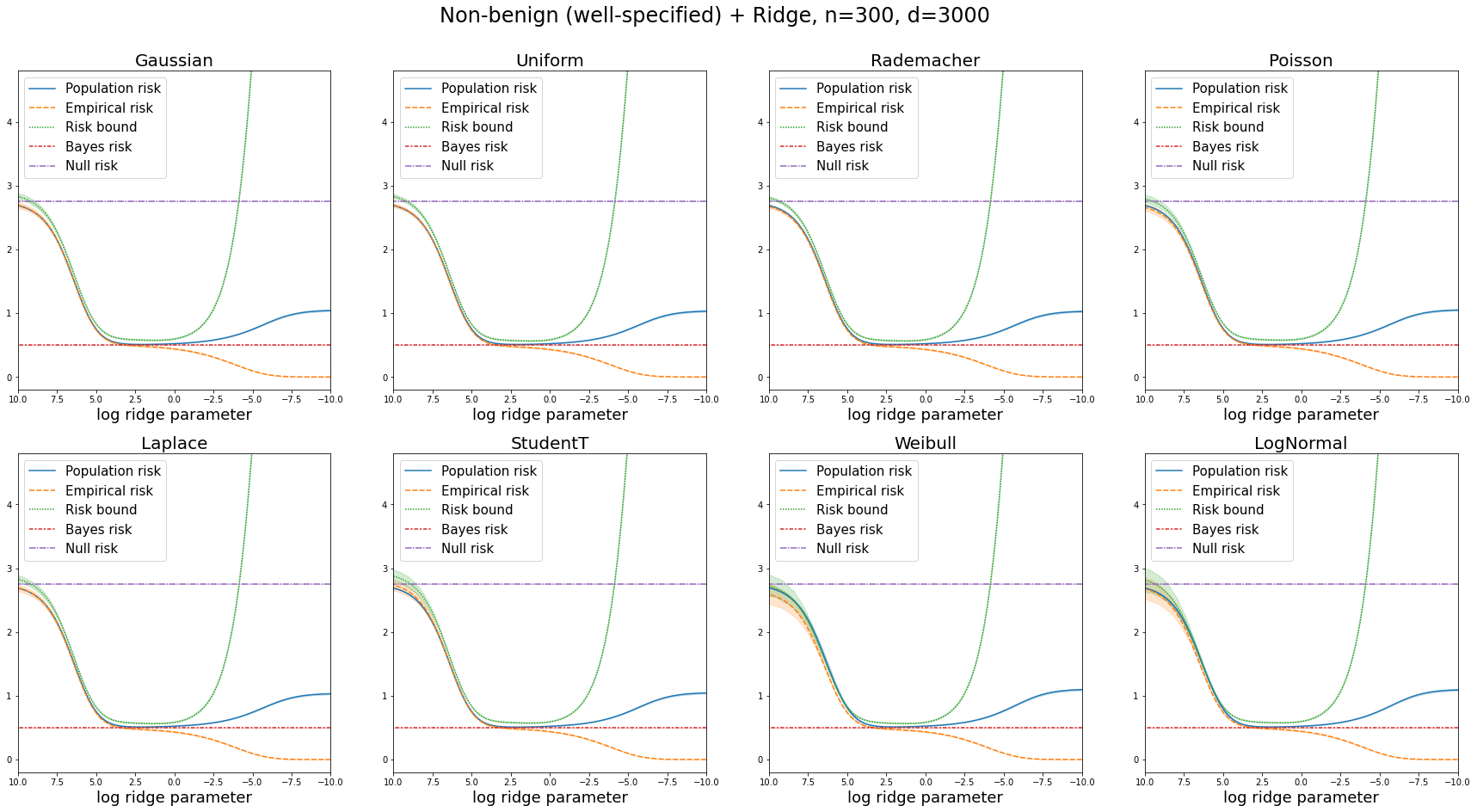}\\
  \vspace{6ex}
  \includegraphics[width = \linewidth]{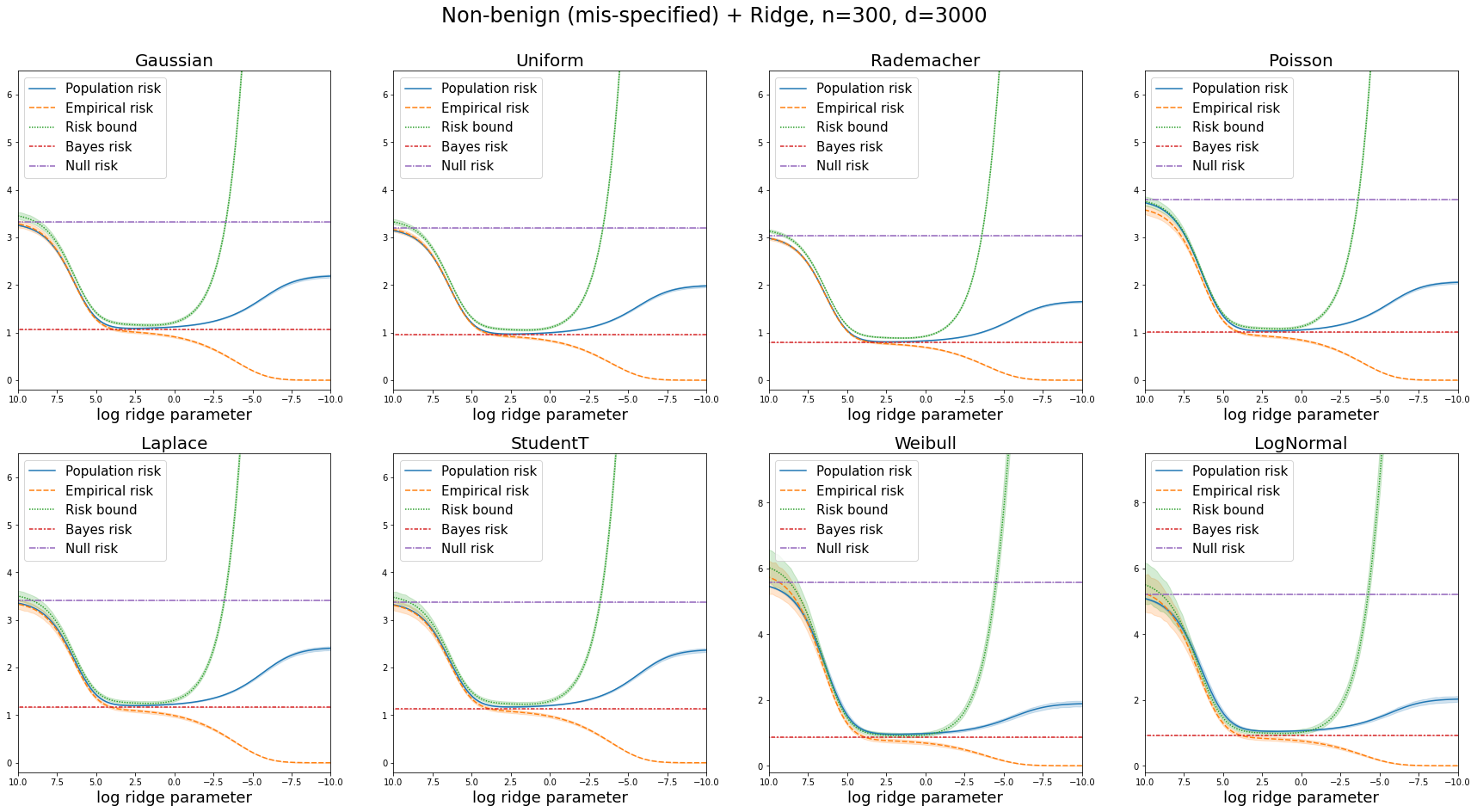}
  \vspace{2ex}
  \caption{Ridge regression with non-benign features $(n = 300, d = 3000)$. In the non-benign features setting, as proved by corollary 3 in \citet{optimistic-rates}, the optimally-tuned ridge regression achieves nearly optimal prediction risk. Our risk bound is tight up to the point up to the point where the test error starts to increase. As expected, the minimal norm interpolator fails to achieve consistency even though we are in the overparameterized regime. Note that bound \eqref{eqn:ridge-bound} is dramatically more pessimistic in the under-regularized part of the ridge path. Once again, the data distribution and model misspecification has no effect on the shape of the test error curve and risk bound.}
  \label{fig:non-benign-ridge}
\end{figure}

\begin{figure} 
  \centering
  \includegraphics[width = \linewidth]{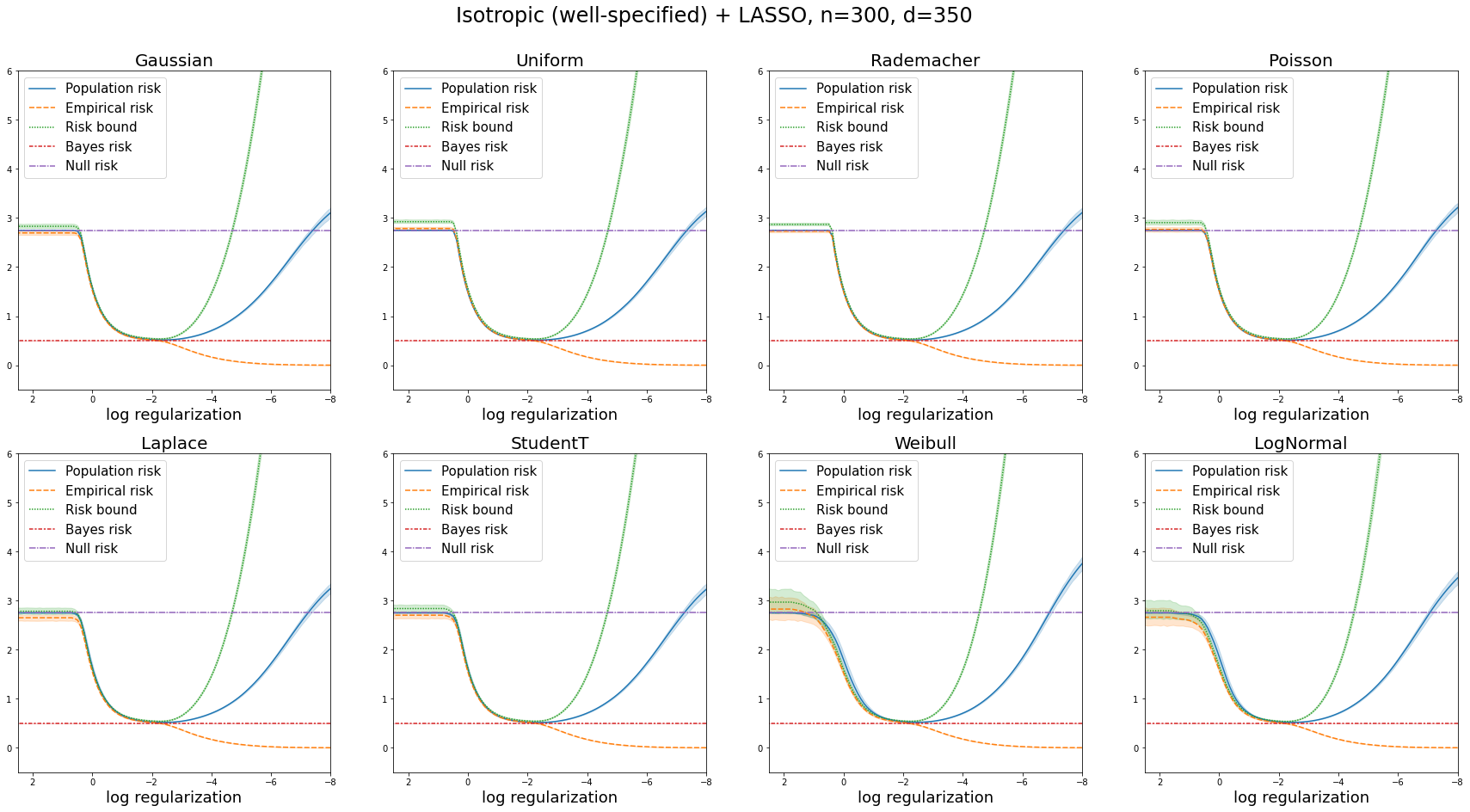}\\
  \vspace{6ex}
  \includegraphics[width = \linewidth]{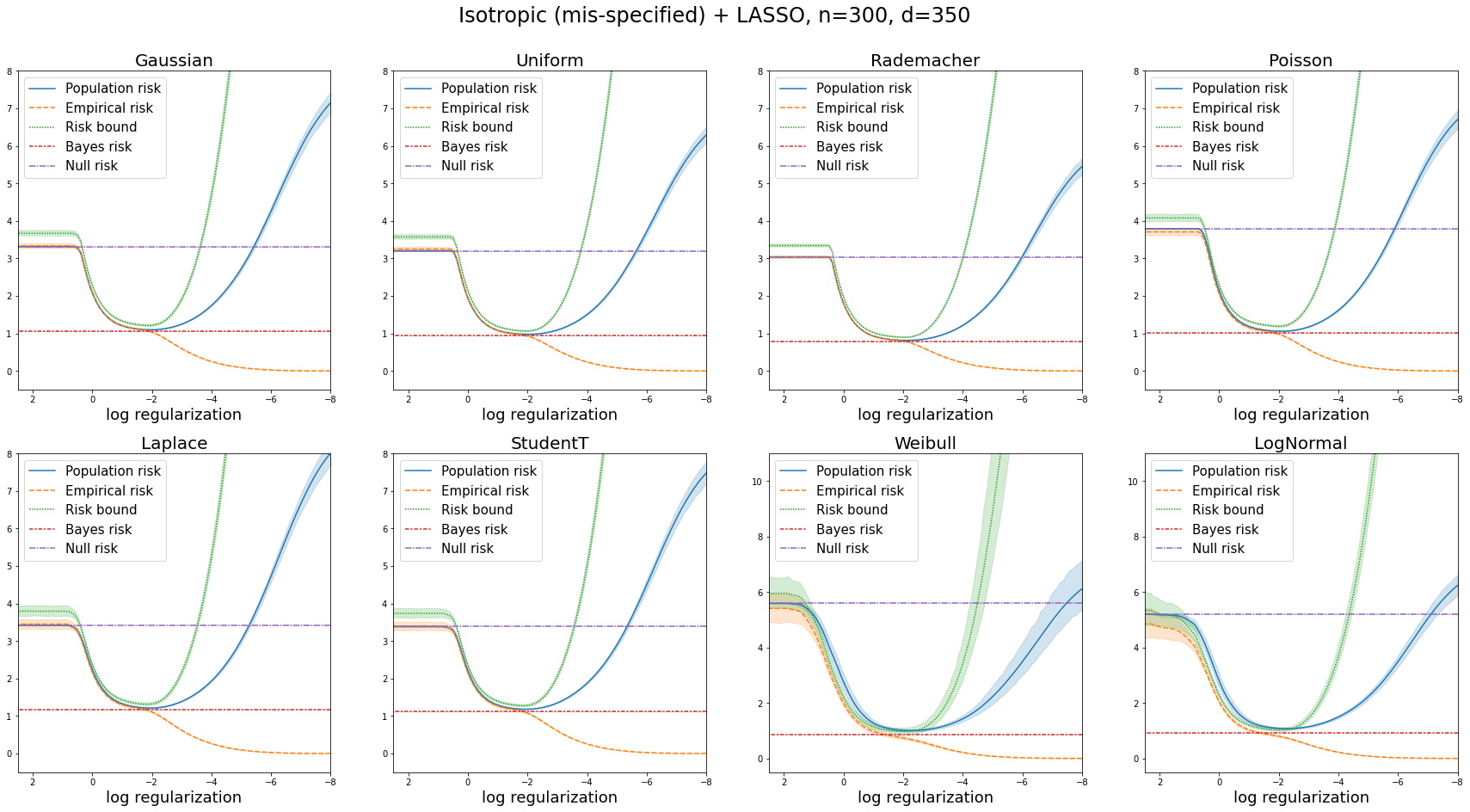}
  \vspace{2ex}
  \caption{LASSO regression with isotropic data $(n = 300, d = 350)$. Contrary to the inconsistency of optimally-tuned ridge regression in this setting, the regularized LASSO estimator can achieve nearly optimal population risk thanks to sparsity. The risk bound \eqref{eqn:lasso-bound-isotropic} appears to be valid and sufficient for the consistency of optimal LASSO in the distributions that we consider, though it is not very tight for interpolation. Recall that the minimal-$\ell_1$ norm interpolator suffers from an exponentially slow convergence rate when $d = n^{\alpha}$ \citep{wang2021tight} and observe that the population risk of the minimal-$\ell_1$ norm interpolator is again worse than the null-risk. }
  \label{fig:isotropic-lasso}
\end{figure}

\begin{figure} 
  \centering
  \includegraphics[width = \linewidth]{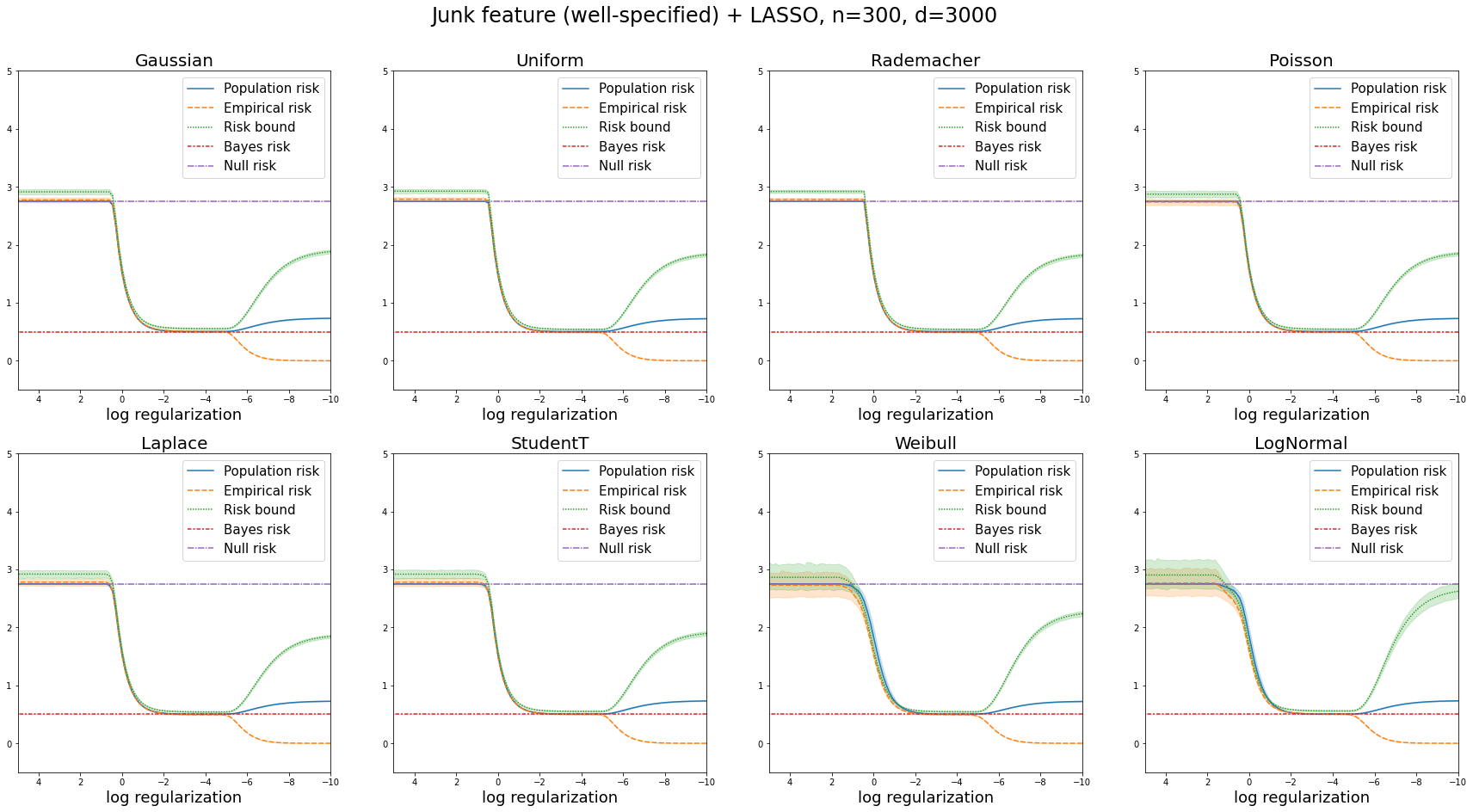}\\
  \vspace{6ex}
  \includegraphics[width = \linewidth]{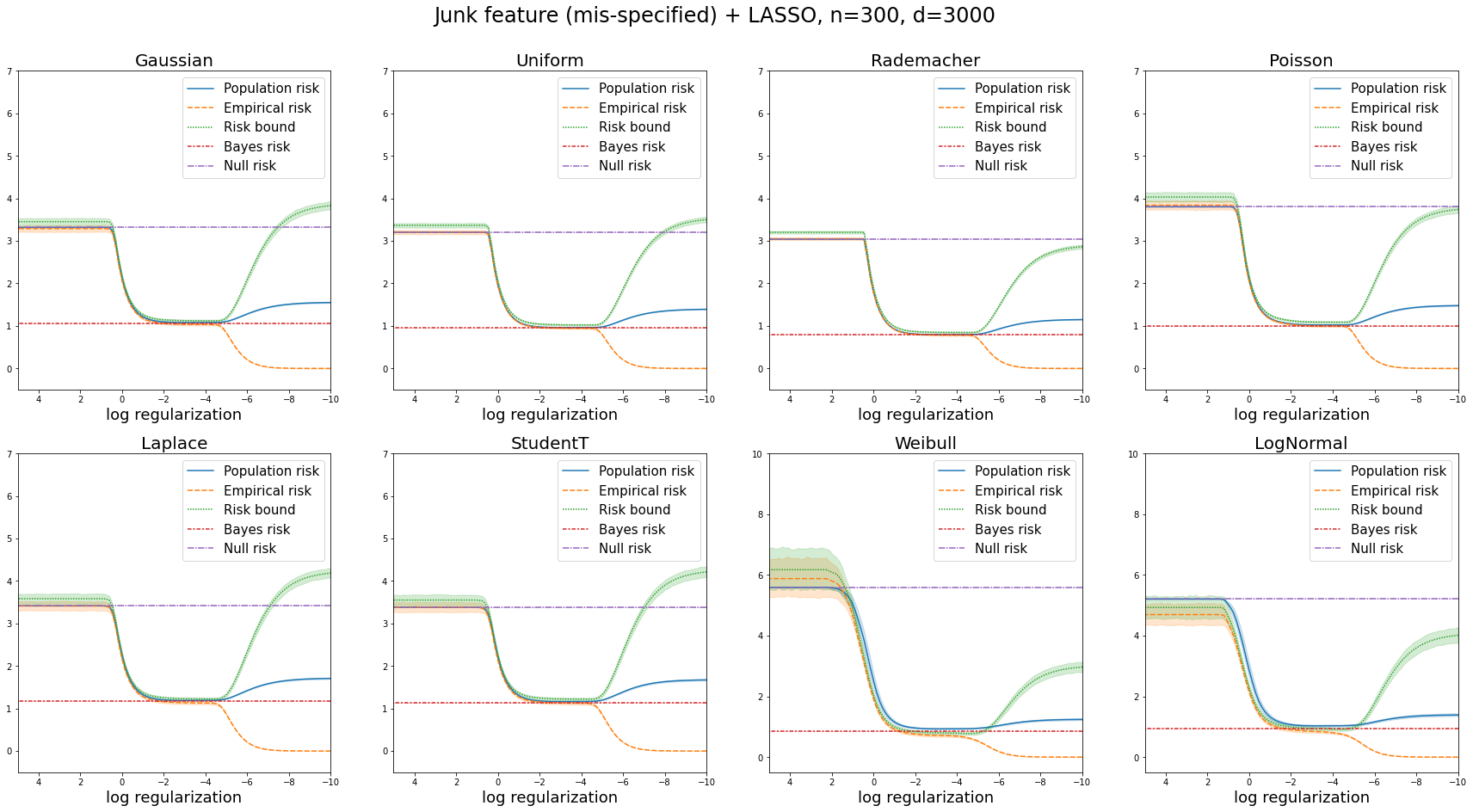}
  \vspace{2ex}
  \caption{LASSO regression with junk features $(n = 300, d = 3000)$. Similar to the isotropic setting, the regularized LASSO can achieve nearly optimal prediction risk and the risk bound \eqref{eqn:lasso-bound} is sufficient to explain this phenomenon. Once again, the data distribution and model misspecification appear to have no effect on the shape of the test error curve. It is theoretically possible to use a nearly identical risk bound to show the consistency of minimal-$\ell_1$ norm interpolator when $n$ is large and $d$ is super-exponential in $n$ \citep{uc-interpolators}, but as we can see, $n = 300$ and $d = 3000$ is not quite large enough yet. On the other hand, overfitting is more benign than what \eqref{eqn:lasso-bound} predicts, suggesting a better analysis may yield a weaker condition required for consistency. }
  \label{fig:junk-lasso}
\end{figure}

\begin{figure} 
  \centering
  \includegraphics[width = \linewidth]{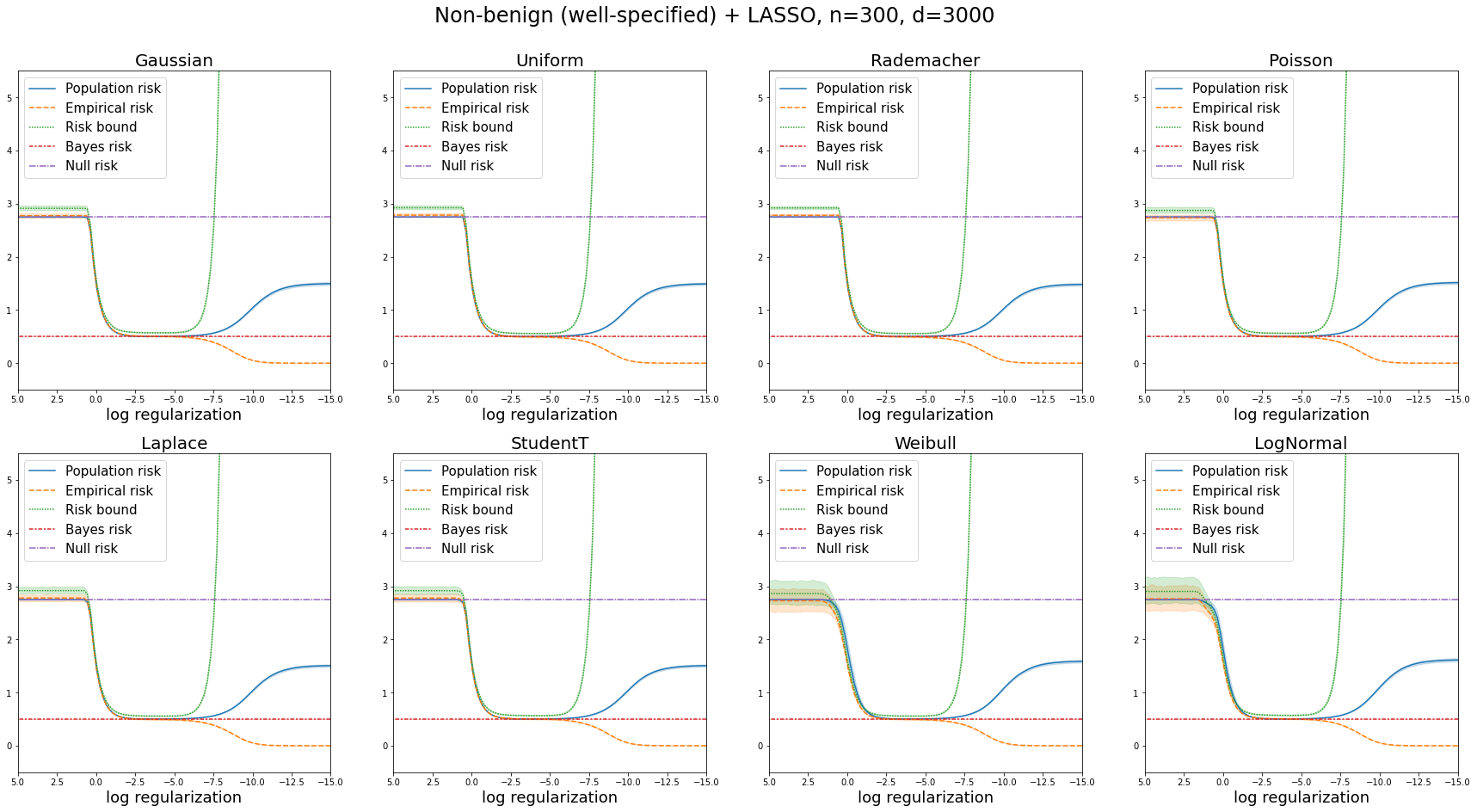}\\
  \vspace{6ex}
  \includegraphics[width = \linewidth]{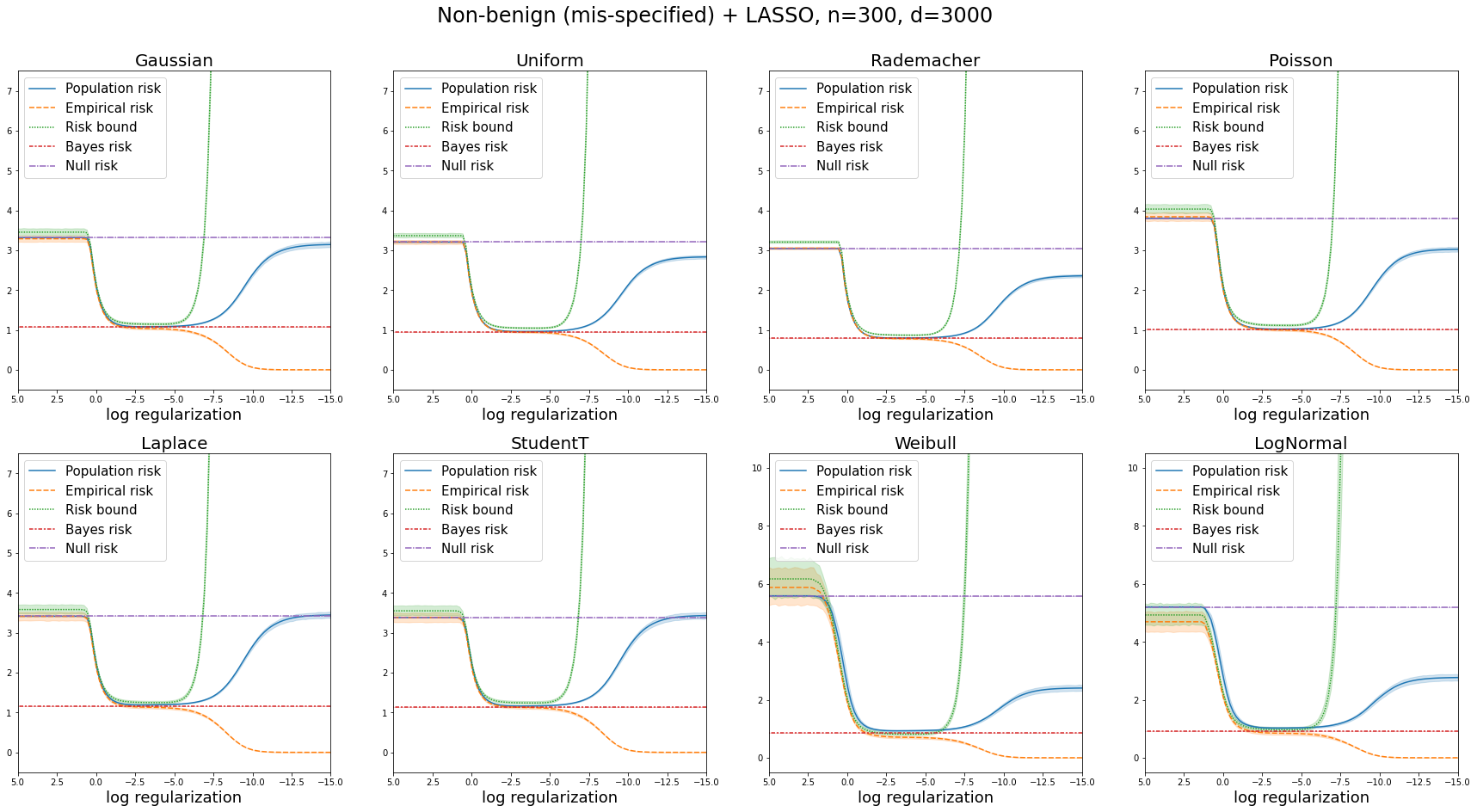}
  \vspace{2ex}
  \caption{LASSO regression with non-benign features $(n = 300, d = 3000)$. Though the population risk and the associated risk bound of regularized LASSO can be quite close to the Bayes risk, overfitting with minimal-$\ell_1$ norm interpolator does not appear to be benign (and there is no existing theoretical result suggesting that consistency is possible with a larger $n$ or $d$). In particular, its $\ell_1$ norm increases much more quickly than the junk-features case. Though the \eqref{eqn:lasso-bound} is not tight throughout the entire regularization path, it is still a valid upper bound on the test error across different feature distributions and label generating processes. }
  \label{fig:non-benign-lasso}
\end{figure}

\subsubsection{Note on Computing the Optimal Linear Predictor and Population Risk}

Since we are considering quite high-dimensional settings and we need many repeated experiments for different regularization strengths, we generally want to avoid drawing a large test set to estimate the prediction error when it is possible. In the case of square loss, we can always write the population loss (using the Mahalanobis norm notation \eqref{eqn:mahalonobis}) as
\[
L_f(w) = L_f(\tilde{w}) + \| w - \tilde{w} \|_{\Sigma}^2
\]
where $\tilde{w}$ is the optimal linear predictor satisfying the first order condition:
\[
\E[x(x^T\tilde{w} - y)]  = 0.
\]

\paragraph{Linear Model.} In the well-specified case, by the independence between $x$ and $\xi$, the above becomes
\[
\Sigma \tilde{w} = \Sigma w^* \implies \tilde{w} = w^*.
\]
Therefore, we have $L_f(\tilde{w}) = \E[(y - \langle w^*, x \rangle)^2] = \sigma^2$.

\paragraph{Mis-specified Model.} To determine the optimal linear predictor in this case, we want to set
\begin{equation*}
    \begin{split}
        \Sigma \tilde{w} &= \E[xy] \\
        &= \E[x (\langle w^*, x \rangle + |x_1| \cdot \cos x_2) ] \\
        &= \Sigma w^* + \E[x_1 \cdot |x_1| ] \E[\cos x_2] e_1 + \E[|x_1|] \E [ x_2 \cos x_2 ] e_2 
    \end{split}
\end{equation*}
and so
\[
\tilde{w} = w^* + \E[x_1 \cdot |x_1| ] \E[\cos x_2] \Sigma^{-1} e_1 + \E[|x_1|] \E [ x_2 \cos x_2 ] \Sigma^{-1} e_2.
\]

At the same time, it is routine to check that the optimal error is given by
\[
L_f(\tilde{w}) = \E[y^2] - \langle \E[xy], \Sigma^{-1} \E[xy] \rangle.
\]

It remains to compute the null risk
\begin{equation*}
    \begin{split}
        \E[y^2] 
        &= \E [(\langle w^*, x \rangle + |x_1| \cdot \cos x_2 + x_3 \xi)^2] \\
        &= \E [(\langle w^*, x \rangle + |x_1| \cdot \cos x_2)^2] + \Sigma_{33} \sigma^2 \\
        &= \langle w^*, \Sigma w^* \rangle + \E [x_1^2] \E[\cos^2 x_2] + 2 \E[\langle w^*, x \rangle (|x_1| \cdot \cos x_2) ] + \Sigma_{33} \sigma^2 \\
        &= \langle w^*, \Sigma w^* \rangle + \E [x_1^2] \E[\cos^2 x_2] + 2 \left( \E[ x_1 \cdot |x_1|] \E[ \cos x_2 ] w^*_1 + \E[|x_1|] \E[x_2 \cos x_2] w^*_2 \right) + \Sigma_{33} \sigma^2 \\
    \end{split}
\end{equation*}
and
\begin{equation*}
    \begin{split}
        \langle \E[xy], \Sigma^{-1} \E[xy] \rangle
        &= \langle \Sigma w^* + \E[| x_1| \cos (x_2) x] ,  w^* + \Sigma^{-1} \E[| x_1| \cos (x_2) x]  \rangle \\
        &= \langle \Sigma w^* ,  w^* \rangle + 2 \langle w^*,  \E[| x_1| \cos (x_2) x] \rangle +  \langle  \E[| x_1| \cos (x_2) x] , \Sigma^{-1} \E[| x_1| \cos (x_2) x] \rangle. \\
    \end{split}
\end{equation*}

Therefore, we have
\begin{equation*}
    \begin{split}
        L_f(\tilde{w}) 
        &=  \E [x_1^2] \E[\cos^2 x_2] + \Sigma_{33} \sigma^2 - \E[ x_1 \cdot |x_1|]^2 \E[\cos x_2]^2 \Sigma^{-1}_{11} -  \E[|x_1|]^2 \E[x_2 \cos(x_2)]^2 \Sigma^{-1}_{22} 
    \end{split}
\end{equation*}

It remains to compute quantities like $\E[|x|], \E[x \cdot |x|], \E[\cos x], \E[x\cos x]$ for each of the eight feature distributions. Since they are one dimensional quantities, we can afford to draw a very large number of samples to estimate them.

\subsection{Linear Classification}
Similarly, we fit linear models to minimize the squared hinge loss with $\ell_2$ and $\ell_1$ penalty. We can consider the same feature distributions and data covariance structure as in the preceding section. For faster computation (because margin classifiers can be slower to compute than regressors), we take $k = 1$, and $n = 100, d = 120$ in the proportional scaling and $n = 100, d = 2000$ in the overparameterized scaling. The label $y$ is generated by the following model:
\[
\eta = \langle w^*, x \rangle + b^*, \quad \Pr(y = 1 \, | \, x) = 1 - \Pr(y = -1 \, | \, x) = g(\eta)
\]
where $g: \R \to [0,1]$ is the logistic link function. Since we use the squared hinge loss for learning (which is not the negative log-likelihood function), the linear model that we learn is not necessarily well-calibrated and so this can also be considered as a mis-specified setting. Therefore, we will only consider one label generating process in the classification context. Finally, by our Moreau envelope theory, we can use completely the same risk bounds from \eqref{eqn:ridge-bound-isotropic} to \eqref{eqn:lasso-bound} for $\ell_2$ and $\ell_1$ margin classifiers.

\begin{figure} 
  \centering
  \includegraphics[width = \linewidth]{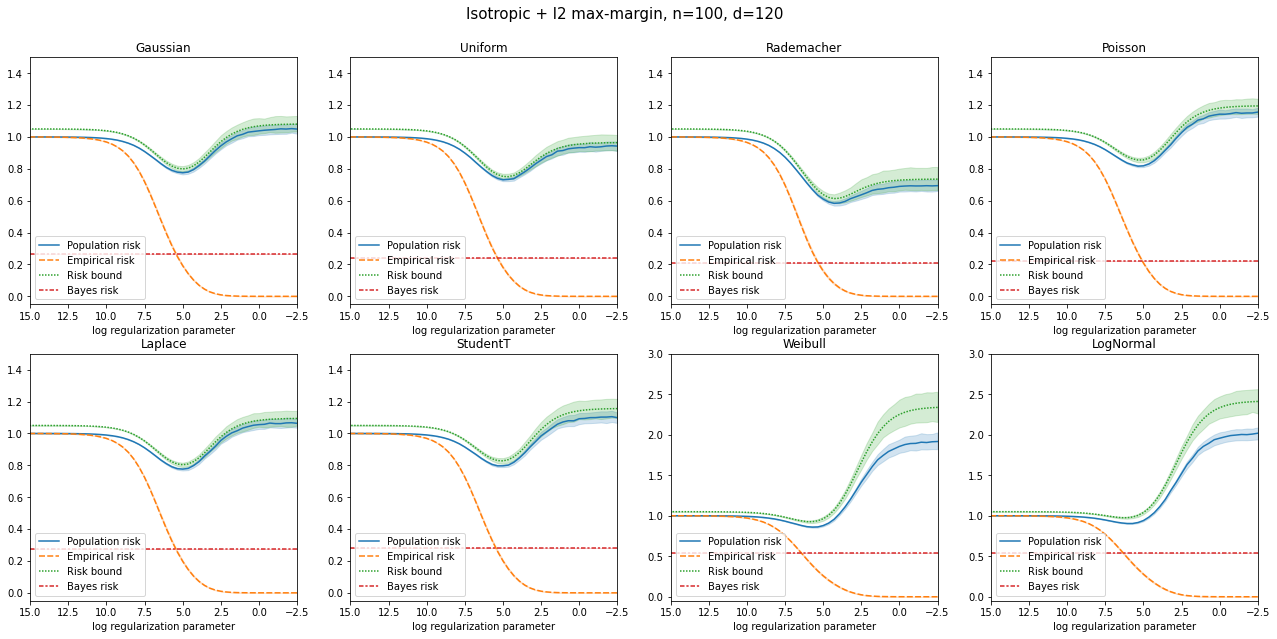}\\
  \vspace{3ex}
  \includegraphics[width = \linewidth]{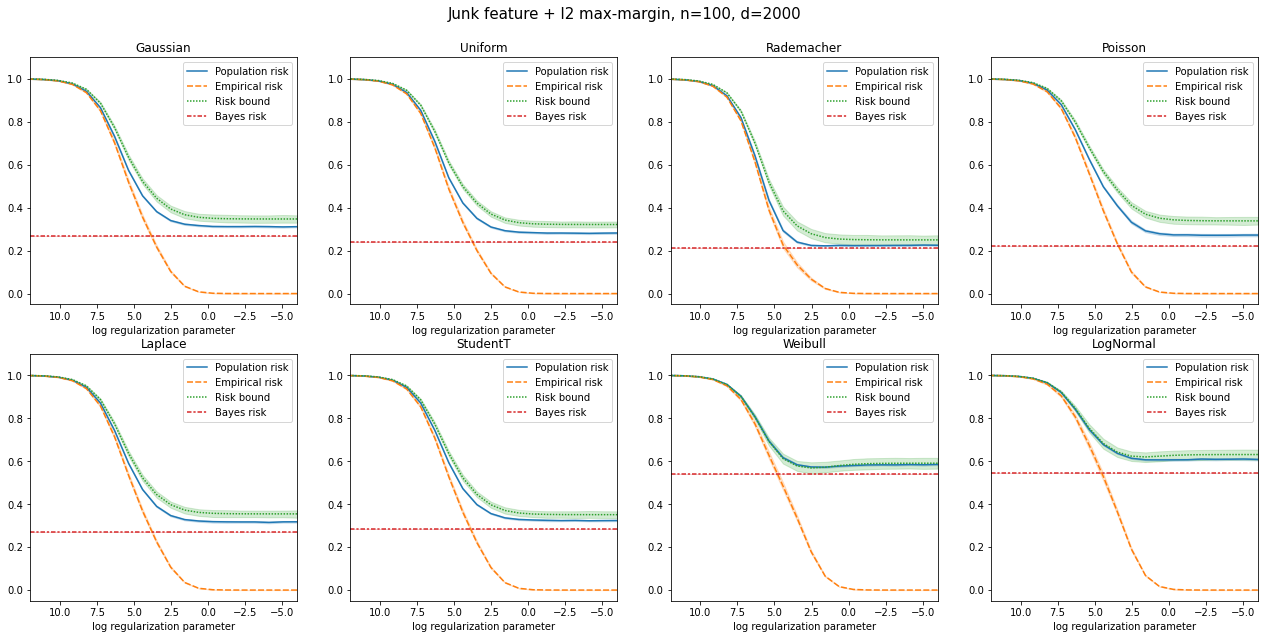} \\
  \vspace{3ex}
  \includegraphics[width = \linewidth]{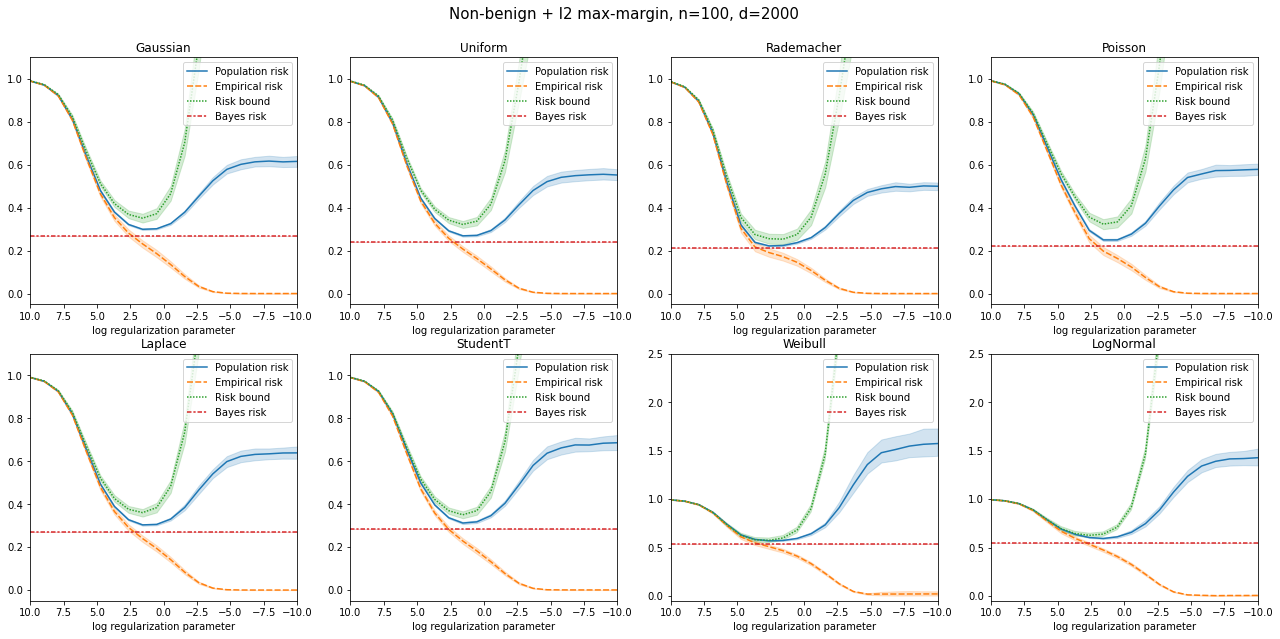} \\
  \vspace{1ex}
  \caption{$\ell_2$ margin classification: isotropic, junk and non-benign features.}
  \label{fig:l2-margin}
\end{figure}

\begin{figure} 
  \centering
  \includegraphics[width = \linewidth]{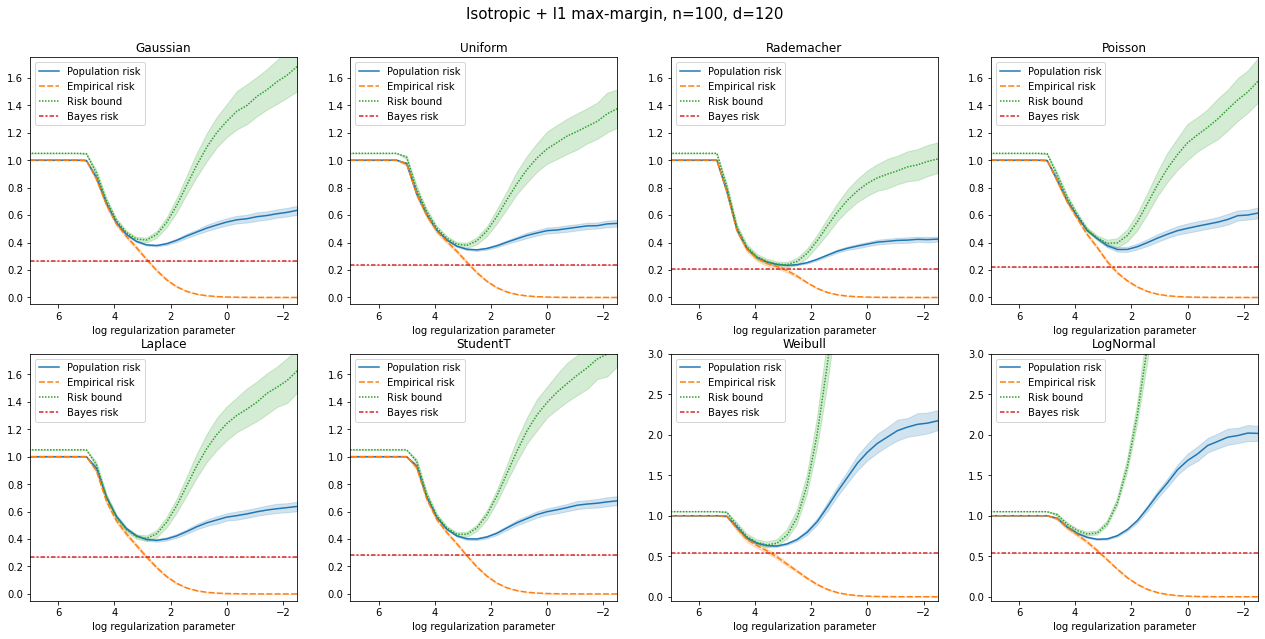}\\
  \vspace{3ex}
  \includegraphics[width = \linewidth]{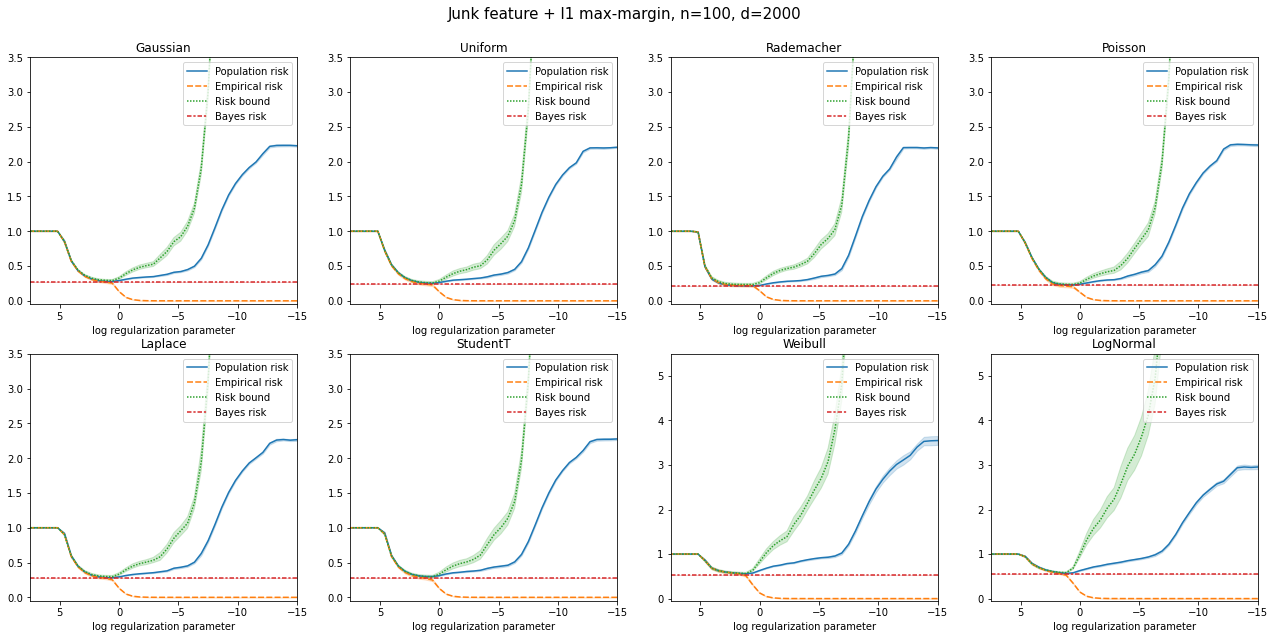} \\
  \vspace{3ex}
  \includegraphics[width = \linewidth]{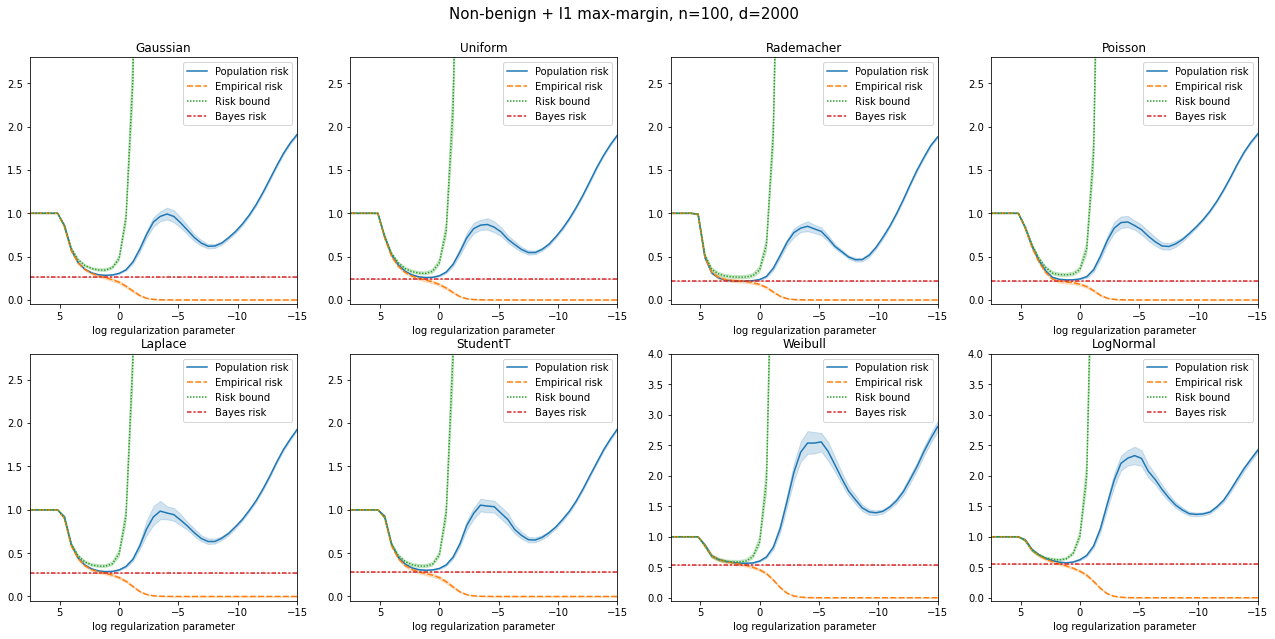}
  \caption{$\ell_1$ margin classification: isotropic, junk and non-benign features.}
  \label{fig:l1-margin}
\end{figure}

\subsubsection{Experimental Results}

The plots for $\ell_2$ and $\ell_1$ margin classifiers can be found in \cref{fig:l2-margin,fig:l1-margin}. Each figure contain three subplots, and each subplot corresponds to one of the data covariance and contains the risk curves measured in squared hinge loss for the eight feature distributions. 

\paragraph{$\ell_2$-Margin Classifiers.} As in the regression case, overfitting is not benign when the features are isotropic and the population risk of $\ell_2$ max-margin classifier can be worse than the null risk. The risk bounds tightly control the test errors across different feature distributions. The difference between risk bound and the actual test error is larger when the feature distribution is heavy-tailed, but the confidence interval is also wider due to the relatively small sample size. 

In the junk feature setting, the under-regularized part of the regularization path is essentially flat for all feature distributions. Overall, the experimental result is very similar to \cref{fig:junk-ridge}, as predicted by our theory in section~\ref{sec:benign-overfitting}. The non-benign case is also similar to \cref{fig:non-benign-ridge} except that the U-shape curve is quite narrower near the optimal amount of regularization. 

\paragraph{$\ell_1$-Margin Classifiers.} In each of the subplots, the risk bound is tight only up to a certain point before the $\ell_1$ norm starts to increase quite a lot, leading to loose bound near interpolation. However, the risk bound is tight enough to establish consistency of optimally-tuned predictor in the junk and non-benign features setting. Again, the population risk of $\ell_1$ max-margin classifier can be worse than the null risk even in the junk features setting. Observe that different distributions do not seem to change the shape of generalization curve, and there is an interesting multiple descent phenomenon in the non-benign feature case, which has already been discovered in previous literature \citep{li2021minimum, liang2020multiple, chen2021multiple}. 

\subsubsection{Note on Computing the Population Risk with Gaussian Features}

When the feature distribution is Gaussian, we can estimate
\[
L_f(w, b) 
= \E \left[ \max(0, 1 - y(\langle w, x \rangle + b))^2 \right]
\]
without drawing a new high-dimensional dataset from $\cD$. First, we can write $x = \Sigma^{1/2} z$. Note that conditioning on $\eta$ is the same as conditioning on $\langle w^*, x \rangle = \langle \Sigma^{1/2} w^*,  z \rangle \sim \cN(0, \| w^*\|_{\Sigma}^2)$ and the conditional distribution of $z$ is 
\[
\frac{ \eta - b^*}{\| w^*\|_{\Sigma}^2} \Sigma^{1/2} w^* + P z
\]
where $P = I - \frac{(\Sigma^{1/2} w^*)(\Sigma^{1/2} w^*)^T}{\| w^*\|_{\Sigma}^2 } $ and so the conditional distribution of $\langle w, x\rangle + b$ is 
\begin{equation*}
    \begin{split}
        & \left\langle w, \Sigma^{1/2} \left( \frac{ \eta - b^*}{\| w^*\|_{\Sigma}^2} \Sigma^{1/2} w^* + P z \right) \right\rangle + b \\
        = \, & b + \frac{\langle w, \Sigma w^* \rangle}{\| w^*\|_{\Sigma}^2} (\eta - b^*) + \langle P \Sigma^{1/2} w, z \rangle \sim \, \cN \left(\mu(\eta), \,\sigma^2 \right)
    \end{split}
\end{equation*}

where $\mu(\eta) = b + \frac{\langle w, \Sigma w^* \rangle}{\| w^*\|_{\Sigma}^2} (\eta - b^*)$ and
\[
\sigma^2 
= w^T (\Sigma^{1/2} P \Sigma^{1/2}) w = w^T \Sigma w - \frac{\langle w, \Sigma w^*\rangle^2}{\| w^*\|_{\Sigma}^2}.
\]
Since $x$ is independent of $y$ conditioned on $\eta$, we have that
\begin{equation*}
    \begin{split}
        L(w, b)
        &= \E \left[ \E \left[ \max(0, 1 - y(\langle w, x \rangle + b))^2 \, | \, \eta \right] \right]\\
        &= \E \left[ g(\eta) \cdot \max(0, 1 - \mu(\eta) - \sigma z)^2  + (1 - g(\eta)) \cdot  \max(0, 1 + \mu(\eta) + \sigma z)^2 \right]
    \end{split}
\end{equation*}

We can then estimate the population error by drawing samples from a two-dimensional distribution. 

\subsubsection{Note on Computing the Optimal Linear Predictor}

The linear predictor that minimizes the population squared hinge loss generally does not have a simple closed-form expression, but we can run SGD on the population objective in order to find the optimal linear predictor $\tilde{w}, \tilde{b}$. For simplicity, we choose
\[
w^* = (5, 0, ..., 0) \quad \text{ and } \quad b^* = 3.
\]
In this case, we can simplify the optimization problem to an one-dimensional problem by observing that $\tilde{w}_i = 0$ for $i \neq 1$. Indeed, we can check the first order condition holds
\begin{equation*}
    \begin{split}
        \frac{\partial}{\partial w_i} L_f(\tilde{w}, \tilde{b}) 
        &= -2 \E \left[ y\max(0, 1 - y(\langle \tilde{w}, x \rangle + \tilde{b})) x_i \right] \\
        &= -2 \E \left[ y\max(0, 1 - y( \tilde{w}_1 x_1 + \tilde{b})) \right] \E \left[x_i \right] = 0\\
    \end{split}
\end{equation*}
because $y$ is independent of $x_i$ with $i \neq 1$. Therefore, we can just generate $\{ x_{i, 1}, y_i \}$ from $\cD$ and perform one-pass SGD \citep[e.g. theorem 6.1 of][ ]{bubeck2015convex} to find $\tilde{w}_1, \tilde{b}$. In the experiments, we find choosing the initial step size to be 0.1 works well.

\section{Preliminaries}\label{apdx:preliminaries}
\paragraph{General Notation. } Following the tradition in statistics, we denote $X  = (x_1, ..., x_n)^T \in \R^{n \times d}$ as the design matrix. In the proof section, we slightly abuse the notation of $\eta_i$ to mean $Xw_i^*$ and $\xi$ to mean the $n$-dimensional random vector whose $i$-th component satisfies $y_i = g(\eta_{1,i}, ..., \eta_{k,i}, \xi_i)$. Note that we can write $X = Z\Sigma^{1/2}$ where $Z$ is a random matrix with i.i.d. standard normal entries.

We use the standard notation 
\begin{equation}\label{eqn:mahalonobis}
\|x\|_{\Sigma} := \sqrt{\langle x, \Sigma x \rangle}
\end{equation}
to denote the \emph{Mahalonobis norm} with respect to positive semidefinite matrix $\Sigma$. 

\paragraph{Additional Covariance Split Notation.} Because we will need to refer to the two parts of $\phi(w)$ often, in the remainder of the appendix we introduce the further notation $w^{\perp} = Q w$, $w^{\parallel} = (I - Q)w$ for the $\Sigma$-projection of $w$ onto the span of $w^*_1,\ldots,w^*_k$, and 
\[ r(w) := \|\Sigma^{1/2} Q w\| = \|Q w\|_{\Sigma} \]
for the Mahalanobis norm in the orthogonal space. 
We also will use the notation $X^{\parallel} = X Q$ and $X^{\perp} = X (I - Q)$ for the corresponding projections of the design matrix $X$, which are independent of each other. 

\paragraph{Concentration of Lipschitz functions.} Recall that a function $f : \mathbb{R}^n \to \mathbb{R}$ is $L$-Lipschitz with respect to the norm $\norm\cdot$ if it holds for all $x, y \in \mathbb R^n$ that $|f(x) - f(y)| \le L\|x - y\|$. We use the concentration of Lipschitz functions of a Gaussian.

\begin{theorem}[\cite{van2014probability}, Theorem 3.25] \label{thm:gaussian-concentration}
If $f$ is $L$-Lipschitz with respect to the Euclidean norm and $Z \sim \cN(0,I_n)$, then
\begin{equation}
    \Pr(|f(Z) - \E f(Z)| \ge t) \le 2e^{-t^2/2L^2}.
\end{equation}
\end{theorem}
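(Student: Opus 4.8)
The plan is to obtain the sharp sub-Gaussian moment generating function estimate $\E \exp(\lambda(f(Z) - \E f(Z))) \le \exp(\lambda^2 L^2 / 2)$ for every $\lambda \in \R$, and then to read off the two-sided tail bound by a Chernoff argument together with a union bound over the two tails (the lower tail being handled by applying the upper-tail estimate to $-f$, which is also $L$-Lipschitz). To land on the sharp constant $1/(2L^2)$ in the exponent, a crude rotation/interpolation ``smart path'' argument will not do — it loses a factor of order $\pi^2/4$ — so I would route the argument through the Gaussian logarithmic Sobolev inequality
\[
\mathrm{Ent}_\gamma(g^2) \le 2 \int \norm{\nabla g}^2 \, d\gamma
\]
for the standard Gaussian measure $\gamma$ on $\R^n$ (which I would cite, or prove separately by tensorization from the one-dimensional case or via the Ornstein--Uhlenbeck semigroup), combined with the Herbst argument.

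First I would reduce to the case where $f$ is smooth with $\norm{\nabla f} \le L$ pointwise: given a general $L$-Lipschitz $f$, mollify by convolving with a narrow Gaussian density, $f_\varepsilon = f * \varphi_\varepsilon$; then $f_\varepsilon$ is $C^\infty$, still $L$-Lipschitz, and $f_\varepsilon \to f$ uniformly, so $\E f_\varepsilon \to \E f$ (using that a Lipschitz function of a Gaussian is integrable, indeed has all exponential moments since $|f(Z)| \le |f(0)| + L\norm{Z}$ and $\norm{Z}$ has sub-Gaussian fluctuations around $\sqrt n$), and the tail bound for $f$ follows from those for $f_\varepsilon$ by passing to the limit.

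For the smooth case, set $H(\lambda) = \E \exp(\lambda f(Z))$, finite for all $\lambda$ by the exponential-moment estimate, and assume without loss of generality that $\E f = 0$. Applying the log-Sobolev inequality to $g = \exp(\lambda f / 2)$, and using $\norm{\nabla g} = \tfrac{\lambda}{2}\norm{\nabla f}\, g \le \tfrac{\lambda L}{2}\, g$, gives
\[
\lambda H'(\lambda) - H(\lambda) \log H(\lambda) = \mathrm{Ent}_\gamma(e^{\lambda f}) \le \tfrac{\lambda^2 L^2}{2}\, H(\lambda) .
\]
Dividing by $\lambda^2 H(\lambda)$, the function $K(\lambda) := \lambda^{-1}\log H(\lambda)$ satisfies $K'(\lambda) \le L^2/2$ on $(0,\infty)$, while $K(\lambda) \to \E f = 0$ as $\lambda \to 0^+$; hence $K(\lambda) \le \lambda L^2/2$, i.e.\ $\log H(\lambda) \le \lambda^2 L^2/2$, which is exactly the claimed MGF bound for $\lambda > 0$ (and for $\lambda < 0$ by applying the same to $-f$). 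Then $\Pr(f - \E f \ge t) \le \exp(-\lambda t + \lambda^2 L^2/2)$, and optimizing over $\lambda$ (take $\lambda = t/L^2$) yields $\exp(-t^2/(2L^2))$; combining with the symmetric lower-tail bound produces the factor $2$.

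The main obstacle is really the sharp log-Sobolev inequality itself (equivalently, the Gaussian isoperimetric inequality, which furnishes an alternative route: if $A_s$ denotes the $s$-Euclidean neighborhood of $A = \{f \le m\}$ for $m$ a median of $f$, isoperimetry gives $\gamma(A_s) \ge \Phi(s)$, while $L$-Lipschitzness gives $A_{t/L} \subseteq \{f \le m + t\}$, so $\Pr(f - m \ge t) \le 1 - \Phi(t/L) \le e^{-t^2/(2L^2)}$, after which one converts the median to the mean with a small additive correction); everything downstream is routine. The secondary technical points to verify are the differentiability of $H$ under the integral sign, the legitimacy of applying the log-Sobolev inequality to the unbounded test function $e^{\lambda f/2}$ (handled by the integrability estimates, or by a further truncation-and-limit argument), and the limit $K(0^+) = \E f$.
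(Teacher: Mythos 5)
Your proof is correct: the log-Sobolev-plus-Herbst argument (with the mollification reduction and the Chernoff/union-bound finish) is the standard derivation of Gaussian Lipschitz concentration with the sharp constant, and it is essentially the proof given in the cited source. The paper itself does not prove this statement — it is imported as Theorem 3.25 of \citet{van2014probability} — so there is nothing to compare beyond noting that your route matches the canonical one.
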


The following straightforward concentration result is Lemma 2 of \citet{uc-interpolators}.
\begin{lemma} \label{lem:norm-concentration}
Suppose that $Z \sim \cN(0,I_n)$. Then
\begin{equation}
    \Pr(\left|\|Z\|_2 - \sqrt{n}\right| \ge t) \le 4 e^{-t^2/4}.
\end{equation}
\end{lemma}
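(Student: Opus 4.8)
The plan is to combine the Gaussian concentration inequality for Lipschitz functions (\cref{thm:gaussian-concentration}) with a two-sided estimate of $\E\|Z\|_2$. First I would note that the map $z \mapsto \|z\|_2$ is $1$-Lipschitz with respect to the Euclidean norm, simply by the triangle inequality. Applying \cref{thm:gaussian-concentration} with $L = 1$ then gives, for every $s \ge 0$,
\[
\Pr\!\left(\bigl|\,\|Z\|_2 - \E\|Z\|_2\,\bigr| \ge s\right) \le 2 e^{-s^2/2}.
\]

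The second step is to show $\bigl|\E\|Z\|_2 - \sqrt{n}\bigr| \le 1$. Since $\E\|Z\|_2^2 = n$, Jensen's inequality gives $\E\|Z\|_2 \le \sqrt{n}$. For the reverse direction I would use $\Var(\|Z\|_2) \le 1$, which follows from the Gaussian Poincaré inequality applied to the $1$-Lipschitz function $\|\cdot\|_2$; then $(\E\|Z\|_2)^2 = n - \Var(\|Z\|_2) \ge n - 1$, so $\E\|Z\|_2 \ge \sqrt{n-1}$ and hence $\sqrt{n} - \E\|Z\|_2 \le \sqrt{n} - \sqrt{n-1} \le 1$.

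Finally I would put the pieces together and chase constants. If $t \le 1$, then $4 e^{-t^2/4} \ge 4 e^{-1/4} > 1$, so the claimed bound is vacuous. If $t > 1$, then $\{|\|Z\|_2 - \sqrt{n}| \ge t\} \subseteq \{|\|Z\|_2 - \E\|Z\|_2| \ge t - 1\}$ by the previous step, so the concentration bound yields probability at most $2 e^{-(t-1)^2/2}$, and since $-t^2/4 + t - 1/2 \le 1/2 < \ln 2$ for all $t$ (the left side is maximized at $t = 2$), we get $2 e^{-(t-1)^2/2} \le 4 e^{-t^2/4}$, as desired. The only mildly fiddly part is this final constant bookkeeping needed to land exactly on the stated normalization $4 e^{-t^2/4}$ rather than some other equivalent sub-Gaussian tail; the substance is entirely classical, amounting to the standard concentration of the $\chi$ distribution around $\sqrt n$.
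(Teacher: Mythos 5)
The paper does not prove this lemma itself; it simply cites it as Lemma 2 of \citet{uc-interpolators}, so there is no in-paper proof to compare against. Your argument is correct and is the canonical route to $\chi$-concentration: the $1$-Lipschitz property of $\|\cdot\|_2$ plus \cref{thm:gaussian-concentration} gives sub-Gaussian fluctuations around $\E\|Z\|_2$, Jensen gives $\E\|Z\|_2 \le \sqrt{n}$, the Gaussian Poincar\'e inequality gives $\Var(\|Z\|_2) \le 1$ and hence $\E\|Z\|_2 \ge \sqrt{n-1} \ge \sqrt{n}-1$, and the final constant bookkeeping (checking $-t^2/4 + t - 1/2 \le 1/2 < \ln 2$, and that the bound is vacuous for $t \le 1$) is accurate. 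This matches what the cited source does in substance, so there is nothing to flag.
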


We will use the following to help relate our problem to the surrogate distribution in our proof of \cref{thm:main-gen}.
\begin{lemma} \label{lem:conditional_dist}
Fix any integer $k < d$ and any $k$ vectors $w_1^*, ..., w_k^*$ in $\R^d$ such that $\Sigma^{1/2}w_1^*, ..., \Sigma^{1/2} w_k^*$ are orthonormal. Denoting
\begin{equation} \label{def:P-matrix}
    P = I_d - \sum_{i=1}^k (\Sigma^{1/2} w_i^*)(\Sigma^{1/2} w_i^*)^T
,\end{equation}
the distribution of $X$ conditional on $Xw_1^* = \eta_1, ..., Xw_k^* = \eta_k$ is the same as that of
\begin{equation}
    \sum_{i=1}^k \eta_i (\Sigma w_i^*)^T + ZP \Sigma^{1/2}.
\end{equation}
\end{lemma}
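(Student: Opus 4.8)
The plan is to reduce everything to the elementary fact about conditioning a Gaussian vector on finitely many orthonormal linear functionals, applied row by row. First I would recall from the notation section that $X = Z\Sigma^{1/2}$ with $Z \in \R^{n\times d}$ having i.i.d.\ standard normal entries, and set $v_i := \Sigma^{1/2} w_i^* \in \R^d$, so that $v_1,\ldots,v_k$ are orthonormal and $Xw_i^* = Z v_i$. Hence conditioning on the events $\{Xw_i^* = \eta_i\}_{i=1}^k$ is the same as conditioning on $\{Zv_i = \eta_i\}_{i=1}^k$, and the matrix in \eqref{def:P-matrix} is $P = I_d - \sum_{i=1}^k v_i v_i^T$, the orthogonal projection onto $\Span(v_1,\ldots,v_k)^{\perp}$.

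Next I would decompose $Z = \sum_{i=1}^k (Zv_i)\, v_i^T + ZP$, which is valid because $\sum_i v_i v_i^T + P = I_d$. The key observation is that $Zv_1,\ldots,Zv_k$ and $ZP$ are jointly Gaussian and the entries of $ZP$ are uncorrelated with each $Zv_i$: working row by row, for a row $z_j$ of $Z$ the vector $z_j P$ and the scalar $\langle z_j, v_i\rangle$ are uncorrelated since $P v_i = 0$, and distinct rows are independent. Therefore $ZP$ is independent of $(Zv_1,\ldots,Zv_k)$. Conditioning on $Zv_i = \eta_i$ thus freezes the first sum to the deterministic matrix $\sum_i \eta_i v_i^T$ while leaving $ZP$ with its unconditional law, i.e.\ the law of $Z'P$ for a fresh i.i.d.\ standard Gaussian $Z'$; renaming $Z'$ back to $Z$, conditionally $Z \overset{d}{=} \sum_{i=1}^k \eta_i v_i^T + ZP$.

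Finally I would right-multiply by $\Sigma^{1/2}$ and simplify using symmetry of $\Sigma$: $v_i^T \Sigma^{1/2} = (w_i^*)^T \Sigma^{1/2}\Sigma^{1/2} = (\Sigma w_i^*)^T$, which gives $X = Z\Sigma^{1/2} \overset{d}{=} \sum_{i=1}^k \eta_i (\Sigma w_i^*)^T + ZP\Sigma^{1/2}$ conditionally on $\{Xw_i^* = \eta_i\}$, exactly the claimed identity. I do not anticipate a genuine obstacle; the only steps needing a line of care are the uncorrelatedness computation establishing independence of $ZP$ from the $Zv_i$ (where orthonormality of the $v_i$ enters) and tracking the transpose conventions so that the rank-one correction terms have $\Sigma w_i^*$ appearing as rows.
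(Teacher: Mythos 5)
Your proof is correct and follows essentially the same route as the paper's: decompose $Z = \sum_i (Zv_i)v_i^T + ZP$, establish that $ZP$ is independent of the $Zv_i$, freeze the first sum under conditioning, and right-multiply by $\Sigma^{1/2}$. The only cosmetic difference is that the paper verifies the independence via a vectorization/Kronecker-product computation of the joint covariance, whereas you argue row-by-row using $Pv_i = 0$; both amount to the same uncorrelatedness calculation.
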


\begin{proof}
We can write $X = Z\Sigma^{1/2}$. The key observation is that $ZP$, $Z\Sigma^{1/2}w_1^*, ..., Z\Sigma^{1/2}w_k^*$ are independent. To see why this is the case, we can vectorize each term:
\[
\begin{pmatrix}
\text{vec}(ZP) \\
\text{vec}(Z\Sigma^{1/2}w_1^*) \\
\text{...} \\
\text{vec}(Z\Sigma^{1/2}w_k^*) \\
\end{pmatrix}
=
\begin{pmatrix}
P \otimes I_n \\
(\Sigma^{1/2}w_1^*)^T \otimes I_n \\
\text{...} \\
(\Sigma^{1/2}w_k^*)^T \otimes I_n
\end{pmatrix}
\text{vec}(Z)
\]
From the above representation, we see that the joint distribution is multivariate Gaussian and the covariance matrix is
\[
\begin{pmatrix}
P \otimes I_n \\
(\Sigma^{1/2}w_1^*)^T \otimes I_n \\
\text{...} \\
(\Sigma^{1/2}w_k^*)^T \otimes I_n
\end{pmatrix}
\begin{pmatrix}
P \otimes I_n \\
(\Sigma^{1/2}w_1^*)^T \otimes I_n \\
\text{...} \\
(\Sigma^{1/2}w_k^*)^T \otimes I_n
\end{pmatrix}^T
= \diag\left(P \otimes I_n, I_n, ..., I_n \right)
\]
Therefore, the distribution of $ZP$ remains unchanged after conditioning on $Z\Sigma^{1/2}w_1^*, ..., Z\Sigma^{1/2}w_k^*$, and we can write
\begin{equation*}
    \begin{split}
        Z 
        &= Z \left( \sum_{i=1}^k (\Sigma^{1/2} w_i^*)(\Sigma^{1/2} w_i^*)^T \right) + ZP \\
        &= \sum_{i=1}^k \eta_i (\Sigma^{1/2} w_i^*)^T  + ZP.
    \end{split}
\end{equation*}

The proof is concluded by the fact that $X = Z\Sigma^{1/2}$.
\end{proof}

A key ingredient of our technique is the Gaussian minimax theorem.

\begin{theorem}[(Convex) Gaussian Minmax Theorem; \cite{thrampoulidis2015regularized,gordon1985some}]\label{thm:gmt}
Let $Z : n \times d$ be a matrix with i.i.d. $N(0,1)$ entries and suppose $G \sim \cN(0,I_n)$ and $H \sim \cN(0,I_d)$ are independent of $Z$ and each other. Let $S_w,S_u$ be compact sets and $\psi : S_w \times S_u \to \mathbb{R}$ be an arbitrary continuous function.
Define the \emph{Primary Optimization (PO)} problem
\begin{equation}
    \Phi(Z) := \min_{w \in S_w} \max_{u \in S_u} \langle u, Z w \rangle + \psi(w,u)
\end{equation}
and the \emph{Auxiliary Optimization (AO)} problem
\begin{equation}
    \phi(G,H) := \min_{w \in S_w} \max_{u \in S_u} \|w\|_2\langle G, u \rangle + \|u\|_2 \langle H, w \rangle + \psi(w,u).
\end{equation}
Under these assumptions, $\Pr(\Phi(Z) < c) \le 2 \Pr(\phi(G,H) \le c)$ for any $c \in \mathbb{R}$.

Furthermore, if we suppose that $S_w,S_u$ are convex sets and $\psi(w,u)$ is convex in $w$ and concave in $u$, then $\Pr(\Phi(Z) > c) \le 2 \Pr(\phi(G,H) \ge c)$. 
\end{theorem}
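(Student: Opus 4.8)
The statement is a version of the Convex Gaussian Minimax Theorem, and the plan is to deduce both inequalities from Gordon's classical Gaussian comparison inequality for $\min$--$\max$ of Gaussian processes, reducing the (possibly infinite) compact index sets to finite ones by a net argument. First I would pass to finite $\varepsilon$-nets of $S_w$ and $S_u$: since these sets are compact, $\psi$ is uniformly continuous on $S_w\times S_u$, and for each fixed $Z$ the bilinear map $(w,u)\mapsto\langle u,Zw\rangle$ is Lipschitz on this compact set, both $\Phi(Z)$ and $\phi(G,H)$ change by $O(\varepsilon)$ under discretization. Thus a probability inequality of the asserted shape for the discretized problems gives the same inequality for the original ones up to an $O(\varepsilon)$ shift in $c$, which vanishes as $\varepsilon\downarrow 0$ by right-continuity of $c\mapsto\Pr(\phi(G,H)\le c)$. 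So it suffices to treat finite $S_w,S_u$.

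For finite index sets one introduces the primal process $X_{w,u}=\langle u,Zw\rangle$ and the auxiliary process $Y_{w,u}=\|w\|_2\langle G,u\rangle+\|u\|_2\langle H,w\rangle$, adds $\psi(w,u)$ to both, and applies Gordon's comparison inequality. The crux is verifying Gordon's covariance-ordering hypotheses: for fixed $w$ the covariances of $X$ are dominated by those of $Y$ (controlling the inner $\max$), while for $w\ne w'$ the ordering reverses (controlling the outer $\min$). A direct computation gives $\mathbb{E}[X_{w,u}X_{w',u'}]=\langle u,u'\rangle\langle w,w'\rangle$ and $\mathbb{E}[Y_{w,u}Y_{w',u'}]=\|w\|\|w'\|\langle u,u'\rangle+\|u\|\|u'\|\langle w,w'\rangle$, and after rearrangement the required comparison is exactly an inequality of the form
\[
\bigl(\langle u,u'\rangle-\|u\|\|u'\|\bigr)\bigl(\langle w,w'\rangle-\|w\|\|w'\|\bigr)\ \ge\ 0,
\]
which is immediate from Cauchy--Schwarz since both factors are nonpositive; the same-$w$ case is a simpler identity. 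The factor $2$ and the strict-versus-weak asymmetry of the inequalities are intrinsic to the $\min$--$\max$ tail comparison (and are tied to the fact that $X_{w,u}$ and $Y_{w,u}$ have unequal second moments), not artifacts of the bookkeeping.

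For the convex--concave case I would combine the first inequality with a $\min$--$\max$ interchange. Since $\langle u,Zw\rangle+\psi(w,u)$ is convex in $w$, concave in $u$, on convex compact sets, Sion's minimax theorem gives $\Phi(Z)=\max_u\min_w(\langle u,Zw\rangle+\psi(w,u))$, hence $-\Phi(Z)=\min_u\max_w\bigl(\langle w,(-Z)^{\top}u\rangle-\psi(w,u)\bigr)$, which is again a primal problem with i.i.d.\ standard Gaussian matrix $(-Z)^{\top}$, the two index sets interchanged, and loss $-\psi$. Applying the already-proved inequality to this problem produces a tail bound against its auxiliary problem, which after undoing the relabelling rearranges into $\Pr(\Phi(Z)>c)\le 2\,\Pr(\phi(G,H)\ge c)$ --- provided the auxiliary problem can also be put through a $\min$--$\max$ interchange. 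This last point is the delicate one, since $\|w\|_2\langle G,u\rangle+\|u\|_2\langle H,w\rangle$ is not literally convex--concave; one must exploit the bilinear-plus-norm structure (or carry the interchange out only where it is legitimate, as in the standard treatments).

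The main obstacle I anticipate is precisely this $\min$--$\max$ interchange on the auxiliary side in the convex case, together with the fussy but routine bookkeeping of second moments, strict inequalities, and the factor $2$ in Gordon's lemma. An alternative route that avoids invoking Gordon's tail inequality as a black box is a direct Gaussian interpolation argument on smooth soft-$\min$/soft-$\max$ surrogates $\pm\beta^{-1}\log\sum\exp(\pm\beta(\cdot))$ of the finite-index problem: differentiate $\mathbb{E}$ of the surrogate along the path $\sqrt{t}\,X+\sqrt{1-t}\,Y$, check that the resulting Hessian and second-moment terms have the signs forced by the covariance comparisons above, and let $\beta\to\infty$; this has the same combinatorial content but makes the factor $2$ and the one-sidedness transparent.
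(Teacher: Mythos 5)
The paper does not prove \cref{thm:gmt}; it cites it as a known result from \citet{thrampoulidis2015regularized} and \citet{gordon1985some}, so there is no ``paper's own proof'' to compare against. Your outline does, however, trace the standard route in the literature (finite-net reduction, Gordon's comparison on the finite-index min--max, Sion's minimax for the convex--concave refinement), so let me flag where it is loose.

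The covariance-comparison step as written is inconsistent with the process you defined. With $X_{w,u}=\langle u,Zw\rangle$, you have $\E[X_{w,u}X_{w',u'}]=\langle u,u'\rangle\langle w,w'\rangle$ and $\E[Y_{w,u}Y_{w',u'}]=\|w\|\|w'\|\langle u,u'\rangle+\|u\|\|u'\|\langle w,w'\rangle$, and the difference $\E[XX']-\E[YY']$ does \emph{not} factor as $(\langle u,u'\rangle-\|u\|\|u'\|)(\langle w,w'\rangle-\|w\|\|w'\|)$; you are missing the cross term $\|u\|\|u'\|\|w\|\|w'\|$. The standard fix (Thrampoulidis et al.) is to augment the primal process to $\tilde X_{w,u}=\langle u,Zw\rangle+\|u\|_2\|w\|_2\,g$ with an independent scalar $g\sim\cN(0,1)$: this equalizes the diagonal variances with $Y$, produces exactly the factorization you wrote, and is also the honest source of the factor $2$ (via conditioning on the sign of $g$ and a union bound), rather than the vague ``unequal second moments'' you allude to. Without this augmentation, Gordon's hypotheses (equal variances, and the two one-sided covariance comparisons) simply do not hold for your $X$ and $Y$, so the comparison lemma cannot be invoked as stated.

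Your treatment of the convex--concave direction is the right idea but stops at the hard part. You correctly note that $\|w\|_2\langle G,u\rangle+\|u\|_2\langle H,w\rangle$ is not convex--concave, so you cannot apply Sion to the auxiliary problem. In the standard proofs this is precisely where a genuinely different argument enters (one conditions appropriately, or works with a symmetrized version of Gordon's inequality applied after swapping $\min$ and $\max$ on the \emph{primal} only, and matches the resulting auxiliary to $\phi(G,H)$ by a separate argument about its distributional symmetry). Simply ``carrying the interchange out only where it is legitimate'' is not a proof, and I would not accept the second inequality as established from what you have written. If you want a self-contained proof, either fill in that step following Theorem~3 of \citet{thrampoulidis2015regularized}, or pursue the interpolation-on-softmax route you sketch at the end, which does avoid the black-box invocation of Gordon but requires you to carry out the second-derivative sign bookkeeping carefully in both the inner and outer index.
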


\section{Proof of Theorem \ref{thm:main-gen}} \label{apdx:proof-main-gen}

First, let's try to formulate the generalization problem as a PO:
\begin{lemma} \label{lem:main-gen-PO}
For any deterministic function $F: \R^d \times \R \to \R^{+}$, define the primary optimization (PO) problem conditioned on $\eta_1, ..., \eta_k, \xi$ as
\begin{equation}
    \Phi:= \sup_{\substack{(w, b) \in \R^{d+1} \\ u \in \R^n} } \inf_{\lambda \in \R^n} \,   \langle \lambda, Z (P \Sigma^{1/2} w) \rangle + \psi(w, b, u, \lambda \, | \, \eta_1, ..., \eta_k, \xi)
\end{equation}
where $P$ is defined by \eqref{def:P-matrix} in \cref{lem:conditional_dist} and 
\begin{equation}
    \begin{split}
        \psi(w, b, u, \lambda \, | \, \eta_1, ..., \eta_k, \xi) 
        = \, & F(w, b) + \langle \lambda, \sum_{i=1}^k \eta_i \langle w, \Sigma w_i^* \rangle - u \rangle \\
        & -\frac{1}{n} \sum_{i=1}^n f(u_i + b, g(\eta_{1, i}, ..., \eta_{k, i}, \xi_i)).
    \end{split}
\end{equation}
Then it holds that for any $t \in \R$
\begin{equation}
    \Pr \left( \, \sup_{(w, b) \in \R^{d+1}} \, F(w, b) - \hat{L}_f(w, b) > t \, \, \Big| \, \, \eta_1, ..., \eta_k, \xi \right) = \Pr \left( \Phi > t \right)
\end{equation}
and the probability over $\Phi$ is taken only over the randomness of $Z$.
\end{lemma}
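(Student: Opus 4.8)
The claim should hold not merely in distribution but pointwise in the randomness of $Z$ once we condition on $\eta_1,\dots,\eta_k,\xi$: the inner minimization over $\lambda$ is an exact Lagrangian encoding of the constraint that $u$ equals the prediction vector, so no minimax or duality theorem is needed at this stage. First I would introduce an auxiliary vector $u\in\R^n$ standing for the predictions $(\langle w,x_i\rangle)_{i=1}^n = Xw$, so that $\hat L_f(w,b) = \frac1n\sum_{i=1}^n f(u_i+b,y_i)$ exactly when $u = Xw$, and hence
\[ \sup_{(w,b)\in\R^{d+1}} \big(F(w,b) - \hat L_f(w,b)\big) \;=\; \sup_{\substack{(w,b)\in\R^{d+1},\ u\in\R^n\\ u = Xw}} \Big(F(w,b) - \tfrac1n\textstyle\sum_{i=1}^n f(u_i+b,y_i)\Big). \]
Then I condition on $\eta_1,\dots,\eta_k,\xi$ (recall $\eta_i = Xw_i^*\in\R^n$) and apply \cref{lem:conditional_dist}: conditionally, $X$ has the law of $\sum_{i=1}^k \eta_i(\Sigma w_i^*)^T + ZP\Sigma^{1/2}$ for a fresh matrix $Z$ of i.i.d.\ standard Gaussians, so $Xw$ has the law of $\sum_{i=1}^k \eta_i\langle w,\Sigma w_i^*\rangle + ZP\Sigma^{1/2}w$, and the constraint $u = Xw$ becomes the affine equality $ZP\Sigma^{1/2}w + \sum_{i=1}^k\eta_i\langle w,\Sigma w_i^*\rangle - u = 0$.

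Now I replace this hard constraint by a multiplier $\lambda\in\R^n$, using that $\inf_{\lambda}\langle\lambda,c\rangle$ equals $0$ if $c=0$ and $-\infty$ otherwise. The two $\lambda$-linear terms in $\Phi$ — the term $\langle\lambda,Z(P\Sigma^{1/2}w)\rangle$ from the objective and the term $\langle\lambda,\sum_i\eta_i\langle w,\Sigma w_i^*\rangle - u\rangle$ inside $\psi$ — combine to $\langle\lambda,c\rangle$ with $c = ZP\Sigma^{1/2}w + \sum_i\eta_i\langle w,\Sigma w_i^*\rangle - u$, i.e.\ conditionally $c = Xw - u$; the remaining part of $\psi$ does not involve $\lambda$. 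Carrying out $\inf_\lambda$ first, exactly as written in the definition of $\Phi$, therefore yields for each fixed $(w,b,u)$
\[ \inf_{\lambda\in\R^n}\Big[\langle\lambda,Z(P\Sigma^{1/2}w)\rangle + \psi(w,b,u,\lambda\mid\eta,\xi)\Big] \;=\; \begin{cases} F(w,b) - \tfrac1n\sum_{i=1}^n f(u_i+b,y_i), & u = Xw,\\ -\infty, & u\neq Xw. \end{cases} \]
Taking $\sup_{(w,b),u}$ — the feasible set $\{u=Xw\}$ is nonempty, so the $-\infty$ branch never attains the supremum — gives $\Phi = \sup_{(w,b),\,u=Xw}\big(F(w,b)-\tfrac1n\sum_i f(u_i+b,y_i)\big)$, which by the first display (after the substitution) equals $\sup_{(w,b)}\big(F(w,b)-\hat L_f(w,b)\big)$ as a measurable function of $Z$ with $\eta,\xi$ held fixed. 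The two random variables are thus literally equal, and their tail probabilities — both over the randomness of $Z$ alone — coincide, which is the claim.

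I do not expect a genuine obstacle; the care required is purely bookkeeping: verifying that the two separate $\lambda$-linear pieces really assemble into the single affine constraint $u=Xw$, remembering that $\eta_i$ here denotes the $n$-vector $Xw_i^*$ (so $\sum_i\eta_i\langle w,\Sigma w_i^*\rangle$ is a vector, not a scalar), and noting that \cref{lem:conditional_dist} supplies only a distributional identity, which suffices because the event in question depends on $X$ and the conditioned variables alone.
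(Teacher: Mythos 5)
Your proposal is correct and takes essentially the same approach as the paper's proof: introduce the slack variable $u = Xw$, encode the equality constraint via an exact Lagrange multiplier $\lambda$ (where $\inf_\lambda\langle\lambda,c\rangle = -\infty$ unless $c=0$), and substitute the conditional law of $X$ given $\eta_1,\dots,\eta_k$ from \cref{lem:conditional_dist}. The only cosmetic difference is that the paper performs the Lagrangian rewrite before invoking the conditional-distribution identity, while you condition first; both orderings yield the same chain of equalities, and your closing caveat that \cref{lem:conditional_dist} is a distributional (not almost-sure) identity is exactly the right thing to note.
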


\begin{proof}
By introducing a variable $u = Xw$, we have
\begin{equation*}
	\begin{split}
		&\sup_{(w, b) \in \R^{d+1}} \, F(w, b) - \hat{L}_f(w, b) \\
		=&  \sup_{(w, b) \in \R^{d+1}} \, F(w, b) - \frac{1}{n} \sum_{i=1}^n f(\langle w, x_i \rangle + b, y_i) \\	
		=&  \sup_{\substack{(w, b) \in \R^{d+1}, u \in \R^n \\ u = Xw}} \,   F(w, b) - \frac{1}{n} \sum_{i=1}^n f(u_i + b, g(\eta_{1, i}, ..., \eta_{k, i}, \xi_i)) \\	
		=&  \sup_{(w, b) \in \R^{d+1}, u \in \R^n } \inf_{\lambda \in \R^n} \,   \langle \lambda, Xw - u \rangle + F(w, b) - \frac{1}{n} \sum_{i=1}^n f(u_i + b, g(\eta_{1, i}, ..., \eta_{k, i}, \xi_i)) \\	
	\end{split}
\end{equation*}

and so by independence of $\xi$ and $X$ and \cref{lem:conditional_dist}, it holds that for any $t \in \R$
\begin{equation*}
    \begin{split}
        &\Pr \left( \, \sup_{(w, b) \in \R^{d+1}} \, F(w, b) - \hat{L}_f(w, b) > t \, \, \Big| \, \, \eta_1, ..., \eta_k, \xi \right) \\
        = & \Pr \left( \, \sup_{ \substack{(w, b) \in \R^{d+1} \\ u \in \R^n} } \inf_{\lambda \in \R^n} \,   \langle \lambda, \left( \sum_{i=1}^k \eta_i (\Sigma w_i^*)^T + ZP \Sigma^{1/2} \right) w - u \rangle + F(w, b) -\frac{1}{n} \sum_{i=1}^n f(u_i + b, g(\eta_{1, i}, ..., \eta_{k, i}, \xi_i)) > t \right)\\
        = & \Pr \left( \, \sup_{\substack{(w, b) \in \R^{d+1} \\ u \in \R^n} } \inf_{\lambda \in \R^n} \,   \langle \lambda, ZP \Sigma^{1/2} w \rangle + \psi(w, b, u, \lambda \, | \, \eta_1, ..., \eta_k, \xi)  > t \right)\\
        = & \Pr \left( \Phi > t \right).
    \end{split}
\end{equation*}
Note that this probability is a random variable measurable with respect to the random vectors $\eta_1, ..., \eta_k$ and $\xi$.
\end{proof}

Next, let's use a truncation argument similar to the one in \citet{uc-interpolators} and then apply GMT.  Proving the following two lemmas is an exercise in real analysis, which
we include for completeness.

\begin{lemma} \label{lem:truncation}
Let $f: \R^d \to \R$ be an arbitrary function and $\cS^d_r = \{x \in \R^d: \| x\|_2 \leq r \}$, then for any set $\cK$, it holds that
\begin{equation}
    \lim_{r \to \infty} \sup_{w \in \cK \cap \cS^d_r} f(w) = \sup_{w \in \cK} f(w).
\end{equation}
If $f$ is a random function, then for any $t \in \R$
\begin{equation}
    \Pr \left(\sup_{w \in \cK} f(w) > t \right) = \lim_{r \to \infty} \Pr \left(\sup_{w \in \cK \cap \cS^d_r} f(w) > t \right).
\end{equation}
\end{lemma}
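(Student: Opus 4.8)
The plan is to reduce both claims to the monotonicity of the truncated suprema. First I would note that since $\cS^d_r \subseteq \cS^d_{r'}$ whenever $r \le r'$, the sets $\cK \cap \cS^d_r$ are nondecreasing in $r$, so $g(r) := \sup_{w \in \cK \cap \cS^d_r} f(w)$ is a nondecreasing function of $r$ and $\lim_{r\to\infty} g(r)$ exists in $[-\infty,+\infty]$ (adopting the convention $\sup\emptyset = -\infty$ to handle the degenerate cases $\cK=\emptyset$ or $\cK\cap\cS^d_r=\emptyset$).

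For the deterministic identity, one inequality is immediate: $g(r) \le \sup_{w\in\cK} f(w)$ for every $r$, hence $\lim_{r\to\infty} g(r) \le \sup_{w\in\cK} f(w)$. For the reverse, I would fix any $w \in \cK$; then $w \in \cK\cap\cS^d_r$ as soon as $r \ge \|w\|_2$, so $f(w) \le g(r) \le \lim_{r\to\infty} g(r)$, and taking the supremum over $w\in\cK$ yields $\sup_{w\in\cK} f(w) \le \lim_{r\to\infty} g(r)$, which together with the first bound gives the claimed equality.

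For the probabilistic statement I would argue outcome by outcome. Writing $g_\omega(r)$ for the truncated supremum evaluated at an outcome $\omega$, the events $A_r := \{\omega : g_\omega(r) > t\}$ are nondecreasing in $r$ by the monotonicity above, and applying the deterministic identity pointwise to $g_\omega$ shows that $\{\sup_{w\in\cK} f(w) > t\} = \bigcup_{r>0} A_r = \bigcup_{m\in\N} A_m$, where the last equality is again monotonicity. Continuity of probability from below then gives $\Pr(\sup_{w\in\cK} f > t) = \lim_{m\to\infty}\Pr(A_m)$, and since $r \mapsto \Pr(A_r)$ is nondecreasing this common value also equals $\lim_{r\to\infty}\Pr(A_r)$.

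There is no real obstacle here — the argument is entirely routine. The only points that need a little care are the bookkeeping with the values $\pm\infty$ and the tacit measurability of $\sup_{w\in\cK\cap\cS^d_r} f$ and $\sup_{w\in\cK} f$ as random variables; this measurability is implicitly part of the hypothesis that $f$ is a random function (otherwise the probabilities in the statement are undefined) and is checked in each place where \cref{lem:truncation} is invoked.
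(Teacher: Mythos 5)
Your proof is correct and follows essentially the same route as the paper's: both arguments rest on the monotonicity of $r \mapsto \sup_{w \in \cK \cap \cS^d_r} f(w)$ and continuity of probability from below. Your version is organized a bit more cleanly — you avoid the paper's explicit case split on whether $\sup_{w\in\cK} f(w)$ is finite or infinite by instead proving the two inequalities directly, and you write the event $\{\sup_{w\in\cK} f > t\}$ as the simple nested union $\bigcup_r A_r$ rather than the paper's $\bigcup_{r}\bigcap_{R \ge r}$ form (which is equivalent by monotonicity) — but these are cosmetic differences, not a different strategy.
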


\begin{proof}
We consider two cases:
\begin{enumerate}
    \item Suppose that $\sup_{w \in \cK} f(w) = \infty$. Then for any $M > 0$, there exists $x_M \in \cK$ such that $f(x_M) > M$. Hence for any $r > \norm{x_M}_2$, it holds that
    \begin{equation*}
        \sup_{w \in \cK \cap \cS^d_r} f(w) > M \implies \liminf_{r \to \infty} \sup_{w \in \cK \cap \cS^d_r} f(w) \geq M
    \end{equation*}
    As the choice of $M$ is arbitrary, we have $\lim_{r \to \infty} \sup_{w \in \cK \cap \cS^d_r} f(w) = \infty$ as desired. 
    
    \item Suppose that $\sup_{w \in \cK} f(w) = M < \infty$. Then for any $\epsilon > 0$, there exists $x_{\epsilon} \in \cK$ such that $f(x_{\epsilon}) > M - \epsilon$. Hence for any $r > \norm{x_{\epsilon}}_2$, it holds that
    \begin{equation*}
        \sup_{w \in \cK \cap \cS^d_r} f(w) > M - \epsilon \implies \liminf_{r \to \infty} \sup_{w \in \cK \cap \cS^d_r} f(w) \geq M - \epsilon
    \end{equation*}
    As the choice of $\epsilon$ is arbitrary, we have $\liminf_{r \to \infty} \sup_{w \in \cK \cap \cS^d_r} f(w) \geq M $. On the other hand, it must be the case (by definition of supremum) that
    \begin{equation*}
        \sup_{w \in \cK \cap \cS^d_r} f(w) \leq M \implies \limsup_{r \to \infty} \sup_{w \in \cK \cap \cS^d_r} f(w)  \leq M
    \end{equation*}
    Consequently, the limit of $\sup_{w \in \cK \cap \cS^d_r} f(w) $ exists and equals $M$.
\end{enumerate}
Finally, by the fact that the supremum is increasing in $r$ and the continuity of probability measure, we have 
\begin{equation*}
    \begin{split}
        \Pr \left(\sup_{w \in \cK} f(w) > t \right)
        &= \Pr \left( \lim_{r \to \infty} \sup_{w \in \cK \cap \cS^d_r} f(w) > t \right) \\
        &= \Pr \left( \bigcup_{r \in \N} \bigcap_{R \geq r} \sup_{w \in \cK \bigcap \cS^d_R} f(w) > t \right) \\
        &= \lim_{r \to \infty} \Pr \left( \bigcap_{R \geq r} \sup_{w \in \cK \bigcap \cS^d_R} f(w) > t \right) \\
        &= \lim_{r \to \infty} \Pr \left( \sup_{w \in \cK \cap \cS^d_r} f(w) > t \right).
    \qedhere\end{split}
\end{equation*}
\end{proof}

\begin{lemma} \label{lem:truncation2}
Let $\cK$ be a compact set and $f,g$ be continuous real-valued functions on $\R^d$. Then it holds that
\begin{equation}
    \lim_{r \to \infty} \sup_{w \in \cK} \inf_{0 \leq \lambda \leq r} \lambda f(w) + g(w) = \sup_{w \in \cK: f(w) \geq 0} g(w)
.\end{equation}

If $f$ and $g$ are random functions, then for any $t \in \R$
\begin{equation}
    \Pr \left( \sup_{w \in \cK: f(w) \geq 0} g(w) \geq t \right) 
    = \lim_{r \to \infty} \Pr \left( \sup_{w \in \cK} \inf_{0 \leq \lambda \leq r} \lambda f(w) + g(w) \geq t\right).
\end{equation}
\end{lemma}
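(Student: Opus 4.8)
\textbf{Proof proposal for Lemma~\ref{lem:truncation2}.}
The plan is to evaluate the inner infimum in closed form, prove the deterministic identity by a compactness argument, and then lift it to the probabilistic statement using continuity of the probability measure along a decreasing sequence of events.

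First I would observe that, for fixed $w$ and $r \ge 0$, the map $\lambda \mapsto \lambda f(w) + g(w)$ is affine on $[0,r]$, so its infimum is attained at an endpoint. Writing $h_r(w) := \inf_{0 \le \lambda \le r} \lambda f(w) + g(w)$, this gives $h_r(w) = \min\{\, g(w),\; r f(w) + g(w) \,\}$; hence $h_r(w) = g(w)$ when $f(w) \ge 0$ and $h_r(w) = r f(w) + g(w)$ when $f(w) < 0$. In particular $w \mapsto h_r(w)$ is continuous, $r \mapsto h_r(w)$ is nonincreasing, and pointwise $h_r(w) \to g(w)$ on $\{f \ge 0\}$ while $h_r(w) \to -\infty$ on $\{f < 0\}$. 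Set $H_r := \sup_{w \in \cK} h_r(w)$ and $M := \sup_{w \in \cK :\, f(w) \ge 0} g(w)$, with the convention $\sup \emptyset = -\infty$.

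For the deterministic identity, the bound $H_r \ge M$ is immediate since $h_r = g$ on $\{w \in \cK : f(w) \ge 0\}$, and as $H_r$ is nonincreasing in $r$ the limit $L := \lim_{r \to \infty} H_r$ exists with $L \ge M$; it remains to show $L \le M$. If $L = -\infty$ there is nothing to prove, so assume $L > -\infty$ and fix an arbitrary $M' \in \R$ with $M' < L$. By compactness of $\cK$ and continuity of $h_r$, for each $r \in \N$ there is a maximizer $w_r \in \cK$ with $h_r(w_r) = H_r \ge L > M'$. If $f(w_r) \ge 0$ for some $r$, then $M \ge g(w_r) = h_r(w_r) > M'$ and we are done; otherwise $f(w_r) < 0$ for all $r$, so $h_r(w_r) = r f(w_r) + g(w_r) > M'$ and, since $g$ is bounded on the compact set $\cK$, we get $(M' - \sup_{\cK}\abs{g})/r < f(w_r) < 0$, forcing $f(w_r) \to 0$. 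Passing to a subsequence $w_{r_j} \to w^* \in \cK$, continuity of $f$ gives $f(w^*) = 0$, while $g(w_{r_j}) \ge h_{r_j}(w_{r_j}) > M'$ together with continuity of $g$ gives $g(w^*) \ge M'$; hence $M \ge g(w^*) \ge M'$. In either case $M \ge M'$, and letting $M' \uparrow L$ yields $M \ge L$. So $H_r \downarrow M$, which is the first claimed identity.

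For the probabilistic identity, applying the deterministic identity to each realization (for which $f(\cdot,\omega)$ and $g(\cdot,\omega)$ are continuous) shows $H_r \downarrow H_\infty := \sup_{w \in \cK :\, f(w) \ge 0} g(w)$ almost surely; moreover each $H_r$ is measurable, since $\cK$ is separable and continuity of $h_r(\cdot,\omega)$ yields $H_r = \sup_{w \in D} h_r(w)$ for a fixed countable dense $D \subseteq \cK$, a countable supremum of measurable functions. Then $\{H_r \ge t\}$, for $r \in \N$, is a decreasing sequence of events with $\bigcap_{r} \{H_r \ge t\} = \{\inf_r H_r \ge t\} = \{H_\infty \ge t\}$, so continuity of probability from above gives $\Pr(H_r \ge t) \to \Pr(H_\infty \ge t)$; since $H_r$ is monotone in the continuous parameter $r$, this integer limit coincides with the limit as $r \to \infty$ over $\R^+$, which is the claim. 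The only real obstacle is the upper-bound half of the deterministic identity --- extracting a near-optimal $w_r$ for each $r$ and showing its subsequential limit $w^*$ is feasible ($f(w^*) \ge 0$) and attains the limiting value --- with the boundedness of $g$ on $\cK$ and the $-\infty$/empty-set conventions accounting for most of the remaining care; the measurability of $H_r$ is routine but worth recording.
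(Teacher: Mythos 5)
Your proposal is correct and follows essentially the same strategy as the paper: evaluate the inner infimum explicitly, use compactness of $\cK$ to extract a convergent subsequence of maximizers $w_r$, argue via continuity of $f$ and $g$ that the subsequential limit is feasible and attains the limiting value, and then transfer the pointwise monotone convergence to probabilities via continuity of measure along the decreasing events $\{H_r \ge t\}$. The only differences are cosmetic (you parameterize by an arbitrary $M' < L$ rather than splitting into the feasible/infeasible cases and arguing by contradiction as the paper does, and you record the measurability of $H_r$ explicitly), so the two proofs are substantively identical.
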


\begin{proof}
We consider two cases:
\begin{enumerate}
    \item The limiting problem is infeasible: $\forall w \in \cK, f(w) < 0$. Then by compactness and the continuity of $f$, there exists $\mu < 0$ such that for all $w \in \cK$
    \[
    f(w) < \mu \implies \sup_{w \in \cK} \inf_{0 \leq \lambda \leq r} \lambda f(w) + g(w) \leq r \mu + \sup_{w \in \cK} g(w). 
    \]
    By compactness and the continuity of $g$ again, we have $\sup_{w \in \cK} g(w) < \infty$ and so
    \[
    \lim_{r \to \infty} \sup_{w \in \cK} \inf_{0 \leq \lambda \leq r} \lambda f(w) + g(w) = -\infty
    \]
    as desired.
    
    \item The limiting problem is feasible: $\exists w_0 \in \cK, f(w_0) \geq 0$. In this case, let 
    \begin{equation*}
        \begin{split}
            w_r 
            &= \argmax_{w \in \cK} \, \inf_{0 \leq \lambda \leq r} \lambda f(w) + g(w) \\
            &= \argmax_{w \in \cK} \, r \cdot  f(w) \bone_{\{ f(w) \leq 0 \}} + g(w) \\
        \end{split}
    \end{equation*}
    be an arbitrary maximizer for each $r$. Note that a maximizer necessarily exists in $\cK$ by compactness of $\cK$ and the continuity of $f$ and $g$. By compactness of $\cK$ again, the sequence $\{w_r\}$ at positive integer values of $r$ has a subsequential limit: $\exists r_n \to \infty$ and $w_{\infty} \in \cK$ such that $w_{r_n} \to w_{\infty}$.
    
    For the sake of contradiction, assume that $f(w_{\infty}) < 0$, then by continuity, there exists $\mu < 0$ such that for all sufficiently large $n$
    \[
    f(w_{r_n}) < \mu \implies \sup_{w \in \cK} \inf_{0 \leq \lambda \leq r_n} \lambda f(w) + g(w) = r_n \cdot f(w_{r_n}) + g(w_{r_n})  \leq r_n\mu + \sup_{w \in \cK} g(w)
    \]
    which is unbounded from below as $n \to \infty$. On the other hand, we have
    \[
    \sup_{w \in \cK} \inf_{0 \leq \lambda \leq r_n} \lambda f(w) + g(w) \geq g(w_0)
    \]
    and so we have reached a contradiction; thus $f(w_\infty) \ge 0$. Observe that
    \[
    \sup_{w \in \cK} \inf_{0 \leq \lambda \leq r_n} \lambda f(w) + g(w) = r_n \cdot f(w_{r_n}) \bone_{\{ f(w_{r_n}) \leq 0 \}} + g(w_{r_n}) \leq g(w_{r_n})
    \]
    and so by continuity of $g$
    \[
    \limsup_{n \to \infty} \sup_{w \in \cK} \inf_{0 \leq \lambda \leq r_n} \lambda f(w) + g(w) \leq g(w_{\infty}) \leq \sup_{w \in \cK: f(w) \geq 0} g(w).
    \]
    The $\liminf$ direction follows immediately from the definition, and so the limit exists and equals $\sup_{w \in \cK: f(w) \geq 0} g(w)$. We can conclude that
    \begin{equation*}
    \lim_{r \to \infty} \sup_{w \in \cK} \inf_{0 \leq \lambda \leq r} \lambda f(w) + g(w) = \sup_{w \in \cK: f(w) \geq 0} g(w)
    \end{equation*}
    because it is a monotonic sequence. 
    \end{enumerate}
    
Finally, by the fact that the supremum is decreasing in $r$ and the continuity of probability measure, we have  
\begin{equation*}
    \begin{split}
        \Pr \left( \sup_{w \in \cK: f(w) \geq 0} g(w) \geq t \right)
        &= \Pr \left( \lim_{r \to \infty} \sup_{w \in \cK} \inf_{0 \leq \lambda \leq r} \lambda f(w) + g(w) \geq t\right) \\
        &= \Pr \left( \cap_{r} \sup_{w \in \cK} \inf_{0 \leq \lambda \leq r} \lambda f(w) + g(w) \geq t\right) \\
        &= \lim_{r \to \infty} \Pr \left( \sup_{w \in \cK} \inf_{0 \leq \lambda \leq r} \lambda f(w) + g(w) \geq t\right). \qedhere
    \end{split}
\end{equation*}
\end{proof}

We are now ready to apply the GMT:
\begin{lemma} \label{lem:main-gen-gmt-app}
Let $F$ be a continuous function. Consider the auxiliary problem
\begin{equation*}
    \Psi :=  \sup_{\substack{ (w,b) \in \R^{d+1}, u \in \R^n \\ \langle H, P \Sigma^{1/2}w \rangle \geq \norm{G \|P\Sigma^{1/2}w \|_2 + \sum_{i=1}^k \langle w, \Sigma w_i^* \rangle \eta_i - u}_2}}  \,  F(w, b) - \frac{1}{n} \sum_{i=1}^n f(u_i + b, g(\eta_{1,i}, ..., \eta_{k,i}, \xi_i)).
\end{equation*}
It holds that for any $t \in \R$
and $\Phi$ defined as in \cref{lem:main-gen-PO} that
\begin{equation}
    \Pr(\Phi > t) \leq 2 \Pr(\Psi \geq t).
\end{equation}
\end{lemma}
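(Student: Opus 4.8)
The plan is to bound $\Pr(\Phi > t)$ by twice the probability that a Gaussian \emph{auxiliary} problem is small, via the first assertion of the Gaussian minimax theorem (\cref{thm:gmt}), which crucially requires no convexity — appropriate here since $f$ and $F$ are assumed only continuous. The formulation of $\Phi$ in \cref{lem:main-gen-PO} is already tailored to this: the design matrix $Z$ enters only through the bilinear term $\langle \lambda, Z P\Sigma^{1/2} w\rangle$. Since \cref{thm:gmt} needs compact constraint sets, the first step is truncation: by \cref{lem:truncation} (for the outer supremum over $(w,b,u)$) and a variant of \cref{lem:truncation2} (for the inner infimum over $\lambda \in \R^n$), it suffices to prove the claimed inequality for the doubly-truncated problems, with $(w,b,u)$ in a ball $\cS_r$ and $\lambda$ in a ball $\cS^n_\rho$, and then send $\rho \to \infty$ followed by $r \to \infty$.

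For the truncated problem the key is to arrange matters so that the convexity-free part of \cref{thm:gmt} applies and points the right way: negating turns the $\sup$–$\inf$ in $\Phi$ into an $\inf$–$\sup$, which has GMT's $\min$–$\max$ shape, so that $\Pr(\Phi_{r,\rho} > t) = \Pr(-\Phi_{r,\rho} < -t)$ can be controlled by $2\Pr(\mathrm{AO} \le -t)$. I would take GMT's $n$-dimensional variable to be the multiplier $\lambda$ and its $d$-dimensional variable to be $P\Sigma^{1/2}w$ (ranging over the compact ellipsoid $P\Sigma^{1/2}(\cS_r)$) — this is precisely what makes the quantities $\|P\Sigma^{1/2}w\|_2$ and $\langle H, P\Sigma^{1/2}w\rangle$ appear in $\Psi$ — while the bias $b$ and the auxiliary variable $u$, which never meet $Z$, are carried along inside GMT's free continuous function $\psi$ (an optimization over compact sets, with joint continuity ensured by a Berge-type argument since the relevant fibre of $w$ moves affinely with $P\Sigma^{1/2}w$). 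Pushing these extra optimizations inside only costs the $\max$–$\min$ inequality, whose direction is compatible with the bound we need.

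It then remains to identify the GMT auxiliary problem with $-\Psi$ (up to a law-preserving symmetrization). The core calculation is carrying out the optimization over $\lambda$ of an expression of the form $\langle \lambda, a \rangle + \|\lambda\|_2 \langle H, P\Sigma^{1/2}w\rangle$, with $a = \|P\Sigma^{1/2}w\|_2\, G + \sum_i \eta_i\langle w, \Sigma w_i^*\rangle - u$: this equals $0$ when $\langle H, P\Sigma^{1/2}w\rangle \le -\|a\|_2$ and $+\infty$ otherwise, so — after replacing $H$ by $-H$, which preserves the joint law of $(G,H)$ and leaves $a$ untouched — it yields exactly the feasibility constraint defining $\Psi$, leaving $F(w,b) - \frac1n\sum_i f(u_i+b,\cdot)$ on the feasible set. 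Undoing the truncation limits via \cref{lem:truncation,lem:truncation2} and chaining the inequalities gives $\Pr(\Phi > t) \le 2\Pr(\Psi \ge t)$. The step I expect to be most delicate is precisely this GMT bookkeeping: matching the $\sup$–$\inf$ structure of $\Phi$ to GMT's $\min$–$\max$ while keeping every inequality direction consistent, checking joint continuity of the $\psi$ built from $b$, $u$, and the $w$-fibre, and verifying that the monotonicities in $r$ and $\rho$ are compatible with the limiting statements of \cref{lem:truncation,lem:truncation2}; the sign-tracking through the negation and the $H \mapsto -H$ symmetry is routine but easy to get wrong.
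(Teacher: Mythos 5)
Your overall skeleton — truncation, negation to a $\min$–$\max$ form, GMT, then un-truncation — matches the paper's. The change of variables to $v = P\Sigma^{1/2}w$ is also the right move, since $Z P\Sigma^{1/2}$ does not have i.i.d.\ entries and \cref{thm:gmt} as stated cannot be invoked directly on the raw $w$ coordinates. Where the argument breaks is in how you handle $b$, $u$, and the fibre $\{w : P\Sigma^{1/2}w = v\}$.

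You propose to absorb the optimization over these ``extra'' variables into GMT's $\psi$, and you assert that this ``only costs the max–min inequality, whose direction is compatible with the bound we need.'' Unfortunately the direction is \emph{not} compatible. Pushing $\inf_{\text{extras}}$ past $\sup_\lambda$ in the negated problem does lower $-\Phi_{r,s}$ (which is fine on the primal side: it only loosens an upper bound on $\Pr(\Phi_{r,s}>t)$). But the same absorption propagates to the auxiliary side: after negating back and symmetrizing $(G,H)$, the auxiliary value you obtain is
\[
\tilde\Psi_{r,s} \;=\; \sup_{v}\,\inf_{\|\lambda\|\le s}\,\sup_{\text{extras}}\bigl[\,\|\lambda\|\langle H,v\rangle + \|v\|\langle G,\lambda\rangle + \psi\,\bigr],
\]
whereas the lemma's target is
\[
\Psi_{r,s} \;=\; \sup_{v}\,\sup_{\text{extras}}\,\inf_{\|\lambda\|\le s}\bigl[\,\|\lambda\|\langle H,v\rangle + \|v\|\langle G,\lambda\rangle + \psi\,\bigr].
\]
Since $\sup\inf \le \inf\sup$, we have $\tilde\Psi_{r,s} \ge \Psi_{r,s}$ pointwise, so $2\Pr(\tilde\Psi_{r,s}\ge t)$ is a \emph{weaker} bound than $2\Pr(\Psi_{r,s}\ge t)$. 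Passing to the limits in $s$ and $r$ does not repair this: one would need a Sion-type interchange of $\inf_\lambda$ with $\sup_{\text{extras}}$, which fails here because $F$ and $f$ are only assumed continuous (not convex/concave in the relevant variables), and this is exactly the regime where the non-convex half of GMT is being used in the first place.

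The fix is to \emph{not} optimize over the extras at all when forming $\psi$. Gordon's comparison inequality, and hence the first assertion of \cref{thm:gmt}, remains valid if the ``$w$'' index set is replaced by a compact set of pairs $(v, \text{extras})$ with the Gaussian processes $\langle u, Zv\rangle$ and $\|v\|\langle G,u\rangle + \|u\|\langle H,v\rangle$ depending only on the $v$ component: the covariance structures are unaffected by the extra coordinates, so the comparison goes through verbatim. With this reading, the GMT auxiliary problem is \emph{exactly} $\Psi_{r,s}$ (with $(w,b,u)$ kept as the outer $\sup$ variable and $\lambda$ as the inner $\inf$), no inequality is lost, and the Berge-continuity argument for the fibre supremum becomes unnecessary. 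This is what the paper implicitly relies on; once that is in place, the rest of your plan (the $\lambda$ elimination, the $H \mapsto -H$ symmetry, and the limits via \cref{lem:truncation,lem:truncation2}) is sound.
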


\begin{proof}
Define the truncated problems
\begin{equation}
    \Phi_r:= \sup_{ (w,b,u) \in \cS_r^{d+n+1} } \inf_{\lambda \in \R^n} \,   \langle \lambda, Z (P \Sigma^{1/2} w) \rangle + \psi(w, b, u, \lambda \, | \, \eta_1, ..., \eta_k, \xi)
\end{equation}
and
\begin{equation}
    \Phi_{r,s}:= \sup_{ (w,b,u) \in \cS_r^{d+n+1} } \inf_{\| \lambda \|_2 \leq s} \,   \langle \lambda, Z (P \Sigma^{1/2} w) \rangle + \psi(w, b, u, \lambda \, | \, \eta_1, ..., \eta_k, \xi).
\end{equation}
By definition, we have $\Phi_{r} \leq \Phi_{r,s}$ and so
\[
\Pr (\Phi_r > t) \leq \Pr (\Phi_{r,s} > t).
\]
The corresponding auxiliary problems are
\begin{equation*}
    \begin{split}
        \Psi_{r,s}
        := \sup_{ (w,b,u) \in \cS_r^{d+n+1} } \inf_{\| \lambda \|_2 \leq s} \, & \| \lambda\|_2 \langle H,P \Sigma^{1/2} w \rangle + \| P \Sigma^{1/2} w \|_2 \langle G, \lambda \rangle + \psi(w, b, u, \lambda \, | \, \eta_1, ..., \eta_k, \xi) \\
        = \sup_{ (w,b,u) \in \cS_r^{d+n+1} } \inf_{\| \lambda \|_2 \leq s} \, & \| \lambda\|_2 \langle H,P \Sigma^{1/2} w \rangle + \langle G \| P \Sigma^{1/2} w \|_2 + \sum_{i=1}^k \eta_i \langle w, \Sigma w_i^* \rangle - u , \lambda \rangle \\
        &+ F(w, b) -\frac{1}{n} \sum_{i=1}^n f(u_i + b, g(\eta_{1, i}, ..., \eta_{k, i}, \xi_i))\\
        = \sup_{ (w,b,u) \in \cS_r^{d+n+1} } \inf_{0 \leq \lambda \leq s} \, & \lambda \left( \langle H,P \Sigma^{1/2} w \rangle - \norm{ G \| P \Sigma^{1/2} w \|_2 + \sum_{i=1}^k \eta_i \langle w, \Sigma w_i^* \rangle - u }_2 \right) \\
        &+ F(w, b) -\frac{1}{n} \sum_{i=1}^n f(u_i + b, g(\eta_{1, i}, ..., \eta_{k, i}, \xi_i))\\
    \end{split}
\end{equation*}
and 
\begin{equation*}
    \Psi_r :=  \sup_{\substack{ (w,b,u) \in \cS_r^{d+n+1} \\ \langle H, P \Sigma^{1/2}w \rangle \geq \norm{G \|P\Sigma^{1/2}w \|_2 + \sum_{i=1}^k \langle w, \Sigma w_i^* \rangle \eta_i - u}_2}}  \,  F(w, b) - \frac{1}{n} \sum_{i=1}^n f(u_i + b, g(\eta_{1,i}, ..., \eta_{k,i}, \xi_i)).
\end{equation*}
By definition, it holds that $\Psi_r \leq \Psi$ and so
\[
\pr(\Psi_r \geq t) \leq \Pr (\Psi \geq t).
\]
Thus
\begin{align*}
        \Pr(\Phi > t) 
        &= \lim_{r \to \infty} \Pr(\Phi_r > t) \\
        &\leq \lim_{r \to \infty} \lim_{s \to \infty} \Pr(\Phi_{r,s} > t)
        && \text{by \cref{lem:truncation}} \\
        &\leq 2 \lim_{r \to \infty} \lim_{s \to \infty} \Pr(\Psi_{r,s} \geq t)
        && \text{by \cref{thm:gmt}} \\
        &= 2 \lim_{r \to \infty} \Pr(\Psi_{r} \geq t)
        && \text{by \cref{lem:truncation2}} \\
        &\leq 2 \Pr(\Psi \geq t). \qedhere
\end{align*}
\end{proof}

\begin{lemma} \label{lem:main-gen-AO}
Let $\Psi$ be as in \cref{lem:main-gen-gmt-app}.
Under the assumptions \eqref{eqn:low-dimension-concentration} and \eqref{eqn:complexity-defn} in \cref{thm:main-gen}, it holds with probability at least $1-\delta/2$ that
\[
\Psi \leq \sup_{(w,b) \in \R^{d+1}}  \,  F(w, b) - L_{f_{\lambda}} (w, b) + \epsilon_{\lambda, \delta}(\phi(w), b) + \frac{\lambda C_{\delta} (w)^2}{n}
\]
and if assumption \eqref{eqn:low-dimension-concentration} holds uniformly over all $\lambda \in \R^+$, then
\[
\Psi \leq \sup_{(w,b) \in \R^{d+1}}  \,  F(w, b) - \sup_{\lambda \in \R^+}\left[  L_{f_{\lambda}} (w, b) - \epsilon_{\lambda, \delta}(\phi(w), b) - \frac{\lambda C_{\delta} (w)^2}{n} \right]
\]
where the randomness is taken over $H, G, \eta_1, ..., \eta_k$ and $\xi$.
\end{lemma}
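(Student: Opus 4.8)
The plan is to unfold $\Psi$ from \cref{lem:main-gen-gmt-app} and reinterpret it in terms of the surrogate distribution $\tilde\cD$. The key point is that, once $\langle H, P\Sigma^{1/2}w\rangle$ is controlled by $C_{\delta}(w)$, the auxiliary constraint confines $u$ to a Euclidean ball of radius $C_{\delta}(w)$ around the surrogate-prediction vector $(\langle\phi(w),\tilde x_j\rangle)_{j=1}^n$ for a suitably chosen i.i.d.\ surrogate sample, and this ball is exactly the slack absorbed by the Moreau envelope.

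First I would set up two good events. Writing $x = \Sigma^{1/2}z$ with $z\sim\cN(0,I_d)$ and using $\Sigma^{1/2}Q = P\Sigma^{1/2}$ (immediate from the definitions of $Q$ and $P$), we have $\langle Qw, x\rangle = \langle P\Sigma^{1/2}w, z\rangle$, so since $H\sim\cN(0,I_d)$, assumption \eqref{eqn:complexity-defn} says that the event $E_1$ that $\langle H, P\Sigma^{1/2}w\rangle \le C_{\delta}(w)$ for all $w\in\R^d$ has probability at least $1-\delta/4$. Second, because $\Sigma^{1/2}w_1^*,\ldots,\Sigma^{1/2}w_k^*$ are orthonormal, the $n\times k$ matrix with columns $\eta_1,\ldots,\eta_k$ has i.i.d.\ $\cN(0,1)$ entries, while $G\sim\cN(0,I_n)$ and $\xi$ are independent of it and of each other; hence, putting $\tilde x_j := (\eta_{1,j},\ldots,\eta_{k,j},G_j)$ and $\tilde y_j := y_j = g(\eta_{1,j},\ldots,\eta_{k,j},\xi_j)$, the pairs $(\tilde x_j,\tilde y_j)_{j=1}^n$ form an i.i.d.\ sample from $\tilde\cD$ (see \eqref{eqn:surrogate-model}) that is independent of $H$. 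Applying \eqref{eqn:low-dimension-concentration} to this sample gives an event $E_2$ of probability at least $1-\delta/4$ on which the low-dimensional concentration holds uniformly over $(\tilde w,\tilde b)$; a union bound gives $\Pr(E_1\cap E_2)\ge 1-\delta/2$.

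Now fix any $(w,b,u)$ feasible for $\Psi$ and work on $E_1\cap E_2$. Since $\|P\Sigma^{1/2}w\|_2 = r(w)$, one checks coordinatewise that $\big(G\,r(w) + \sum_{i=1}^k\langle w,\Sigma w_i^*\rangle\eta_i - u\big)_j = \langle\phi(w),\tilde x_j\rangle - u_j$, so the auxiliary constraint together with $E_1$ forces $\sum_{j=1}^n(\langle\phi(w),\tilde x_j\rangle - u_j)^2 \le \langle H, P\Sigma^{1/2}w\rangle^2 \le C_{\delta}(w)^2$. By the definition of the Moreau envelope (\cref{def:moreau}), for every $j$ and every $\lambda\in\R^+$ we have $f(u_j+b,y_j) \ge f_{\lambda}(\langle\phi(w),\tilde x_j\rangle + b,y_j) - \lambda(u_j - \langle\phi(w),\tilde x_j\rangle)^2$; averaging over $j$ and inserting the bound above,
\[ \frac1n\sum_{j=1}^n f(u_j+b,y_j) \ge \frac1n\sum_{j=1}^n f_{\lambda}\big(\langle\phi(w),\tilde x_j\rangle + b,y_j\big) - \frac{\lambda C_{\delta}(w)^2}{n}. \]
On $E_2$, applied with $\tilde w=\phi(w)$ and $\tilde b=b$, the first sum on the right is at least $\E_{\tilde\cD}[f_{\lambda}(\langle\phi(w),\tilde x\rangle+b,\tilde y)] - \epsilon_{\lambda,\delta}(\phi(w),b)$, and decomposing $\Sigma^{1/2}w = \sum_{i=1}^k\langle w,\Sigma w_i^*\rangle\Sigma^{1/2}w_i^* + P\Sigma^{1/2}w$ (the identity that also produced $\tilde\cD$) shows this expectation equals $L_{f_{\lambda}}(w,b)$. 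Chaining and rearranging gives $F(w,b) - \frac1n\sum_{j=1}^n f(u_j+b,y_j) \le F(w,b) - L_{f_{\lambda}}(w,b) + \epsilon_{\lambda,\delta}(\phi(w),b) + \lambda C_{\delta}(w)^2/n$; since the feasible triple was arbitrary, taking the supremum yields the first claim on $E_1\cap E_2$. For the uniformity over $\lambda$, note the Moreau step holds for all $\lambda\in\R^+$ at once and, under the strengthened hypothesis, so does \eqref{eqn:low-dimension-concentration} on a single event; hence the displayed bound holds for all $\lambda$ simultaneously, and we may take $\sup_{\lambda\in\R^+}$ of its right-hand side before taking $\sup_{(w,b)}$.

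I expect the only genuinely delicate point to be the identification used above: recognizing $G\,r(w) + \sum_{i=1}^k\langle w,\Sigma w_i^*\rangle\eta_i$ as the surrogate-prediction vector of an honestly i.i.d.\ surrogate sample, and verifying through the orthogonal decomposition of $\Sigma^{1/2}w$ that $\E_{\tilde\cD}[f_{\lambda}(\langle\phi(w),\tilde x\rangle+b,\tilde y)] = L_{f_{\lambda}}(w,b)$. Everything else --- the Moreau-envelope sandwich, the union bound, and the passage from \eqref{eqn:complexity-defn} to a statement about $H$ via $\Sigma^{1/2}Q = P\Sigma^{1/2}$ --- is routine, including the degenerate cases (if $C_{\delta}(w) = \infty$ the right-hand side is $+\infty$ and the bound is vacuous, and the feasible set is never empty since $(w,b,u) = (0,b,0)$ always satisfies the constraint).
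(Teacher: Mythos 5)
Your proof is correct and matches the paper's argument in all essential respects: identifying $(\tilde x_j,\tilde y_j)$ as an i.i.d.\ surrogate sample, reducing the ball constraint on $u$ to an $\ell_2$ budget of size $C_\delta(w)$ via the complexity assumption, converting $f$ into $f_\lambda$, and union-bounding the two probability-$\delta/4$ events. The one minor stylistic difference is that where the paper changes variables to make the constraint a centered ball and then runs a full Lagrangian min-max derivation (swapping $\inf_u$ and $\sup_\lambda$ and recognizing the Moreau envelope as the per-coordinate infimum), you use the one-sided inequality $f(u_j+b,y_j)\ge f_{\lambda}(\langle\phi(w),\tilde x_j\rangle+b,y_j)-\lambda(u_j-\langle\phi(w),\tilde x_j\rangle)^2$, which follows directly from \cref{def:moreau}. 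Since only the upper bound on $\Psi$ is needed, this shortcut is perfectly valid and slightly leaner than the exact-duality route; the handling of the uniform-in-$\lambda$ case and the degenerate cases ($C_\delta=\infty$, infeasible $w$) is also fine.
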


\begin{proof}
First, let's simplify the auxiliary problem.
Changing variables to subtract
$G_i \norm{P \Sigma^{1/2} w}_2 + \sum_{l=1}^k \langle w, \Sigma w_l^* \rangle \eta_{l,i}$ from each of the former $u_i$,
we have that
\begin{equation*}
    \begin{split}
        \Psi 
        &=  \sup_{\substack{ (w,b, u) \in \R^{d+n+1} \\ \norm{u}_2 \leq \langle H, P \Sigma^{1/2}w \rangle}}  \,  F(w, b) - \frac{1}{n} \sum_{i=1}^n f\left(G_i \|P\Sigma^{1/2}w \|_2 + \sum_{l=1}^k \langle w, \Sigma w_l^* \rangle \eta_{l, i} + b+ u_i, g(\eta_{1,i}, ..., \eta_{k,i}, \xi_i)\right) \\
        &=  \sup_{(w,b) \in \R^{d+1}}  \,  F(w, b) - \inf_{\substack{ u \in \R^n \text{s.t.} \\ \norm{u}_2 \leq \langle \Sigma^{1/2} PH, w \rangle}} \frac{1}{n} \sum_{i=1}^n f\left(G_i \|P\Sigma^{1/2}w \|_2 + \sum_{l=1}^k \langle w, \Sigma w_l^* \rangle \eta_{l, i} + b + u_i, g(\eta_{1,i}, ..., \eta_{k,i}, \xi_i) \right)
    \end{split}
\end{equation*}
We can analyze the second term. If $\langle \Sigma^{1/2} P H, w \rangle < 0$ then the constraint on $u$ is not satisfiable and so the infimum is $\infty$. Otherwise, by duality
\begin{equation*}
    \begin{split}
        &\inf_{\substack{ u \in \R^n \text{s.t.} \\ \norm{u}_2 \leq \langle \Sigma^{1/2} PH, w \rangle}} \sum_{i=1}^n f\left(G_i \|P\Sigma^{1/2}w \|_2 + \sum_{l=1}^k \langle w, \Sigma w_l^* \rangle \eta_{l, i} + b + u_i, g(\eta_{1,i}, ..., \eta_{k,i}, \xi_i)\right) \\
        =& \inf_{u \in \R^n} \sup_{\lambda \geq 0} \, \lambda (\| u\|_2^2 - \langle \Sigma^{1/2} PH, w \rangle^2) + \sum_{i=1}^n f\left(G_i \|P\Sigma^{1/2}w \|_2 + \sum_{l=1}^k \langle w, \Sigma w_l^* \rangle \eta_{l, i} + b + u_i, g(\eta_{1,i}, ..., \eta_{k,i}, \xi_i)\right)\\
        =&\sup_{\lambda \geq 0} - \lambda \langle \Sigma^{1/2} PH, w \rangle^2 + \inf_{u \in \R^n}  \, \sum_{i=1}^n f\left(G_i \|P\Sigma^{1/2}w \|_2 + \sum_{l=1}^k \langle w, \Sigma w_l^* \rangle \eta_{l, i} + b + u_i, g(\eta_{1,i}, ..., \eta_{k,i}, \xi_i)\right) + \lambda u_i^2\\
        =&\sup_{\lambda \geq 0} - \lambda \langle \Sigma^{1/2} PH, w \rangle^2 + \sum_{i=1}^n \inf_{u \in \R} f\left(G_i \|P\Sigma^{1/2}w \|_2 + \sum_{l=1}^k \langle w, \Sigma w_l^* \rangle \eta_{l, i} + b + u, g(\eta_{1,i}, ..., \eta_{k,i}, \xi_i)\right) + \lambda u^2\\
        =&\sup_{\lambda \geq 0} \, \sum_{i=1}^n  f_{\lambda}\left(G_i \|P\Sigma^{1/2}w \|_2 + \sum_{l=1}^k \langle w, \Sigma w_l^* \rangle \eta_{l, i} + b, g(\eta_{1,i}, ..., \eta_{k,i}, \xi_i)\right) -  \lambda \langle \Sigma^{1/2} PH, w \rangle^2
    ,\end{split}
\end{equation*}
recalling \cref{def:moreau}.
For simplicity of notation, write
\[
\tilde{x}_i = (\eta_{1,i}, ..., \eta_{k,i}, G_i) \sim \cN(0, I_{k+1})
;\]
then the joint distribution of $(\tilde{x}_i, y_i)$ is exactly the same as the surrogate distribution $\tilde{D}$ given by \eqref{eqn:surrogate-model}. Moreover, we can check that
\begin{equation*}
    \begin{split}
        P\Sigma^{1/2}w
        &= \left( I_d - \sum_{i=1}^k (\Sigma^{1/2} w_i^*)(\Sigma^{1/2} w_i^*)^T \right) \Sigma^{1/2}w\\
        &= \Sigma^{1/2} \left( I_d - \sum_{i=1}^k w_i^*( w_i^*)^T \Sigma \right) w\\
        &= \Sigma^{1/2} Q w
    \end{split}
\end{equation*}
and 
\begin{equation*}
    \begin{split}
        \Sigma^{1/2} PH 
        &= \Sigma^{1/2} \left( I_d - \sum_{i=1}^k (\Sigma^{1/2} w_i^*)(\Sigma^{1/2} w_i^*)^T \right) H \\
        &= \left( I_d - \sum_{i=1}^k (\Sigma w_i^*)( w_i^*)^T \right) \Sigma^{1/2} H = Q^T \Sigma^{1/2} H
    \end{split}
\end{equation*}
where $Q$ is given by equation \eqref{eqn:Q-defn}. Then using the definition of $\phi$ from \eqref{eqn:Q-defn}, we can write
\[
G_i \|P\Sigma^{1/2}w \|_2 + \sum_{l=1}^k \langle w, \Sigma w_l^* \rangle \eta_{l, i} = \langle \phi(w), \tilde{x}_i \rangle,
\]
giving that
\[
        \frac{1}{n} \sum_{i=1}^n  f_{\lambda}\left(G_i \|P\Sigma^{1/2}w \|_2 + \sum_{l=1}^k \langle w, \Sigma w_l^* \rangle \eta_{l, i} + b, g(\eta_{1,i}, \dots, \eta_{k,i}, \xi_i)\right)
        = \frac{1}{n} \sum_{i=1}^n  f_{\lambda}(\langle \phi(w), \tilde{x}_i \rangle + b, y_i)
.\]
By our assumption \eqref{eqn:low-dimension-concentration} and the observation in \cref{lem:conditional_dist} that the joint distribution of $(\langle \phi(w), \tilde{x} \rangle, y)$ is the same as that of $(\langle w, x\rangle, y)$, we have
\begin{equation*}
    \begin{split}
        \frac{1}{n} \sum_{i=1}^n  f_{\lambda}(\langle \phi(w), \tilde{x}_i \rangle + b, y_i)
        &\geq \E_{(\tilde{x}, \tilde{y}) \sim \tilde{D}} \, [f_{\lambda} (\langle \phi(w), \tilde{x} \rangle + b, \tilde{y}) ] - \epsilon_{\lambda, \delta}(\phi(w), b) \\
        &= L_{f_{\lambda}} (w, b) - \epsilon_{\lambda, \delta}(\phi(w), b) \\
    \end{split}
\end{equation*}
with probability at least $1-\delta/4$.

In addition, noting that $\Sigma^{1/2} H \sim \cN(0, \Sigma)$, our assumption \eqref{eqn:complexity-defn} implies that with probability at least $1 - \delta / 4$,
\[
    \langle \Sigma^{1/2} PH, w \rangle
    = \langle Q^T x, w\rangle
    = \langle Qw, x \rangle \leq C_{\delta} (w).
\]
The proof concludes by a union bound and plugging the above estimates into the expression for $\Psi$.
\end{proof}

Finally, we can prove our main theorem, restated here for convenience:

\MainGen*

\begin{proof}
By \cref{lem:main-gen-PO} and \cref{lem:main-gen-gmt-app}, we have
\[
\Pr \left( \, \sup_{(w, b) \in \R^{d+1}} \, F(w, b) - \hat{L}_f(w, b) > t \, \, \Big| \, \, \eta_1, ..., \eta_k, \xi \right) 
\leq 2 \Pr(\Psi \geq t).
\]
By the tower law and choosing 
\[
F(w, b) = L_{f_{\lambda}} (w, b) - \epsilon_{\lambda, \delta}(\phi(w), b) - \frac{\lambda C_{\delta} (w)^2}{n}
\]
in \cref{lem:main-gen-AO}, we get that
\[
\Pr \left( \, \sup_{(w, b) \in \R^{d+1}} \, L_{f_{\lambda}} (w, b) - \epsilon_{\lambda, \delta}(\phi(w), b) - \frac{\lambda C_{\delta} (w)^2}{n} - \hat{L}_f(w, b) > 0  \right) 
\leq \delta.
\]
as desired. If assumption \eqref{eqn:low-dimension-concentration} holds uniformly over $\lambda \in \R^+$, then we can choose
\[
F(w, b) = \sup_{\lambda \in \R^+} L_{f_{\lambda}} (w, b) - \epsilon_{\lambda, \delta}(\phi(w), b) - \frac{\lambda C_{\delta} (w)^2}{n}.
\]
It is straightforward to check that $F$ is continuous and the same proof goes through.
\end{proof}

\begin{remark}\label{rmk:one-sided}
Since the dimension of $\tilde{x}$ is small, we can typically expect \eqref{eqn:low-dimension-concentration} to hold for reasonable settings with a sufficiently large sample size. 
Note that this is our only assumption on $f, g$ and $\xi$, and this is required to avoid pathological learning problems. A useful aspect of the assumption \eqref{eqn:low-dimension-concentration} is that it only requires \emph{one-sided concentration} of the training loss. As emphasized by many works in statistical learning theory (e.g. \cite{lecue2013learning,mendelson2014learning,koltchinskii2015bounding,mendelson2017extending}), lower bounds on the training loss are both more convenient to establish and hold in more generic settings than upper bounds do. In this paper, we will largely apply results from VC theory to handle the low-dimensional problem; the results we appeal to are indeed one-sided and can handle relatively heavy-tailed noise \citep{vapnik2006estimation}.
\end{remark}

\section{Proof for VC theory and Section~\ref{sec:applications}} \label{apdx:applications}

\subsection{Low-Dimensional Concentration}

Recall the following definition of VC-dimension from \citet{shalev2014understanding}.

\begin{defn}
Let $\cH$ be a class of functions from $\cX$ to $\{0, 1\}$ and let $C = \{c_1, ..., c_m \} \subset \cX$. The restriction of $\cH$ to $C$ is
\[
\cH_C = \{(h(c_1), ..., h(c_m)) : h \in \cH \}.
\]
A hypothesis class $\cH$ \emph{shatters} a finite set $C \subset \cX$ if $|\cH_C| = 2^{|C|}$. The VC-dimension of $\cH$ is the maximal size of a set that can be shattered by $\cH$. If $\cH$ can shatter sets of arbitrary large size, we say $\cH$ has infinite VC-dimension.
\end{defn}

Also, we have the following well-known result for the class of nonhomogenous halfspaces in $\R^d$ \citep[Theorem 9.3 of][ ]{shalev2014understanding}:

\begin{theorem} \label{thm:linear-vc}
The class $\{ x \mapsto \normalfont{sign}(\langle w, x\rangle + b): w \in \R^d, b \in \R \}$ has VC-dimension $d+1$.
\end{theorem}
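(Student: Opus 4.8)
The plan is to prove the two inequalities $\mathrm{VCdim} \ge d+1$ and $\mathrm{VCdim} \le d+1$ separately.

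For the lower bound, I would exhibit an explicit shattered set of size $d+1$: take $C = \{0, e_1, \ldots, e_d\}$ where $e_1,\ldots,e_d$ is the standard basis of $\R^d$. Given any target labeling $(s_0, s_1, \ldots, s_d) \in \{-1,+1\}^{d+1}$, set the bias $b = \tfrac12 s_0$, so that $\mathrm{sign}(\langle w, 0 \rangle + b) = s_0$ for every $w$, and then set the $j$-th coordinate of $w$ to $w_j = s_j - b$, so that $\langle w, e_j \rangle + b = w_j + b = s_j$ has sign $s_j$. Since all $2^{d+1}$ labelings are realized, $\mathrm{VCdim} \ge d+1$.

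For the upper bound, I would first reduce affine classifiers on $\R^d$ to homogeneous (linear) classifiers on $\R^{d+1}$ via the lift $x \mapsto \bar x := (x,1)$, under which $\langle w, x \rangle + b = \langle (w,b), \bar x \rangle$; so it suffices to show that no $d+2$ points $\bar x_1, \ldots, \bar x_{d+2} \in \R^{d+1}$ can be shattered by sign-of-linear-functional classifiers. Here I would invoke Radon's theorem: any collection of $d+2$ points in $\R^{d+1}$ admits a partition of its index set into two nonempty blocks $I$ and $J$ together with convex weights $(\alpha_i)_{i \in I}$ and $(\beta_j)_{j \in J}$ with $\sum_{i \in I} \alpha_i \bar x_i = \sum_{j \in J} \beta_j \bar x_j =: p$. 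If some $v \in \R^{d+1}$ realized the labeling that is $+1$ on $I$ and $-1$ on $J$, then pairing $v$ with $p$ forces $\langle v, p \rangle$ to be simultaneously nonnegative (expanding via the $I$-side, since each $\langle v, \bar x_i \rangle \ge 0$) and strictly negative (expanding via the $J$-side, since each $\langle v, \bar x_j \rangle < 0$ and the $\beta_j \ge 0$ sum to one) — a contradiction. The argument is symmetric between the two blocks and robust to the chosen convention for $\mathrm{sign}(0)$, since at most one side needs the non-strict inequality. Hence no $(d+2)$-point set is shattered, so $\mathrm{VCdim} \le d+1$, and combining with the lower bound gives $\mathrm{VCdim} = d+1$.

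The only real content here is Radon's theorem; the main thing to be careful about is ensuring the Radon partition has both blocks nonempty and that the strict versus non-strict inequalities line up with the sign convention in force. Everything else is elementary bookkeeping.
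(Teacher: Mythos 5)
The paper does not give its own proof of this fact; it cites it directly as Theorem 9.3 of Shalev-Shwartz and Ben-David. Your overall strategy (explicit $(d+1)$-point shattered set for the lower bound; lift to homogeneous classifiers in $\R^{d+1}$ plus a Radon/linear-dependence argument for the upper bound) is exactly the standard textbook route, and your lower-bound construction and the final contradiction are correct.

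There is, however, one misstatement in the upper bound that needs fixing. You write that ``any collection of $d+2$ points in $\R^{d+1}$ admits a [Radon] partition,'' but that is false: Radon's theorem in $\R^{d+1}$ requires $d+3$ points, and $d+2$ points in general position in $\R^{d+1}$ (the vertices of a simplex) have no Radon partition. What saves you is the special structure of the lift: all $d+2$ points $\bar x_i = (x_i, 1)$ lie in the affine hyperplane $\{z \in \R^{d+1} : z_{d+1} = 1\}$, which is a copy of $\R^d$, and Radon's theorem for $d+2$ points \emph{in $\R^d$} then gives the desired partition with both blocks nonempty. Equivalently, and perhaps cleanest: take any nontrivial linear dependence $\sum_i c_i \bar x_i = 0$ among the $d+2$ vectors in $\R^{d+1}$; reading off the last coordinate gives $\sum_i c_i = 0$, which forces both $I = \{i : c_i > 0\}$ and $J = \{i : c_i < 0\}$ to be nonempty, and normalizing gives the convex weights. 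With that correction the proof is complete, and your handling of the $\mathrm{sign}(0)$ convention is fine as written.
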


We will make use of the following result from \citet{vapnik2006estimation}:

\begin{comment}
\begin{theorem}[Special Case of Assertion 3 of Chapter 7.8 of \cite{vapnik2006estimation}]\label{thm:vapnik-3}

Suppose that a data distribution $\cD$ over $(x,y) \in \R^d \times \R$ satisfies for all $(w, b) \in \R^{d+1}$
\[
\E f(\langle w, x \rangle + b, y)^4 < \infty,
\]
and the class of functions $ \{ (x,y) \mapsto \bone\{f(\langle w, x \rangle + b, y) > t\} : w \in \mathbb{R}^d, b \in \mathbb{R}, t \in \mathbb{R} \} $ has VC dimension at most $h$, then for any $n > h$, with probability at least $1 - \delta$ over $(x_1, y_1), ..., (x_n, y_n) \sim \cD$ i.i.d., it holds uniformly over all $(w, b) \in \R^{d+1}$ that
\begin{equation}
    \begin{split}
        \frac{1}{n} \sum_{i=1}^n f(\langle w, x_i \rangle + b, y_i) \geq &\E f(\langle w, x \rangle + b, y) \\
        & -8 \sqrt{\frac{h \log(2n) - \log(h!) + \log(12/\delta)}{n}} \, \sqrt[4]{\E f(\langle w, x\rangle + b, y)^4}.
    \end{split}
\end{equation}
\end{theorem}
\end{comment}

\VCtheory* 

Combining with theorem~\ref{thm:main-gen}, we obtain the following corollary.

\VCgen*

\begin{proof}
By theorem~\ref{thm:vc-hypercontractive}, we can take
\[
\epsilon_{\lambda, \delta}(\tilde{w}, \tilde{b}) = 
\begin{cases}
    8 \tau \sqrt{\frac{h(\log(2n/h) + 1) + \log(48/\delta)}{n}} \E_{(\tilde{x}, \tilde{y}) \sim \tilde{\cD}} [f_{\lambda}(\langle \tilde{w}, \tilde{x} \rangle + \tilde{b}, \tilde{y})] 
    &\mbox{ if } (\tilde{w}, \tilde{b}) \in \phi(\cK) \times \cB\\ 
    \infty &\mbox{ otherwise }
\end{cases}
\]
and the desired conclusion follows by the observation that
\[
\E_{(\tilde{x}, \tilde{y}) \sim \tilde{\cD}} [f_{\lambda}(\langle \phi(w), \tilde{x} \rangle + b, \tilde{y})] = L_{f_{\lambda}} (w, b).
\]
The last conclusion (uniformity over $\lambda$) follows by going through the proof of Theorem~\ref{thm:vc-hypercontractive}, since it is based on reduction to uniform control of indicators.  
\end{proof}

\subsection{Linear Regression}

First, we provide a VC-dimension bound for the square loss class.
\begin{lemma} \label{lem:vc-square-loss}
Suppose $f$ is the square loss, then the VC-dimension of the class 
\[
\{(x,y) \mapsto \bone \{ f_{\lambda}(\langle \tilde w, \phi(x) \rangle + \tilde b, y) > t \} : (\tilde w, \tilde b) \in \mathbb{R}^{k + 2}, t \in \mathbb{R}, \lambda \in \mathbb{R}_{\ge 0}\}
\] 
is $O(k)$.
\end{lemma}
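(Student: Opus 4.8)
The plan is to absorb the Moreau-envelope scaling factor $\lambda/(1+\lambda)$ into the threshold $t$, reducing the class to (essentially) the class of exteriors of slabs in $\mathbb{R}^{k+2}$, whose VC dimension is $O(k)$ by reduction to halfspaces.

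Recall from \eqref{eqn:square-loss-moreau} that for the square loss $f_\lambda(\hat y, y) = \frac{\lambda}{1+\lambda}(\hat y - y)^2$. Writing $c := \lambda/(1+\lambda)$, which ranges over $[0,1)$ as $\lambda$ ranges over $\mathbb{R}_{\ge 0}$, every function in the class is of the form $(x,y)\mapsto \bone\{\,c\,(\langle \tilde w,\phi(x)\rangle + \tilde b - y)^2 > t\,\}$. I would split into three cases: (i) if $c=0$ the function is the constant $\bone\{t<0\}$; (ii) if $c>0$ and $t<0$ the function is identically $1$; (iii) if $c>0$ and $t\ge 0$, then dividing by $c$ and taking square roots, the function equals $\bone\{\,\lvert\langle \tilde w,\phi(x)\rangle + \tilde b - y\rvert > r\,\}$ with $r := \sqrt{t/c} \ge 0$. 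Hence the whole class is contained in $\{\mathbf 0,\mathbf 1\}$ together with
\[
\mathcal G := \bigl\{\,(x,y)\mapsto \bone\bigl\{\,\lvert \langle \tilde w,\phi(x)\rangle + \tilde b - y\rvert > r\,\bigr\} \ :\ \tilde w \in \mathbb{R}^{k+1},\ \tilde b \in \mathbb{R},\ r \ge 0 \,\bigr\}.
\]

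It remains to bound $\mathrm{VCdim}(\mathcal G)$, since enlarging a class by $O(1)$ functions changes the VC dimension by $O(1)$. Putting $z := (\phi(x),y)\in\mathbb{R}^{k+2}$ and using that composing a set system with the fixed map $(x,y)\mapsto z$ cannot increase the VC dimension, it suffices to bound the VC dimension of the class of subsets of $\mathbb{R}^{k+2}$ of the form $\{z : \lvert\langle v,z\rangle + \tilde b\rvert > r\}$ (an ``exterior of a slab''). Each such set is the union of the two halfspaces $\{z : \langle v,z\rangle + \tilde b > r\}$ and $\{z : \langle v,z\rangle + \tilde b < -r\}$ in $\mathbb{R}^{k+2}$, and the class of all affine halfspaces in $\mathbb{R}^{k+2}$ has VC dimension $k+3$ by Theorem~\ref{thm:linear-vc}. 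By the Sauer--Shelah lemma, the growth function of a union of two classes is at most the product of their growth functions, which for halfspaces is at most $(em/(k+3))^{2(k+3)}$; this falls below $2^m$ once $m \ge C(k+3)$ for an absolute constant $C$, so $\mathrm{VCdim}(\mathcal G) = O(k)$, as desired.

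The only subtlety, and the reason the bound is $O(k)$ rather than $O(k^2)$, is that the square loss is quadratic in $\langle \tilde w,\phi(x)\rangle + \tilde b - y$: naively linearizing this quadratic using the degree-$\le 2$ monomials of $(\phi(x),y)$ would land us in a halfspace class of dimension $\Theta(k^2)$. Taking the square root in case (iii) is exactly what avoids this blow-up, at the harmless cost of the short case analysis on the signs of $c$ and $t$.
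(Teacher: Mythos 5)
Your proof is correct and follows essentially the same route as the paper: reduce the Moreau-smoothed square-loss superlevel set to (the exterior of) a slab, view that as a union of two affine halfspaces in $\mathbb{R}^{k+2}$, and invoke the $O(k)$ bound on the VC dimension of unions of two halfspaces. The only cosmetic differences are that you make the degenerate cases ($\lambda=0$ or $t<0$) explicit and carry out the Sauer--Shelah product bound by hand where the paper cites Lemma~3.23 of \citet{blumer1989learnability} for the same conclusion.
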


\begin{proof}
Since the square loss is non-negative, we only need to consider $t \geq 0$. Recall that $f_{\lambda} = \frac{\lambda}{1 + \lambda} f$ for the square loss and so
\begin{equation*}
    f_{\lambda}(\langle \tilde w, \phi(x) \rangle + \tilde b, y) > t 
    \iff (\langle \tilde w, \phi(x) \rangle + \tilde b - y)^2 > \frac{(1 + \lambda) t}{\lambda} \\
\end{equation*}
which happens if
\[
\left\langle  
\begin{pmatrix}
\tilde{w} \\
-1
\end{pmatrix}, 
\begin{pmatrix}
\phi(x) \\
y
\end{pmatrix}
\right\rangle 
+ \left( \tilde{b} - \sqrt{\frac{(1 + \lambda) t}{\lambda}} \right) > 0
\quad \text{ or } \quad
\left\langle  
\begin{pmatrix}
-\tilde{w} \\
1
\end{pmatrix}, 
\begin{pmatrix}
\phi(x) \\
y
\end{pmatrix}
\right\rangle - \left( \tilde{b} + \sqrt{\frac{(1 + \lambda) t}{\lambda}} \right) > 0.
\]
In particular, if this concept class can shatter $m$ points, so can the class of the union of two nonhomogenous halfspaces in $\R^{k+2}$. The desired conclusion follows by the well-known fact that the VC-dimension of the union of two halfspaces is $O(k)$. For example, by combining Theorem \ref{thm:linear-vc} with Lemma 3.23 of \citet{blumer1989learnability}, the VC-dimension cannot be larger than $4\log6 \cdot (k+3)$.
\end{proof}

Specializing our generalization theory to the square loss, we have:

\GenSquareLoss*

\begin{proof}
Note that if condition \eqref{eqn:hypercontractive-assumption} holds under $f$, then it also holds under all $\{f_{\lambda}: \lambda \geq 0 \}$ because $f_{\lambda} = \frac{\lambda}{1 + \lambda} f$. Moreover, we check the assumption on VC-dimension of Corollary~\ref{corr:main-gen-vc} in Lemma~\ref{lem:vc-square-loss}. From this, we get uniformly over $\lambda,w,b$ that
\[ \frac{\lambda}{1 + \lambda}\left(1 - 8\tau\sqrt{\frac{k(\log(2n/k) + 1) + \log(48/\delta)}{n}}\right) L_f(w,b) \le \hat{L}_f(w,b) +  \frac{\lambda C_{\delta}(w)^2}{n}.   \]
Multiplying through by $(1 + \lambda)/\lambda$, we can rewrites the above as 
\[
\left(1 - 8\tau\sqrt{\frac{k(\log(2n/k) + 1) + \log(48/\delta)}{n}}\right) L_f(w,b) 
\leq \left(1 + \frac{1}{\lambda} \right) \hat{L}_f(w,b) +  (1 + \lambda)\frac{ C_{\delta}(w)^2}{n} 
\]
and optimizing over $\lambda$ gives
\begin{equation*}
    \begin{split}
        & \left(1 - 8\tau\sqrt{\frac{k(\log(2n/k) + 1) + \log(48/\delta)}{n}}\right) L_f(w,b) \\
        \leq &\, \hat{L}_f(w,b) + \frac{ C_{\delta}(w)^2}{n} + \inf_{\lambda \geq 0} \, \frac{1}{\lambda} \hat{L}_f(w,b) + \lambda\frac{ C_{\delta}(w)^2}{n} \\
        = &\, \hat{L}_f(w,b) + \frac{ C_{\delta}(w)^2}{n}  + 2 \sqrt{\hat{L}_f(w,b) \frac{ C_{\delta}(w)^2}{n} } = \left(\sqrt{\hat{L}_f(w,b)} + C_{\delta}(w)/\sqrt{n}\right)^2.
    \qedhere \end{split}
\end{equation*}
\end{proof}

Finally, as an illustrative example, we consider the misspecified model mentioned in the main text where the true regression function is a polynomial. In this case, we show explicitly how to get an expression for $\tau$ in \eqref{eqn:hypercontractive-assumption} using Gaussian hypercontractivity. The following theorem is the Gaussian space analogue of Theorem 9.21 in \citet{o2014analysis} and can be proved using the same argument by Theorem 11.23 and replacing the Fourier basis on $\{-1, 1\}^n$ with the Hermite polynomials on $\R^n$.

\begin{theorem}[\cite{o2014analysis}] \label{thm:gaussian-hypercontractivity}
Let $f: \R^d \to \R$ be a polynomial of degree at most $k$. Then for any $q \geq 2$, it holds that
\begin{equation}
    \E_{z \sim \cN(0, I_d)} [|f(z)|^q]^{1/q} \leq (q - 1)^{k/2} \E_{z \sim \cN(0, I_d)} [|f(z)|^2]^{1/2}.
\end{equation}
\end{theorem}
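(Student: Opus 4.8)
The plan is to transport the standard Boolean-cube argument of \citet[Theorem 9.21]{o2014analysis} to Gaussian space, the only new ingredient being the Gaussian hypercontractive inequality for the Ornstein--Uhlenbeck (Mehler) semigroup. Write $\gamma_d$ for the standard Gaussian measure on $\R^d$ and $U_\rho$ for the OU operator (the one whose action is given by Mehler's formula). The base fact I would invoke is the \emph{Gaussian $(2,q)$-hypercontractive inequality}: for $q \ge 2$ and $0 \le \rho \le 1/\sqrt{q-1}$, one has $\lVert U_\rho h \rVert_{L^q(\gamma_d)} \le \lVert h \rVert_{L^2(\gamma_d)}$ for all $h \in L^2(\gamma_d)$. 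This is \citet[Theorem 11.23]{o2014analysis}; its proof realizes a function of a single Gaussian coordinate as an $L^q$-limit of multilinear functions of $m$ i.i.d.\ Rademacher bits (with the coordinate $\approx (b_1+\cdots+b_m)/\sqrt m$), applies the dimension-free Bonami--Beckner inequality on $\{-1,1\}^m$, and passes to the limit; tensorizing over the $d$ coordinates gives the statement for general $d$.

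The remaining step is purely algebraic. Decompose the degree-$k$ polynomial $f$ into its Hermite (Wiener) chaos components $f = \sum_{j=0}^k f^{=j}$, where $f^{=j}$ has Hermite degree exactly $j$; then $U_\rho f = \sum_{j=0}^k \rho^j f^{=j}$, so $U_\rho$ is a bijection on the finite-dimensional space of polynomials of degree $\le k$ with $U_\rho^{-1} f = \sum_{j=0}^k \rho^{-j} f^{=j}$. Fix $\rho = 1/\sqrt{q-1} \le 1$, write $f = U_\rho(U_\rho^{-1} f)$, and apply the base inequality to $h = U_\rho^{-1} f$:
\begin{equation*}
\lVert f \rVert_{L^q(\gamma_d)} = \bigl\lVert U_\rho\bigl(U_\rho^{-1} f\bigr) \bigr\rVert_{L^q(\gamma_d)} \le \bigl\lVert U_\rho^{-1} f \bigr\rVert_{L^2(\gamma_d)} .
\end{equation*}
By orthogonality of the Hermite chaoses in $L^2(\gamma_d)$ and $\rho^{-2j} \le \rho^{-2k}$ for $0 \le j \le k$,
\begin{equation*}
\bigl\lVert U_\rho^{-1} f \bigr\rVert_{L^2(\gamma_d)}^2 = \sum_{j=0}^k \rho^{-2j} \bigl\lVert f^{=j} \bigr\rVert_{L^2(\gamma_d)}^2 \le \rho^{-2k} \sum_{j=0}^k \bigl\lVert f^{=j} \bigr\rVert_{L^2(\gamma_d)}^2 = (q-1)^k \, \lVert f \rVert_{L^2(\gamma_d)}^2 ,
\end{equation*}
and combining the two displays yields $\lVert f \rVert_{L^q(\gamma_d)} \le (q-1)^{k/2} \lVert f \rVert_{L^2(\gamma_d)}$, which is exactly the claimed bound (all moments are finite since $f$ is a polynomial and $\gamma_d$ has moments of every order).

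The diagonal action of $U_\rho$ on chaos and the elementary $L^2$ estimate are routine; the real content sits in the base case, i.e.\ the Gaussian $(2,q)$-hypercontractivity of $U_\rho$. If one wants that self-contained rather than cited, the delicate point is the central-limit limiting argument: one must justify convergence of the relevant $L^q$ norms of the multilinear approximants (a uniform-integrability or truncation argument) and identify the limiting operators with $U_\rho$. Alternatively, one can cite Nelson's hypercontractivity theorem directly, or deduce it from the Gaussian log-Sobolev inequality via the usual semigroup differentiation argument, after which only the second, purely algebraic, step above is needed.
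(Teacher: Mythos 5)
Your proposal is correct and is exactly the argument the paper indicates: it adapts O'Donnell's Theorem 9.21 to Gaussian space by replacing the Fourier expansion with the Hermite chaos decomposition and invoking Gaussian $(2,q)$-hypercontractivity of the Ornstein--Uhlenbeck semigroup (Theorem 11.23 there). No discrepancy to report.
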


\begin{theorem} \label{thm:poly-hypercontractivity}
Suppose that in \eqref{eqn:model}, we have
\[
y = m(\eta_1, ..., \eta_k) + s(\eta_1, ..., \eta_k) \cdot \xi
\]
where $m, s$ are both polynomials of degree at most $l$ and $\xi$ has finite eighth moment, then 
\begin{equation}
    \frac{\E [(\langle w, x\rangle + b - y)^8]^{1/8}}{\E [(\langle w, x\rangle + b - y)^2]^{1/2}} \leq \sqrt{2} \cdot \sqrt{7}^l \left(\frac{E [\xi^8]^{1/8}}{E [\xi^2]^{1/2}} \right).
\end{equation}
\end{theorem}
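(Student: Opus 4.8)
The plan is to reduce the statement to a fact about polynomials of a standard Gaussian perturbed by independent noise, and then combine Minkowski's inequality with Gaussian hypercontractivity (\cref{thm:gaussian-hypercontractivity}). First I would write $x = \Sigma^{1/2} z$ with $z \sim \cN(0, I_d)$; then each $\eta_i = \langle w_i^*, x\rangle = \langle \Sigma^{1/2} w_i^*, z\rangle$ is linear in $z$, so $m(\eta_1,\dots,\eta_k)$ and $s(\eta_1,\dots,\eta_k)$ become polynomials in $z$ of degree at most $l$, while $\langle w, x\rangle + b$ is affine in $z$. Since $\E[\xi]\, s(\eta_1,\dots,\eta_k)$ is itself a polynomial of degree at most $l$, I would absorb it into $m$ and thereby assume without loss of generality that $\E[\xi] = 0$ (so that $\E[\xi^2]$ denotes $\Var(\xi)$). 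Setting the residual $r := \langle w, x\rangle + b - y$, we may then write $r = p(z) - s(z)\,\xi$, where $p(z) := \langle w, x\rangle + b - m(\eta_1,\dots,\eta_k)$ and $s(z) := s(\eta_1,\dots,\eta_k)$ are polynomials in the standard Gaussian $z$ of degree at most $l$ (taking the degree to be $1$ if $l = 0$), and $\xi$ is centered and independent of $z$. The target is to bound $\E[r^8]^{1/8}/\E[r^2]^{1/2}$, which is exactly the left-hand side of the claim.

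For the numerator I would apply the triangle inequality in $L^8$ and then independence of $z$ and $\xi$ to get $\E[r^8]^{1/8} \le \norm{p(z)}_8 + \norm{s(z)}_8\,\norm{\xi}_8$ (writing $\norm{\cdot}_q$ for the $L^q$ norm of a random variable). Because $p$ and $s$ are polynomials of degree at most $l$ in the standard Gaussian $z$, \cref{thm:gaussian-hypercontractivity} applied with exponent $8$ (so the constant is $(8-1)^{l/2} = 7^{l/2}$) gives $\norm{p(z)}_8 \le 7^{l/2}\norm{p(z)}_2$ and $\norm{s(z)}_8 \le 7^{l/2}\norm{s(z)}_2$, hence $\E[r^8]^{1/8} \le 7^{l/2}\bigl(\norm{p(z)}_2 + \norm{s(z)}_2\,\norm{\xi}_8\bigr)$. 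For the denominator I would expand the square and use $\E[\xi]=0$ together with independence to kill the cross term, obtaining $\E[r^2] = \norm{p(z)}_2^2 + \norm{s(z)}_2^2\,\E[\xi^2]$.

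It then remains to combine the two bounds. With $a = \norm{p(z)}_2$, $c = \norm{s(z)}_2$, $\sigma = \E[\xi^2]^{1/2}$, and $\beta = \E[\xi^8]^{1/8}$ (note $\beta \ge \sigma$ by monotonicity of $L^q$ norms), the claim reduces to the elementary inequality
\[ \frac{a + c\beta}{\sqrt{a^2 + c^2 \sigma^2}} \le \sqrt{2}\,\frac{\beta}{\sigma} \qquad (a, c \ge 0), \]
which follows from $\sqrt{a^2 + c^2\sigma^2} \ge \tfrac{1}{\sqrt 2}(a + c\sigma)$ — equivalently $(a-c\sigma)^2 \ge 0$ — followed by $\frac{a + c\beta}{a + c\sigma} \le \frac{\beta}{\sigma}$, which rearranges to $\sigma a \le \beta a$. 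Chaining these with the two displayed bounds yields $\E[r^8]^{1/8}/\E[r^2]^{1/2} \le \sqrt 2\,\sqrt 7^{\,l}\,\beta/\sigma$, as desired. The only genuinely delicate point is the bookkeeping around $\xi$: the centering reduction is precisely what makes the cross term in $\E[r^2]$ vanish so the denominator splits cleanly into the two squared norms, and the two-variable inequality above is what lets the polynomial contributions $a$ and $c$ drop out so the final ratio involves only the moments of $\xi$. Degenerate cases ($\xi$ almost surely constant, or $r \equiv 0$) are trivial or vacuous, and everything else is a direct application of \cref{thm:gaussian-hypercontractivity}.
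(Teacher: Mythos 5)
Your proof is correct and takes essentially the same route as the paper: Minkowski's inequality in $L^8$ to split off the noise term, Gaussian hypercontractivity (\cref{thm:gaussian-hypercontractivity}) with $q = 8$ to get the $\sqrt{7}^{\,l}$ factor, and then the two elementary steps $\beta \ge \sigma$ and $a + c\sigma \le \sqrt{2}\sqrt{a^2 + c^2\sigma^2}$ to finish (you simply apply them in the opposite order from the paper, which is immaterial). The one thing you make explicit that the paper does not is that the clean orthogonal decomposition $\E[r^2] = \norm{p(z)}_2^2 + \norm{s(z)}_2^2\E[\xi^2]$ requires $\E[\xi] = 0$; the paper silently assumes this. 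Your recentering remark is the right instinct, but note it does not fully dispose of the issue as a ``WLOG'': after replacing $\xi$ by $\xi - \E[\xi]$, the moment ratio $\E[\xi^8]^{1/8}/\E[\xi^2]^{1/2}$ in the conclusion is then with respect to the centered noise, which is not in general dominated by the uncentered ratio — so the theorem statement really does carry an implicit $\E[\xi] = 0$ hypothesis (or a reinterpretation of the $\xi$-moments as central moments), in both your argument and the paper's.
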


\begin{proof}
By triangular inequality in the $\ell_p$ space and independence between $x$ and $\xi$ \begin{equation*}
    \begin{split}
        \E [(\langle w, x\rangle + b - y)^8]^{1/8} 
        &\leq \E [(\langle w, x\rangle + b - m(\eta_1, ..., \eta_k))^8]^{1/8} + \E [(s(\eta_1, ..., \eta_k) \cdot \xi)^8]^{1/8} \\
        &=\E [(\langle w, x\rangle + b - m(\eta_1, ..., \eta_k))^8]^{1/8} + \E [s(\eta_1, ..., \eta_k)^8]^{1/8} \cdot \E [\xi^8]^{1/8} \\
    \end{split}
\end{equation*}
Since $\langle w, x \rangle, \eta_1, ..., \eta_k$ are jointly Gaussian, we can apply Theorem~\ref{thm:gaussian-hypercontractivity} and upper bound the above by
\begin{equation*}
    \begin{split}
        &\sqrt{7}^l \left( \E [(\langle w, x\rangle + b - m(\eta_1, ..., \eta_k))^2]^{1/2} + \E [s(\eta_1, ..., \eta_k)^2]^{1/2} \cdot \E [\xi^8]^{1/8} \right) \\
        \leq \, & \sqrt{7}^l \left(\frac{E [\xi^8]^{1/8}}{E [\xi^2]^{1/2}} \right) \left( \E [(\langle w, x\rangle + b - m(\eta_1, ..., \eta_k))^2]^{1/2} + \E [s(\eta_1, ..., \eta_k)^2]^{1/2} \cdot \E [\xi^2]^{1/2} \right) \\
        \leq \, & \sqrt{7}^l \left(\frac{E [\xi^8]^{1/8}}{E [\xi^2]^{1/2}} \right) \sqrt{2} \cdot \sqrt{\E [(\langle w, x\rangle + b - m(\eta_1, ..., \eta_k))^2] +  \E[s(\eta_1, ..., \eta_k)^2] \cdot \E [\xi^2]}
    \end{split}
\end{equation*}
where we use $E [\xi^8]^{1/8} \geq \E [\xi^2]^{1/2}$ in the second inequality and $\sqrt{a} + \sqrt{b} \leq \sqrt{2(a+b)}$ in the last inequality. The desired conclusion follows by observing
\[
\E [(\langle w, x\rangle + b - y)^2] = \E [(\langle w, x\rangle + b - m(\eta_1, ..., \eta_k))^2] +  \E[s(\eta_1, ..., \eta_k)^2] \cdot \E [\xi^2]
\]
because $x$ and $\xi$ are independent.
\end{proof}

\begin{remark}
The assumption that $\xi$ has finite eighth moment can be significantly relaxed because there is a version of Theorem~\ref{thm:vc-hypercontractive} in \citet{vapnik2006estimation} that replaces the exponent of 4 by $1 + \epsilon$. However, allowing heavier tails of $\xi$ comes at the cost of a larger constant in front of $\tau$ or a slower convergence rate with respect to $n$ in the low-dimensional concentration term.
\end{remark}

\subsection{Linear Classification} \label{apdx:linear-clf}

\begin{lemma}
Suppose $f$ is the squared hinge loss, then the VC-dimension of the class 
\[
\{(x,y) \mapsto \bone \{ f_{\lambda}(\langle \tilde w, \phi(x) \rangle + \tilde b, y) > t \} : (\tilde w, \tilde b) \in \mathbb{R}^{k + 2}, t \in \mathbb{R}, \lambda \in \mathbb{R}_{\ge 0}\}
\] 
is no larger than $k+3$.
\end{lemma}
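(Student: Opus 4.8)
The plan is to reduce this indicator class to the class of nonhomogeneous halfspaces in $\mathbb{R}^{k+2}$ and then invoke \cref{thm:linear-vc}. The key feature that makes the bound $k+3$ rather than $O(k)$ (as in \cref{lem:vc-square-loss}) is that the outer $\max(0,\cdot)$ in the squared hinge loss kills one of the two halfspace branches, so no union of halfspaces is required.

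First I would record the Moreau envelope formula already computed in the main text: for the squared hinge loss, $f_{\lambda}(\hat y, y) = \frac{\lambda}{1+\lambda}\max(0, 1 - y\hat y)^2$. Writing $\hat y = \langle \tilde w, \phi(x)\rangle + \tilde b$ and recalling $y \in \{\pm 1\}$, I would case on $t$ and $\lambda$. If $t < 0$ then $\{f_{\lambda}(\hat y,y) > t\}$ is all of $(x,y)$-space; if $\lambda = 0$ then $f_0 \equiv 0$, so the event is empty for $t \ge 0$; and if $t \ge 0$ and $\lambda > 0$, then since the branch $1 - y\hat y \le 0$ contributes $f_{\lambda} = 0 \le t$, the event is precisely $\{\, 1 - y\hat y > s \,\}$ where $s := \sqrt{(1+\lambda)t/\lambda} \ge 0$ (the condition $1 - y\hat y > 0$ is automatic because $s \ge 0$).

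Next I would linearize $\{1 - y\hat y > s\}$ in the augmented feature $\zeta := (y\phi(x), y) \in \mathbb{R}^{k+2}$: since $y\hat y = \langle \tilde w, y\phi(x)\rangle + \tilde b\, y = \langle (\tilde w, \tilde b), \zeta\rangle$, the event is $\langle (\tilde w,\tilde b), \zeta\rangle < 1 - s$, i.e.\ an open nonhomogeneous halfspace in $\mathbb{R}^{k+2}$ evaluated at $\zeta$. Hence every function in the class is either constant (all-zeros or all-ones, each itself realized by the degenerate halfspace with zero weight vector) or the composition of an open-halfspace indicator on $\mathbb{R}^{k+2}$ with the fixed map $(x,y) \mapsto (y\phi(x), y)$. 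Since VC dimension is unchanged by complementation and cannot increase under precomposition with a fixed map --- if the composed class shatters a finite set $C$, that map is injective on $C$ and the halfspace class shatters its image, a set of the same cardinality --- the VC dimension of our class is at most that of nonhomogeneous halfspaces in $\mathbb{R}^{k+2}$, which is $k+3$ by \cref{thm:linear-vc}.

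I do not expect a genuine obstacle; the only point needing a word of care is the boundary convention ($f_\lambda > t$ giving an \emph{open} halfspace, versus the closed/sign version in \cref{thm:linear-vc}). Open and closed nonhomogeneous halfspaces in $\mathbb{R}^{k+2}$ have the same VC dimension $k+3$ --- any shattered configuration can be perturbed off the separating hyperplanes --- so this does not affect the constant, and the stated bound $k+3$ follows.
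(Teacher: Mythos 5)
Your proposal is correct and follows essentially the same route as the paper: for $t\ge 0$ and $\lambda>0$, exploit that the outer $\max(0,\cdot)^2$ makes the sublevel set a single strict inequality $1-y\hat y > s$ with $s\ge 0$, rewrite this as a nonhomogeneous linear threshold in the augmented feature $(y\phi(x),y)\in\mathbb R^{k+2}$, and invoke \cref{thm:linear-vc} to get VC dimension at most $k+3$. The only differences are cosmetic: you spell out the degenerate cases $t<0$ and $\lambda=0$ (both giving constant functions, trivially in the halfspace class), make explicit why precomposition with a fixed map cannot increase VC dimension, and address the open-versus-closed halfspace convention; the paper folds these into one displayed chain of equivalences and an unstated appeal to the same facts.
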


\begin{proof}
Since the squared hinge loss is non-negative, we only need to consider $t \geq 0$. Recall that $f_{\lambda} = \frac{\lambda}{1 + \lambda} f$ and so
\begin{equation*}
    \begin{split}
        f_{\lambda}(\langle \tilde w, \phi(x) \rangle + \tilde b, y) > t 
        & \iff (1 - y(\langle \tilde w, \phi(x) \rangle + \tilde b))_+^2 > \frac{(1 + \lambda) t}{\lambda} \\
        & \iff 1 - y(\langle \tilde w, \phi(x) \rangle + \tilde b) > \sqrt{\frac{(1 + \lambda) t}{\lambda}} \\
        &\iff 
        \left\langle
        \begin{pmatrix}
        \tilde{w} \\
        \tilde{b}
        \end{pmatrix}, 
        \begin{pmatrix}
        -y\phi(x) \\
        -y
        \end{pmatrix}
        \right\rangle
        +
        \left( 1 - \sqrt{\frac{(1 + \lambda) t}{\lambda}} \right) > 0. \\
    \end{split}
\end{equation*}
In particular, if this class can shatter $m$ points, so can the class of nonhomogenous halfspaces in $\R^{k+2}$. But theorem~\ref{thm:linear-vc} shows that it cannot shatter more than $k+4$ points, and so the VC-dimension cannot be larger than $k+3$.
\end{proof}

By the same proof as Corollary~\ref{corr:gen-square-loss}, we have
\begin{corollary}\label{corr:gen-square-hinge-loss}
Suppose $f$ is the squared hinge loss and the surrogate distribution $\tilde \cD$ satisfies assumption \eqref{eqn:hypercontractive-assumption} uniformly over $(w, b) \in \R^{k+1}$, then with probability at least $1 - \delta$, uniformly over all $w,b$ we have
\[ \left(1 - 8\tau\sqrt{\frac{k(\log(2n/k) + 1) + \log(48/\delta)}{n}}\right) L_f(w,b) \le \left(\sqrt{\hat{L}_f(w,b)} + C_{\delta}(w)/\sqrt{n}\right)^2. \]
\end{corollary}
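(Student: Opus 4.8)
The plan is to mimic the proof of \cref{corr:gen-square-loss} essentially verbatim, exploiting that the squared hinge loss, like the squared loss, satisfies the key identity $f_\lambda = \frac{\lambda}{1+\lambda} f$ established just above its statement. First I would record two immediate consequences of this identity: (i) if the hypercontractivity assumption \eqref{eqn:hypercontractive-assumption} holds under $f$ on $\R^{k+1}\times\R$ with constant $\tau$, then it holds under every $f_\lambda$, $\lambda \ge 0$, with the \emph{same} $\tau$, since multiplying $f$ by a positive scalar does not change the ratio in \eqref{eqn:hypercontractive-assumption}; and (ii) $L_{f_\lambda}(w,b) = \frac{\lambda}{1+\lambda}\, L_f(w,b)$ and likewise $\hat L_{f_\lambda}(w,b) = \frac{\lambda}{1+\lambda}\hat L_f(w,b)$.

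Next I would check the VC hypothesis of \cref{corr:main-gen-vc} in its $\lambda$-uniform form, which is exactly the content of the lemma immediately preceding the statement: the class $\{(x,y)\mapsto \bone\{f_\lambda(\langle \tilde w, \phi(x)\rangle + \tilde b, y) > t\} : (\tilde w,\tilde b)\in\R^{k+2}, t\in\R, \lambda\ge 0\}$ has VC dimension at most $k+3$, because each indicator reduces to a single nonhomogeneous halfspace in $\R^{k+2}$ (the threshold $\sqrt{(1+\lambda)t/\lambda}$ only shifts the intercept). Taking $\cK=\R^d$, $\cB=\R$, $h=k+3=O(k)$, and using that hypercontractivity holds uniformly in $\lambda$, the $\lambda$-uniform conclusion of \cref{corr:main-gen-vc} gives, with probability at least $1-\delta$, uniformly over all $w,b$ and all $\lambda\ge 0$,
\[
\frac{\lambda}{1+\lambda}\left(1 - 8\tau\sqrt{\tfrac{k(\log(2n/k)+1)+\log(48/\delta)}{n}}\right) L_f(w,b) \;\le\; \hat L_f(w,b) + \frac{\lambda\, C_\delta(w)^2}{n}.
\]

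Finally I would multiply through by $(1+\lambda)/\lambda > 0$ to get
\[
\left(1 - 8\tau\sqrt{\tfrac{k(\log(2n/k)+1)+\log(48/\delta)}{n}}\right) L_f(w,b) \;\le\; \left(1+\tfrac1\lambda\right)\hat L_f(w,b) + (1+\lambda)\frac{C_\delta(w)^2}{n},
\]
and, since the left side does not depend on $\lambda$ while the inequality holds for all $\lambda\ge 0$ on a single event, minimize the right side over $\lambda$. The AM--GM identity $\inf_{\lambda\ge0}\big[\tfrac1\lambda \hat L_f(w,b) + \lambda C_\delta(w)^2/n\big] = 2\sqrt{\hat L_f(w,b)\, C_\delta(w)^2/n}$ then collapses the bound to $\hat L_f(w,b) + C_\delta(w)^2/n + 2\sqrt{\hat L_f(w,b)\, C_\delta(w)^2/n} = \big(\sqrt{\hat L_f(w,b)} + C_\delta(w)/\sqrt n\big)^2$, as claimed. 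The only point requiring care — and the reason this is not completely trivial — is that the optimization over $\lambda$ must happen \emph{after} conditioning on one high-probability event valid simultaneously for all $\lambda$; this is precisely why the $\lambda$-uniform versions of both the VC-dimension bound and the hypercontractivity condition are needed, rather than a fixed-$\lambda$ statement. Beyond this bookkeeping there is no genuine obstacle, since the squared hinge loss was singled out exactly because its Moreau envelope scales like the loss itself.
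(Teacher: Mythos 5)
Your proposal is correct and is essentially identical to the paper's argument: the paper proves \cref{corr:gen-square-hinge-loss} by noting that the squared hinge loss satisfies $f_\lambda = \frac{\lambda}{1+\lambda}f$ (so hypercontractivity transfers to all $f_\lambda$ with the same $\tau$), invoking the preceding lemma that bounds the VC dimension of the relevant indicator class by $k+3$, applying \cref{corr:main-gen-vc} in its $\lambda$-uniform form, and then optimizing over $\lambda$ via AM--GM exactly as in \cref{corr:gen-square-loss}. You also correctly identify the one subtle point — that the optimization over $\lambda$ must occur inside a single high-probability event, which is why the $\lambda$-uniform versions of both the VC and hypercontractivity hypotheses are needed.
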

For illustration, we show how to check hypercontractivity \eqref{eqn:hypercontractive-assumption} under some example generative assumptions on $y$. In the first and simpler example, suppose that there is an arbitrary constant $\eta > 0$ such that
\[ \min \{\Pr(y = 1 \mid x), \Pr(y = -1 \mid x)\} \ge \eta \]
almost surely.
This assumption is satisfied, for example, if the data is generated by an arbitrary function of $\eta_1,\ldots,\eta_k$ combined with Random Classification Noise (see e.g.\ \citet{blum2003noise}), i.e. the label is flipped with some probability. Then if $\hat y = \langle w, x \rangle + b$ is the prediction, we have
\[ \E \max(0,1 - y \hat y)^2 \ge \eta \E (1 + |\hat y|)^2 \ge \eta (1 + \E[\hat y^2]), \]
and on the other hand we always have
\[ \E \max(0,1 - y \hat y)^8 \le \E (1 + |\hat y|)^8 \le 2^8 (1 + \E[\hat y^8]) \le 2^{16} (1 + \E[\hat y^2]^4) \le 2^{16} (1 + \E[\hat y^2])^4 \]
where the second-to-last inequality follows from the fact that $\hat y$ is marginally Gaussian and using standard formula for the moments of a Gaussian. It follows that
\[ \frac{\E[\max(0,1 - y \hat y)^8]^{1/8}}{\E[\max(0,1 - y \hat y)^2]^{1/2}} \le \frac{4}{\sqrt{\eta}} \]
which verifies \eqref{eqn:hypercontractive-assumption} in this setting. 

We now consider a more general situation and show that if there is a \emph{non-negligible} portion of $x$'s such that that $y$ is noisy, hypercontractivity is still guaranteed to hold. 
Let $A_{\eta}$ be the event that $\min \{\Pr(y = 1 \mid x), \Pr(y = -1 \mid x)\} \ge \eta$. Then
\begin{align*} \E \max(0,1 - y \hat y)^2 \ge \E[ \bone(A_{\eta}) \max(0,1 - y \hat y)^2] 
&\ge \eta \E[ \bone(A_{\eta}) (1 + |\hat y|)^2] \\
&\ge \eta Q(\Pr(A_{\eta})) \E[(1 + |\hat y|)^2]
\end{align*}
where $Q$ is defined below.
In the last step, we considered the worst case event $A_{\eta}$ for given $\Pr(A_{\eta})$, which corresponds to chopping the tails off of $\hat y$; considering this example, we see the inequality holds where
where $Q : (0,1] \to (0,1]$ is an explicit function
\begin{equation} Q(p) := \min \left\{ \frac{\int_{-z_p}^{z_p} |x| e^{-x^2/2} dx}{2}, \frac{\int_{-z_p}^{z_p} x^2 e^{-x^2/2} dx}{\sqrt{2\pi}} \right\} \end{equation}
and $z_p$ is defined such that $\Pr_{g \sim N(0,1)}[|g| > z_p] = p$. Repeating the argument above yields the following result:
\begin{theorem}\label{thm:classification-hypercontractive-example}
Suppose that under \eqref{eqn:model}, there exists $\eta > 0$ such that $p_{\eta} := \Pr(\min \{\Pr(y = 1 \mid x), \Pr(y = -1 \mid x)\} \ge \eta) > 0$. Then for any $w,b$ we have that for $\hat y = \langle w, x \rangle + b$,
\[ \frac{\E[\max(0,1 - y \hat y)^8]^{1/8}}{\E[\max(0,1 - y \hat y)^2]^{1/2}} \le \frac{4}{\sqrt{\eta Q(p_{\eta})}} \]
\end{theorem}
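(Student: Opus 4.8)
The plan is to sandwich the two moments appearing in the ratio between constant multiples of the single quantity $M := \E[(1+\lvert\hat y\rvert)^2]$, where $\hat y = \langle w,x\rangle + b$: an upper bound $\E[\max(0,1-y\hat y)^8]^{1/8} \le 4\sqrt M$ and a lower bound $\E[\max(0,1-y\hat y)^2] \ge \eta\,Q(p_\eta)\,M$. Dividing one by the other immediately gives the claimed bound $4/\sqrt{\eta Q(p_\eta)}$, and (as the paragraph preceding the statement already indicates) essentially all of the work is in the lower bound.

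For the upper bound I would use $0 \le \max(0,1-y\hat y) \le 1 + \lvert y\hat y\rvert = 1 + \lvert\hat y\rvert$ since $y\in\{\pm1\}$, so $\E[\max(0,1-y\hat y)^8] \le \E[(1+\lvert\hat y\rvert)^8]$. Convexity of $t\mapsto t^8$ gives $(1+t)^8 \le 2^7(1+t^8)$, and since $\hat y$ is marginally $N(b,\langle w,\Sigma w\rangle)$, the explicit formula for non-central Gaussian moments gives $\E[\hat y^8] \le 105\,(\E[\hat y^2])^4 \le 2^7(\E[\hat y^2])^4$ (the largest ratio between matching coefficients of $\E[\hat y^8]$ and $(\E[\hat y^2])^4$ is $105$). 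Combining these with $1+a^4 \le (1+a)^4$ for $a\ge0$ and $1+\E[\hat y^2]\le M$ yields $\E[(1+\lvert\hat y\rvert)^8] \le 2^{14} M^4$, hence $\E[\max(0,1-y\hat y)^8]^{1/8} \le 2^{7/4}\sqrt M < 4\sqrt M$.

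For the lower bound I would first discard all $x$ outside $A_\eta = \{x : \min(\Pr(y=1\mid x),\Pr(y=-1\mid x)) \ge \eta\}$, which has probability $p_\eta$. Conditioning on such an $x$, $\E[\max(0,1-y\hat y)^2\mid x] = \Pr(y=1\mid x)\max(0,1-\hat y)^2 + \Pr(y=-1\mid x)\max(0,1+\hat y)^2 \ge \eta\bigl(\max(0,1-\hat y)^2 + \max(0,1+\hat y)^2\bigr) \ge \eta(1+\lvert\hat y\rvert)^2$, the last step because whichever of $1\pm\hat y$ has the sign opposite to $\hat y$ equals $1+\lvert\hat y\rvert$. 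Thus $\E[\max(0,1-y\hat y)^2] \ge \eta\,\E[\bone(A_\eta)(1+\lvert\hat y\rvert)^2]$, and it remains to show $\E[\bone(A_\eta)(1+\lvert\hat y\rvert)^2] \ge Q(p_\eta)\,M$. By the bathtub principle, among all events $A$ with $\Pr(A)=p_\eta$ the quantity $\E[\bone(A)(1+\lvert\hat y\rvert)^2]$ is minimized by $A = \{\lvert\hat y\rvert\le t\}$ with $\Pr(\lvert\hat y\rvert\le t)=p_\eta$, so it suffices to lower bound $\E[(1+\lvert\hat y\rvert)^2;\lvert\hat y\rvert\le t]/M$. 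Expanding $(1+\lvert\hat y\rvert)^2 = 1 + 2\lvert\hat y\rvert + \hat y^2$ and using $\frac{a_1+a_2+a_3}{b_1+b_2+b_3} \ge \min_i a_i/b_i$ reduces this to lower bounding the three ratios $\Pr(\lvert\hat y\rvert\le t)$, $\E[\lvert\hat y\rvert;\lvert\hat y\rvert\le t]/\E\lvert\hat y\rvert$, and $\E[\hat y^2;\lvert\hat y\rvert\le t]/\E[\hat y^2]$; a short computation with the standard Gaussian density (using $\int_0^z x e^{-x^2/2}\,dx = 1-e^{-z^2/2} = \sqrt{\pi/2}\,\E_g[\lvert g\rvert;\lvert g\rvert\le z]$ and the elementary inequality $\Pr(\lvert g\rvert\le z) \ge 1-e^{-z^2/2}$) identifies the infimum of these quantities over the relevant Gaussians as exactly $Q(p_\eta)$.

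The main obstacle is making that last step fully rigorous, since $\hat y$ may have nonzero mean and arbitrary variance: one must verify that the worst case over $(b,\langle w,\Sigma w\rangle)$ of $\E[(1+\lvert\hat y\rvert)^2;\lvert\hat y\rvert\le t]/M$ is attained in the centered case, where the three ratios above collapse to the two terms defining $Q$ and the trivial bound, and that $p\mapsto Q(p)$ is nondecreasing so that using $\Pr(A_\eta)=p_\eta$ rather than a larger probability only weakens the bound. The measurability of $A_\eta$ with respect to $(\eta_1,\dots,\eta_k)$ is not an obstruction, since it only further restricts the adversarial choice of $A_\eta$ and the unrestricted bathtub bound still applies. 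The remaining manipulations — the moment inequalities, the conditioning argument, and the Gaussian integrals (with the degenerate case $\hat y\equiv 0$ handled trivially) — are routine.
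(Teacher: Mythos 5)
Your proposal matches the paper's argument, which is given informally in the paragraph immediately preceding the theorem statement: both bound the denominator by conditioning on $A_\eta$ and then applying a rearrangement (bathtub) step to reduce to centered-Gaussian integrals defining $Q$, and both bound the numerator via $\max(0,1-y\hat y)\le 1+|\hat y|$ together with Gaussian moment inequalities, yielding the same constant $4$. The non-centered-Gaussian subtlety you flag at the end is left equally implicit by the paper (which only asserts that the worst case is given by "chopping the tails off of $\hat y$"), so it is an issue of rigor shared with the source rather than a gap in your proposal relative to it.
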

For another example, if $y$ follows a logistic regression model $\E[y \mid x] = \tanh(\beta w^*_1 \cdot x)$ with normalization $\langle w^*_1, \Sigma w^*_1 \rangle = 1$, then  by Theorem~\ref{thm:classification-hypercontractive-example} with e.g. $\eta = 1/2$, we verify \eqref{eqn:hypercontractive-assumption} with $\tau$ a constant depending only on $\beta$. The result also holds for more general models like $\E[y \mid x] = \tanh(f(\eta_1,\ldots,\eta_k))$ as long as $f$ is not always very large. 
\subsubsection{Squared Hinge Loss and Zero-One Loss}\label{apdx:classification}

In the previous section, we discussed how our generalization bound controls the population squared hinge loss, one of the standard losses used in classification. In the context of benign overfitting, this is the canonical loss to look at because it is implicitly optimized by the max-margin predictor, also known as Hard SVM (see \cref{thm:l2-overfitting}, as well as  \cite{shamir2022implicit}). 

On the other hand, it is also very natural to look at the zero-one loss of a classifier. In general, the squared hinge loss and zero-one loss are different loss functions, and their population global optima will differ. Nevertheless, in many cases the minimizer of the squared hinge loss will also have good zero-one loss. We discuss a few situations where this occurs below.

\paragraph{General Bound on Zero-One Loss from Margin Loss.} First of all, the following bound comparing the zero-one loss and margin loss always holds --- the analogous bound for the (non-squared) hinge loss is very standard and the same argument applies to squared hinge loss:
\begin{theorem}[Classical, see e.g. \cite{shalev2014understanding}]\label{thm:zero-one-trivial}
For any $w,b$, we have that
\[ \Pr(\mathop{sgn}(\langle w, x \rangle + b) \ne y) \le L_f(w,b) \]
where $f$ is the squared hinge loss.
\end{theorem}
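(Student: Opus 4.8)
The plan is to establish the inequality pointwise --- that is, for every fixed pair $(x,y)$ in the support of $\cD$ --- and then take expectations. Write $\hat y = \langle w, x \rangle + b$ and recall that $y \in \{\pm 1\}$ under the classification setting. The key observation is that a misclassification event $\mathop{sgn}(\hat y) \ne y$ forces $y \hat y \le 0$: if $y = 1$ then $\hat y \le 0$, if $y = -1$ then $\hat y \ge 0$, and in either boundary case $\hat y = 0$ the product still satisfies $y\hat y = 0 \le 0$ (and $\mathop{sgn}(0)$ is conventionally taken to differ from $\pm 1$, so this is a misclassification as well). Hence on the misclassification event we have $1 - y\hat y \ge 1$, so $\max(0, 1 - y\hat y)^2 \ge 1 = \bone\{\mathop{sgn}(\hat y) \ne y\}$; and when the prediction is correct we trivially have $\max(0, 1 - y\hat y)^2 \ge 0 = \bone\{\mathop{sgn}(\hat y) \ne y\}$. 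Combining the two cases gives the pointwise domination
\[ \bone\{\mathop{sgn}(\langle w, x \rangle + b) \ne y\} \;\le\; \max\big(0,\, 1 - y(\langle w, x \rangle + b)\big)^2 \;=\; f(\langle w, x \rangle + b,\, y). \]

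Taking the expectation over $(x,y) \sim \cD$ of both sides then immediately yields
\[ \Pr\big(\mathop{sgn}(\langle w, x \rangle + b) \ne y\big) \;=\; \E\,\bone\{\mathop{sgn}(\langle w, x \rangle + b) \ne y\} \;\le\; \E\, f(\langle w, x \rangle + b,\, y) \;=\; L_f(w,b), \]
which is the claimed bound, and note it holds for an arbitrary data distribution $\cD$ with no structural assumptions. There is essentially no obstacle here: the only point requiring a moment's care is the treatment of the (measure-zero, but conceptually relevant) boundary event $\hat y = 0$, and it is worth remarking in passing that the identical argument gives the well-known analogous bound for the ordinary (non-squared) hinge loss $\max(0, 1 - y\hat y)$, since that function too is $\ge 1$ on the misclassification event.
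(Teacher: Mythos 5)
Your proof is correct and matches the paper's argument exactly: both observe that on the misclassification event $y\hat y \le 0$, hence $\max(0,1-y\hat y)^2 \ge 1$, giving pointwise domination of the indicator, and then take expectations. You simply spell out the case analysis a bit more explicitly than the paper does.
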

\begin{proof}
Observe that if $\mathop{sgn}(\hat y) \ne y$, then 
\[ \hat{f}(\hat y, y) = \max(0, 1 - y \hat y)^2 \ge 1. \]
Taking the expectation over $\hat y = \langle w, x \rangle + b$ and $y$ gives the result. 
\end{proof}
In particular, when we are in the \emph{realizable} setting, where there exists a halfspace with positive margin with zero-one loss equal to zero, then as long as we can find a near-minimizer of the squared hinge test loss, Theorem~\ref{thm:zero-one-trivial} will guarantee near-optimal zero-one loss.

\paragraph{Improved Comparison in a Noisy Setting.}
It is clear from the proof that \cref{thm:zero-one-trivial}, while very general, is not always tight. For example, \citet{zhang2004statistical,bartlett2006convexity} give improved bounds which are very useful in the case that the minimizer of the squared hinge loss over \emph{all measurable functions} is contained in the class. This includes the realizable case considered above; on the other hand, it will not generally be the case that the class of linear functions includes the minimizer over all measurable functions when there is label noise. We now describe a noisy situation where minimizing the squared hinge test loss will also minimize the zero-one test loss.

For simplicity, we consider the special case of our general setup where the response $y$ is binary (classification) and also $k = 1$, so it follows a \emph{single-index} model, or equivalently
\begin{equation}\label{eqn:sim}
y = g(\eta_1, \xi) 
\end{equation}
where $\eta_1 = \langle w^*_1, x \rangle$ and $\xi$ is independent of the covariate $x$. Note that in the following discussion, we use the additional covariance splitting notation introduced in \cref{apdx:preliminaries}. 
\iffalse
We also focus correspondingly on loss functionals of the form
\begin{equation} \label{eqn:f-classification}
f(\hat y, y) = \ell(y \hat y)
\end{equation}
which includes the logistic loss, squared hinge loss, hinge loss, and squared loss (in the classification setting).\todo{Make sure $w^{\parallel}$ notation used consistently. It is the projection in $\Sigma$ space which is different from euclidean space}
\fi

The following lemma shows that any near-minimizer of the loss $L_f$ will have $r(w) = \| w^{\perp} \|_{\Sigma} \approx 0$, i.e. such $w$ will be essentially along the direction of the ground truth $w^*_1$. 
\begin{lemma}\label{lem:no-orthogonal-stuff}
Suppose $(x,y) \sim \mathcal{D}$ follows a single-index model \eqref{eqn:sim}, and suppose the loss functional $f(\hat{y}, y)$ is of the form \begin{equation} \label{eqn:f-classification}
f(\hat y, y) = \ell(y \hat y)
\end{equation}
for some convex function $\ell$. Then for any $w,b$ we have
\[
L_f(w,b) - L_f(w^{\parallel},b) 
= \E[\ell(y(\langle w^{\parallel}, x \rangle + b) + g \| w^{\perp}\|_{\Sigma})] - \E[\ell(y(\langle w^{\parallel}, x \rangle + b))]
\]
where $g$ is a standard Gaussian random variable independent of everything else, and so by Jensen's inequality, we have
\[
L_f(w,b) \geq L_f(w^{\parallel},b) 
.\]
Furthermore, suppose $\ell$ is not the constant function, then the equality holds iff $\| w^{\perp}\|_{\Sigma} = 0$.
\end{lemma}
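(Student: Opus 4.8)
The plan is to split the predictor into its $\Sigma$-projection onto $\Span(w^*_1)$ and the orthogonal part, and to observe that under the single-index model \eqref{eqn:sim} the orthogonal part $\langle w^{\perp},x\rangle$ is a mean-zero Gaussian that is independent of the label $y$; averaging it out replaces $w$ by $w^{\parallel}$ up to an extra independent Gaussian ``smoothing'', after which the inequality is just Jensen, and the equality case reduces to a statement about affine functions.

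Concretely, I would first record two algebraic facts about $Q$. Since $(w^*_1)^T\Sigma w^*_1 = 1$, the matrix $Q = I_d - w^*_1(w^*_1)^T\Sigma$ is idempotent, so $Qw^{\parallel} = Q(I-Q)w = 0$; hence $r(w^{\parallel}) = 0$ and $(w^{\parallel})^{\parallel} = w^{\parallel}$, which already gives $L_f(w^{\parallel},b) = \E[\ell(y(\langle w^{\parallel},x\rangle + b))]$. Also $Q^T\Sigma w^*_1 = \Sigma w^*_1 - \Sigma w^*_1\bigl((w^*_1)^T\Sigma w^*_1\bigr) = 0$, so $\Cov(\langle w^{\perp},x\rangle,\eta_1) = w^T Q^T\Sigma w^*_1 = 0$; as $(\langle w^{\perp},x\rangle,\eta_1)$ is jointly Gaussian and $\xi\indep x$, we conclude $\langle w^{\perp},x\rangle$ is independent of $(\eta_1,\xi)$ with $\langle w^{\perp},x\rangle \sim \cN(0,\|w^{\perp}\|_{\Sigma}^2)$. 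Since both $y = g(\eta_1,\xi)$ and $\langle w^{\parallel},x\rangle = \langle w,\Sigma w^*_1\rangle\,\eta_1$ are functions of $(\eta_1,\xi)$, conditioning on $(\eta_1,\xi)$ fixes $y$ and $\langle w^{\parallel},x\rangle$ while $y\langle w^{\perp},x\rangle$ stays $\cN(0,\|w^{\perp}\|_{\Sigma}^2)$ (the Gaussian is symmetric and $y\in\{\pm1\}$). Hence $y\langle w^{\perp},x\rangle = g\,\|w^{\perp}\|_{\Sigma}$ with $g\sim\cN(0,1)$ independent of $(\eta_1,\xi)$, and
\[
L_f(w,b) = \E\bigl[\ell\bigl(y\langle w,x\rangle + yb\bigr)\bigr] = \E\bigl[\ell\bigl(y(\langle w^{\parallel},x\rangle + b) + g\|w^{\perp}\|_{\Sigma}\bigr)\bigr],
\]
which is the claimed identity after subtracting $L_f(w^{\parallel},b)$. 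The inequality $L_f(w,b)\ge L_f(w^{\parallel},b)$ then follows by applying Jensen to the convex $\ell$ conditionally on $(\eta_1,\xi)$, using $\E[g]=0$ (all quantities read in $(-\infty,+\infty]$, which is harmless as a loss is bounded below).

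For the equality statement, one direction is trivial. Conversely, assume $\sigma := \|w^{\perp}\|_{\Sigma} > 0$ and $L_f(w,b) = L_f(w^{\parallel},b)$. Then the nonnegative conditional Jensen gap vanishes a.s., i.e.\ $\E_g[\ell(c_0 + g\sigma)] = \ell(c_0)$ for almost every value $c_0$ of $c := y(\langle w^{\parallel},x\rangle + b)$. Symmetrizing $g\mapsto -g$ and using midpoint convexity, $\tfrac12\bigl(\ell(c_0+g\sigma) + \ell(c_0-g\sigma)\bigr) = \ell(c_0)$ for a.e.\ $g$; a convex function equal to a chord midpoint at a pair of points is affine on the segment joining them, so $\ell$ is affine on $[c_0 - |g|\sigma,\, c_0 + |g|\sigma]$ for a.e.\ $g$, hence on all of $\R$ because $|g|$ is unbounded. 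An affine function that is bounded below (as any loss is) is constant, contradicting the hypothesis; thus $\sigma = 0$.

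I expect the equality case to be the only genuine obstacle: one must turn vanishing of the expectation of a nonnegative quantity into an a.s.\ statement, then upgrade ``affine on arbitrarily large intervals about one point'' to ``affine on $\R$'', and finally invoke boundedness-below of the loss to exclude non-constant affine $\ell$ (which is precisely where the non-constancy hypothesis is used). The distributional identity and the inequality are routine once the conditional independence of $\langle w^{\perp},x\rangle$ from $(\eta_1,\xi)$, and the idempotency of $Q$, are in hand.
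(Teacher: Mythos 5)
Your proof follows the same route as the paper's: split $w = w^{\parallel} + w^{\perp}$, use $Q^T\Sigma w^*_1 = 0$ to show $\langle w^{\perp},x\rangle$ is a mean-zero Gaussian independent of $(\eta_1,\xi)$ (hence of both $y$ and $\langle w^{\parallel},x\rangle$), absorb $y$ into the standard Gaussian $g$ using that $y\in\{\pm1\}$ is independent of $g$, and finish with conditional Jensen. All the preliminary algebra ($Q$ idempotent, $Qw^{\parallel}=0$, the covariance calculation) is correct and matches what the paper uses implicitly.

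Where you add genuine content is the equality case. The paper simply \emph{asserts} that equality forces $\|w^{\perp}\|_\Sigma=0$ when $\ell$ is non-constant; you actually prove it, and your symmetrization-plus-midpoint-convexity argument is sound: the a.s.\ vanishing of the conditional Jensen gap at some $c_0$ in the support of $c$ implies $\ell$ is affine on $[c_0-|g|\sigma,\,c_0+|g|\sigma]$ for a.e.\ $g$, hence affine on all of $\R$. However, your last step rules out non-constant affine $\ell$ by invoking boundedness below, which is not among the lemma's stated hypotheses (the lemma only assumes $\ell$ convex and non-constant). This is less a gap in your argument than in the lemma as written: if $\ell$ were a non-constant affine function, then $L_f(w,b)=L_f(w^{\parallel},b)$ would hold for \emph{every} $w$, so the ``iff'' direction genuinely requires an extra assumption such as $\ell$ bounded below. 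You correctly identify this and fill it in with the reasonable implicit assumption that a loss is bounded below, which is satisfied by every loss the paper actually applies the lemma to (squared hinge, hinge, logistic, squared).
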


\begin{proof}
Let $w^{\parallel} = (I - Q) w$ and $w^{\perp} = Q w$. 
Expanding the definition, we have
\begin{align*}
L_f(w,b) - L_f(w^{\parallel},b) 
&= \E[\ell(y(\langle w^{\parallel}, x \rangle + \langle w^{\perp}, x \rangle + b))] - \E[\ell(y(\langle w^{\parallel}, x \rangle + b))].
\end{align*}
By the definition of $Q$, $\langle w^*_1, \Sigma w^{\perp} \rangle = 0$ and so $\langle w^{\perp}, x \rangle$ is independent of $\langle w^*_1, x \rangle$ and $\langle w^{\parallel}, x \rangle$. Hence, it also independent of $y$ due to \eqref{eqn:sim}. Let $g \sim \cN(0,1)$ be a standard Gaussian random variable independent of $x$, then it follows that
\[ L_f(w,b) - L_f(w^{\parallel},b) 
 = \E[\ell(y(\langle w^{\parallel}, x \rangle + b + g \| w^{\perp}\|_{\Sigma}))] - \E[\ell(y(\langle w^{\parallel}, x \rangle + b))].
 \]
 Moreover, since $y$ is $\{\pm 1\}$ valued and independent of $g$, $gy$ is equal in law to $g$ conditioned on $y$ and
 \[ L_f(w,b) - L_f(w^{\parallel},b) 
 = \E[\ell(y(\langle w^{\parallel}, x \rangle + b) + g \| w^{\perp}\|_{\Sigma} )] - \E[\ell(y(\langle w^{\parallel}, x \rangle + b))]. \]
 The nonnegativity of this expression now follows from Jensen's inequality, since $\ell$ is assumed to be convex, and if $\ell$ is assumed to be non-constant then the equality holds iff $\| w^{\perp}\|_{\Sigma} = 0$. 
\end{proof}

Since $\ell$ is only assumed to be convex, this includes the logistic loss, squared hinge loss, hinge loss, and squared loss (in the classification setting). The previous lemma directly implies that $\| w^{\perp}\|_{\Sigma} \to 0$ for any $w$ which approaches the optimal squared hinge loss. This means that $w$ will align with the true direction $w^*_1$; we now show that in the zero bias case, this leads to the near-optima of the squared hinge loss having optimal zero-one loss. Note that this may not be the case in more general settings, as even if $w$ is aligned with $w^*_1$, the relative size of $w$ and the bias $b$ also needs to match the ground truth in order to truly minimize the zero-one loss.

\begin{theorem}\label{thm:zero-one-consistency}
Suppose that $f(\hat y,y)$ is the squared hinge loss, so $\ell(z) = \max(0, 1 - z)^2$ in the notation of \eqref{eqn:f-classification}. Suppose with probability 1, it holds that
\begin{equation}\label{eqn:well-specified-classification}
\eta_1 \cdot \E_{\xi}[g(\eta_1,\xi)] > 0.
\end{equation}
Then every global optima of the squared hinge loss with zero bias term, $L_f(w,0)$, is of the form $w = \alpha w^*_1$ with $\alpha > 0$. Furthermore, for any $w$ we have the inequality
\[ L_f(w,0) \geq L_f(w^{\parallel},0) \geq \inf_{w} L_f(w,0)
\]
and so we have that for any sequence $w_n$ that $L_f(w_n,0) \to \inf_w L_f(w,0)$, it holds that 
\[
\Pr[sgn(\langle w_n, x \rangle) \ne y] \to \Pr[sgn(\langle w^*_1, x \rangle) \ne y].
\]
\end{theorem}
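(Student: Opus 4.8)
The plan is to reduce this $d$-dimensional problem to a one-dimensional problem along the true direction $w_1^*$ via the covariance splitting $w = w^{\parallel} + w^{\perp}$. Since $k=1$ here, $I - Q = w_1^*(w_1^*)^T\Sigma$, so $w^{\parallel} = \langle w_1^*,\Sigma w\rangle\, w_1^*$ and hence $\langle w^{\parallel}, x\rangle = \alpha(w)\,\eta$, where $\alpha(w) := \langle w_1^*,\Sigma w\rangle$ and $\eta := \langle w_1^*,x\rangle \sim \cN(0,1)$ (using $\langle w_1^*,\Sigma w_1^*\rangle = 1$); moreover $\langle w^{\perp},x\rangle = \langle Qw, x\rangle \sim \cN(0, r(w)^2)$ with $r(w) = \|w^{\perp}\|_{\Sigma}$, and since $\mathrm{Cov}(\langle Qw,x\rangle,\langle w_1^*,x\rangle) = w^TQ^T\Sigma w_1^* = 0$ it is independent of $\eta$ and of $\xi$, hence of $(\eta,y)$ because $y = g(\eta,\xi)$. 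Throughout write $h(\eta) := \E_{\xi}[g(\eta,\xi)] = \E[y\mid\eta]$ and $q := \Pr[\mathrm{sgn}(\eta)\ne y]$; since $\eta$ is a nondegenerate Gaussian and $\xi \perp x$, assumption \eqref{eqn:well-specified-classification} (i.e.\ $\eta h(\eta) > 0$ a.s.) gives $\Pr[y\eta = 0] = 0$, $q = \Pr[y\eta < 0]$, and $q \in [0,1)$.

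Next I would analyze the one-dimensional function $\gamma(\alpha) := L_f(\alpha w_1^*,0) = \E[\max(0,1-\alpha y\eta)^2]$. It is finite (since $\E[(y\eta)^2] = \E[\eta^2] = 1$), convex in $\alpha$, and differentiable at $0$ with $\gamma'(0) = -2\,\E[y\eta] = -2\,\E[\eta h(\eta)] < 0$; by convexity $\gamma$ is therefore strictly decreasing on $(-\infty,0]$. Applying Lemma~\ref{lem:no-orthogonal-stuff} with $b=0$ and the convex $\ell(z) = \max(0,1-z)^2$ gives $L_f(w,0) \ge L_f(w^{\parallel},0) = \gamma(\alpha(w))$, with equality iff $r(w) = 0$; combined with the previous sentence this yields $\inf_w L_f(w,0) = \inf_\alpha \gamma(\alpha) =: L^*$, the chain $L_f(w,0)\ge L_f(w^{\parallel},0)\ge L^*$ in the statement, and the fact that any global optimum has $r(w)=0$ and $\alpha(w)\in\argmin\gamma$. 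If $q=0$ then $\gamma(\alpha) = \E[\max(0,1-\alpha|\eta|)^2]\to 0$ but is never $0$, so $\argmin\gamma = \emptyset$ and the optima claim is vacuous. If $q>0$, then since $\Pr[y\eta<0]=q>0$ and $\Pr[y\eta>0]=1-q>0$, Fatou's lemma gives $\gamma(\alpha)\to\infty$ as $\alpha\to\pm\infty$, so $\argmin\gamma$ is a nonempty compact subset of $(0,\infty)$ (positivity from $\gamma'(0)<0$), which gives the claim $w=\alpha w_1^*$, $\alpha>0$.

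For consistency, take $w_n$ with $L_f(w_n,0)\to L^*$ and set $\alpha_n = \alpha(w_n)$, $r_n = r(w_n)$. If $q=0$, Theorem~\ref{thm:zero-one-trivial} gives $\Pr[\mathrm{sgn}\langle w_n,x\rangle\ne y]\le L_f(w_n,0)\to 0 = q = \Pr[\mathrm{sgn}(\eta)\ne y]$ and we are done. Assume $q>0$. From $L^*\le\gamma(\alpha_n) = L_f(w_n^{\parallel},0)\le L_f(w_n,0)\to L^*$ we get $\gamma(\alpha_n)\to L^*$ and $L_f(w_n,0) - L_f(w_n^{\parallel},0)\to 0$; by coercivity of the convex $\gamma$, the former forces $\alpha_n$ eventually into a fixed compact subinterval of $(0,\infty)$. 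The key step is to upgrade the Jensen gap of Lemma~\ref{lem:no-orthogonal-stuff}: using $\ell(z+t) = \ell(z) + \ell'(z)t + \int_0^t(t-s)\ell''(z+s)\,ds$ with $\ell''(u) = 2\cdot\bone\{u<1\}$, and restricting to the event $\{G\ge 1\}$ (on which $z+s<1$ for all $s\in[0,1]$ whenever $z\le 0$), one obtains $\E_{G\sim\cN(0,1)}[\ell(z+rG)] - \ell(z) \ge \Pr[G\ge 1]\,\min(r^2,1)$ for every $z\le 0$, $r\ge 0$. Plugging $z = y\alpha_n\eta$ (which is $\le 0$ on $\{y\eta<0\}$, an event of probability $q$ independent of $G$) into the formula of Lemma~\ref{lem:no-orthogonal-stuff} gives $L_f(w_n,0) - L_f(w_n^{\parallel},0) \ge q\,\Pr[G\ge1]\,\min(r_n^2,1)$, hence $r_n\to 0$ and so $\beta_n := r_n/\alpha_n\to 0$. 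Finally, conditionally on $(\eta,y)$ we have $\langle w_n,x\rangle \sim \cN(\alpha_n\eta, r_n^2)$ with the noise independent of $(\eta,y)$, and $\alpha_n>0$ eventually, so $\Pr[\mathrm{sgn}\langle w_n,x\rangle\ne y] = \Pr[\mathrm{sgn}(\eta+\beta_nG)\ne y] = \tfrac12 - \tfrac12\,\E[h(\eta)(2\Phi(\eta/\beta_n)-1)]$, which converges by dominated convergence — $2\Phi(\eta/\beta_n)-1\to\mathrm{sgn}(\eta)$ for a.e.\ $\eta$ — to $\tfrac12 - \tfrac12\,\E[h(\eta)\mathrm{sgn}(\eta)] = \Pr[\mathrm{sgn}(\eta)\ne y]$.

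The main obstacle is the quantitative Jensen-gap estimate — upgrading the strict-convexity statement of Lemma~\ref{lem:no-orthogonal-stuff} to the explicit lower bound of order $\min(r_n^2,1)$; the remaining ingredients (convexity, differentiability at $0$, and coercivity of $\gamma$; the Gaussian conditioning identities; and the two dominated-convergence limits) are routine bookkeeping.
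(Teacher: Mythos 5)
Your proposal follows the same high-level strategy as the paper's proof: reduce via Lemma~\ref{lem:no-orthogonal-stuff} to the one-dimensional profile $\gamma(\alpha)=L_f(\alpha w_1^*,0)$, use $\gamma'(0)=-2\E[\eta h(\eta)]<0$ together with convexity to conclude that minimizers (if they exist) lie at positive $\alpha$, and then argue that near-minimizers must have vanishing orthogonal component. The core calculation $\gamma'(0)<0$ is identical. Where you go further is in making rigorous the step that the paper treats tersely: the paper asserts that $L_f(w_n,0)-L_f(w_n^{\parallel},0)\to 0$ forces $r(w_n)\to 0$ ``by Lemma~\ref{lem:no-orthogonal-stuff},'' but that lemma only gives the pointwise equality condition $r(w)=0 \Leftrightarrow$ zero Jensen gap, not a modulus. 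You supply the missing quantitative lower bound $\E_G[\ell(z+rG)]-\ell(z)\ge \Pr[G\ge 1]\min(r^2,1)$ for $z\le 0$, derived correctly from the Taylor remainder with $\ell''\equiv 2$ below $1$ and restriction to $\{G\ge 1\}$; applied on the event $\{y\eta<0\}$ (probability $q$) this gives the needed modulus once $\alpha_n>0$. You also handle two edge cases the paper passes over: (i) the realizable case $q=0$, where $\argmin\gamma=\emptyset$ so the claim about global optima is vacuous and consistency follows directly from Theorem~\ref{thm:zero-one-trivial}; and (ii) the justification that $\alpha_n$ is eventually confined to a compact subinterval of $(0,\infty)$, obtained via Fatou coercivity of $\gamma$ when $q>0$. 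These additions make the argument tighter without changing its essential structure; an alternative to your explicit Jensen-gap bound would be a compactness/continuity argument once $\alpha_n$ is pinned down, but the explicit estimate is cleaner. In short: same approach, with the key analytical lemma made quantitative and the degenerate case addressed.
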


\begin{proof}
By Lemma~\ref{lem:no-orthogonal-stuff}, it suffices to consider $w$ along the direction $w^*_1$ and show that the optimal $w$ cannot point in the direction opposite to $w^*_1$. To this end, observe that
\[ \frac{\partial \ell}{\partial z} = 2(z - 1) \bone\{z \le 1\} \]
and by the chain rule, using that $L_f(\alpha w^*_1, 0) = \E[\ell(y\alpha \langle w^*_1, x \rangle)]$, we have
\[ \frac{\partial}{\partial \alpha} L_f(\alpha w^*_1, 0) = 2\E[(y\alpha \langle w^*_1, x \rangle - 1) \bone\{y \alpha \langle w^*_1, x \rangle \le 1\} y\langle w^*_1, x \rangle]. \]
Evaluating this at $\alpha = 0$ gives
\[ 
\frac{\partial}{\partial \alpha} L_f(\alpha w^*_1, 0)\Big|_{\alpha = 0} = -2\E[y\langle w^*_1, x \rangle].
\]
Applying the law of total expectation, we have shown
\[ \frac{\partial}{\partial \alpha} L_f(\alpha w^*_1, 0)\Big|_{\alpha = 0} = -2\E[\E[y \mid x] \langle w^*_1, x \rangle] < 0 \]
under the assumption of the Lemma. It is easy to see that $L_f(\alpha w^*_1, 0)$ is convex in $\alpha$, which concludes the proof of the first part. We can also have final conclusion because $L_f(w_n, 0) - L_f(w_n^{\parallel}, 0) \to 0$ implies $\|w_n^{\perp}\|_{\Sigma} \to 0$ by Lemma~\ref{lem:no-orthogonal-stuff}, and $\lim\inf_{n \to \infty} \langle w_n, w^*_1 \rangle > 0$ by the first part of the theorem.
\end{proof}

The condition \eqref{eqn:well-specified-classification} is mild and easy to check for standard generative models like logistic regression, where we have that $\E[y \mid x] = \tanh(\beta \langle w^*_1, x \rangle)$ and so $\E_{\xi}[\eta_1 y \mid x] > 0$ by Chebyshev's correlation inequality (using that $\tanh$ is an increasing function). Finally, we note that the last conclusion of Theorem~\ref{thm:zero-one-consistency} means that near-minimizers of the test loss $L_f(w,0)$ are near-minimizers of the zero one loss, under the further well-specified assumption that $sgn(\langle w^*_1, x \rangle)$ achieves the Bayes-optimal classification rate (i.e. minimum of zero-one loss over all functions). 

\subsection{Sharpness of Improved Lipschitz Contraction} \label{apdx:l1-sharpness}
In this section, we show that the Lipschitz contraction bound \eqref{eqn:lipschitz-bd} for $1$-Lipschitz loss functions $f$,
\[ (1 - o(1)) L_f(w) \le \hat{L}_f(w) + \sqrt{\frac{C_{\delta}(w)^2}{n}} \]
has sharp constants in the case of the $L_1$ loss $f(\hat y, y) := |y - \hat y|$. This shows that the only way to tighten the bound further is to consider one with a different functional form (e.g. the Moreau envelope bound with the Huber test loss). In particular, the Moreau envelope version of the bound is significantly more useful when looking at interpolators.

\paragraph{Data Distribution.} We will show tightness in the setting of the junk features model. Let's consider 
\[ 
x \sim \cN(0,\Sigma), \quad y \sim \cN(0,\sigma^2)
\]
where the response $y$ is independent of the covariate $x$ and the covariance $\Sigma$ is given by
\[ 
\Sigma = \begin{bmatrix}
1 & 0 \\
0 & \frac{\lambda_n}{d_J}  I_{d_J}
\end{bmatrix}.
\]
In addition, following \citet{junk-feats}, we consider the asymptotics where first, for fixed $n$, we take $d_J \to \infty$, and then we take $n \to \infty$ with $\lambda_n = \sqrt{n}$. 

\paragraph{Predictor.} The $w$ which demonstrates tightness is of the form
\[ w = (r, w_{\sim 1}) \]
where $r> 0$ is a parameter and $w_{\sim 1}$ is constructed based on the training data $(x_i,y_i)_{i = 1}^n$ to minimize $\|w_{\sim 1}\|_2$ given the constraint
\[ \langle w_{\sim 1}, x_{i, \sim 1} \rangle = \sigma \cdot \operatorname{sgn}(y_i - r x_{i,1}). \]

\paragraph{Tightness.} Since $w_{\sim 1}$ plays no role in a new prediction\footnote{This is because $w_{\sim 1}$ lies in the span of $x_{i, \sim 1}$, but a new sample from $x_{\sim 1}$ will be almost surely orthogonal to all $x_{i, \sim 1}$ in the training set as $d_J \to \infty$.}, we have
\[
\lim_{d_J \to \infty} L_f(w) = \E |y - r x_1|
\]
and as $n \to \infty$
\begin{equation*}
    \begin{split}
        \hat{L}_f(w) 
        &= \frac{1}{n} \sum_{i = 1}^n |y_i - r x_{i, 1} - \langle w_{\sim 1}, x_{i, \sim 1} \rangle | = \frac{1}{n} \sum_{i = 1}^n |y_i - r x_{i, 1} - \sigma \cdot \operatorname{sgn}(y_i - r x_{i,1}) | \\
        &= \frac{1}{n} \sum_{i = 1}^n ||y_i - r x_{i,1}| - \sigma| \approx \E |y-rx_1| - \sigma
    \end{split}
\end{equation*}
because $\Pr(|y - r x_1| < \sigma) \to 0$ as $r \to \infty$ and $\frac{1}{n} \sum_{i = 1}^n |y_i - r x_{i,1}| \to \E |y-rx_1|$ by the law of large numbers. Therefore, the actual generalization gap for $w$ will be
\begin{equation}
    \lim_{r \to \infty} \lim_{n \to \infty} \lim_{d_J \to \infty} \, L_f(w) - \hat{L}_f(w) = \sigma.
\end{equation}
On the other hand, following the analysis from \citet[Appendix B]{junk-feats}, we have\footnote{Again, this is because the vectors $x_{1, \sim 1}, \ldots, x_{n, \sim 1}$ will asymptotically be orthogonal to each other and have norm $\sqrt{\lambda_n}$ and we use each of them to fit a label of size $\sigma$.}
\[
\lim_{d_J \to \infty} \|w\|_2^2 = r^2 + \frac{\sigma^2 n}{\lambda_n},
\]
and by taking $C_{\delta}(w)$ as in Lemma~\ref{lem:gen-ball1} and using $\Tr \Sigma = 1 + \lambda_n$, the bound \eqref{eqn:lipschitz-bd} gives
\begin{equation}
    L_f(w) - \hat{L}_f(w) \le \|w\|_2 \sqrt{\frac{1 + \lambda_n}{n}}.
\end{equation}
Since $\| w\|_2 \approx \sigma \sqrt{\frac{n}{\lambda_n}}$ and $\sqrt{\frac{1+\lambda_n}{n}} \approx \sqrt{\frac{\lambda_n}{n}}$, the value of the bound converges to $\sigma$ as $n \to \infty$. 
\section{Proof of Theorem~\ref{thm:training-error}} \label{apdx:training-error}
\LocalGW*
\begin{proof}
We can write the training error as a minmax problem by introducing a variable $\hat y = X w$ and using Lagrange multipliers to write the minimum of the training loss (Primary Optimization) as
\[ 
 \Phi := \min_{w \in \cK, b_0 \in \cB, \hat y} \max_{\lambda} \frac{1}{n} \sum_{i = 1}^n f(\hat y_i, y_i)  + \langle \lambda, \hat y - X^{\parallel} w^{\parallel} - X^{\perp} w^{\perp}- b_0 \rangle. \]
Note that here we are using the additional covariance splitting notation introduced in \cref{apdx:preliminaries}, and we interpret the subtraction of $b_0$ as entrywise (equivalently, as subtracting the vector $b_0 \vec{1}$).

Similarly, define the Auxiliary Optimization problem (which will be related to the Primary Optimization below) as a random variable depending on independent random vectors $g \sim \cN(0,I_n)$ and $h \sim \cN(0,I_d)$ as
\[
 \Psi := \min_{w \in \cK, b_0 \in \cB, \hat y} \max_{\lambda} \frac{1}{n} \sum_{i = 1}^n f(\hat y_i,y_i)  + \langle \lambda, \hat y - X^{\parallel} w^{\parallel}- b_0 \rangle - \langle \lambda, g \rangle \|w^{\perp}\|_{\Sigma^{\perp}} - \langle h, Q \Sigma^{1/2} w^{\perp} \rangle\|\lambda\| \]
and truncated versions of both problems
\[ \Phi_s := \min_{w \in \cK, b_0 \in \cB, \hat y} \max_{\|\lambda\| \le s} \frac{1}{n} \sum_{i = 1}^n f(\hat y_i, y_i)  + \langle \lambda, \hat y - X^{\parallel} w^{\parallel} - X^{\perp} w^{\perp}- b_0 \rangle\]
and
\[
 \Psi_s := \min_{w \in \cK, b_0 \in \cB, \hat y} \max_{\|\lambda\| \le_s} \frac{1}{n} \sum_{i = 1}^n f(\hat y_i, y_i)  + \langle \lambda, \hat y - X^{\parallel} w^{\parallel}- b_0 \rangle - \langle \lambda, g \rangle \|w^{\perp}\|_{\Sigma^{\perp}} - \langle h, Q \Sigma^{1/2} w^{\perp} \rangle\|\lambda\| \]
 By definition, we have $\Psi_s \le \Psi$ and by applying Lemma~\ref{lem:truncation2} and Theorem~\ref{thm:gmt} %
we have that $\Pr(\Phi > t) \le \lim_{s \to \infty} \Pr(\Phi_s > t) \le 2\lim_{s \to \infty} \Pr(\Psi_s > t) \le 2 \Pr(\Psi > t)$.

It remains to prove a high probability upper bound on the Auxiliary Optimization $\Psi$. Observe that we can rewrite%
\begin{align*}
\Psi = \min_{w \in \cK, b_0 \in \cB, \hat y} \max_{\lambda} \frac{1}{n} \sum_{i = 1}^n f(y_i, \hat y_i)  + \langle \lambda, \hat y - X^{\parallel} w^{\parallel} - g\|w^{\perp}\|_{\Sigma^{\perp}}- b_0 \rangle - \langle h, (\Sigma^{\perp})^{1/2} w^{\perp} \rangle\|\lambda\| 
\end{align*}
and then solving the optimization over $\lambda$ gives
\begin{align*}
\Psi &= \min_{w \in \cK, b_0 \in \cB, \hat y : \|\hat y - X^{\parallel} w^{\parallel} - g\|w^{\perp}\|_{\Sigma^{\perp}}- b_0\| \le \langle (\Sigma^{\perp})^{1/2} h,  w^{\perp} \rangle}  \frac{1}{n} \sum_{i = 1}^n f(y_i, \hat y_i) \\
&= \min_{w \in \cK, b_0 \in \cB, \hat y : \|\hat y - X^{\parallel} w^{\parallel} - g\|w^{\perp}\|_{\Sigma^{\perp}}- b_0\| \le |\langle (\Sigma^{\perp})^{1/2} h,  w^{\perp} \rangle|}  \frac{1}{n} \sum_{i = 1}^n f(y_i, \hat y_i)
\end{align*}
where the last equality is by observing that if $\langle \Sigma^{\perp} h,  w^{\perp} \rangle$, we can flip the sign of $w^{\perp}$ to get a feasible point of the constraint with the absolute value and with the same objective value. Next, applying Lemma~\ref{lem:truncation2} we can rewrite this as
\begin{align*}
\Psi &= \lim_{r \to \infty} \min_{w \in \cK,b_0 \in \cB, \hat y} \max_{\lambda \in [0,r]} \frac{1}{n} \sum_{i = 1}^n f(y_i, \hat y_i) + \lambda\left(\frac{1}{n} \|\hat y - X^{\parallel} w^{\parallel} - g \|w^{\perp}\|_{\Sigma^{\perp}}- b_0\|^2 - \frac{1}{n} \langle (\Sigma^{\perp})^{1/2} h,  w^{\perp} \rangle^2\right) \\
&= \lim_{r \to \infty} \min_{w \in \cK, b_0 \in \cB} \max_{\lambda \in [0,r]} \frac{1}{n} \sum_{i = 1}^n f_{\lambda}(y_i, (X^{\parallel} w^{\parallel})_i + g_i \|w^{\perp}\|_{\Sigma^{\perp}} + b_0) - \lambda \frac{1}{n} \langle (\Sigma^{\perp})^{1/2} h,  w^{\perp} \rangle^2 \\
&\le \min_{w \in \cK, b_0 \in \cB} \max_{\lambda \ge 0} \frac{1}{n} \sum_{i = 1}^n f_{\lambda}(y_i, (X^{\parallel} w^{\parallel})_i + g_i \|w^{\perp}\|_{\Sigma^{\perp}} + b_0) - \lambda \frac{1}{n} \langle(\Sigma^{\perp})^{1/2} h,  w^{\perp} \rangle^2
\end{align*}
where in the second equality we used the definition of the Moreau envelope and the minimax theorem \citep{sion1958general} to move the minimum over $\hat y$ inside the max.%

Next, observing that the first term only depends on $\phi(w)$ we can write this equivalently as
\[ \min_{\substack{\phi(w): w \in \cK \\ b_0 \in \cB}} \max_{\lambda \ge 0} \left[\frac{1}{n} \sum_{i = 1}^n f_{\lambda}(y_i, (X^{\parallel} w^{\parallel})_i + g_i \|w^{\perp}\|_{\Sigma^{\perp}} + b_0) - \lambda \frac{1}{n} \max_{u  \in \cK: \phi(u) = \phi(w)} \langle (\Sigma^{\perp})^{1/2} h,  u^{\perp} \rangle^2\right]\]
which proves the conclusion, using that $ (X^{\parallel} w^{\parallel})_i + g_i \|w^{\perp}\|_{\Sigma^{\perp}} + b_0$ is equivalent in law to $\langle \tilde w, \tilde x \rangle + b_0$ where $\tilde w = \phi(w)$.
\iffalse
Finally, we argue this is equal to 
\[ \min_{\phi(w)} \max_{\lambda \ge 0} \left[\frac{1}{n} \sum_{i = 1}^n f_{\lambda}(y_i, (X^{\parallel} w^{\parallel})_i + g_i \|w^{\perp}\|_{\Sigma^{\perp}}) - \lambda \frac{1}{n} \max_{u : u \in \cK_{\phi(w)}} \langle \Sigma^{\perp} h,  u^{\perp} \rangle^2\right].\]
The only difference is that we expanded the maximum over $u$ to includes $u$ which have $r(u) \le r(w)$ instead of requiring strict equality. We claim this does not change the result because there exists an optimizing pair $(\phi(w),u)$ achieving the global minimum which still satisfies $r(u) = r(w)$ (noting here that the optimal choice of $u$ given $w$ does not depend on $\lambda$). If we suppose for contradiction otherwise, i.e. $r(u) < r(w)$ then we observe that the pair $(\Phi(u),u)$ has no worse objective value, since $\cK_{\phi(u)} \subseteq \cK_{\phi(w)}$ by definition and also 
\fi
%
\end{proof}

\subsection{Geometric Interpretation}
In this section, we elaborate on the discussion from Section~\ref{sec:localGW} to explain how the result Theorem~\ref{thm:training-error} is a dual result which witnesses tightness of Theorem~\ref{thm:main-gen}, and to give a geometric interpretation of both results by connecting them to summary functional $\psi(w,b)$ defined in \eqref{eqn:psi}. A couple of new results are also established in this subsection, but they are not used in the rest of the paper. 

Recall that the main result of this paper, Theorem~\ref{thm:main-gen}, establishes an upper bound on the test error of an arbitrary predictor $w$ in terms of the training error $\hat{L}_f(w,b)$ and complexity functional $C_{\delta}(w)$. How can we choose the complexity functional $C_{\delta}(w)$ to optimize the bound? In this section, we show that when analyzing the Constrained Empirical Risk Minimizer over $(w,b) \in \cK \times \cB$ with $\cK,\cB$ bounded convex sets %
\[ (\hat w, \hat b) = \arg\min_{w \in \cK, b \in \cB} \hat{L}_f(w,b) \]
choosing $C_{\delta}(w)$ based on the \emph{local Gaussian width} of the projected set $Q \cK$ will result in an essentially tight generalization bound. (Recall from Definition~\ref{eqn:Q-defn} that $Q$ is the projection orthogonal to the space $w^*_1,\ldots,w^*_k$ which the true regression function in the GLM depends upon.)

The characterization of the performance of constrained ERM we present connects to and builds upon ideas and themes explored previously in a long line of work in the M-estimation literature. For instance, the previous work of \citet{thrampoulidis2018precise} (see also references within and our \cref{sec:related}) gives a similar asymptotic characterization for the performance of constrained/regularized ERM. Compared with that work, here we focus on non-asymptotic results, which apply outside of the proportional scaling limit, and we establish a connection between this characterization and generalization bounds (which apply to all predictors, not just the ERM). Another difference to that result is that ours applies to generative models of the data beyond just linear regression, in particular GLMs, a setting which has been considered in other works in the CGMT literature \citep[e.g.][]{montanari2019generalization,liang2020precise,thrampoulidis2020theoretical}.
In the special case of regression with the squared loss, we recover the nonasymptotic local Gaussian width theory of \citet{optimistic-rates}. %

\paragraph{Informal Summary.} Before stating the formal results, we start with an informal discussion summarizing the key results and their geometric interpretation.  First, we observe that the conclusion of our main result (Theorem~\ref{thm:main-gen}) can be naturally rearranged as a lower bound on the training loss:
\begin{equation}\label{eqn:informal-lb}
\max_{\lambda \ge 0} \left[ L_{f_{\lambda}}(w,b) - \frac{\lambda C(w)^2}{n} \right] \le \hat{L}_f(w,b), 
\end{equation}
where for this informal overview we write $C(w) = C_{\delta}(w)$ to omit the dependence on the failure probability, and also ignore the small error term $\epsilon_{\lambda,\delta}$. A key observation at this point is that the test error $L_{f_{\lambda}}(w,b)$ depends on $w$ only through its projection $\phi(w)$ from Definition~\ref{eqn:Q-defn}: in other words, via
its projection onto the span of $w^*_1,\ldots,w^*_k$ and its Mahalanobis norm in the orthogonal space $\|\Sigma^{1/2} Q w\|$. It is natural to choose $C(w)$ depending only on $\phi(w)$.

Hence a natural choice of $C(w)$
is the (local) \emph{Gaussian width}
\begin{equation}\label{eqn:cw-informal}
C(w) := \E_{x \sim \cN(0,\Sigma)} \sup_{v \in \cK_{\phi(w)}} \langle Q v, x \rangle 
\end{equation}
where the localized set $\cK_{\phi(w)}$ is defined as
\[ \cK_{\phi(w)} := \{ v \in \cK : v^{\parallel} = w^{\parallel}, r(v) \le r(w) \} \]
and the notation indicates that this set only depends on $w$ through $\phi(w)$, equivalently $w^{\parallel}$ and $r(w)$.
With this choice of $C(w)$, we define the summary functional
\begin{equation}\label{eqn:psi}
\psi(w,b) = \psi(\phi(w),b) := \max_{\lambda \ge 0} \left[ L_{f_{\lambda}}(w,b) - \frac{\lambda C(w)^2}{n} \right] 
\end{equation}
to be the left hand side of \eqref{eqn:informal-lb} (where the notation $\psi(\phi(w),b)$ is used to indicate that $\psi$ depends on $w$ only through $\phi(w)$). 
We will obtain two major conclusions:
\begin{enumerate}
    \item Formalizing the previous discussion, the conclusion of Theorem~\ref{lem:local-gw} is that with some small finite sample corrections, this choice of $C(w)$ satisfies the assumption of Theorem~\ref{thm:main-gen} and so $\psi(w,b)$ indeed lower bounds the training error $\hat{L}_f(w,b)$ as in \eqref{eqn:informal-lb}. %
    \item The conclusion of Theorem~\ref{thm:training-error} is that the lower bound in \eqref{eqn:informal-lb} with this $C(w)$ is tight for the constrained ERM. In other words, with high probability
    \[ \min_{w \in \cK, b \in \cB} \hat{L}_f(w,b) \approx \min_{w \in \cK, b \in \cB} \psi(w,b) \]
    where the right-hand-side is deterministic (and the right hand side optimization depends on $w$ only through the low-dimensional vector $\phi(w)$).
    This is established by upper bounding the training error via an application of the Convex Gaussian Minmax Theorem. 
\end{enumerate}
Combining the two conclusions, we see that when we apply our generalization bound (Theorem~\ref{thm:main-gen}) with a sufficiently tight choice of $C(w)$ based on the local gaussian width and the optimal envelope parameter $\lambda$, it will predict the actual generalization error of the constrained ERM. So our generalization bound is tight in a pretty general situation; in particular, when the constrained ERM is consistent under proportional scaling (the setting most commonly considered in the asymptotic CGMT literature).

To clarify the geometric interpretation of this result, we also show in Lemma~\ref{lem:psi-convex} that with this choice of $C(w)$, the left hand side of \eqref{eqn:informal-lb} will be convex in $w$ and $b$; hence, for a fixed upper bound on the training error there is a corresponding sublevel set of the convex function which consists of the points whose training error satisfy the constraint, and as the upper bound shrinks this set will narrow around the minimum of the convex function.
 
 \paragraph{Formal Results.} 
 %
 \iffalse
\begin{theorem}\label{lem:local-gw}
(Formal local Gaussian width bound.)
\end{theorem}
\begin{proof}
\todo{todo} Define more crudely localized set 
\[ \cK_{\phi(w),\gamma} := \{ v \in \cK : \|\phi(v) - \phi(w)\| < \rho \} \]
and use Gaussian concentration and union bound. Split into Lemma saying this is valid and a Theorem applying main generalization bound?
\end{proof}
\fi
%
%
First, we formalize the idea that $\psi(w,b)$ is a lower bound on the training error $\hat{L}_f(w,b)$. As in the general Theorem~\ref{thm:main-gen}, we take the one-sided concentration of the low-dimensional surrogate problem as an assumption to state a general result, since the precise details of that concentration estimate will depend on the exact setting. To give a finite sample result, we define a straightforward approximation $C_{\delta,\rho}(w)$ of the local gaussian width functional \eqref{eqn:cw-informal} which is defined based on a $\rho$-net approximation of $\phi(\cK)$, and includes the dependence on the failure probability $\delta$; since $\phi(\cK)$ is a low-dimensional set living in $\mathbb{R}^{k + 2}$, the contribution of this correction (just like the contribution from the error term in the low-dimensional concentration assumption \eqref{eqn:low-dimension-concentration}) will become negligible if we consider an asymptotic setting $n \to \infty$ with $k$ fixed. 
\begin{lemma}\label{lem:local-gw}
Let $\cK \subset \mathbb{R}^d$ and $\cB \subset \mathbb{R}$. Suppose that we have assumption \eqref{eqn:low-dimension-concentration} from Theorem~\ref{thm:main-gen} with error parameter $\epsilon_{\lambda,\delta}(\tilde w, \tilde b)$ uniformly over envelope parameter $\lambda \ge 0$. Let $\rho > 0$ be arbitrary, and let $\mathcal{S}$ be a proper $\rho$-covering in Euclidean norm of the set $\{\phi(w) : w \in \cK\}$ so that for every $w \in \cK$ there exists $w'$ with $\phi(w') \in \mathcal{S}$ such that
\[ \|\phi(w) - \phi(w')\|_2 < \rho. \]
and define (where as above, $w'$ denotes the element in the covering corresponding to $w$)
\[ C_{\delta,\rho}(w) := \E_{x \sim \cN(0,\Sigma)} \left[ \sup_{v \in \cK_{\phi(w'),\rho}} \langle Qv, x \rangle \right] + (r(w') + \rho)\sqrt{2\log(16|\mathcal S|/\delta)} \]
where
\[ \cK_{\phi(w),\rho} := \{ v \in \cK : \|v^{\parallel} - w^{\parallel}\|_{\Sigma} < \rho, r(v) \le r(w) + \rho \}. \]
Then:
\begin{enumerate}
    \item With probability at least $1 - \delta/4$, we have for all $w \in \cK$ that
    \[ \langle Q w, x \rangle \le C_{\delta,\rho}(w), \]
    i.e. the assumption \eqref{eqn:complexity-defn} of Theorem~\ref{thm:main-gen} is satisfied.
    \item As an immediate consequence of Theorem~\ref{thm:main-gen}, we have with probability at least $1 - \delta$ that
    \[ \sup_{\lambda \ge 0} \left[L_{f_{\lambda}}(w,b) - \epsilon_{\lambda,\delta}(\phi(w),b) - \lambda \frac{C_{\delta,\rho}(w)^2}{n}\right] \le \hat{L}_f(w,b) \]
    uniformly over $w \in \cK$, $b \in \cB$.
\end{enumerate}
\end{lemma}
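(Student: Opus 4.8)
The plan is to prove Part 1 as the only real work and then obtain Part 2 by plugging $C_{\delta,\rho}$ into \cref{thm:main-gen}. For Part 1, first I would whiten: writing $x = \Sigma^{1/2}z$ with $z \sim \cN(0,I_d)$, we have $\langle Qw, x\rangle = \langle \Sigma^{1/2}Qw, z\rangle$, and in particular $\|\Sigma^{1/2}Qw\|_2 = r(w)$. For each net point $\phi(w') \in \mathcal{S}$, define the random variable $F_{w'}(z) := \sup_{v \in \cK_{\phi(w'),\rho}} \langle \Sigma^{1/2}Qv, z\rangle$. As a supremum of linear functions it is convex, and since every $v$ in the localized set obeys $\|\Sigma^{1/2}Qv\|_2 = r(v) \le r(w') + \rho$, it is $(r(w')+\rho)$-Lipschitz in $z$ for the Euclidean norm (this bound also shows $F_{w'}$ is finite, since the localization keeps $\Sigma^{1/2}v$ bounded). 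Applying the (one-sided) Gaussian concentration inequality \cref{thm:gaussian-concentration} with $t = (r(w')+\rho)\sqrt{2\log(16|\mathcal{S}|/\delta)}$ gives $\Pr(F_{w'}(z) \ge \E F_{w'}(z) + t) \le \delta/(16|\mathcal{S}|)$, and a union bound over the $|\mathcal{S}|$ net points yields, with probability at least $1 - \delta/16 \ge 1 - \delta/4$, that simultaneously for all net points $F_{w'}(z) \le \E F_{w'}(z) + (r(w')+\rho)\sqrt{2\log(16|\mathcal{S}|/\delta)}$. Note $\E F_{w'}(z)$ is exactly the local Gaussian width $\E_{x\sim\cN(0,\Sigma)}\big[\sup_{v\in\cK_{\phi(w'),\rho}}\langle Qv,x\rangle\big]$ in the definition of $C_{\delta,\rho}$, so the right-hand side is precisely $C_{\delta,\rho}(w)$.

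The one genuinely delicate step is verifying that an arbitrary $w \in \cK$ lies in the localized set $\cK_{\phi(w'),\rho}$ built around its net representative $w'$, which is what lets us dominate $\langle \Sigma^{1/2}Qw, z\rangle$ by $F_{w'}(z)$. This is where the structure of $\phi$ enters: using that $\Sigma^{1/2}w_1^*,\ldots,\Sigma^{1/2}w_k^*$ are orthonormal, one computes $\|w^{\parallel} - (w')^{\parallel}\|_\Sigma^2 = \sum_{i=1}^k \langle w - w', \Sigma w_i^*\rangle^2$, and together with $(r(w)-r(w'))^2$ this sum is exactly $\|\phi(w) - \phi(w')\|_2^2 < \rho^2$; hence $\|w^{\parallel}-(w')^{\parallel}\|_\Sigma < \rho$ and $r(w) < r(w') + \rho$, i.e.\ $w \in \cK_{\phi(w'),\rho}$. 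Combining everything, on the good event $\langle Qw, x\rangle = \langle \Sigma^{1/2}Qw, z\rangle \le F_{w'}(z) \le C_{\delta,\rho}(w)$ for every $w \in \cK$, which is exactly assumption \eqref{eqn:complexity-defn} of \cref{thm:main-gen}. This establishes Part 1.

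Part 2 is then immediate: the hypothesis supplies the low-dimensional concentration \eqref{eqn:low-dimension-concentration} uniformly over $\lambda \ge 0$, Part 1 supplies \eqref{eqn:complexity-defn} with $C_\delta = C_{\delta,\rho}$, so the "uniform in $\lambda$" conclusion of \cref{thm:main-gen} gives, with probability at least $1-\delta$, $L_{f_\lambda}(w,b) \le \hat{L}_f(w,b) + \epsilon_{\lambda,\delta}(\phi(w),b) + \lambda C_{\delta,\rho}(w)^2/n$ uniformly over $(w,b)\in\cK\times\cB$ and $\lambda \ge 0$; rearranging and taking $\sup_{\lambda\ge0}$ is the claimed bound. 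I expect the main friction in writing this carefully to be purely bookkeeping of failure-probability budgets — the $\delta/16$ from the net union bound (which must be absorbed into the $\delta/4$ that \cref{thm:main-gen} allots to \eqref{eqn:complexity-defn}, leaving the other $\delta/4$ for \eqref{eqn:low-dimension-concentration}) — together with the routine verification that $F_{w'}$ is well-defined and that the supremum of linear forms with coefficient norm $\le r(w')+\rho$ is indeed $(r(w')+\rho)$-Lipschitz.
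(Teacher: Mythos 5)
Your proposal matches the paper's proof essentially line for line: both establish Part 1 by showing $w \in \cK_{\phi(w'),\rho}$ via the identity $\|w^{\parallel}-(w')^{\parallel}\|_\Sigma^2 + (r(w)-r(w'))^2 = \|\phi(w)-\phi(w')\|_2^2 < \rho^2$, then apply Gaussian concentration (\cref{thm:gaussian-concentration}) with the observation that the localized supremum is $(r(w')+\rho)$-Lipschitz and union-bound over $\mathcal{S}$, and obtain Part 2 by feeding this $C_{\delta,\rho}$ into \cref{thm:main-gen}. The bookkeeping you supply (the $\delta/16$ budget folding into the $\delta/4$ slot, and the explicit whitening $x = \Sigma^{1/2}z$) is consistent with the paper's, which leaves a small margin since the union bound yields failure probability $\le \delta/8 \le \delta/4$ even under the two-sided tail.
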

\begin{proof}
We only need to check the first conclusion, since the second one follows immediately by Theorem~\ref{thm:main-gen}. First, observe from expanding the definitions that 
\[  \|w^{\parallel} - (w')^{\parallel}\|_{\Sigma}^2 + (r(w) - r(w'))^2 = \|\phi(w) - \phi(w')\|_2^2 < \rho \]
so that $w \in \cK_{\phi(w'),\rho}$. Next, observe by applying Gaussian concentration (Theorem~\ref{thm:gaussian-concentration}) and the union bound over $\mathcal S$ that with probability at least $1 - \delta/4$, for $x \sim \cN(0,\Sigma)$ and every $w'$ with $\phi(w') \in \mathcal S$ we have that
\[ \sup_{v \in \cK_{\phi(w'),\rho}} \langle Qv, x \rangle \le \E_{x \sim \cN(0,\Sigma)} \left[ \sup_{v \in \cK_{\phi(w'),\rho}} \langle Qv, x \rangle \right] + (r(w') + \rho)\sqrt{2\log(16|\mathcal S|/\delta)} \]
where we use that the supremum is $(r(w') + \rho)$-Lipschitz because every $v \in \mathcal{K}_{\phi(w),\rho}$ satisfies $\|\Sigma^{1/2} Q v\| = r(v) \le r(w') + \rho$, and the supremum of Lipschitz functions is Lipschitz with the same constant.  Since we showed that $w \in \cK_{\phi(w'),\rho}$, we then have that
\[ \langle Q w, x \rangle \le C_{\delta,\rho}(w) \]
as desired. 
\end{proof}
We now discuss how Theorem~\ref{thm:training-error} formalizes the idea that the training error of ERM is the minimum of $\psi(w,b)$. %
To understand the statement, take $w_0,b_0$ to be minimizers of $\psi(w,b)$. We observe that there exists such minimizers so that
\[ C(w) = \E_{x \sim \cN(0,\Sigma)} \sup_{v \in \cK_{\phi(w)}} \langle Q v, x \rangle  = \E_{x \sim \cN(0,\Sigma)} \sup_{\phi(v) = \phi(w)} \langle Q v, x \rangle, \]
i.e. so that the optimizing $v$ satisfies $r(v) = r(w)$, otherwise we can replace $w$ by $v$ without reducing $\psi$. %
Given this observation, we have that the quantity \eqref{eqn:training-error-assumption} will concentrate about $\psi(w_0,b_0)$
and the best choice of $w_0,b_0$ to make is the minimizer of this quantity, so that we set $\tau$ to be
\[ \tau \approx \min_{w_0 \in \cK, b_0 \in \cB} \psi(w_0,b_0) \]
and this upper bounds the training error of constrained ERM
as discussed in the informal overview. Again, see Theorem~\ref{thm:training-error} for the formal version of this.

Finally, we formalize the claim that the summary functional $\psi(w,b)$ defined in \eqref{eqn:psi} is convex. This is not used in the proofs of the main results above, but (as explained earlier) makes the geometric interpretation of the result clearer, and generalizes the convexity of  analogous summary functionals observed in previous work for the well-specified regression setting, including \cite{thrampoulidis2018precise,optimistic-rates}. We note that this convexity will be approximate for the finite-sample version 
$\sup_{\lambda \ge 0} \left[L_{f_{\lambda}}(w,b) - \epsilon_{\lambda,\delta}(\phi(w),b) - \lambda \frac{C_{\delta,\rho}(w)^2}{n}\right]$
in the conclusion of Theorem~\ref{lem:local-gw}, because of the finite-sample error terms like $\epsilon_{\lambda,\delta}$. In some settings, the finite-sample version of the functional can also be made to be convex: see \cite{optimistic-rates} for the case of  regression with squared loss.
\begin{lemma}\label{lem:psi-convex}
Given that the loss $f(y,\hat y)$ is convex in $\hat y$ and $\cK,\cB$ are convex sets, the functional $C(w) = C(\phi(w))$ defined in \eqref{eqn:cw-informal} is concave as a function of $\phi(w)$ and 
$\psi(w,b) = \psi(\phi(w),b)$ defined in \eqref{eqn:psi} is convex as a function of $(\phi(w),b)$.
\end{lemma}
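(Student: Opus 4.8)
The plan is to establish the two claims in order, with the concavity of $C$ serving as the engine for the convexity of $\psi$. Throughout, I would regard $C$ and $\psi$ as functions of a variable $\tilde w$ ranging over the set $\cC:=\{(\langle v,\Sigma w_1^*\rangle,\dots,\langle v,\Sigma w_k^*\rangle, s)^T : v\in\cK,\ s\ge r(v)\}\subseteq\RR^{k+1}$, which contains $\phi(\cK)$, is convex because $\cK$ is, and on which the localized set $\cK_{\tilde w}:=\{v\in\cK : \langle v,\Sigma w_i^*\rangle=\tilde w_i\ \text{for } i\le k,\ r(v)\le \tilde w_{k+1}\}$ is nonempty; recall that $C(\tilde w)=\E_{x\sim\cN(0,\Sigma)}\sup_{v\in\cK_{\tilde w}}\langle Qv,x\rangle$ and that $L_{f_\lambda}(\tilde w,b)$ depends on $w$ only through $\tilde w=\phi(w)$.

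\emph{Step 1: concavity of $C$.} For $\tilde w_0,\tilde w_1\in\cC$, $t\in[0,1]$, and arbitrary $v_0\in\cK_{\tilde w_0}$, $v_1\in\cK_{\tilde w_1}$, I would check that $v_t:=(1-t)v_0+tv_1$ lies in $\cK_{(1-t)\tilde w_0+t\tilde w_1}$: it is in $\cK$ by convexity, it satisfies the affine constraints for $i\le k$ with equality, and $\|\Sigma^{1/2}Qv_t\|\le(1-t)\|\Sigma^{1/2}Qv_0\|+t\|\Sigma^{1/2}Qv_1\|\le(1-t)\tilde w_{0,k+1}+t\tilde w_{1,k+1}$ by the triangle inequality. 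Hence $\sup_{v\in\cK_{\tilde w_t}}\langle Qv,x\rangle\ge(1-t)\sup_{v_0\in\cK_{\tilde w_0}}\langle Qv_0,x\rangle+t\sup_{v_1\in\cK_{\tilde w_1}}\langle Qv_1,x\rangle$ pointwise in $x$, and taking $\E_x$ yields concavity of $C$ on $\cC$. (Note $C\ge0$, since $\sup_v\langle Qv,x\rangle\ge\langle Qv_0,x\rangle$ for any fixed $v_0$, which has zero mean.)

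\emph{Step 2: a variational identity for $\psi$.} The crux is to show, writing $\hat y_{\tilde w,b}:=\langle\tilde w,\tilde x\rangle+b\in L^2(\tilde{\cD})$, that
\[
\psi(\tilde w,b)=\inf\big\{\,\E_{(\tilde x,\tilde y)\sim\tilde{\cD}}f(U,\tilde y)\ :\ U\in L^2(\tilde{\cD}),\ \|U-\hat y_{\tilde w,b}\|_{L^2}^2\le C(\tilde w)^2/n\,\big\}.
\]
The inequality ``$\le$'' is elementary: for feasible $U$ and any $\lambda\ge0$, bounding the Moreau envelope by its value at $u=U(\tilde x,\tilde y)$ gives $L_{f_\lambda}(\tilde w,b)-\lambda C(\tilde w)^2/n\le\E f(U,\tilde y)+\lambda\big(\|U-\hat y_{\tilde w,b}\|_{L^2}^2-C(\tilde w)^2/n\big)\le\E f(U,\tilde y)$, and one takes the supremum over $\lambda$ then the infimum over $U$. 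For ``$\ge$'' I would invoke Lagrangian duality: writing $L_{f_\lambda}(\tilde w,b)=\E\inf_u[f(u,\tilde y)+\lambda(u-\hat y_{\tilde w,b})^2]=\inf_{U\in L^2}\E[f(U,\tilde y)+\lambda(U-\hat y_{\tilde w,b})^2]$ (interchanging the infimum over functions with the expectation by a measurable-selection argument, using continuity of the integrand in $u$), the quantity $\psi(\tilde w,b)=\sup_{\lambda\ge0}\inf_U[\,\cdot\,]$ is the Lagrangian dual of the displayed program, and it matches the primal value by Slater's condition, $U=\hat y_{\tilde w,b}$ being strictly feasible whenever $C(\tilde w)>0$. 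The boundary case $C(\tilde w)=0$ I would treat directly: there $\psi(\tilde w,b)=\sup_\lambda L_{f_\lambda}(\tilde w,b)=L_f(\tilde w,b)$ by monotone convergence ($f_\lambda\uparrow f$ as $\lambda\to\infty$, since $f(\cdot,y)$ is convex hence continuous), which is precisely the right-hand side when $U$ is forced to equal $\hat y_{\tilde w,b}$.

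\emph{Step 3: convexity of $\psi$.} With the identity in hand, set $\Theta(\tilde w,b,r):=\inf\{\E f(U,\tilde y) : U\in L^2,\ \|U-\hat y_{\tilde w,b}\|_{L^2}^2\le r^2/n\}$ for $r\ge0$, so $\psi(\tilde w,b)=\Theta(\tilde w,b,C(\tilde w))$. Exactly as in Step 1 but with the radius now a free variable, convex combinations of feasible $U$'s are feasible for the convex combination of $(\tilde w,b,r)$, and convexity of $f(\cdot,y)$ then makes $\Theta$ jointly convex in $(\tilde w,b,r)$; it is visibly nonincreasing in $r$. Therefore $\psi(\tilde w_t,b_t)=\Theta(\tilde w_t,b_t,C(\tilde w_t))\le\Theta\big(\tilde w_t,b_t,(1-t)C(\tilde w_0)+tC(\tilde w_1)\big)\le(1-t)\psi(\tilde w_0,b_0)+t\psi(\tilde w_1,b_1)$, the first step using monotonicity of $\Theta$ in $r$ together with concavity of $C$, the second using joint convexity of $\Theta$. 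I expect the main obstacle to be the ``$\ge$'' direction of the variational identity: justifying the interchange of $\inf_{L^2}$ with $\E$ and ruling out a duality gap (Slater plus the separate $C(\tilde w)=0$ case); everything else is a routine assembly from convexity of $f$, the triangle inequality, and these monotonicity properties.
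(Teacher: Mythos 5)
Your proposal is correct and follows essentially the same route as the paper: concavity of $C$ via convex combinations of maximizers in the localized sets, a variational representation of $\psi$ as a constrained minimization over $L^2$ perturbations of $\hat y_{\tilde w,b}$, and convexity by partial minimization over a jointly convex feasible set; your $\Theta$ packaging with the radius as a free third variable, combined with monotonicity in $r$ and concavity of $C$, is equivalent to the paper's use of Lemma~\ref{lem:convex-min} with $\sqrt{\E g^2}-C(w)/\sqrt{n}\le 0$ treated directly as a convex constraint in $(\phi(w),g)$. The extra care you take at the Lagrangian-duality interchange (Slater's condition, measurable selection, and the $C(\tilde w)=0$ boundary case via $f_\lambda\uparrow f$) supplies rigor that the paper's chain of equalities leaves implicit, but it is the same underlying argument.
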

\begin{proof}
First we show $C(w)$ is concave as a function of $\phi(w)$. Recall from \eqref{eqn:cw-informal} that %
\[ C(w) = \E_{x \sim \cN(0,\Sigma)} \sup_{v \in \cK_{\phi(w)}} \langle Q v, x \rangle  \]
where
\[ \cK_{\phi(w)} = \{ v \in \cK : v^{\parallel} = w^{\parallel}, r(v) \le r(w) \}. \]
It suffices to prove that for any $x$, the function
\[ F(w) = F(\phi(w)) := \sup_{v \in \cK_{\phi(w)}} \langle Q v, x \rangle \]
is concave in $\phi(w)$.
If $\phi(w) = \alpha \phi(w_1) + (1 - \alpha) \phi(w_2)$, $v_1$ is a maximizer of $F(w_1)$ and $v_2$ is a maximizer of $F(w_2)$ then 
\[ r(\alpha v_1 + (1 - \alpha) v_2) \le \alpha r(v_1) + (1 - \alpha) r(v_2) \le \alpha r(w_1) + (1 - \alpha) r(w_2) = r(w) \]
so $\alpha v_1 + (1 - \alpha) v_2 \in \cK_{\phi(w)}$ and so
\[ F(w) \ge \langle Q(\alpha v_1 + (1 - \alpha) v_2), v \rangle = \alpha F(w_1) + (1 - \alpha) F(w_2) \]
which proves the concavity. 

Next we prove convexity of $\psi$. 
By expanding the definition of the Moreau envelope, we see that
\begin{align*} 
\psi(w,b) 
&= \max_{\lambda \ge 0} \left[ L_{f_{\lambda}}(w,b) - \frac{\lambda C(w)^2}{n} \right] \\
&= \max_{\lambda \ge 0} \left[\E \min_u f(y,u) + \lambda(u - \langle w, x \rangle - b)^2 - \frac{\lambda C(w)^2}{n} \right] \\
&= \max_{\lambda \ge 0} \left[\min_g \E f(y, \langle w, x \rangle + b + g(x,y)) + \lambda g(x,y)^2 - \frac{\lambda C(w)^2}{n} \right] \\
&=\min_{g : \sqrt{\E g(x,y)^2} \le C(w)/\sqrt{n}}  \E f(y, \langle w, x \rangle + b + g(x,y)) 
\end{align*}
and we claim the final expression is convex in $w$ and $b$. This follows from Lemma~\ref{lem:convex-min}
because the objective $\E f(y, \langle w, x \rangle + b + g(x,y))$ is jointly convex in $\phi(w),g,b$, and the minimization is over the constraint $\sqrt{\E g(x,y)^2} - C(w)/\sqrt{n} \le 0$ which is a jointly convex constraint.%
\end{proof}
The following lemma is a version of a standard fact in convex analysis, see e.g. Section 3.2.5 of \cite{boyd2004convex}.
\begin{lemma}\label{lem:convex-min}
Suppose that real-valued functions $f(x,y)$ and $g(x,y)$ are both jointly convex in $(x,y) \in \mathcal{X} \times \mathcal{Y}$ where $\mathcal{X},\mathcal{Y}$ are convex sets. Then
\[ h(x) := \inf_{y \in \mathcal{Y} : f(x,y) \le 0} g(x,y) \]
is a convex function on $\mathcal{X}$.
\end{lemma}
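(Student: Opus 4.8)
\textbf{Proof proposal for Lemma~\ref{lem:convex-min}.}
The plan is the standard epigraph/feasibility argument for partial minimization of a jointly convex function over a jointly convex constraint. Fix $x_1, x_2 \in \mathcal{X}$ and $\alpha \in [0,1]$, and write $x_\alpha := \alpha x_1 + (1 - \alpha) x_2 \in \mathcal{X}$. If either $h(x_1) = +\infty$ or $h(x_2) = +\infty$ (i.e. the constraint set is empty for that $x$), the convexity inequality $h(x_\alpha) \le \alpha h(x_1) + (1 - \alpha) h(x_2)$ holds trivially, so assume both are finite (the possibility that one of them equals $-\infty$ is absorbed by the $\varepsilon$-argument below, letting the corresponding value in the bound tend to $-\infty$).

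Next I would use near-minimizers to handle the fact that the infimum need not be attained. Fix $\varepsilon > 0$ and pick $y_1, y_2 \in \mathcal{Y}$ with $f(x_1, y_1) \le 0$, $f(x_2, y_2) \le 0$, and $g(x_i, y_i) \le h(x_i) + \varepsilon$ for $i = 1, 2$ (if $h(x_i) = -\infty$, instead take $g(x_i, y_i) \le -1/\varepsilon$). Set $y_\alpha := \alpha y_1 + (1 - \alpha) y_2 \in \mathcal{Y}$ by convexity of $\mathcal{Y}$. Joint convexity of $f$ gives
\[ f(x_\alpha, y_\alpha) \le \alpha f(x_1, y_1) + (1 - \alpha) f(x_2, y_2) \le 0, \]
so $y_\alpha$ is feasible in the definition of $h(x_\alpha)$. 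Hence, using joint convexity of $g$,
\[ h(x_\alpha) \le g(x_\alpha, y_\alpha) \le \alpha g(x_1, y_1) + (1 - \alpha) g(x_2, y_2) \le \alpha h(x_1) + (1 - \alpha) h(x_2) + \varepsilon. \]
Letting $\varepsilon \downarrow 0$ yields $h(x_\alpha) \le \alpha h(x_1) + (1 - \alpha) h(x_2)$, which is the desired convexity (and also shows $h$ does not take the value $+\infty$ except where the constraint set is empty, so it is a genuine convex function on its effective domain).

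There is essentially no hard step here; the only points requiring a little care are the bookkeeping for infinite values of $h$ and the passage from exact minimizers to $\varepsilon$-minimizers, both of which are routine. I would simply cite Section 3.2.5 of \cite{boyd2004convex} for the statement and include the short argument above for completeness.
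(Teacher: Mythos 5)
Your proof is correct and takes essentially the same approach as the paper's: show the convex combination $y_\alpha$ is feasible for $x_\alpha$ via joint convexity of $f$, then use joint convexity of $g$ and pass to the infimum (which you do via $\varepsilon$-minimizers, while the paper phrases it as taking the infimum over all feasible $y_1, y_2$ — these are the same step). Your extra bookkeeping for the $\pm\infty$ cases is a harmless refinement.
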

\begin{proof}
Suppose that $x = \alpha x_1 + (1 - \alpha) x_2$ and $y_1,y_2$ are arbitrary points such that both $f(x_1,y_1), f(x_2,y_2) \le 0$. By joint convexity, we have that
\[ f(\alpha x_1 + (1 - \alpha) x_2, \alpha y_1 + (1 - \alpha) y_2) \le \alpha f(x_1,y_1) + (1 - \alpha)f(x_2,y_2) \le 0 \]
and so
\[ h(x) \le g(\alpha x_1 + (1 - \alpha) x_2,\alpha y_1 + (1 - \alpha) y_2) \le \alpha g(x_1,y_1) + (1 - \alpha) g(x_2,y_2). \]
Taking the infimum over all such $y_1,y_2$ such $f(x_1,y_1), f(x_2,y_2) \le 0$ proves that
\[ h(x) \le \alpha h(x_1) + (1 - \alpha) h(x_2) \]
which shows the convexity. 
\end{proof}

\emph{A simple example.} To sketch how the summary functional $\psi$ works and connect to the previous literature, we consider a simple example (Ordinary Least Squares). To start with, we consider a well-specified model with $y = \langle w^*, x \rangle + \xi$ where $\xi$ is noise independent of $x$ with variance $\sigma^2$ and bounded eighth moment. Then the summary functional for $f$ the squared loss and taking $C(w) \approx \|Q w\|_{\Sigma}\sqrt{d}$ is (using Lemma~\ref{lem:calculus-lambda})
\[ \psi(w,b) = (\sqrt{L(w,b)} - \|Q w\|_{\Sigma} \sqrt{d/n})^2 = (\sqrt{\sigma^2 + \|w - w^*\|_{\Sigma}^2 + b^2} - \|Q w\|_{\Sigma} \sqrt{d/n})^2.  \]
Note $\|w - w^*\|_{\Sigma}^2 = \|w^{\parallel} - w^*\|_{\Sigma}^2 + \|Q w\|_{\Sigma}^2$ by the Pythagorean Theorem. 
To minimize $\psi$, it is  optimal to take $w^{\parallel} = w^*$ and $b = 0$ which leaves choosing $r(w) = \|Q w\|_{\Sigma}$ to minimize
\[  (\sqrt{\sigma^2 + r(w)^2} - r(w) \sqrt{d/n})^2  \]
and this in turn is minimized at
$r(w) = \sigma^2(d/n)/(1 - d/n)$, which will be the excess test loss of the constrained ERM. Note that to make the calculation easy, we considered a well-specified model and the summary functional reduced to the same one as in \cite{optimistic-rates} once we solved the optimization over $\lambda$, and the calculation can be made rigorous and nonasymptotic following the arguments there; see also \cite{thrampoulidis2018precise} and references for related asymptotic results. In this example, it can be checked that the calculation generalizes in a straightforward way to misspecified models under our general assumptions, if we let $w^*$ to be the minimizer of the population squared loss (i.e. the oracle predictor.)  and defining the excess test loss to be the gap compared to $w^*$.
\section{\texorpdfstring{$\ell_2$}{l2} Benign Overfitting}
\label{apdx:benign-overfitting}
In this section, we give the proofs of the result for benign overfitting under the $\ell_2$ condition. We continue to make use of the additional covariance split notation introduced in \cref{apdx:preliminaries}.
\subsection{Properties of Sqrt-Lipschitz Functions}
In this section, we establish some elementary properties of the squares of Lipschitz functions. This is a natural class to consider since in particular, the squared loss and squared hinge loss both fall into this class of functions.
We say a function $f : \mathbb{R} \to \mathbb{R}_{\ge 0}$ is $L$-sqrt-Lipschitz if $\sqrt{f}$ is $L$-Lipschitz. Since
\[\frac{1}{2} f(x)^{-1/2} f'(x) =  \frac{d}{dx} \sqrt{f(x)} \]
we can equivalently say that a function $f$ is $L$-sqrt-Lipschitz if
\[ |f'(x)| \le 2L \sqrt{f(x)} \]
for all $x$.
Based on this characterization, one can observe that any $H$-smooth and nonnegative function is $\sqrt{H}$-sqrt-Lipschitz; this is proved in Lemma 2.1 of \cite{srebro2010optimistic} although not using this terminology. We proceed to establish some useful properties of sqrt-Lipschitz functions. First, we show that $L$-sqrt-Lipschitz functions form a convex set.
\begin{lemma}
If $f$ is $L$-sqrt-Lipschitz convex and $g$ is $L$-sqrt-Lipschitz convex then so is $(1 - \alpha)f + \alpha g$ for any $\alpha \in [0,1]$.
\end{lemma}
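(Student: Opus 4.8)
The plan is to verify the two halves of the conclusion separately for $h := (1-\alpha) f + \alpha g$. Convexity is immediate: $h$ is a nonnegative linear combination of the convex functions $f$ and $g$, hence convex. The substance is in showing $h$ is $L$-sqrt-Lipschitz.

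For that I would use the differential characterization recalled just above the lemma, namely that a nonnegative function $f$ is $L$-sqrt-Lipschitz precisely when $|f'(x)| \le 2L\sqrt{f(x)}$ for all $x$ (interpreted in the a.e. sense, which is harmless since $f$ and $g$ are convex and so differentiable off a countable set). At any $x$ where $f'$ and $g'$ both exist, the triangle inequality gives $|h'(x)| \le (1-\alpha)|f'(x)| + \alpha |g'(x)| \le 2L\big((1-\alpha)\sqrt{f(x)} + \alpha\sqrt{g(x)}\big)$. The one genuine idea is then to bound the right-hand side by $2L\sqrt{h(x)}$ using concavity of $t \mapsto \sqrt{t}$: $(1-\alpha)\sqrt{f(x)} + \alpha\sqrt{g(x)} \le \sqrt{(1-\alpha)f(x) + \alpha g(x)} = \sqrt{h(x)}$. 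Chaining the two displays yields $|h'(x)| \le 2L\sqrt{h(x)}$ for a.e. $x$, which is exactly the characterization of $L$-sqrt-Lipschitz; to convert this back to the statement that $\sqrt{h}$ is $L$-Lipschitz one integrates $(\sqrt{h})' = h'/(2\sqrt{h})$, or, to sidestep division by zero near the zero set of $h$, applies the argument to $h+\varepsilon$ (still $L$-sqrt-Lipschitz and strictly positive) and lets $\varepsilon \downarrow 0$, using that a pointwise limit of $L$-Lipschitz functions is $L$-Lipschitz.

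I do not expect any real obstacle here. The only point needing a word of care is the regularity bookkeeping around the zeros of $h$, where $\sqrt{\cdot}$ fails to be Lipschitz, and this is handled cleanly by the $h+\varepsilon$ perturbation; the essential step is the use of concavity of the square root to pass from a convex combination of the $\sqrt{f(x)}$ and $\sqrt{g(x)}$ bounds to a single bound in terms of $\sqrt{h(x)}$.
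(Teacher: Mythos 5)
Your argument is essentially identical to the paper's: triangle inequality on $h'$, then the sqrt-Lipschitz bound on $|f'|$ and $|g'|$, then concavity of $\sqrt{\cdot}$ to collapse the convex combination into $\sqrt{h}$. The only difference is that you add some regularity bookkeeping (a.e.\ differentiability of convex functions, the $h+\varepsilon$ perturbation to handle zeros of $h$) that the paper elides; this is harmless extra care, not a different route.
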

\begin{proof}
Observe that
\begin{align*} |(1 - \alpha)f'(x) + \alpha g'(x)| \le (1 - \alpha)|f'(x)| + \alpha |g'(x)| 
&\le 2L[(1 - \alpha) \sqrt{f(x)} + \alpha \sqrt{g(x)}] \\
&\le 2L\sqrt{(1 - \alpha)f(x) + \alpha g(x)} 
\end{align*}
where the second step is the assumption that $f$ and $g$ are $L$-sqrt-Lipschitz and the last step uses the concavity of the square-root function.
\end{proof}
Next, the following lemma formalizes the idea that sqrt-Lipschitz functions satisfy a local and scale-sensitive version of the Lipschitz property.
\begin{lemma}\label{lem:sqrt-lipschitz-convex}
Suppose that $f(x)$ is convex and $L$-sqrt-Lipschitz. Then for any $\epsilon > 0$,
\[ f(x + h) \ge (1 - \epsilon)f(x) - L^2h^2/\epsilon. \]
\end{lemma}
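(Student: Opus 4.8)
The plan is to first derive the ``one-step'' quadratic lower bound $f(x+h) \ge f(x) - 2L|h|\sqrt{f(x)}$ directly from the sqrt-Lipschitz hypothesis, and then apply the AM--GM inequality to split the cross term $2L|h|\sqrt{f(x)}$ into a piece proportional to $f(x)$ and a piece proportional to $h^2$, with the split governed by the free parameter $\epsilon$.

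For the first step, recall that $L$-sqrt-Lipschitz means $|\sqrt{f(a)} - \sqrt{f(b)}| \le L|a-b|$, so in particular $\sqrt{f(x+h)} \ge \sqrt{f(x)} - L|h|$. When $\sqrt{f(x)} \ge L|h|$, both sides are nonnegative and squaring gives $f(x+h) \ge f(x) - 2L|h|\sqrt{f(x)} + L^2 h^2 \ge f(x) - 2L|h|\sqrt{f(x)}$; when $\sqrt{f(x)} < L|h|$, the quantity $f(x) - 2L|h|\sqrt{f(x)}$ is already $\le 0 \le f(x+h)$ since then $\sqrt{f(x)} \le 2L|h|$. Either way, $f(x+h) \ge f(x) - 2L|h|\sqrt{f(x)}$. (Alternatively one can invoke convexity: take any subgradient $g \in \partial f(x)$, note $|g| \le 2L\sqrt{f(x)}$ by the differential characterization of sqrt-Lipschitz recalled just above — e.g.\ by comparing $f(x\pm\delta)$ with $(\sqrt{f(x)} + L\delta)^2$ for $\delta > 0$ and letting $\delta \to 0$ — and then $f(x+h) \ge f(x) + gh \ge f(x) - |g|\,|h| \ge f(x) - 2L|h|\sqrt{f(x)}$. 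In fact convexity is not strictly needed for the conclusion, only the sqrt-Lipschitz property.)

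For the second step, write $2L|h|\sqrt{f(x)} = 2\bigl(\sqrt{\epsilon}\,\sqrt{f(x)}\bigr)\bigl(L|h|/\sqrt{\epsilon}\bigr)$ and apply $2ab \le a^2 + b^2$ with $a = \sqrt{\epsilon f(x)}$ and $b = L|h|/\sqrt{\epsilon}$, obtaining $2L|h|\sqrt{f(x)} \le \epsilon f(x) + L^2 h^2/\epsilon$. Substituting into the one-step bound yields $f(x+h) \ge f(x) - \epsilon f(x) - L^2 h^2/\epsilon = (1-\epsilon)f(x) - L^2 h^2/\epsilon$, which is the claim.

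Every step here is an elementary inequality, so there is no real obstacle; the only mild care needed is the degenerate case $\sqrt{f(x)} < L|h|$ in the squaring argument (or, on the subgradient route, checking $|g| \le 2L\sqrt{f(x)}$ at points where $f$ fails to be differentiable, which is routine).
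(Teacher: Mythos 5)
Your proof is correct, and it is a genuine (if small) variant on the paper's. The paper's one-line argument is the chain $f(x+h) \ge f(x) + f'(x)h \ge f(x) - 2L\sqrt{f(x)}|h| \ge (1-\epsilon)f(x) - L^2h^2/\epsilon$, where the first inequality uses convexity, the second uses the differential form $|f'(x)| \le 2L\sqrt{f(x)}$ of the sqrt-Lipschitz condition, and the third is AM--GM — exactly your parenthetical ``alternative'' route. Your primary route instead derives the same intermediate bound $f(x+h) \ge f(x) - 2L|h|\sqrt{f(x)}$ directly from $\sqrt{f(x+h)} \ge \sqrt{f(x)} - L|h|$ with a clean case split on whether $\sqrt{f(x)} \ge L|h|$, then applies the same AM--GM step. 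This buys two things: (i) it dispenses with convexity entirely, correctly observing the lemma holds for any nonnegative $L$-sqrt-Lipschitz $f$; and (ii) it sidesteps the implicit regularity in the paper's differential characterization of sqrt-Lipschitz (the paper manipulates $f'(x)$ as if $f$ were differentiable; for convex $f$ one should really work with subgradients, as you note, or argue as you do directly from the Lipschitz condition on $\sqrt{f}$). The one place to be careful — the degenerate case $\sqrt{f(x)} < L|h|$, where squaring the inequality $\sqrt{f(x+h)} \ge \sqrt{f(x)} - L|h|$ would reverse direction — you handle correctly by noting $f(x) - 2L|h|\sqrt{f(x)} \le 0 \le f(x+h)$ there.
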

\begin{proof}
Observe that
\[ f(x + h) \ge f(x) + f'(x)h \ge f(x) - 2L\sqrt{f(x)}|h| \ge f(x) - \epsilon f(x) - L^2h^2/\epsilon \]
where the first inequality is by convexity, the second inequality is by the $L$-sqrt-Lipschitz property, and the third inequality is the AM-GM inequality. 
\end{proof}
This leads to a corresponding local Lipschitz property of the training loss.
\begin{lemma}\label{lem:training-smooth}
Let $\epsilon \in (0,1)$ be arbitrary, let $w_0 \in \mathbb{R}^d$ and $b_0 \in \mathbb{R}$. Suppose that nonnegative loss function $f(\hat y, y)$ is convex and $L$-sqrt-Lipschitz in $\hat y$.
The following inequality holds determinsitically for any $x_1,\ldots,x_n \in \mathbb{R}^d$, $y_1,\ldots,y_n \in \mathbb{R}$, $w \in \mathbb{R}^d$, and $b \in \mathbb{R}$:
\[ (1 - \epsilon)\hat{L}_f(w,b) \le \hat{L}_f(w_0,b_0) + \frac{2L^2}{\epsilon n} \sum_{i = 1}^n \langle w - w_0, x_i \rangle^2 + 2(b - b_0)^2/\epsilon  \]
\end{lemma}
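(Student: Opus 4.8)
The plan is to derive the inequality pointwise on the $n$ training examples and then average; since the statement is deterministic, no probabilistic tools are needed. Fix $i \in \{1,\dots,n\}$ and set $\hat y_i := \langle w, x_i\rangle + b$ and $\hat y_i^0 := \langle w_0, x_i\rangle + b_0$, so that $\hat y_i^0 = \hat y_i + h_i$ with $h_i := \langle w_0 - w, x_i\rangle + (b_0 - b)$. By hypothesis the map $\hat y \mapsto f(\hat y, y_i)$ is convex and $L$-sqrt-Lipschitz, so Lemma~\ref{lem:sqrt-lipschitz-convex} applies with base point $\hat y_i$ and increment $h_i$, giving, for any $\epsilon \in (0,1)$,
\[ f(\hat y_i^0, y_i) \ge (1 - \epsilon)\, f(\hat y_i, y_i) - \frac{L^2 h_i^2}{\epsilon}. \]

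Next I would bound the error term using $h_i^2 = \big(\langle w_0 - w, x_i\rangle + (b_0 - b)\big)^2 \le 2\langle w - w_0, x_i\rangle^2 + 2(b - b_0)^2$, which separates the contribution of the weight vector from that of the bias, and then rearrange to obtain
\[ (1 - \epsilon)\, f(\hat y_i, y_i) \le f(\hat y_i^0, y_i) + \frac{2L^2}{\epsilon}\langle w - w_0, x_i\rangle^2 + \frac{2L^2}{\epsilon}(b - b_0)^2. \]
Summing over $i = 1,\dots,n$ and dividing by $n$ yields the claimed bound (the bias term inheriting the same $L^2$ factor; in the applications of interest, such as squared loss and squared hinge loss, $L = 1$).

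I do not expect any genuine obstacle: the argument is an elementary deterministic computation whose only content is the scale-sensitive local Lipschitz property of sqrt-Lipschitz convex functions already packaged in Lemma~\ref{lem:sqrt-lipschitz-convex}, together with the trivial inequality $(a+b)^2 \le 2a^2 + 2b^2$. The one point requiring a little care is to split $h_i$ before (or, equivalently, after) invoking the AM-GM step inside the proof of Lemma~\ref{lem:sqrt-lipschitz-convex}, so as not to leave a cross term $\langle w - w_0, x_i\rangle(b - b_0)$ in the final bound.
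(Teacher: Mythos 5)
Your proof is correct and follows exactly the paper's own argument: apply Lemma~\ref{lem:sqrt-lipschitz-convex} at $\hat y_i = \langle w, x_i\rangle + b$ with increment $h_i$, split $h_i^2$ via $(a+b)^2 \le 2a^2 + 2b^2$, then average over $i$. You also correctly notice that this derivation actually yields $2L^2(b-b_0)^2/\epsilon$ rather than the stated $2(b-b_0)^2/\epsilon$; the paper's stated bound drops the $L^2$ factor on the bias term (harmless in its applications, where $L=1$), so your version is if anything more faithful to the calculation.
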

\begin{proof}
By applying Lemma~\ref{lem:sqrt-lipschitz-convex}, we have that
\[ f(\langle w_0, x_i \rangle + b_0, y_i) 
\ge (1 - \epsilon)f(\langle w, x_i \rangle + b, y_i) - L^2(\langle w - w_0, x_i \rangle + (b - b_0))^2/\epsilon \]
and then applying the inequality $(a + b)^2 \le 2a^2 + 2b^2$ gives
\[ f(\langle w_0, x_i \rangle + b_0, y_i) 
\ge (1 - \epsilon)f(\langle w, x_i \rangle + b, y_i) - 2L^2\langle w - w_0, x_i \rangle^2 - 2(b - b_0)^2/\epsilon.\]
Summing this inequality over $i$ from $1$ to $n$ and rearranging gives the conclusion. 
\end{proof}
\subsection{Norm Bounds}

\NormBound*
\begin{proof}
By Theorem~\ref{thm:training-error} it suffices to show that with probability at least $1 - \delta/2$,
\[ \min_{w_0 \in \cK, b_0 \in \cB} 
\max_{\lambda \ge 0} \left[\frac{\lambda}{1 + \lambda} \frac{1}{n} \sum_{i = 1}^n f(y_i, (X^{\parallel} w_0^{\parallel})_i + b_0 + g_i \|w_0^{\perp}\|_{\Sigma^{\perp}}) - \frac{\lambda}{n} \langle Q x,  w_0^{\perp} \rangle^2\right] = 0. \]
Using Lemma~\ref{lem:calculus-lambda}, it suffices to show with probability at least $1 - \delta/2$ that there exists $w_0,b_0$ such that
\[ \frac{1}{n}  \sum_{i = 1}^n f(y_i, (X^{\parallel} w_0^{\parallel})_i + b_0 + g_i \|w_0^{\perp}\|_{\Sigma^{\perp}}) \le \frac{1}{n} \langle Q x,  w_0^{\perp} \rangle^2. \]
Decompose $w_0 = w_0^{\parallel} + w_0^{\perp}$ where $w_0^{\perp} = Q w_0$; then
using Lemma~\ref{lem:training-smooth}, we have that for any $\epsilon > 0$
\[ (1 - \epsilon) \hat{L}_f(w,b_0) \le \hat{L}_f(w^{\parallel}, b_0) + \frac{2}{\epsilon n} \sum_{i = 1}^n g_i^2 \|w_0^{\perp}\|_{\Sigma^{\perp}}^2 \]
so it suffices to show that with probability $1 - \delta/2$, there exists $w_0, b_0$ and $\epsilon > 0$ with
\[ \frac{1}{1 - \epsilon} \hat{L}_f(w^{\parallel}, b_0) + \frac{2}{\epsilon(1 - \epsilon) n} \sum_{i = 1}^n g_i^2 \|w_0^{\perp}\|_{\Sigma^{\perp}}^2 \le \frac{1}{n} \langle Q x,  w_0^{\perp} \rangle^2. \]
We consider $w_0^{\perp} = \alpha \frac{Q x}{\|Q x\|}$ for some constant $\alpha > 0$ to be determined later. Observe that $Q x$ is equal in law to $(\Sigma^{\perp})^{1/2} H$ for $H \sim \cN(0,I_d)$ with $H$ independent of $X^{\parallel}$ and $y_1,\ldots,y_n$. Plugging this in, what we want to show is
\begin{equation}\label{eqn:goal-norm-bound} \frac{1}{1 - \epsilon} \hat{L}_f(w^{\parallel}, b_0) + \frac{2}{\epsilon(1 - \epsilon) n} \alpha^2 \sum_{i = 1}^n g_i^2 \frac{\|(\Sigma^{\perp}) H\|_2^2}{\|(\Sigma^{\perp})^{1/2} H\|_2^2} \le \frac{\alpha^2}{n} \|(\Sigma^{\perp})^{1/2} H\|_2^2. 
\end{equation}
By the union bound, the following occur together with probability at least $1 - \delta/2$ for some absolute constant $C > 0$:
\begin{enumerate}
    \item Using the first part of Lemma~\ref{lem:sigmah-concentration}, we have
    \[ \|(\Sigma^{\perp})^{1/2} H\|_2^2 \ge \left(1 - C\frac{\log(4/\delta)}{\sqrt{R(\Sigma^{\perp})}}\right) \Tr(\Sigma) \]
    \item Using the last part of Lemma~\ref{lem:sigmah-concentration}, we have
    \[ \frac{\| \Sigma^{\perp} H\|_2^2}{\| (\Sigma^{\perp})^{1/2} H\|_2^2} \le C \log (4/\delta) \frac{\Tr((\Sigma^{\perp})^2)}{(\Tr \Sigma)^2} \]
    \item Using subexponential Bernstein's inequality (Theorem 2.8.1 of \cite{vershynin2018high}), requiring $n = \Omega(\log(1/\delta))$,
    \[ \frac{1}{n} \sum_i g_i^2 \le 2. \]
    \item Using \eqref{eqn:low-dimensional-upper},
    \[ \hat{L}_f(w^{\sharp},b^{\sharp}) \le L_f(w^{\sharp},b^{\sharp}) + \rho_1.  \]
\end{enumerate}
Taking $w_0^{\parallel} = w^{\sharp}$ and $b_0 = b^{\sharp}$, we therefore have
\begin{align*}
 &\frac{1}{1 - \epsilon} \hat{L}_f(w^{\sharp}, b^{\sharp}) + \frac{2}{\epsilon(1 - \epsilon) n} \alpha^2 \sum_{i = 1}^n g_i^2 \frac{\| \Sigma^{\perp} H\|_2^2}{\| (\Sigma^{\perp})^{1/2} H\|_2^2} \\
 &\le 
  \frac{1}{1 - \epsilon} (L_f(w^{\sharp},b^{\sharp}) + \rho_1) + \frac{4C}{\epsilon(1 - \epsilon)} \alpha^2  \log(4/\delta) \frac{\Tr((\Sigma^{\perp})^2)}{\Tr(\Sigma^{\perp})} \\
  &\le   \frac{1}{1 - \epsilon} (L_f(w^{\sharp},b^{\sharp}) + \rho_1) + \frac{4C n}{\epsilon(1 - \epsilon) R(\Sigma^{\perp})}  \log(4/\delta) \frac{\alpha^2 \Tr(\Sigma^{\perp})}{n}
\end{align*}
where in the last step we used the definition of $R(\Sigma^{\perp})$
and on the other hand we have
\[  \frac{\alpha^2\|(\Sigma^{\perp})^{1/2} H\|_2^2}{n} \ge \left(1 - C\frac{\log(4/\delta)}{\sqrt{R(\Sigma^{\perp})}}\right) \frac{\alpha^2 \Tr(\Sigma^{\perp})}{n} \]
which means we have the desired \eqref{eqn:goal-norm-bound} provided
\[ \left(1 - C\frac{\log(4/\delta)}{\sqrt{R(\Sigma^{\perp})}} - \frac{4C n \log(4/\delta)}{\epsilon(1 - \epsilon) R(\Sigma^{\perp})}  \right)\alpha^2 \ge \frac{n}{\Tr(\Sigma^{\perp})} \frac{1}{1 - \epsilon} (L_f(w^{\sharp},b^{\sharp}) + \rho_1) \]
and this satisfies the constraint $\|w^{\sharp}\|^2 + \alpha^2 \le B^2$ provided that
\[ \frac{1}{(1 - \epsilon)\left(1 - C\frac{\log(4/\delta)}{\sqrt{R(\Sigma^{\perp})}} - \frac{4C n \log(4/\delta)}{\epsilon(1 - \epsilon) R(\Sigma^{\perp})}  \right)} \le 1 + \rho_2. \]
Taking $\epsilon = \rho_2/10$, this can be guaranteed if
\[ R(\Sigma^{\perp}) = \Omega\left(\frac{n\log^2(4/\delta)}{\rho_2} \right). \]
\end{proof}
Below are some supporting lemmas used in the proof.
\begin{lemma}[Lemma 10 of \cite{uc-interpolators}]\label{lem:sigmah-concentration}
For any covariance matrix $\Sigma$ and $H \sim \cN(0,I_d)$, it holds that with probability at least $1 - \delta$,
\begin{equation}
1- \frac{\| \Sigma^{1/2} H\|_2^2 }{\Tr(\Sigma)}  \lesssim \frac{\log (4/\delta)}{\sqrt{R(\Sigma)}}
\end{equation}
and
\begin{equation}
\| \Sigma H\|_2^2 \lesssim \log (4/\delta) \Tr(\Sigma^2).
\end{equation}

Therefore, provided that $ R(\Sigma) \gtrsim \log (4/\delta)^2$, it holds that
\begin{equation} \label{eqn:v*-norm}
    \left( \frac{\| \Sigma H\|_2}{\| \Sigma^{1/2} H\|_2} \right)^2 \lesssim \log (4/\delta) \frac{\Tr(\Sigma^2)}{\Tr(\Sigma)}
.\end{equation}

\end{lemma}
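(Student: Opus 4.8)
The plan is to diagonalize $\Sigma$ and treat each quantity as a weighted sum of independent $\chi^2_1$ variables, then apply a sub-exponential Bernstein inequality. Write $\Sigma = \sum_i \lambda_i v_i v_i^T$ with $\lambda_1 \ge \lambda_2 \ge \cdots \ge 0$, so that $\|\Sigma^{1/2} H\|_2^2 = \sum_i \lambda_i g_i^2$ and $\|\Sigma H\|_2^2 = \sum_i \lambda_i^2 g_i^2$, where the $g_i := \langle v_i, H\rangle$ are i.i.d.\ standard Gaussians. Each $\lambda_i(g_i^2 - 1)$ is centered with sub-exponential norm $O(\lambda_i)$, so Bernstein's inequality (Theorem 2.8.1 of \cite{vershynin2018high}) applied to $\sum_i \lambda_i(g_i^2-1)$ gives that with probability at least $1 - \delta/2$,
\[ \left| \|\Sigma^{1/2}H\|_2^2 - \Tr(\Sigma) \right| \lesssim \sqrt{\Tr(\Sigma^2)\log(4/\delta)} + \|\Sigma\|_{\OP}\log(4/\delta), \]
and likewise, applying it to $\sum_i \lambda_i^2(g_i^2 - 1)$ (i.e.\ the same computation with ``covariance'' $\Sigma^2$) gives that with probability at least $1-\delta/2$,
\[ \left| \|\Sigma H\|_2^2 - \Tr(\Sigma^2) \right| \lesssim \sqrt{\Tr(\Sigma^4)\log(4/\delta)} + \|\Sigma\|_{\OP}^2\log(4/\delta). \]
Both are two-sided, which is more than enough since we only need a lower bound on $\|\Sigma^{1/2}H\|_2^2$ and an upper bound on $\|\Sigma H\|_2^2$.

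Next I would collapse these two-term bounds into the claimed one-term forms using two elementary facts. First, $\log(4/\delta) \ge \log 4 > 1$ for any $\delta \in (0,1)$, so $\sqrt{\log(4/\delta)} \le \log(4/\delta)$. Second, since $\Tr(\Sigma^2) = \sum_i \lambda_i^2 \ge \lambda_1^2 = \|\Sigma\|_{\OP}^2$, we have $r(\Sigma)^2 = \Tr(\Sigma)^2/\|\Sigma\|_{\OP}^2 \ge \Tr(\Sigma)^2/\Tr(\Sigma^2) = R(\Sigma)$, i.e.\ $\|\Sigma\|_{\OP}/\Tr(\Sigma) = 1/r(\Sigma) \le 1/\sqrt{R(\Sigma)}$. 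Dividing the first display by $\Tr(\Sigma)$ then yields
\[ \left| 1 - \frac{\|\Sigma^{1/2}H\|_2^2}{\Tr(\Sigma)} \right| \lesssim \sqrt{\frac{\log(4/\delta)}{R(\Sigma)}} + \frac{\log(4/\delta)}{r(\Sigma)} \lesssim \frac{\log(4/\delta)}{\sqrt{R(\Sigma)}}, \]
which is the first claimed inequality. For the second, bound $\Tr(\Sigma^4) \le \|\Sigma\|_{\OP}^2 \Tr(\Sigma^2) \le \Tr(\Sigma^2)^2$ and $\|\Sigma\|_{\OP}^2 \le \Tr(\Sigma^2)$ in the second display to get $\|\Sigma H\|_2^2 \le \Tr(\Sigma^2) + C\,\Tr(\Sigma^2)\log(4/\delta) \lesssim \log(4/\delta)\,\Tr(\Sigma^2)$.

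Finally, I would take a union bound so that both of these hold simultaneously with probability at least $1-\delta$. Choosing the hidden constant in the hypothesis $R(\Sigma) \gtrsim \log^2(4/\delta)$ large enough that the right-hand side of the first inequality is at most $\frac{1}{2}$, we get $\|\Sigma^{1/2}H\|_2^2 \ge \frac{1}{2}\Tr(\Sigma)$, so combining with the second inequality,
\[ \left(\frac{\|\Sigma H\|_2}{\|\Sigma^{1/2}H\|_2}\right)^2 = \frac{\|\Sigma H\|_2^2}{\|\Sigma^{1/2}H\|_2^2} \le \frac{2\|\Sigma H\|_2^2}{\Tr(\Sigma)} \lesssim \log(4/\delta)\,\frac{\Tr(\Sigma^2)}{\Tr(\Sigma)}, \]
which is \eqref{eqn:v*-norm}. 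The only mildly delicate point is this collapse of the two Bernstein error terms into the single factor $\log(4/\delta)/\sqrt{R(\Sigma)}$; everything else is routine sub-exponential concentration. (One could instead derive a version of the first inequality from Gaussian Lipschitz concentration, Theorem~\ref{thm:gaussian-concentration}, since $H \mapsto \|\Sigma^{1/2}H\|_2$ is $\sqrt{\|\Sigma\|_{\OP}}$-Lipschitz, but that route only produces the coarser effective rank $r(\Sigma)$ in place of $R(\Sigma)$, so the Bernstein argument is the one that gives the stated bound.)
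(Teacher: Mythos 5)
Your proof is correct. Note that the paper itself does not reprove this lemma — it imports it verbatim as Lemma 10 of \cite{uc-interpolators} — so there is no in-paper proof to compare against; your route (diagonalize, view $\|\Sigma^{1/2}H\|_2^2$ and $\|\Sigma H\|_2^2$ as weighted sums of independent $\chi^2_1$ variables, apply sub-exponential Bernstein, then absorb the two error terms into one using $1/r(\Sigma) \le 1/\sqrt{R(\Sigma)}$ and $\sqrt{\log(4/\delta)} \le \log(4/\delta)$) is exactly the standard argument used in that reference and is the natural one, and your closing observation that plain Gaussian Lipschitz concentration would only give $r(\Sigma)$ rather than $R(\Sigma)$ correctly identifies why Bernstein is needed here.
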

\begin{lemma}\label{lem:calculus-lambda}
Suppose that $a,b > 0$. Then if $a/b > 1$, we have
\[ \max_{\lambda \ge 0} \left[\frac{\lambda}{1 + \lambda}a - \lambda b\right] = (\sqrt{a} - \sqrt{b})^2, \]
and if $a/b \le 1$ then
\[  \max_{\lambda \ge 0} \left[\frac{\lambda}{1 + \lambda}a - \lambda b\right] = 0. \]
\end{lemma}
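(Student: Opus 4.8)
The plan is to treat $\phi(\lambda) := \frac{\lambda}{1+\lambda}a - \lambda b$ as a smooth function of one real variable on $[0,\infty)$ and locate its maximum by elementary calculus. First I would compute $\phi'(\lambda) = \frac{a}{(1+\lambda)^2} - b$. Since $(1+\lambda)^2$ is strictly increasing on $[0,\infty)$, the derivative $\phi'$ is strictly decreasing there, so $\phi$ is concave; hence the maximum over $\lambda \ge 0$ is attained either at the unique interior stationary point (when it lies in $[0,\infty)$) or else at the boundary point $\lambda = 0$. Note also that $\phi(\lambda) \le a$ for all $\lambda \ge 0$, so the supremum is finite and writing $\max$ is justified.

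Next I would solve $\phi'(\lambda) = 0$, which gives $(1+\lambda)^2 = a/b$, i.e.\ $\lambda^* = \sqrt{a/b} - 1$; this lies in $[0,\infty)$ precisely when $a/b \ge 1$. In the case $a/b > 1$, substituting back and using $1 + \lambda^* = \sqrt{a/b}$ gives $\frac{\lambda^*}{1+\lambda^*} = 1 - \sqrt{b/a}$, so that $\phi(\lambda^*) = (a - \sqrt{ab}) - (\sqrt{ab} - b) = (\sqrt a - \sqrt b)^2$, which is the first claimed identity. The borderline subcase $a/b = 1$ gives $\lambda^* = 0$ and $\phi(0) = 0 = (\sqrt a - \sqrt b)^2$, consistent with both formulas.

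In the remaining case $a/b \le 1$, I would simply observe that $\phi'(\lambda) = \frac{a}{(1+\lambda)^2} - b \le a - b \le 0$ for every $\lambda \ge 0$, so $\phi$ is non-increasing on $[0,\infty)$ and therefore $\max_{\lambda \ge 0} \phi(\lambda) = \phi(0) = 0$, the second claimed identity. There is no real obstacle here: the whole argument is a routine concavity-plus-boundary analysis, and the only point that needs a moment's care is verifying that the interior critical point is feasible exactly when $a \ge b$ and checking the equality case separately, both of which are immediate.
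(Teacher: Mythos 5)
Your proof is correct and follows essentially the same route as the paper's: compute the derivative, find the unique stationary point $1+\lambda^* = \sqrt{a/b}$, substitute to get $(\sqrt a - \sqrt b)^2$, and handle $a/b \le 1$ at the boundary $\lambda = 0$. The only cosmetic difference is that the paper rewrites the objective as $a - \frac{a}{1+\lambda} - \lambda b$ and argues via the behavior at the domain endpoints $(-1,\infty)$ rather than via monotonicity of $\phi'$, but these are equivalent.
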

\begin{proof}
Observe that the objective can be rewritten as
\[ g(\lambda) := a - \frac{1}{1 + \lambda}a - \lambda b \]
and the derivative of this expression with respect to $\lambda$ is
\[ g'(\lambda) = \frac{1}{(1 + \lambda)^2} a - b. \]
Therefore the unique critical point of $g$ on the domain $(-1,\infty)$ is at $1 + \lambda = \sqrt{a/b}$. This is the global maximum of $g$ on this domain because $g$ goes to $-\infty$ as $\lambda \to -1$ and as $\lambda \to \infty$. At this point, we have that
\[ g(\lambda) = a - \sqrt{ab} - (\sqrt{a/b} - 1)b = a + b - 2\sqrt{ab} = (\sqrt{a} - \sqrt{b})^2. \]
If $a/b > 1$ this is the global maximum on $[0,\infty)$. Otherwise, the maximum is at the boundary at $\lambda = 0$.
\end{proof}

\subsection{Consistency}

\GenBall*
\begin{proof}
First, we have by Jensen's inequality that
\[ \E \left[ \sup_{\|w\| \le 1} \langle Q x, w \rangle \right] = \E \|Q x\|_2 \le B \sqrt{\E \|Q x\|_2^2} = \sqrt{\Tr(\Sigma^{\perp})}. \]
Applying Theorem~\ref{thm:gaussian-concentration} gives that with probability at least $1 - \delta/4$,
\[ \sup_{\|w\| \le 1} \langle Q x, w \rangle  \le \sqrt{\Tr(\Sigma^{\perp})} + 2\left(\sup_{\|u\| \le 1} \|(\Sigma^{\perp})^{1/2} u\|_2\right)\sqrt{\log(8/\delta)}. \]
\end{proof}
\begin{lemma}\label{lem:gen-ball2}
In the setting of Lemma~\ref{lem:gen-ball1}, suppose that the loss $f$ is the squared loss or squared hinge loss,
and correspondingly $\epsilon_{\lambda,\delta}(w) = \frac{\lambda}{1 + \lambda} \epsilon_{\delta}(w)$. Then with probability at least $1 - \delta$,
\[ L_{f}(w,b) - \epsilon_{\delta}(\phi(w),b) \le \left(\sqrt{\hat{L}_f(w,b)} + \frac{\|w\|_2}{\sqrt{n}}\left[\sqrt{\Tr(\Sigma^{\perp})} + 2\|(\Sigma^{\perp})^{1/2}\|_{op}\sqrt{\log(2/\delta)}\right]\right)^2. \]
\end{lemma}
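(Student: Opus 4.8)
The plan is to feed the complexity functional of \cref{lem:gen-ball1} into \cref{thm:main-gen} and then optimize over the Moreau parameter $\lambda$, exactly as in the proof of \cref{corr:gen-square-loss}. The structural fact that makes everything work is the identity $f_\lambda = \tfrac{\lambda}{1+\lambda} f$, valid for the squared loss by \eqref{eqn:square-loss-moreau} and for the squared hinge loss by the analogous elementary computation in \cref{sec:applications}. This identity implies two things at once: (i) the low-dimensional concentration hypothesis \eqref{eqn:low-dimension-concentration}, which is part of the standing ``setting of \cref{thm:main-gen}'', holds for the whole family $\{f_\lambda : \lambda \in \R^+\}$ on a single high-probability event, with error terms $\epsilon_{\lambda,\delta} = \tfrac{\lambda}{1+\lambda}\epsilon_\delta$ obtained by rescaling the error $\epsilon_\delta$ for $f$ itself, so we may invoke the uniform-over-$\lambda$ conclusion of \cref{thm:main-gen}; and (ii) the Moreau-envelope bound \eqref{eqn:main-bound} can be turned into a bound on $L_f$ by multiplying through by $(1+\lambda)/\lambda$.

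Carrying this out: \cref{lem:gen-ball1} supplies $C_\delta(w) = \|w\|_2\bigl[\sqrt{\Tr(\Sigma^\perp)} + 2\|(\Sigma^\perp)^{1/2}\|_{\op}\sqrt{\log(8/\delta)}\bigr]$ satisfying \eqref{eqn:complexity-defn} (using $\sqrt{\|\Sigma^\perp\|_\op} = \|(\Sigma^\perp)^{1/2}\|_\op$). Plugging this and $\epsilon_{\lambda,\delta} = \tfrac{\lambda}{1+\lambda}\epsilon_\delta$ into the uniform-in-$\lambda$ version of \eqref{eqn:main-bound}, rearranging to isolate $L_f(w,b) - \epsilon_\delta(\phi(w),b)$, and multiplying by $(1+\lambda)/\lambda$ yields, with probability at least $1-\delta$ and uniformly over all $(w,b)$ and all $\lambda > 0$,
\[ L_f(w,b) - \epsilon_\delta(\phi(w),b) \le \Bigl(1 + \tfrac1\lambda\Bigr)\hat{L}_f(w,b) + (1+\lambda)\tfrac{C_\delta(w)^2}{n} = \hat{L}_f(w,b) + \tfrac{C_\delta(w)^2}{n} + \tfrac1\lambda\hat{L}_f(w,b) + \lambda\tfrac{C_\delta(w)^2}{n}. \]
Taking the infimum of the last two terms over $\lambda > 0$ gives $2\sqrt{\hat{L}_f(w,b)\,C_\delta(w)^2/n}$ by AM-GM (and the limiting value $\tfrac{C_\delta(w)^2}{n}$ remains correct when $\hat L_f(w,b)=0$), so the right-hand side collapses to $\bigl(\sqrt{\hat{L}_f(w,b)} + C_\delta(w)/\sqrt n\bigr)^2$; substituting the explicit $C_\delta(w)$ is the claimed inequality.

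I do not expect a real obstacle: all the content lives in \cref{thm:main-gen}, \cref{lem:gen-ball1}, and the Moreau-envelope computation, and the remaining step is the same $\lambda$-optimization already used for \cref{corr:gen-square-loss}. The only items needing a line of care are (a) checking that \eqref{eqn:low-dimension-concentration} does transfer to all $f_\lambda$ simultaneously --- immediate from $f_\lambda = \tfrac{\lambda}{1+\lambda}f$ --- so that the continuum union bound over $\lambda$ is legitimate, which matters because the optimal $\lambda$ depends on $(w,b)$; and (b) the failure-probability bookkeeping: \cref{lem:gen-ball1} carries $\log(8/\delta)$ because it spends a $\delta/4$ share of the budget on the Gaussian-width event, so the displayed $\log(2/\delta)$ form is obtained either by reallocating the confidence budget among the three events entering \cref{thm:main-gen} or by simply stating the (harmless) $\log(8/\delta)$ version.
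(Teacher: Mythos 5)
Your proof is correct and follows essentially the same route as the paper's, which is a one-line combination of \cref{lem:gen-ball1} with \cref{corr:gen-square-loss} and its squared-hinge analogue \cref{corr:gen-square-hinge-loss}; you have simply inlined the $\lambda$-optimization argument from those corollaries rather than citing them. Your observation about the $\log(8/\delta)$ vs.\ $\log(2/\delta)$ discrepancy is accurate — a literal substitution of \cref{lem:gen-ball1} produces $\log(8/\delta)$, and this constant-level mismatch exists in the paper's own statement — and your two proposed resolutions (reallocating the $\delta$ budget among the events of \cref{thm:main-gen}, or simply stating the $\log(8/\delta)$ version) are both fine and harmless for downstream use.
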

\begin{proof}
This follows by combining Lemma~\ref{lem:gen-ball1}, 
Corollary~\ref{corr:gen-square-loss}, and Corollary~\ref{corr:gen-square-hinge-loss}.
\end{proof}

\BenignOverfit*
\begin{proof}
It suffices to prove the inequality for fixed $w^{\sharp}, b^{\sharp}$: the conclusion follows automatically from the right-continuity of the CDF of $L_f(\hat w, \hat b)$. 

From Lemma~\ref{lem:norm-bound-interpolator} we have with probability at least $1 - \delta/2$
\[ \|\hat w\|^2 \le \|w^{\sharp}\|_2^2 + (1 + \rho_2)\frac{n}{\Tr(\Sigma_2^{\perp})}(L_f(w^{\sharp},b^{\sharp}) + \rho_1) \]
and from Lemma~\ref{lem:gen-ball2} %
we have for any $w,b$ that with probability at least $1 -\delta/2$
\[ L_{f}(w,b) - \epsilon_{\delta}(\phi(w),b) \le \left(\sqrt{\hat{L}_f(w,b)} + \frac{\|w\|_2}{\sqrt{n}}\left[\sqrt{\Tr(\Sigma^{\perp})} + 2\|(\Sigma^{\perp})^{1/2}\|_{op}\sqrt{\log(2/\delta)}\right]\right)^2 \]
and so for $\hat w, \hat b$ we have
\begin{align*} 
&L_{f}(\hat w,\hat b) - \epsilon_{\delta}(\phi(\hat w),\hat b) \\ 
&\le \frac{\|w\|_2^2}{n}\left[\sqrt{\Tr(\Sigma^{\perp})} + 2\|(\Sigma^{\perp})^{1/2}\|_{op}\sqrt{\log(2/\delta)}\right]^2 \\
&\le \left(\|w^{\sharp}\|_2^2/n + (1 + \rho_2)\frac{1}{\Tr(\Sigma_2^{\perp})}(L_f(w^{\sharp},b^{\sharp}) + \rho_1)\right) \left[\sqrt{\Tr(\Sigma^{\perp})} + 2\|(\Sigma^{\perp})^{1/2}\|_{op}\sqrt{\log(2/\delta)}\right]^2 \\
&= \left(\frac{\|w^{\sharp}\|_2^2\Tr(\Sigma^{\perp})}{n} + (1 + \rho_2)(L_f(w^{\sharp},b^{\sharp}) + \rho_1)\right) \left[1 + 2\|(\Sigma^{\perp})^{1/2}\|_{op}\sqrt{\frac{\log(2/\delta)}{\Tr(\Sigma^{\perp})}}\right]^2
\end{align*}
which proves the result (recalling the definition of $r(\Sigma^{\perp})$).
\end{proof}

\Consistency*
\begin{proof}
The first assumption in \eqref{eqn:l2-benign-assumptions} directly implies that we can choose a sequence $\rho_{2,n} \to 0$ where $\rho_{2,n}$ is the parameter in \eqref{eqn:rho2}. Recalling the general fact that $r(\Sigma^{\perp})^2 \ge R(\Sigma^{\perp})$ \citep{bartlett2020benign}, we see that the same assumption implies $1/r(\Sigma^{\perp}) \to 0$ which implies $\rho_{3,n} \to 0$ where $\rho_{3,n}$ is as defined in Theorem~\ref{thm:l2-overfitting}.

Combining this with (the proof of) Corollary~\ref{corr:main-gen-vc} and using the assumption $k_n/n \to 0$ allows us to handle the $\epsilon_{\delta}(\phi(\hat w), \hat b)$ term, guaranteeing it is negligible compared to the population loss $L_{f,n}(\hat w_n, \hat b_n)$.

To see why we can take $\rho_1 \to 0$, we use Chebyshev's inequality after observing
\[ \Var(\hat{L}_{f,n}(w^{\sharp}_n, b^{\sharp}_n)) = \frac{1}{n} \Var(f(\langle w^{\sharp}, x \rangle + b, y)) \lesssim \frac{1}{n} (\E f(\langle w^{\sharp}, x \rangle + b, y))^2 \]
where we used independence and the hypercontractivity assumption.
\end{proof}

\end{document}